\newtheorem{thm}{Theorem}[section]
\newtheorem{prop}{Proposition}[section]
\theoremstyle{definition}
\newtheorem{defn}{Definition}[section]
\newtheorem{lemma}{Lemma}[section]
\newtheorem{rmk}{Remark}[section]
\newtheorem{asp}{Assumption}[subsection]
\newcommand{\todol}[2][]{{%
 \let\marginpar\marginnote
 \reversemarginpar
 \renewcommand{\baselinestretch}{0.8}%
 \todo[#1]{#2}}}
\begin{document}
\title{In-context Learning for Mixture of Linear Regressions: Existence, Generalization and Training Dynamics}

\author[1]{Yanhao Jin}
\author[1]{Krishnakumar Balasubramanian}
\author[2]{Lifeng Lai}
\affil[1]{Department of Statistics, University of California, Davis.}
\affil[2]{Department of Electrical and Computer Engineering, University of California, Davis.}
\affil[1]{\texttt{\{yahjin,kbala\}}@ucdavis.edu}
\affil[2]{\texttt{\{lflai\}}@ucdavis.edu}
\date{}

\maketitle

\begin{abstract}
We investigate the in-context learning capabilities of transformers for the $d$-dimensional mixture of linear regression model, providing theoretical insights into their existence, generalization bounds, and training dynamics. Specifically, we prove that there exists a transformer capable of achieving a prediction error of order $\mathcal{O}(\sqrt{d/n})$ with high probability, where $n$ represents the training prompt size in the high signal-to-noise ratio (SNR) regime. Moreover, we derive in-context excess risk bounds of order $\mathcal{O}(L/\sqrt{B})$ for the case of two mixtures, where $B$ denotes the number of training prompts, and $L$ represents the number of attention layers. The dependence of $L$ on the SNR is explicitly characterized, differing between low and high SNR settings. We further analyze the training dynamics of transformers with single linear self-attention layers, demonstrating that, with appropriately initialized parameters, gradient flow optimization over the population mean square loss converges to a global optimum. Extensive simulations suggest that transformers perform well on this task, potentially outperforming other baselines, such as the Expectation-Maximization algorithm.

\end{abstract}

\section{Introduction}

We investigate the in-context learning (ICL) ability of transformers for the Mixture of Regression (MoR) model~\citep{de1989mixtures,jordan1994hierarchical}. The MoR model is widely applied in various domains, including federated learning, collaborative filtering, and healthcare~\citep{deb2000estimates,viele2002modeling,kleinberg2008using,faria2010fitting,ghosh2020efficient} to address heterogeneity in data, often arising from multiple data sources. In particualr, we consider linear MoR models where independent and identically distributed samples $(x_i, y_i) \in \mathbb{R}^d \times \mathbb{R}$, for $i=1,\ldots,n$, are assumed to follow the model $y_i = \langle\beta_{i}, x_i\rangle + v_i,$ where $v_i \sim \mathcal{N}(0, \vartheta^2)$ represents observation noise, independent of $x_i$, and $\beta_i \in \mathbb{R}^d$ is an unknown regression vector. Specifically, there are $K$ distinct regression vectors $\{\beta_k^*\}_{k=1}^K$, and each $\beta_i$ is independently drawn from these vectors according to the distribution $\{\pi_k^*\}_{k=1}^K$. The goal for a new test sample, $x_{n+1}$, is to predict its label $y_{n+1}$. Specifically, we are interested in the ICL setup for MoR~\citep{kong2020meta,pathak2023transformers}. 

Classically, the Expectation Maximization (EM) algorithm is a widely used method for estimation and prediction in the MoR models~\citep{balakrishnan2017statistical, kwon2019global,kwon2020converges, wang2023statistical}. A major limitation of the EM algorithm is its tendency to converge to local maxima rather than the global maximum of the likelihood function. This issue arises because the algorithm's performance crucially depends on the initialization~\citep{jin2016local}. To mitigate this, favorable initialization strategies based on spectral methods~\citep{chaganty2013spectral,zhang2016spectral,chen2020learning} are typically employed alongside the EM algorithm.


In an intriguing recent work, through a mixture of theory and experiments,~\citet[]{pathak2023transformers} examined the performance of transformers for ICL MoR models. However, their theoretical result suffers from the following major drawback. They only showed that the existence of a transformer architecture that is capable of implementing the \emph{oracle} Bayes optimal predictor for the linear MoR problem. That is, they assume the availability of $\{\beta_k^*\}_{k=1}^K$, which are in practice unknown and are to be estimated. Hence, there remains a gap in the theoretical understanding of how transformers actually perform parameter estimation and prediction in MoR. Furthermore, their theoretical result is rather disconnected from their empirical observations which focused on ICL. Indeed, they leave open a theoretical characterization of the problem of ICL MoR~\citep[Section 4]{pathak2023transformers}.

In this work, we first demonstrate that transformers are capable (in an existence sense) of in-context learning for linear MoRs by effectively implementing the EM algorithm, a double-loop algorithm in which each inner loop consists of multiple steps of gradient ascent. We derive in-context excess risk bounds for the global solution of the empirical in-context risk minimizer, precisely quantifying the number of pre-training tasks required to achieve accurate predictions. While the aforementioned existence result and in-context excess risk bounds provide insight into capability of transformers for in-context learning MoR modes, from the practical point of view, it is more important to understand the training dynamics of transformers under the MoR models. Towards that, we also analyze the performance of gradient flow (in the population setting) for ICL MoR models with linear self-attention transformers. Furthermore, through our experiments, we empirically show that trained transformers achieve efficient prediction and estimation in the MoR model while substantially mitigating the initialization challenges typically associated with the EM algorithm. To summarize, we make the following \textbf{contributions}:
\begin{itemize} 
    \item We demonstrate the existence of a transformer capable of learning MoR models by implementing the dual-loops of the EM algorithm. This construction involves the transformer performing multiple gradient ascent steps during each M-step of the EM algorithm. In \cref{thm_theoretical_construction_whole_TF_many_component}, we derive precise bounds on the transformer’s ability to make prediction in high signal-to-noise (SNR) regimes. In the special case of two mixtures, \cref{thm_theoretical_construction_whole_TF} also provides the precise high-probability bound for the estimation of the parameters by the constructed transformer in the high and low-SNR settings. 
    \item In \cref{thm_pretraining_bound}, we analyze the sample complexity associated with pretraining transformers using a finite number of ICL training instances. Additionally, \cref{thm_TF_gradient_flow_to_optimum} provides guarantee that the gradient flow of the parameters of single linear self-attention layers will eventually converge to the global optimum under population mean squared loss with appropriate initializations.  
    \item As a byproduct of our analysis, we also derive convergence results with statistical guarantees for the gradient EM algorithm applied to a two-component mixture of regression models, where the M-step involves $T$ steps of gradient ascent. We extend this approach to the multi-component case, improving upon previous works, such as \citet{balakrishnan2017statistical}, which considered only a single step of gradient ascent. 
\end{itemize}
\subsection{Related works}


\textbf{Transformers and optimization algorithms:} \citet{garg2022can} successfully demonstrated that transformers can be trained to perform ICL for linear function classes, achieving results comparable to those of the optimal least squares estimator. Beyond their empirical success, numerous studies have sought to uncover the mechanisms by which transformers facilitate ICL. Recent investigations suggest that transformers may internally execute first-order Gradient Descent (GD) to perform ICL, a concept explored in depth by \citet{akyurek2022learning}, \citet{bai2023transformers}, \citet{von2023transformers}, \citet{von2023uncovering}, \citet{ahn2024transformers}, \citet{Huang:ICML:24} and \citet{zhang2024context}. Specifically, \citet{akyurek2022learning} identified fundamental operations that transformers can execute, such as multiplication and affine transformations, showing that transformers can implement GD for linear regression using these capabilities. Building on this, \citet{bai2023transformers} provided detailed constructions illustrating how transformers can implement convex risk minimization across a wide range of standard machine learning problems, including least squares, ridge, lasso, and generalized linear models (GLMs). Further, \citet{ahn2024transformers} demonstrated that a single-layer linear transformer, when optimally parameterized, can effectively perform a single step of preconditioned GD. \citet{zhang2024context} expanded on this by showing that every one-step GD estimator, with a learnable initialization, can be realized by a linear transformer block (LTB) estimator. 

Moving beyond first-order optimization methods, \citet{fu2023transformers} revealed that transformers can achieve convergence rates comparable to those of the iterative Newton's method, which are exponentially faster than GD, particularly in the context of linear regression. These insights collectively highlight the sophisticated computational abilities of transformers in ICL, aligning closely with classical optimization techniques. In addition to exploring how transformers implement these mechanisms, recent studies have also focused on their training dynamics in the context of linear regression tasks; see, for example, \citet{zhang2023trained} and \citet{chen2024training}. In comparison to the aforementioned works, in the context of MoR, we demonstrate that transformers are capable of implementing double-loop algorithms such as the EM algorithm.

\textbf{EM Algorithm:} 
The analysis of the standard EM algorithm for mixture of Gaussian and linear MoR models has a long-standing history \cite{wu1983convergence,mclachlan2007algorithm,tseng2004analysis}. Recently, \citet{balakrishnan2017statistical} proved that EM algorithm converges at a geometric rate to a local region close to the maximum likelihood estimator with explicit statistical and computational rates of convergence. Subsequent works \citep{kwon2019global,kwon2021minimax} established improved convergence results for mixture of regression under different SNR conditions. \citet{kwon2020converges} extended these results to mixture of regression with many components. Gradient EM algorithm was first analyzed by \citet{wang2015high} and \citet{balakrishnan2017statistical}. It is an immediate variant of the standard EM algorithm where the M-step is achieved by one-step gradient ascent rather than exact maximization. They proved that the gradient EM also can achieve the local convergence with explicit finite sample statistical rate of convergence. Global convergence for the case of two-components mixture of Gaussian model was show by~\citet{xu2016global}, \citet{daskalakis2017ten} and \citet{wu2021randomly}. The case of unbalanced mixtures was handled by~\citet{weinberger2022algorithm}. Penalized EM algorithm for handling high-dimensional mixture models was analyzed by \citet{zhu2017high},~\citet{yi2015regularized} and~\citet{wang2023statistical}, showing that gradient EM can achieve linear convergence to the unknown parameter under mild conditions.

\section{Preliminaries}\label{Existence_of_TF_section}
\textbf{Mixture of regression model:} We now formally describe the MoR problem. The underlying true model is described by the equation:
\begin{align}\label{eq:linear_model_for_2MoR}
y_i=x_i^{\top}\beta_i+v_i,
\end{align}
where $x_i \sim \mathcal{N}(0, I_d)$, $v_i \sim \mathcal{N}(0, \vartheta^2 I_d)$ denotes the noise term with variance $\vartheta^2$, and $\beta_i$'s are i.i.d. random vectors that taking the value $\beta_{k}^*$ with probability $\pi_{k}^{*}$ for $k=1,\dots,K$. The vectors $\beta_k^*$ are unknown. For the MoR model \eqref{eq:linear_model_for_2MoR}, we define $R_{i j}^*=\big\|\beta_i^*-\beta_j^*\big\|_{2}$ as pairwise distance between regression vectors, and $R_{\min }=\min _{i \neq j} R_{i j}^*, R_{\max }=\max _{i \neq j} R_{i j}^*$ as the smallest and largest distancerespectively. The SNR of this problem is defined as the ratio of minimum pairwise distance versus standard deviation of noise 
\begin{align}
   \eta \coloneqq {R_{\min}}/{\vartheta}. \label{defn_SNR}
\end{align}
When the number of the components $K=2$ and we represent $\beta_{1}^{*}=-\beta_{2}^*\coloneqq \beta^{*}$, the SNR reduces to $\eta={2\|\beta^*\|_2}/{\vartheta}$. In \cref{Section:TF_implement_EM}, we will show that the performance of the constructed transformer solving the MoR problem in general depends on the SNR condition of the problem.

\textbf{Transformer architecture:} 
We focus on transformers that handle the input sequence $H \in \mathbb{R}^{D \times N}$ by integrating attention layers and multi-layer perceptrons (MLPs). These transformers are structured to process the input by effectively mapping the complex interactions and dependencies between data points in the sequence, utilizing the capabilities of attention mechanisms to dynamically weigh the importance of different features in the context of regression analysis. 
 
\begin{defn}\label{definition_of_attention_layer}
    A attention layer with $M$ heads is denoted as $\operatorname{Attn}_{\boldsymbol{\theta}}(\cdot)$ with parameters $\boldsymbol{\theta}=\{(V_m, Q_m, K_m)\}_{m \in[M]} \subset \mathbb{R}^{D \times D}$. On any input sequence $H \in \mathbb{R}^{D \times N}$, we have
    \begin{align}
     \widetilde{H}=\operatorname{Attn}_{\boldsymbol{\theta}}(H):=H+\frac{1}{N} \sum_{m=1}^M\big(V_m H\big)\nonumber \times \sigma\big(\big(Q_m H\big)^{\top}\big(K_m H\big)\big) \in \mathbb{R}^{D \times N}  \label{equ1_def_attention_layer}
\end{align}
    where $\sigma: \mathbb{R} \rightarrow \mathbb{R}$ is the activation function and $D$ is the hidden dimension. In the vector form,
    \begin{align*}
        \tilde{h}_i=h_i+\sum_{m=1}^M \frac{1}{N} \sum_{j=1}^N \sigma\big(\big\langle Q_m h_i, K_m h_j\big\rangle\big) \cdot V_m h_j .
    \end{align*}
\end{defn}
\begin{rmk}
    The prevalent choices for the activation function include the softmax function and the ReLU function. In our analysis in \cref{Section:TF_implement_EM}, the attention layer (defined in \cref{definition_of_attention_layer}) employs a normalized ReLU activation, $t \mapsto \sigma(t) /N$, which is used for technical convenience. This modification does not impact the fundamental nature of the study.
\end{rmk}
\begin{defn}[Attention only transformer] An $L$-layer transformer, denoted as $\mathrm{TF}_{\theta}(\cdot)$, is a composition of $L$ self-attention layers, 
\begin{align*}
\mathrm{TF}_{\theta}(\cdot) =  \operatorname{Attn}_{\theta^{L}}\circ \operatorname{Attn}_{\theta^{L-1}}\circ\dots \circ \operatorname{Attn}_{\theta^1}(H)
\end{align*}
where $H \in \mathbb{R}^{D \times N}$ is the input sequence, and the parameter $\boldsymbol{\theta}=\big(\theta^{1},\dots,\theta^{L}\big)$ consists of the attention layers $\theta^{(\ell)}=\big\{\big(V_m^{(\ell)}, Q_m^{(\ell)}, K_m^{(\ell)}\big)\big\}_{m \in [M^{(\ell)} ]} \subset \mathbb{R}^{D \times D}$.  
\end{defn}
Our theory consists of two parts, (i) the existence of the theoretical transformer that can internally implement the EM algorithm, and (ii) the dynamics of transformers with a single linear self-attention layer trained by gradient flow on mixture of regression tasks. In the first part, the input sequence $H\in\mathbb{R}^{D \times(n+1)}$ has columns 
\begin{equation}\label{Input_of_constructed_transformer}
\begin{aligned}
h_{i}&=[x_i,y_i^\prime,\mathbf{0}_{D-d-3},1,t_i]^{\top},\\
h_{n+1}&=[x_{n+1},y_{n+1}^\prime,\mathbf{0}_{D-d-3},1,1]^{\top}
\end{aligned}
\end{equation}
where $y_i'=y_i t_i$ and $t_i:=1\{i<n+1\}$ is the indicator for the training examples. Then the transformer $\mathrm{TF}_{\theta}$ produces the output $\tilde{H}=\mathrm{TF}_{\theta}(H)$. The prediction $\hat{y}_{n+1}$ is derived from the $(d+1,n+1)$-th entry of $\tilde{H}$, denoted as $\hat{y}_{n+1}=\operatorname{read}_y(\tilde{H}) \coloneqq\big(\tilde{h}_{n+1}\big)_{d+1}$. Our objective is to develop a fixed transformer architecture that efficiently conducts ICL for the mixture of regression problem, thereby providing a prediction $\hat{y}_{n+1}$ for $y_{n+1}$ under an appropriate loss framework. Besides, the constructed transformer in \cref{Existence_of_TF_section} can also extract an estimate of the regression components, which is realized by operator $\operatorname{read}_{\beta}(\mathrm{TF}(H))=\big[\mathrm{TF}(H)\big]_{d+2:2d+2,n+1}$ extracts the estimate of $\beta^{*}$ in the output matrix. In the second part, the embedded input matrix is given by 
\begin{align}
E &=\left(\begin{array}{ccccc}
x_1 & x_2 & \cdots & x_n & x_{n+1} \\
y_1 & y_2 & \cdots & y_n & 0
\end{array}\right) \in \mathbb{R}^{(d+1) \times(n+1)}\label{Input_matrix_LSA}
\end{align}
and is fed into a single-layer linear self-attention layer $f_{\mathrm{LSA}}:\mathbb{R}^{(d+1)\times (n+1)}\rightarrow \mathbb{R}^{(d+1)\times (n+1)}$ 
\begin{align}
    f_{\mathrm{LSA}}(E;\theta)=E+V E \cdot \frac{E^{\top}Q^{\top}K E}{n},\label{LSA_model}
\end{align}
where $\theta = \{K,Q,V\}$. The prediction on the query sample $x_{n+1}$ is given by the bottom-right entry of the matrix by $f_{\mathrm{LSA}}$, i.e. $\hat{y}_{n+1}=\big[f_{\mathrm{LSA}}(E;\theta)\big]_{d+1,n+1}$.

\textbf{Notation}: Given two functions $g(n)$ and $f(n)$, we say that $f(n)=\Omega(g(n))$, if there exist constants $c>0$ and $n_0>=0$ such that $f(n)\geq c^* g(n)$ for all $n\geq n_0$. We say that $f(n)$ is $\mathcal{O}(g(n))$ if there exist positive constant $C$ and $n_0$ such that $0 \leq f(n) \leq  Cg(n)$ for all $n \geq n_0$. For a vector $v \in \mathbb{R}^d$, its $\ell_2$ norm is denoted by $\|v\|_{2}$. For a matrix $A \in \mathbb{R}^{d \times d}$, $\|A\|_{\mathrm{op}}$ denotes the operator (spectral) norm of $A$. We denote the joint distribution of $(x,y)$ in model \eqref{eq:linear_model_for_2MoR} by $\mathcal{P}_{x,y}$ and the distribution of $x$ by $\mathcal{P}_{x}$. Besides, we denote the joint distribution of $(x_{1},y_{1},\dots,x_n,y_n,x_{n+1},y_{n+1})$ by $\mathcal{P}$, where $\{ x_{i},y_{i}\}_{i=1}^{n}$ are the input in the training prompt and $x_{n+1}$ is the query sample. Besides, in \cref{Section:TF_implement_EM}, we use $y_i^{\prime} \in \mathbb{R}$ defined as $y_i^{\prime}=y_i t_{i}$ and $t_i=1_{\{i< n+1\}}$ for $i=1,\dots,n,n+1$ to simplify our notation. 

\textbf{Evaluation:} Let $f: H \mapsto \hat{y} \in \mathbb{R}$ be any procedure that takes a prompt $H$ as input and outputs an estimate $\hat{y}$ on the query $y_{n+1}$. We define the mean squared error (MSE) by $\operatorname{MSE}(f)\coloneqq\mathbb{E}_{\mathcal{P}}\big[\big(f(H)-y_{n+1}\big)^2\big]$.

\section{Existence of transformer for MoR}\label{Section:TF_implement_EM} 

In this section, we show the existence of a transformer that can approximately implement the EM algorithm internally in \cref{thm_theoretical_construction_whole_TF_many_component} and \cref{thm_theoretical_construction_whole_TF}. Note that under model \cref{eq:linear_model_for_2MoR}, the oracle vector that minimizes the mean squared error of the prediction $\mathbb{E}_{\mathcal{P}}[(x_{n+1}^{\top}\beta-y_{n+1})^2]$ is given by
\begin{align*}
\beta^{\textsf{OR}}\coloneqq\arg\min_{\beta\in\mathbb{R}^{d}}\mathbb{E}_{\mathcal{P}_{x,y}}\big[(x_{n+1}^{\top}\beta-y_{n+1})^2\big]=\sum_{\ell=1}^{K}\pi_{\ell}^{*}\beta_{\ell}^{*}.   
\end{align*} 
Generally, the transformer constructed in \cref{thm_theoretical_construction_whole_TF_many_component} will provide a prediction that is close to $x_{n+1}^{\top}\beta^{\textsf{OR}}$.

\begin{thm}\label{thm_theoretical_construction_whole_TF_many_component}
Given the input matrix $H$ in the form of \eqref{Input_of_constructed_transformer}, there exists a transformer $\mathrm{TF}$ with the number of heads $M^{(\ell)}\leq M=4$ in each attention layers. This transformer $\mathrm{TF}$ can make prediction on $y_{n+1}$ by implementing gradient EM algorithm of MoR problem where $T$ steps of gradient descent is used in each M-step. When $L$ is sufficiently large and the prompt length $n$ satisfies following condition 
\begin{align*}
n &\geq C \max\left\{d\log^2\frac{dK^2}{\delta},\big(\frac{K^2}{\delta}\big)^{1/3},\frac{d}{\pi_{\min}}\log\left(\frac{K^2}{\delta}\right)\right\} , 
\end{align*}
under the SNR condition 
\begin{align}
\eta \geq C K \rho_\pi \log (K \rho_\pi),\quad\text{for a sufficiently large}~C>0, \label{SNR_condition_eta_many_comp}
\end{align}
equipped with $\mathcal{O}\big(T\log\big({n}/{d}\big)\big)$ attention layers, the transformer has the prediction error $\Delta_{y}\coloneqq |\operatorname{read}_{y}\big(\mathrm{TF}(H)\big)-x_{n+1}^{\top}\beta^{\textsf{OR}}|$ upper-bounded by
\begin{align*}
\mathcal{O}\bigg(\sqrt{\log (d/\delta )}\bigg(\sqrt{\frac{dK\rho_{\pi}^{2}}{n}\log ^2\bigg(\frac{n K^2}{ \delta}\bigg)}+\sqrt{\frac{dK\log (\frac{K^2}{ \delta})}{n\pi_{\min}}}\bigg)\bigg),
\end{align*}
with probability at least $1-9\delta$, where $\rho_\pi=\max_j\pi_j^* / \min_j \pi_j^* $ is the ratio of maximum mixing weight and minimum mixing weight, $\pi_{\min }=\min _j \pi_j^*$ and $\operatorname{read}_y(\tilde{H}) \coloneqq\big(\tilde{h}_{n+1}\big)_{d+1}$ extracts the prediction on query sample. 
\end{thm}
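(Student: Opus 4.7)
The plan is to establish this result in three conceptual stages: a construction stage that shows a transformer architecture can faithfully implement a gradient-EM iteration; a dynamical/statistical stage that bounds how fast this in-transformer iteration drives the internal parameter estimate toward the true mixture components; and a final stage that converts the parameter-estimation guarantee into a prediction-error guarantee on $x_{n+1}^\top \beta^{\textsf{OR}}$.

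First, I would specify what each attention layer is going to compute. In the token embedding \eqref{Input_of_constructed_transformer} I reserve blocks of coordinates to hold the current iterates $\beta_1^{(r)},\ldots,\beta_K^{(r)}$, the mixing-weight estimates $\pi_k^{(r)}$, the per-sample residuals $y_i - x_i^\top \beta_k^{(r)}$, and the soft responsibilities $w_{ik}^{(r)}$. Following the style of \citet{bai2023transformers}, each of the following primitive operations can be realized by one or a bounded number of heads with the normalized-ReLU activation: (i) copy a block of coordinates across tokens; (ii) form inner products $x_i^\top \beta_k^{(r)}$ via the $QK$ product; (iii) apply an elementwise piecewise-linear approximation of the exponential $t\mapsto \exp(-t^2/(2\vartheta^2))$ needed for the E-step, by combining a few ReLU heads to approximate the softmax over the $K$ component scores; (iv) perform one gradient-ascent step in the current M-step, which is linear in the responsibilities and samples and so fits exactly into the $\tfrac{1}{N}\sum_j$ structure of \cref{definition_of_attention_layer}. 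With $M^{(\ell)}\le 4$ heads per layer this is tight: one head for the residual/score, one for the softmax normalization, one for the gradient update, and one for bookkeeping. Stacking $T$ such layers realizes one M-step of $T$ gradient ascents, and stacking $\mathcal{O}(\log(n/d))$ such blocks realizes the required number of outer EM iterations; hence the stated $\mathcal{O}(T\log(n/d))$ depth.

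Second, I would analyze the statistical behavior of the gradient-EM iteration that the transformer emulates. For $K$ components under the SNR condition \eqref{SNR_condition_eta_many_comp}, I would invoke (and, where needed, lightly extend) the contraction theory of \citet{balakrishnan2017statistical}, \citet{kwon2020converges} and \citet{wang2023statistical}: the population gradient-EM map is a contraction with factor $\kappa<1$ depending on $\eta,\rho_\pi$, provided $T$ is large enough and the initialization lies in a basin of radius $\asymp R_{\min}/2$. This is where the authors advertise the byproduct contribution on multi-step gradient EM for $K$ components. The perturbation from using the empirical (prompt-based) objective is controlled by standard concentration for sub-Gaussian designs, yielding a per-iterate statistical error of order $\sqrt{dK/n}$ inflated by the imbalance factor $\rho_\pi$ and by $1/\sqrt{\pi_{\min}}$ for the rare components, with the extra $\log^2(nK^2/\delta)$ factor coming from uniform control over the iterates and over the soft responsibilities. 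Because the contraction is geometric, $\mathcal{O}(\log(n/d))$ outer iterations suffice to drive the optimization error below the statistical floor, which is precisely the two-term bound stated in the theorem.

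Third, I would propagate these errors through the construction: the approximation of $\exp$ by ReLU combinations introduces an additive per-layer error that I would choose small enough (polynomial in $1/n$) so that after $\mathcal{O}(T\log(n/d))$ layers it is dominated by the statistical term. Together with the initialization requirement, the gradient-EM contraction gives $\|\widehat{\beta}_k - \beta_k^*\|_2$ bounded by the expression in the theorem; then reading out the prediction via $\operatorname{read}_y$ and using $x_{n+1}\sim\mathcal{N}(0,I_d)$ with the standard sub-Gaussian tail gives the factor $\sqrt{\log(d/\delta)}$, finishing the bound on $\Delta_y$. A union bound over the $\mathcal{O}(\log(n/d))$ EM iterations, the $T$ gradient steps, and the concentration events for responsibilities, designs, and noise accounts for the $1-9\delta$ probability.

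The main obstacle will be the E-step. Unlike the gradient steps, which are linear in the data and align naturally with the attention sum $\tfrac{1}{N}\sum_j$, the posterior responsibility is a softmax of squared residuals; implementing it with a small constant number of normalized-ReLU heads while keeping the approximation error uniformly small over all $n$ samples and all $\mathcal{O}(\log(n/d))$ outer iterations (and in particular for rare components with weight $\pi_{\min}$, where the responsibilities are small and sensitive) is the delicate part, and it is what forces the $K\rho_\pi$ dependence in the SNR condition \eqref{SNR_condition_eta_many_comp} and the $1/\pi_{\min}$ factor in the final bound. Everything else is a careful but fairly standard combination of the contraction analysis and sub-Gaussian concentration.
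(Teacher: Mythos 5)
Your overall architecture matches the paper's: attention layers implementing $T$ gradient-descent steps per M-step via the sum-of-ReLUs machinery of \citet{bai2023transformers}, an E-step realized inside the transformer, geometric contraction of the multi-component sample-based gradient EM borrowed from \citet{kwon2020converges} under the SNR condition \eqref{SNR_condition_eta_many_comp}, and a final read-out converting parameter error into prediction error via a tail bound on $x_{n+1}$. One structural remark: you make the E-step harder than it needs to be. The paper does not approximate the softmax of squared residuals by normalized-ReLU heads; it invokes the operator toolbox of \citet{pathak2023transformers} (affine, multiplication, and an exact softmax operator, restated as \cref{PSKD23prop1}), so the uniform softmax-approximation error that you flag as the main obstacle never arises in the paper's construction, and none of the per-layer approximation error you propose to budget against the statistical floor is actually needed for the E-step.

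The genuine gap is in your error analysis: you never account for the estimation of the mixing weights. The read-out is $x_{n+1}^{\top}\hat{\beta}^{\textsf{OR}}$ with $\hat{\beta}^{\textsf{OR}}=\sum_{j}\hat{\pi}_{j}\hat{\beta}_{j}$, so the final error splits as $\max_{j}|\hat{\pi}_{j}-\pi_{j}^{*}|\cdot\max_{j}\|\beta_{j}^{*}\|_{2}+\max_{j}\hat{\pi}_{j}\cdot\max_{j}\|\hat{\beta}_{j}-\beta_{j}^{*}\|_{2}$, and the two terms in the theorem's bound correspond to these two pieces respectively. In particular the term $\sqrt{dK\log(K^{2}/\delta)/(n\pi_{\min})}$ comes from concentration of $\frac{1}{n}\sum_{i}w_{ij}^{(t)}(x_{i},y_{i})$ around $\mathbb{E}[w_{j}^{(t)}(X,Y)]$, which the paper proves by decomposing the responsibilities over the events $\mathcal{E}_{\ell}\cap\mathcal{E}_{\ell,\mathrm{good}}$, $\mathcal{E}_{\ell}\cap\mathcal{E}_{\ell,\mathrm{good}}^{c}$ and $\mathcal{E}_{j}$ and bounding the sub-Gaussian norm of each piece; this occupies a substantial portion of the argument. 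Your narrative instead attributes both terms to concentration of the $\hat{\beta}_{j}$'s (inflated by $\rho_{\pi}$ and $1/\sqrt{\pi_{\min}}$), which would not produce the stated two-term expression. Without a concentration argument for $\hat{\pi}_{j}$, the proof of the prediction bound is incomplete.
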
 
\cref{thm_theoretical_construction_whole_TF_many_component} demonstrates the feasibility and theoretical guarantees of transformers in solving a general mixture of regression problems under the high SNR condition specified in \eqref{SNR_condition_eta_many_comp}. The error between the transformer's prediction and the true response for the query sample is bounded with high probability in the order of $\sqrt{\log(d/\delta)\frac{d}{n}\log^2(n/\delta)}$. This error decreases as the prompt length $n$ increases and is affected by factors like the dimension $d$ and the number of components $K$. The ratio $\rho_\pi=\max _j \pi_j^* / \min _j \pi_j^*$ quantifies the imbalance in mixing proportions. Larger imbalances $\rho_\pi$ degrade the error bound, indicating that the transformer's performance could worsen when mixture components are highly unbalanced. 

In the special case of MoR problems with two components $\beta_{1}^{*}=-\beta_{2}^{*}=\beta^{*}$ and $\pi_{1}^{*}=\pi_{2}^{*}=\frac{1}{2}$, we have $\beta^{\textsf{OR}}=0$. While predicting zero is not quite meaningful, estimating the true regression coefficient vector $\beta^{*}$ is of interest. Hence, in \cref{thm_theoretical_construction_whole_TF} below, we provide more refined results focusing both on the low and high SNR regimes.

\begin{thm}\label{thm_theoretical_construction_whole_TF}
Given input matrix $H$ whose columns are given by \eqref{Input_of_constructed_transformer}, there exists a transformer $\mathrm{TF}_{\theta}$, with the number of heads $M^{(\ell)}\leq M=4$ in each attention layers, that can make prediction on $y_{n+1}$ by implementing gradient EM algorithm of MoR problem where $T$ steps of gradient descent are used in each M-step. When $T$ is sufficiently large and the prompt length $n$ satisfies 
\begin{align}
n&\geq Cd \log^2 \big(1/\delta\big) \label{condition_n},
\end{align}
the transformer can approximates $\beta^{*}$ by the second-to-last layer with probability at least $1-\delta$:
\begin{itemize}
    \item When $\eta\leq C \big(d\log^2n/n\big)^{\frac{1}{4}}$, $\|\operatorname{read}_{\beta}\big(\mathrm{TF}(H)\big)-\beta^{*}\|_2\leq \mathcal{O}\Big(\big(d\log^2(n/\delta)/n\big)^{\frac{1}{4}}\Big)$;
    \item When $\eta\geq C \big(d\log^2n/n\big)^{\frac{1}{4}}$, then $\|\operatorname{read}_{\beta}\big(\mathrm{TF}(H)\big)-\beta^{*}\|_2\leq \mathcal{O}\Big(\sqrt{d\log^2(n/\delta)/n}\Big)$;
\end{itemize} 
where $\operatorname{read}_{\beta}(\mathrm{TF}(H))=\big[\mathrm{TF}(H)\big]_{d+2:2d+2,n+1}$ extracts the estimate of $\beta^{*}$.
\end{thm}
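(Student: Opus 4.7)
The plan is to reduce the statement to two nearly independent pieces: (i) a ``simulation'' claim, that the constructed transformer $\mathrm{TF}_\theta$ tracks, to within controllable slack, the iterates of the finite-sample gradient EM algorithm applied to the in-context data $\{(x_i,y_i)\}_{i=1}^{n}$; and (ii) a ``statistical'' claim, a self-contained finite-sample analysis of gradient EM for the symmetric two-component MoR model with $T$ gradient-ascent steps per M-step. Composing (i) and (ii) produces the two regime-dependent rates.

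First I would write the gradient EM updates explicitly for the symmetric model $\beta_1^*=-\beta_2^*=\beta^*$, $\pi_1^*=\pi_2^*=1/2$. Setting $w_i(\beta)=\sigma\!\bigl(2 y_i x_i^{\top}\beta/\vartheta^2\bigr)$ where $\sigma$ is the sigmoid, the empirical Q-function has gradient
\begin{equation*}
\nabla_{\beta} \widehat{Q}(\beta\mid\beta_s)=\tfrac{1}{n}\sum_{i=1}^{n}\bigl(2w_i(\beta_s)-1\bigr)\,y_i x_i-\tfrac{1}{n}\sum_{i=1}^{n} x_i x_i^{\top}\beta,
\end{equation*}
so one outer iteration consists of (E) computing the responsibilities $\{w_i(\beta_s)\}$ and the ``pseudo-response'' $\tilde y_i=(2w_i-1)y_i$, followed by (M) $T$ inner gradient ascent steps on a quadratic in $\beta$.

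Second, I would implement (E) and (M) with the attention primitives. Each inner gradient step is a linear map in $\beta$ with data-dependent quadratic coefficients and is of exactly the form handled by the existing ``transformer implements one step of gradient descent on least squares'' construction (used already for \cref{thm_theoretical_construction_whole_TF_many_component}); this costs $O(T)$ layers per outer loop with at most $M=4$ heads, carrying along $\beta$, $y'_i$, $\tilde y_i$ in fixed coordinates of the token embedding \eqref{Input_of_constructed_transformer}. The novel piece is the E-step: the scalar map $z\mapsto 2\sigma(z)-1=\tanh(z/2)$ is Lipschitz and bounded and can be approximated uniformly on any bounded interval by a sum of at most four scaled and shifted normalized ReLUs, which is precisely what one attention head can produce token-wise by letting the key/query project onto the scalar $y_i x_i^{\top}\beta_s/\vartheta^2$ (already computable as a quadratic form through one preceding layer, using $x_i$ from the data tokens and $\beta_s$ broadcast from the query token). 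I would quantify the uniform approximation of $\tanh$ on the effective range of $y_i x_i^{\top}\beta_s/\vartheta^2$, which is $O(\sqrt{\log(n/\delta)})$ with high probability under the sub-Gaussianity of $x_i$ and the a priori boundedness of the iterates; thus each E-step introduces a per-coordinate error of $O(1/n)$ after averaging, absorbed into the statistical noise.

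Third, I would establish the statistical side, i.e.~the gradient-EM convergence promised as a byproduct. Following the contraction framework of \citet{balakrishnan2017statistical}, the population gradient EM operator is a contraction in a neighborhood of $\beta^*$ with a contraction factor that depends on the SNR $\eta$; doing $T$ inner steps improves this factor to one that, for $T$ large enough, is bounded strictly below $1$ in both SNR regimes. The empirical perturbation bound $\sup_{\|\beta\|\le R}\|\nabla\widehat Q(\beta\mid\beta)-\nabla Q(\beta\mid\beta)\|_2\lesssim\sqrt{d\log^2(n/\delta)/n}$ follows from standard sub-Gaussian/sub-exponential tail bounds combined with an $\varepsilon$-net, valid once $n\gtrsim d\log^2(1/\delta)$. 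Iterating the contraction until the fixed-point error matches the perturbation yields $\|\widehat\beta-\beta^*\|_2\lesssim\sqrt{d\log^2(n/\delta)/n}/\kappa(\eta)$, where $\kappa(\eta)\asymp\eta$ in the low-SNR regime and $\kappa(\eta)\asymp 1$ in the high-SNR regime, producing the dichotomy stated in the theorem (the $(d\log^2 n/n)^{1/4}$ scale is exactly where the low-SNR rate equals $\eta$ up to constants, marking the crossover).

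Finally, I would combine the transformer simulation error (which by construction shrinks geometrically with $L$ and is negligible once $L=\Omega(T\log(n/d))$) with the statistical error, read out $\beta$ from the second-to-last layer via $\operatorname{read}_\beta$, and conclude by a union bound over the $O(L)$ iterates. The main obstacle I anticipate is the E-step: obtaining a uniform-in-$\beta_s$ ReLU approximation of $\tanh$ that is accurate on the high-probability range of the argument \emph{and} produces an approximation error that is benign under iteration; in particular, showing that the error does not compound through the $T$ inner gradient steps across all outer iterations requires a careful invariance argument keeping the iterates inside the basin where the contraction holds.
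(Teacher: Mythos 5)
Your high-level decomposition (a simulation claim for the transformer plus a finite-sample analysis of gradient EM with $T$ inner steps) is the same as the paper's, and your M-step implementation and high-SNR contraction argument are essentially the route taken in \cref{Lemma_theoretical_construction_M_step} and \cref{Thm_EM_para_convergence_high_SNR}. However, there are two concrete gaps.

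First, the E-step. You propose to realize the responsibility map $z\mapsto\tanh(z/2)$ by a sum of at most four scaled/shifted ReLUs with uniform error $\mathcal{O}(1/n)$ on the high-probability range $|z|\lesssim\sqrt{\log(n/\delta)}$ of the argument. This is quantitatively impossible: a four-term ReLU sum is piecewise linear with at most four breakpoints, and the best uniform piecewise-linear approximation of $\tanh$ with a bounded number of pieces on an interval of radius $R$ has error bounded below by a constant depending only on the number of pieces (for a $C^2$ target it scales like $R^2/k^2$ with $k$ pieces), not $\mathcal{O}(1/n)$; achieving $\mathcal{O}(1/n)$ accuracy would force the number of heads to grow with $n$, contradicting the $M\leq 4$ cap in the statement. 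The paper avoids this entirely: the weights $w_{\beta^{(t)}}(x_i,y_i)$ are a two-way softmax of the residuals $-\tfrac{1}{2\vartheta^2}(y_i\mp x_i^{\top}\beta^{(t)})^2$ and are computed \emph{exactly} by the affine/mul/softmax primitives of \cref{PSKD23prop1} (see \cref{Lemma_theoretical_construction_E_step}); the four-ReLU representation is reserved for the M-step, where the per-sample loss derivative is genuinely piecewise linear in $\beta^{\top}x_i$ (with the weights frozen), so \cref{approximablity_of_emploss_MoR} gets $\varepsilon_{\mathrm{approx}}=0$ and \cref{bcw+23propc.2} applies with no approximation error to propagate through the inner loop. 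If you insist on a ReLU E-step you must either let $M$ grow or give a separate argument that a constant-accuracy perturbation of the weights does not destroy the EM contraction, neither of which is in your sketch.

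Second, the low-SNR rate. Your fixed-point balance gives $\|\widehat\beta-\beta^*\|_2\lesssim\sqrt{d\log^2(n/\delta)/n}\,/\,\kappa(\eta)$ with $\kappa(\eta)\asymp\eta$, which equals $(d\log^2 n/n)^{1/4}$ only at the crossover $\eta\asymp(d\log^2 n/n)^{1/4}$ and diverges as $\eta\to 0$; the theorem claims the bound $\mathcal{O}\big((d\log^2(n/\delta)/n)^{1/4}\big)$ uniformly over all $\eta$ below the crossover. The mechanism in that regime is not ``contraction toward $\beta^*$ beating noise'' but rather the localization argument of \cref{Thm_EM_para_convergence_low_SNR}: one shows the iterates' norms $\|\beta^{(t)}\|_2$ themselves are driven through epochs indexed by exponents $\alpha_l\to 1/2$ into a ball of radius $\asymp(d\log^2(n/\delta)/n)^{1/4}$ around the origin, and since $\|\beta^*\|_2\lesssim\vartheta\eta\lesssim(d\log^2 n/n)^{1/4}$ in this regime the triangle inequality gives the stated bound. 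This also changes the iteration count: the low-SNR regime needs on the order of $\sqrt{n/d}\,\log\log(n/d)$ outer iterations rather than the $\log(n/d)$ you budget, so the ``negligible once $L=\Omega(T\log(n/d))$'' step of your final composition only covers the high-SNR case.
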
  

In \cref{thm_theoretical_construction_whole_TF}, the error depends on the SNR, $\eta$, and exhibits two distinct behaviors: In low SNR settings, the error scales as $\mathcal{O}\big((d \log^2(n/\delta)/n)^{1/4}\big)$, reflecting the inherent difficulty of recovering $\beta^*$ in noisy environments. In high SNR settings, the error scales as $\mathcal{O}\big(\sqrt{d \log^2(n/\delta)/n}\big)$, showing better performance due to stronger signals dominating the noise. According to \cref{thm_theoretical_construction_whole_TF}, the architecture of the constructed transformer varies primarily in the number of layers it includes. In general, with the prompt length $n$ and dimension $d$ held constant, the constructed transformer needs more training samples in the prompt in the low SNR settings to achieve the desired precision. The prediction error is order of $\mathcal{\tilde O}(\sqrt{d/n})$ under the high SNR settings, and is $\mathcal{\tilde O}((d/n)^{\frac{1}{4}})$ in the low SNR settings. Besides, under the high SNR settings, the constructed transformer needs $\mathcal{O}\big(\log(n/d)\big)$ attention layers, while it needs $\mathcal{O}\big(\sqrt{n/d}\log(\log(n/d))\big)$ attention layers in the low SNR settings.

The proof of \cref{thm_theoretical_construction_whole_TF_many_component} is provided in \cref{appendix_proof_many_components} and details of the proof of \cref{thm_theoretical_construction_whole_TF} can be found in \cref{appendix_proof_theoretical_TF}. The SNR condition required in \cref{thm_theoretical_construction_whole_TF_many_component} is stricter than that in \cref{thm_theoretical_construction_whole_TF} due to technical reasons in the proof. However, in our simulations (presented in \cref{mse_length_2c_fig} in \cref{Section:simulation}), we see that the actual performance of the transformer is still good in the low SNR scenario when the number of components $K\geq 3$.

Finally, in \cref{thm_excess_risk_bound}, we provide the excess risk bound for the transformer constructed in \cref{thm_theoretical_construction_whole_TF}.   
\begin{thm}\label{thm_excess_risk_bound}
For any $T$ being sufficiently large and the prompt length $n$ satisfies condition \eqref{condition_n}. Define the excess risk $\mathcal{R}\coloneqq\mathbb{E}_{\mathcal{P}}\Big[(y_{n+1}-\operatorname{read}_{y}(\mathrm{TF}(H)))^2\Big]-\inf_{\beta}\mathbb{E}_{\mathcal{P}}\big[(x_{n+1}^{\top}\beta-y_{n+1})^2\big]$. Then the ICL prediction $\operatorname{read}_{y}(\mathrm{TF}(H))$ of the constructed transformer in \cref{thm_theoretical_construction_whole_TF} satisfies 
\begin{align} 
\mathcal{R} = \left\{\begin{array}{cc}
   \mathcal{O}\big(\sqrt{ d\log^2 n/n}\big)  & 0<\eta\leq C\big( d \log ^2(n / \delta) / n\big)^{1 / 4} \\
   \mathcal{O}\big(d\log^2 n/n\big)  & \eta\geq C\big( d \log ^2(n / \delta) / n\big)^{1 / 4}
\end{array}\right..\label{equ_excess_risk_bound}
\end{align}
Furthermore, $\inf_{\beta}\mathbb{E}_{\mathcal{P}}\big[(x_{n+1}^{\top}\beta-y_{n+1})^2\big]=\vartheta^2+\|\beta^{*}\|_{2}^{2}$.
\end{thm}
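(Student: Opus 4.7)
My plan is to (i) compute the infimum, (ii) reduce $\mathcal{R}$ to $\mathbb{E}[\hat{y}_{n+1}^2]$ by exploiting $\beta^{\textsf{OR}}=0$ and the balanced symmetry of the mixture, and (iii) bound $\mathbb{E}[\hat{y}_{n+1}^2]$ using the construction from \cref{thm_theoretical_construction_whole_TF}.

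For (i), I expand $(x_{n+1}^\top\beta-y_{n+1})^2$ using $y_{n+1}=x_{n+1}^\top\beta_{n+1}+v_{n+1}$ with $x_{n+1}\sim\mathcal{N}(0,I_d)$ independent of $v_{n+1}\sim\mathcal{N}(0,\vartheta^2)$ and $\beta_{n+1}\in\{\pm\beta^*\}$ with equal probability. A direct moment computation gives $\mathbb{E}_{\mathcal{P}}[(x_{n+1}^\top\beta-y_{n+1})^2] = \|\beta\|_2^2+\|\beta^*\|_2^2+\vartheta^2$, minimized at $\beta=0$; hence $\beta^{\textsf{OR}}=0$ and the infimum equals $\vartheta^2+\|\beta^*\|_2^2$. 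For (ii), substituting $\beta^{\textsf{OR}}=0$ yields $\mathcal{R} = \mathbb{E}[\hat{y}_{n+1}^2] - 2\mathbb{E}[\hat{y}_{n+1}y_{n+1}]$. Since $\hat{y}_{n+1}=\operatorname{read}_y(\mathrm{TF}(H))$ depends only on $(x_1,y_1,\ldots,x_n,y_n,x_{n+1})$, and $\beta_{n+1}$ is independent of this tuple with $\mathbb{E}[\beta_{n+1}]=0$, conditioning yields $\mathbb{E}[\hat{y}_{n+1}y_{n+1}] = \mathbb{E}[\hat{y}_{n+1}x_{n+1}^\top\mathbb{E}[\beta_{n+1}]] = 0$, and therefore $\mathcal{R}=\mathbb{E}[\hat{y}_{n+1}^2]$.

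For (iii), I use that the last layer of the construction in \cref{thm_theoretical_construction_whole_TF} aggregates the estimated centers $\pm\hat{\beta}$ into an approximation of $x_{n+1}^\top\beta^{\textsf{OR}}=0$. By the cancellation $\beta_1^*+\beta_2^*=0$, the prediction can be written as $x_{n+1}^\top\hat{\delta}$ with $\|\hat{\delta}\|_2 = \mathcal{O}(\|\operatorname{read}_\beta(\mathrm{TF}(H))-\beta^*\|_2) \leq \mathcal{O}(\epsilon_n)$ by \cref{thm_theoretical_construction_whole_TF}'s estimation bound. Combining this with one-dimensional Gaussian concentration $|x_{n+1}^\top u|\leq C\sqrt{\log(1/\delta)}\|u\|_2$ yields $|\hat{y}_{n+1}|\leq C'\sqrt{\log(1/\delta)}\,\epsilon_n$ with probability $1-\delta$, where $\epsilon_n=\mathcal{O}(\sqrt{d\log^2(n/\delta)/n})$ in the high-SNR regime and $\epsilon_n=\mathcal{O}((d\log^2(n/\delta)/n)^{1/4})$ in the low-SNR regime. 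A good/bad event decomposition---using the crude worst-case bound $|\hat{y}_{n+1}|\leq C(\|\beta^*\|_2+1)\|x_{n+1}\|_2$ on the bad event (which follows from polynomial boundedness of the constructed transformer's parameters) together with $\mathbb{E}[\|x_{n+1}\|_2^2]=d$---yields $\mathbb{E}[\hat{y}_{n+1}^2]\leq C''\epsilon_n^2\log(1/\delta)+C'''\delta d$, and choosing $\delta$ inversely polynomial in $n$ absorbs the bad-event contribution and gives the two rates in \eqref{equ_excess_risk_bound}.

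The main obstacle is the cancellation step in (iii). A crude application of Cauchy--Schwarz, $|x_{n+1}^\top\hat{\delta}|\leq\|x_{n+1}\|_2\epsilon_n$, would lose a factor of $\sqrt{d}$ and produce only the suboptimal rate $\mathcal{O}(d^2/n)$ in the high-SNR regime. Obtaining the claimed $\mathcal{O}(d\log^2 n/n)$ requires carefully tracking the final attention head of the constructed transformer to verify that its output is effectively a one-dimensional Gaussian projection of a \emph{single} error vector of norm $\mathcal{O}(\epsilon_n)$---so that scalar Gaussian concentration in place of a dimension-dependent bound gives the sharp control---which in turn hinges on the structure by which the gradient-EM iterations encoded by the transformer act symmetrically on the two component estimates.
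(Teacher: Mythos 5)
Your proposal follows the same overall strategy as the paper's proof: the computation of $\inf_\beta\mathbb{E}_{\mathcal{P}}[(x_{n+1}^\top\beta-y_{n+1})^2]=\vartheta^2+\|\beta^*\|_2^2$ with minimizer $\beta^{\textsf{OR}}=0$, the vanishing of the cross term via independence and $\mathbb{E}[\beta_{n+1}]=0$, the reduction $\mathcal{R}=\mathbb{E}[\hat{y}_{n+1}^2]$, and the invocation of the two estimation rates from \cref{thm_theoretical_construction_whole_TF} are all identical in substance. The one place you diverge is the final step, and there your route is both more complicated and slightly lossier than necessary. The paper writes $\hat{y}_{n+1}=x_{n+1}^\top(\hat{\beta}^{\textsf{OR}}-\beta^{\textsf{OR}})$ and integrates out $x_{n+1}$ exactly: since $\hat{\beta}^{\textsf{OR}}$ is built from the training columns only, $\mathbb{E}[(x_{n+1}^\top\hat{\delta})^2\mid\hat{\delta}]=\|\hat{\delta}\|_2^2$, so $\mathcal{R}=\mathbb{E}\|\hat{\beta}^{\textsf{OR}}-\beta^{\textsf{OR}}\|_2^2$ with no dimension-dependent or concentration-based loss whatsoever; the remaining work is converting the high-probability bound on $\|\hat{\beta}^{\textsf{OR}}-\beta^{\textsf{OR}}\|_2$ into a second-moment bound by integrating the tail $\int\mathbb{P}(\|\hat{\beta}^{\textsf{OR}}-\beta^{\textsf{OR}}\|_2\geq\sqrt{t})\,dt$. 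This means the ``main obstacle'' you identify---avoiding the $\sqrt{d}$ loss from Cauchy--Schwarz by tracking the final attention head---is a red herring: no scalar Gaussian concentration is needed at all, only the exact conditional second moment. Your good/bad-event decomposition with $|x_{n+1}^\top u|\leq C\sqrt{\log(1/\delta)}\|u\|_2$ does reach the right ballpark, but it pays an extra $\log(1/\delta)\asymp\log n$ factor on the good event (giving $d\log^3 n/n$ rather than the claimed $d\log^2 n/n$ in the high-SNR case), and your worst-case bound $|\hat{y}_{n+1}|\leq C(\|\beta^*\|_2+1)\|x_{n+1}\|_2$ on the bad event is asserted from ``polynomial boundedness of the parameters'' without justification---the gradient-EM iterates inside the transformer involve $(I-\alpha\hat{\Sigma})^T$, which is not uniformly bounded off the event where $\hat{\Sigma}$ concentrates, so this step needs the same tail-integration care the paper uses (and which, to be fair, the paper itself treats somewhat informally). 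Replacing your concentration argument with the exact conditional expectation over $x_{n+1}$ would recover the paper's proof essentially verbatim.
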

\cref{thm_theoretical_construction_whole_TF_many_component}, \cref{thm_theoretical_construction_whole_TF} and \cref{thm_excess_risk_bound} provide the first quantitative framework for end-to-end ICL in the mixture of regression problems, achieving desired precision. The excess risk of the constructed transformer is $\mathcal{O}\big(d \log ^2 n/n\big)$ under the high SNR settings, and is $\mathcal{O}\big(\sqrt{d/n}\log n\big)$ under the low SNR settings. These results represent an advancement over the findings in \cite{pathak2023transformers}, which do not offer explicit error bounds like \eqref{equ_excess_risk_bound}.
\section{Understanding Transformer Training on MoR tasks}\label{Section:training_transformer}
\subsection{Analysis of pre-training}\label{Section:pretraining}
We now analyze the sample complexity needed to pretrain the transformer with a limited number of ICL training instances. 
Prior results from~\citet{bai2023transformers} are only applicable to linear models and are not immediately applicable  to the linear MoR models that we focus on in this work. We consider the square loss between the in-context prediction and the ground truth label:
\begin{align*}
    \ell_{\text{icl}}(\boldsymbol{\theta} ; \mathbf{Z}):=\frac{1}{2}\Big[y_{n+1}-\operatorname{clip}_{R}\big(\operatorname{read}_y\big(\mathrm{TF}_{\boldsymbol{\theta}}(H)\big)\big)\Big]^2,
\end{align*}
where $\mathbf{Z}:=\big(H, y_{n+1}\big)$ is the training prompt,  $\boldsymbol{\theta}=\big\{(K_m^{(\ell)},Q_m^{(\ell)},V_m^{(\ell)}):\ell=1,\dots,L,m=1,\dots,M\big\}$ is the collection of parameters of the transformer and $\operatorname{clip}_R(t):=\operatorname{Proj}_{[-R, R]}(t)$ is the standard clipping operator with (a suitably large) radius $R \geq 0$ that varies in different problem setups to prevent the transformer from blowing up on tail events, in all our results concerning (statistical) in-context prediction powers. Additionally, the clipping operator can be employed to control the Lipschitz constant of the transformer $\mathrm{TF}_{\boldsymbol{\theta}}$ with respect to $\boldsymbol{\theta}$. In practical applications, it is common to select a sufficiently large clipping radius $R$ to ensure that it does not alter the behavior of the transformer on any input sequence of interest. Denote $\|\boldsymbol{\theta}\|$ as the norm of transformer given by
\begin{align*}
    \|\boldsymbol{\theta}\|:=\max_{\ell \in[L]}\Big\{&\max_{m \in[M]}\big\{\big\|Q_m^{(\ell)}\big\|_{\mathrm{op}},\big\|K_m^{(\ell)}\big\|_{\mathrm{op}}\big\}+\sum_{m=1}^M\big\|V_m^{(\ell)}\big\|_{\mathrm{op}}\Big\}.
\end{align*}
Our pretraining loss is the average ICL loss on $B$ pretraining instances $\mathbf{Z}^{(1: B)} \stackrel{\text { iid }}{\sim} \pi$, and we consider the corresponding test ICL loss on a new test instance:
\begin{align*}
    \hat{L}_{\mathrm{icl}}(\boldsymbol{\theta})&\coloneqq\frac{1}{B} \sum_{j=1}^B \ell_{\text{icl}}\big(\boldsymbol{\theta} ; \mathbf{Z}^{(j)}\big),\\ 
    L_{\mathrm{icl}}(\boldsymbol{\theta})&\coloneqq \mathbb{E}_{\mathcal{P}}\big[\ell_{\text{icl}}\big(\boldsymbol{\theta} ; \mathbf{Z}\big)\big].
\end{align*}
Our pretraining algorithm is to solve a standard constrained empirical risk minimization problem over transformers with $L$ layers, $M$ heads, and norm bounded by $M'$:
\begin{align}
    \widehat{\boldsymbol{\theta}}\coloneqq&\arg \min_{\boldsymbol{\theta}\in\Theta_{M'}} \widehat{L}_{\mathrm{icl}}(\boldsymbol{\theta}),\label{MoR_ICL_pretraining_problem}\\
    \Theta_{M'}=&\Big\{\boldsymbol{\theta}=(K_m^{(\ell)},Q_m^{(\ell)},V_m^{(\ell)}): \max_{\ell \in[L]} M^{(\ell)} \leq M,\|\boldsymbol{\theta}\| \leq M'\Big\} .\nonumber
\end{align}
\begin{thm}[Generalization for pretraining]\label{thm_pretraining_bound}
For MoR problem given by \eqref{eq:linear_model_for_2MoR}, suppose that $\max_{i\leq K}\|\beta_{i}^{*}\|_{2}\leq C$ with some absolute constant $C$, then with probability at least $1-3\xi$ (over the pretraining instances $\big\{\mathbf{Z}^{(j)}\big\}_{j \in[B]}$), the solution $\widehat{\boldsymbol{\theta}}$ to \eqref{MoR_ICL_pretraining_problem} satisfies
\begin{align*}
L_{\mathrm{icl}}(\widehat{\boldsymbol{\theta}}) \leq &\inf _{\boldsymbol{\theta} \in \Theta_{B'}} L_{\mathrm{icl}}(\boldsymbol{\theta})+\mathcal{O}\Bigg((1+4/(\pi_{\min}K\eta^{2})) \times \log\Big(\frac{2nB}{\xi}\Big)\sqrt{\frac{(L)^2 (M D^2) \iota+\log (1 / \xi)}{B}}\Bigg)    
\end{align*}
where $\iota=\log\big(2+\max \big\{M', B_{x},B_{y}, (2B_y)^{-1}\big\}\big)$, $B_x=\sqrt{\log(ndB/\xi)}$, $B_y=\sqrt{\log\Big(\frac{2nB}{\xi}\Big)C^2\log(2nB/\xi)}$, $D$ is the hidden dimension and $M$ is the number of heads. 
\end{thm}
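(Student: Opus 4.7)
The plan is to reduce the theorem to a uniform concentration bound over the constrained parameter class $\Theta_{M'}$ via the standard ERM-to-population transfer
\[
L_{\mathrm{icl}}(\widehat{\boldsymbol{\theta}}) - \inf_{\boldsymbol{\theta}\in\Theta_{M'}} L_{\mathrm{icl}}(\boldsymbol{\theta}) \;\leq\; 2\sup_{\boldsymbol{\theta}\in\Theta_{M'}} \big|\widehat{L}_{\mathrm{icl}}(\boldsymbol{\theta})-L_{\mathrm{icl}}(\boldsymbol{\theta})\big|,
\]
and then control the right-hand side by covering $\Theta_{M'}$ at an appropriate scale, applying Hoeffding's inequality pointwise, and taking a union bound. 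The required ingredients are: boundedness and Lipschitz continuity of the clipped loss $\ell_{\mathrm{icl}}(\boldsymbol{\theta};\mathbf{Z})$ in $\boldsymbol{\theta}$ on a high-probability event for the prompt, together with a metric-entropy estimate for $\Theta_{M'}$.

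First I would establish the boundedness event. Since $x_i\sim\mathcal{N}(0,I_d)$, standard $\chi^2$-tail bounds plus a union bound over all $n(B+1)$ tokens in the $B$ pretraining prompts give $\max_{i,j}\|x_i^{(j)}\|_2\lesssim B_x=\sqrt{\log(ndB/\xi)}$ with probability at least $1-\xi$. For the labels, $y_i^{(j)}\mid\beta_i$ is $\mathcal{N}(x_i^{\top}\beta_i,\vartheta^2)$ with $\|\beta_i\|_2\leq C$, so conditional on the $x$-event the $y_i^{(j)}$ are sub-Gaussian with parameter $\mathcal{O}(C^2 B_x^2+\vartheta^2)$, and a maximal inequality yields $\max_{i,j}|y_i^{(j)}|\leq B_y$ with probability at least $1-\xi$. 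Set $R=B_y$ for the clipping radius so that the clip is inactive on the good event.

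Next, on the good event I would import the transformer Lipschitz lemma from \citet{bai2023transformers} (trivially adapted to our normalized-ReLU activation): $|\mathrm{read}_y(\mathrm{TF}_{\boldsymbol{\theta}}(H))-\mathrm{read}_y(\mathrm{TF}_{\boldsymbol{\theta}'}(H))|\leq K_L\|\boldsymbol{\theta}-\boldsymbol{\theta}'\|$ for a constant $K_L$ polynomial in $L,M,M',B_x,B_y$. Combined with clipping, $\ell_{\mathrm{icl}}(\boldsymbol{\theta};\mathbf{Z})\in[0,\mathcal{O}(B_y^2)]$ and is $\mathcal{O}(B_y K_L)$-Lipschitz. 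A product-of-balls cover of $\Theta_{M'}$ gives $\log\mathcal{N}(\Theta_{M'},\varepsilon)\lesssim LMD^2\log(1+M'/\varepsilon)$ (three matrices per head times $M$ heads times $L$ layers, each of dimension $D\times D$). Applying Hoeffding at each grid point with deviation $B_y^2\sqrt{(\log\mathcal{N}+\log(1/\xi))/B}$, choosing $\varepsilon$ to balance the discretization error against the entropy term, and absorbing logarithms into the quantity $\iota$, gives the $\sqrt{L^2 MD^2\iota/B}$ rate with a $\log(2nB/\xi)$ prefactor inherited from $B_y^2$.

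The chief obstacle is the $1+4/(\pi_{\min}K\eta^2)$ factor, which does not come out of a vanilla Hoeffding argument. My expectation is that it arises from a Bernstein-type refinement in which the variance proxy for $\ell_{\mathrm{icl}}$ is split into (i) an irreducible mixture-Bayes component whose second moment is controlled by the inherent noise inflation in MoR under imbalanced proportions or low SNR, producing the $1/(\pi_{\min}K\eta^2)$ term, and (ii) a fluctuation component handled by the usual covering argument. Concretely, one bounds $\mathbb{E}_{\mathcal{P}}\big[\ell_{\mathrm{icl}}(\boldsymbol{\theta};\mathbf{Z})^2\big]\lesssim(1+4/(\pi_{\min}K\eta^2))\cdot L_{\mathrm{icl}}(\boldsymbol{\theta})\cdot B_y^2$ by conditioning on the latent mixture assignments and invoking the fact that the Bayes risk under model \eqref{eq:linear_model_for_2MoR} scales with $\vartheta^2+R_{\min}^{-2}$-type quantities; plugging this into a Bernstein inequality at each cover point and optimising yields the stated multiplicative constant. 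The remainder of the proof follows routine pretraining generalisation arguments, so I would defer the detailed constant-tracking to the appendix while isolating this variance computation as the novel technical step.
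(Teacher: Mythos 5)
Your overall scaffolding—high-probability boundedness events for $x$ and $y$, the clipped loss being bounded and Lipschitz in $\boldsymbol{\theta}$ via the results of \citet{bai2023transformers}, a covering-number bound of order $LMD^2\log(1+M'/\varepsilon)$, and a pointwise concentration plus union bound feeding the standard ERM transfer $L_{\mathrm{icl}}(\widehat{\boldsymbol{\theta}})-\inf L_{\mathrm{icl}}\leq 2\sup_{\boldsymbol{\theta}}|\widehat{L}_{\mathrm{icl}}-L_{\mathrm{icl}}|$—is exactly the paper's route; the paper simply packages the covering-plus-sub-Gaussian step as Proposition A.4 of \citet{bai2023transformers} rather than running Hoeffding by hand.

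The genuine gap is your treatment of the factor $1+4/(\pi_{\min}K\eta^{2})$. You conjecture it requires a Bernstein-type refinement with a variance bound $\mathbb{E}[\ell_{\mathrm{icl}}^2]\lesssim(1+4/(\pi_{\min}K\eta^2))\,L_{\mathrm{icl}}(\boldsymbol{\theta})\,B_y^2$ tied to the mixture Bayes risk; you neither prove this bound nor need it, and it is not what drives the result. In the paper the factor enters purely through the label scale $B_y$: unconditionally, $x_i^{\top}\beta_i$ is sub-Gaussian with parameter $\tau_i$ satisfying $\tau_i^2=\sum_k\pi_k^*\|\beta_k^*\|_2^2$, so $y_i$ is sub-Gaussian with parameter $\sqrt{\tau_i^2+\vartheta^2}=\tau_i\sqrt{1+\vartheta^2/\tau_i^2}$. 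The elementary inequality $\sum_{i\neq j}\|\beta_i^*-\beta_j^*\|_2^2\leq 4(K-1)\sum_i\|\beta_i^*\|_2^2$ gives $\tau_i^2\geq\tfrac{\pi_{\min}K}{4}R_{\min}^2$, whence $\sqrt{\tau_i^2+\vartheta^2}\leq\sqrt{1+4/(\pi_{\min}K\eta^2)}\,C$, and this squared quantity is precisely the prefactor multiplying the $\sqrt{\cdot/B}$ rate after Hoeffding/chaining with envelope $\mathcal{O}(B_y^2)$. Your alternative of conditioning on the $x$-event and taking $y$ sub-Gaussian with parameter $\mathcal{O}(C^2B_x^2+\vartheta^2)$ would yield a valid bound but one expressed in $\vartheta^2$ rather than in $\eta$, so it would not reproduce the stated theorem; and the Bernstein step you flag as the ``novel technical step'' is both unsubstantiated and superfluous.
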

\begin{rmk} 
The proof of Theorem~\ref{thm_pretraining_bound} is provided in \cref{appendix_proof_pretraining}. Under the low SNR settings, the constructed transformer generally requires more attention layers than those in the high SNR settings to achieve the same level of excess risk. Besides, the theorem highlights a fundamental trade-off: while larger models (more layers, heads, and hidden dimensions) increase have the capacity to learn complex patterns, they also require more pretraining datasets ($B$).
\end{rmk} 
\subsection{Dynamics of single linear self-attention layer}
Next, we investigate the training dynamics of gradient flow for MoR models. For this subsection, we consider transformers with linear self-attention layers. Given the input matrix in the form of \eqref{Input_matrix_LSA}, appropriately sized key, query and value matrices $K, Q, V$, the output of a linear attention block is given by $\hat{y}_{n+1}=\big[f_{\mathrm{LSA}}(E;\theta)\big]_{d+1,n+1}$. For our technical analysis, following~\citet{zhang2023trained}, we only consider training the model \eqref{LSA_model} over population squared loss between $\hat{y}_{n+1}$ and $y_{n+1}$, i.e.
\begin{align*}
    \theta^{*}&\coloneqq \arg\min_{\theta\in\Theta}\Big\{L_{\mathrm{SA}}(E,\theta)=\frac{1}{2}\mathbb{E}_{\mathcal{P}}\big[(\hat{y}_{n+1}-y_{n+1})^2\big]\Big\}.
\end{align*}
And we assume that $\mathbb{E}\beta = \sum_{i=1}^{K}\pi_i\beta_{i}^{*}=0$ on the MoR task \eqref{eq:linear_model_for_2MoR}. Gradient flow of the parameters $\frac{d\theta}{dt}$ captures the behavior of gradient
descent and has dynamics given by
\begin{equation*}
\frac{d\theta}{dt}=-\nabla L_{\mathrm{SA}}(E;\theta).
\end{equation*}
We start by rewriting the output of the linear attention module in an alternative form. Following \citet{zhang2023trained}, we define
\begin{align*}
V=\left(\begin{array}{cc}
* & * \\
u_{21}^{\top} & u_{-1}
\end{array}\right), \quad K^\top Q=\left(\begin{array}{cc}
U_{11} & * \\
u_{12}^{\top} & *
\end{array}\right)
\end{align*}
therefore, the prediction on the query sample is given by
\begin{align}
\hat{y}_{n+1}&=\Big[u_{21}^{\top} \cdot \frac{1}{n} X^{\top} X \cdot U_{11}+u_{21}^{\top} \cdot \frac{1}{n} X^{\top} \mathbf{y} \cdot u_{12}^{\top}\label{LSA_estimator}\\
&\quad+u_{-1} \cdot \frac{1}{n} \mathbf{y}^{\top} X \cdot U_{11}+u_{-1} \cdot \frac{1}{n} \mathbf{y}^{\top} \mathbf{y} \cdot u_{12}^{\top}\Big] \cdot x_{n+1}\nonumber
\end{align}
where $X=[x_1,\dots,x_n]^\top$ and $\mathbf{y}=[y_{1},\dots,y_n]^\top$. We will consider gradient flow with an initialization that satisfies the following assumption
\begin{asp}[Initialization]
Let $\gamma>0$ be a parameter, and let $\Theta \in \mathbb{R}^{d \times d}$ be any matrix satisfying $\Theta^\top \mathbb{E}\beta\beta^\top\not=0$ and $\operatorname{tr}\left(\Theta \Theta^{\top}\left(\mathbb{E} \beta \beta^{\top}\right) \Theta \Theta^{\top}\left(\mathbb{E} \beta \beta^{\top}\right)\right)=1$
\begin{equation}\label{MoRTF_zeromean_initial_condition}
\begin{aligned}
    u_{-1}(0)&=\gamma \cdot 1,\\
    u_{12}(0)&=u_{21}(0)=0, \\
    U_{11}(0)&=\gamma\left(\mathbb{E} \beta \beta^{\top}\right)^{\frac{1}{2}} \Theta \Theta^{\top}\left(\mathbb{E} \beta \beta^{\top}\right)^{\frac{1}{2}}. 
\end{aligned}
\end{equation}    
\end{asp}
\cref{thm_TF_gradient_flow_to_optimum} below proves that gradient flow will converge to a global optimum under suitable initialization.
\begin{thm}\label{thm_TF_gradient_flow_to_optimum}
Under initialization condition \eqref{MoRTF_zeromean_initial_condition}, when the parameter $\gamma$ satisfies the condition
\begin{align}
    \gamma \leq \sqrt{\frac{2 \lambda_{\min }\left(\mathbb{E} \beta \beta^{\top}\right)}{\sqrt{d}\left(\frac{n+d+1}{n} \lambda_{\max }\left(\mathbb{E} \beta \beta^{\top}\right)+\frac{\vartheta^2}{n}\right)}}\label{condition_for_gamma}
\end{align}  
and $\mathbb{E}\beta=\sum_{k=1}^{K}\pi_{k}^{*}\beta_{k}^{*}=0$, we have $u_{21}(t)=u_{12}(t)=0$ for all $t\geq 0$, and the gradient flow converges to a global minimum of the population loss. Moreover, $U_{11}$ and $u_{-1}$ converge to $U_{11}^{*}$ and $u_{-1}^{*}$ respectively, where
\begin{align}
u_{-1}^{*} &=  \left\|\left(\frac{n+1}{n} \mathbb{E} \beta \beta^{\top}+\frac{\mathbb{E} \|\beta\|_2^2+\vartheta^2}{n}I\right)^{-1} \mathbb{E} \beta \beta^{\top}\right\|_F^{\frac{1}{2}},\label{optimal_sol_u-1}\\
U_{11}^{*}&=\left(u_{-1}^{*}\right)^{-1}\left(\frac{n+1}{n} \mathbb{E} \beta \beta^{\top}+ \frac{\mathbb{E} \|\beta\|_2^2+\vartheta^2}{n} I\right)^{-1}\mathbb{E} \beta \beta^{\top}.\label{optimal_sol_U_11}
\end{align}
\end{thm}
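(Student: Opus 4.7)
The plan is to expand the population loss in closed form, use the zero-mean assumption $\mathbb{E}\beta = 0$ to decouple the four parameter blocks, reduce to a two-variable gradient flow on $(U_{11}, u_{-1})$, and then analyze its global convergence.

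To begin, I would substitute $\mathbf{y} = X\beta + \mathbf{v}$ and $y_{n+1} = x_{n+1}^\top \beta + v_{n+1}$ (with the prompt-level $\beta$ drawn from the mixture) into \eqref{LSA_estimator} and take the expectation over the jointly Gaussian $(X, \mathbf{v}, x_{n+1}, v_{n+1})$ conditional on $\beta$, followed by the expectation over the mixture draw of $\beta$. The resulting coefficients involve only $\mathbb{E}\beta$, $\mathbb{E}\beta\beta^\top$, $\mathbb{E}\|\beta\|_2^2$, and $\vartheta^2$, so under $\mathbb{E}\beta = 0$ every monomial carrying an odd factor of $u_{12}$ or $u_{21}$ picks up an $\mathbb{E}\beta$ and vanishes; this is precisely what decouples the $(u_{12}, u_{21})$ block from the $(U_{11}, u_{-1})$ block.

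Next I would verify the invariance $u_{12}(t)\equiv u_{21}(t)\equiv 0$ by differentiating the explicit loss with respect to $u_{12}$ and $u_{21}$ and evaluating on $\{u_{12}=u_{21}=0\}$: each surviving partial derivative still contains an odd $\beta$-moment and is therefore identically zero there. Uniqueness of the gradient-flow ODE, together with $u_{12}(0)=u_{21}(0)=0$, then gives the invariance. On this submanifold the prediction collapses to $\hat{y}_{n+1} = (u_{-1}/n)\,\mathbf{y}^\top X U_{11} x_{n+1}$, and taking the expectation produces a reduced loss whose quadratic part in $U_{11}$ is governed by the positive definite matrix $A = \tfrac{n+1}{n}\mathbb{E}\beta\beta^\top + \tfrac{\mathbb{E}\|\beta\|_2^2+\vartheta^2}{n} I$ and whose cross-term with $u_{-1}$ is governed by $B = \mathbb{E}\beta\beta^\top$. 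Setting $\nabla_{U_{11}} L_{\mathrm{SA}} = 0$ on the spectral subspace commuting with $A$ and $B$ forces $U_{11} = (u_{-1})^{-1} A^{-1} B$, and the second stationarity equation in $u_{-1}$ then recovers \eqref{optimal_sol_u-1}--\eqref{optimal_sol_U_11}. Crucially, $U_{11}(0) = \gamma (\mathbb{E}\beta\beta^\top)^{1/2}\Theta\Theta^\top (\mathbb{E}\beta\beta^\top)^{1/2}$ is simultaneously diagonalizable with $A$ and $B$, and this commuting structure is preserved by the reduced flow, so the matrix ODE for $U_{11}$ splits along the eigenbasis of $\mathbb{E}\beta\beta^\top$ into scalar ODEs coupled only through $u_{-1}$.

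The main obstacle is global convergence of this non-convex, product-structured flow to the positive minimizer. The loss has a saddle at $u_{-1}=0$, and the approximate rescaling symmetry $(U_{11}, u_{-1}) \mapsto (\alpha U_{11}, u_{-1}/\alpha)$ means that without care the flow could drift in magnitude rather than converge. The upper bound on $\gamma$ in \eqref{condition_for_gamma} is precisely the condition that keeps the initialization inside a sublevel set of $L_{\mathrm{SA}}$ on which the only critical point is the global minimum $(U_{11}^*, u_{-1}^*)$; I would verify this by comparing the loss at the initial point against the loss at the origin, using the trace-normalization $\operatorname{tr}(\Theta\Theta^\top (\mathbb{E}\beta\beta^\top) \Theta\Theta^\top (\mathbb{E}\beta\beta^\top))=1$ imposed on $\Theta$ to control the initial magnitude of the quadratic term. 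The argument is then closed by LaSalle's invariance principle, using the loss itself as the Lyapunov function on this sublevel set, and matching the limit to $(U_{11}^*, u_{-1}^*)$ via the stationarity identities already obtained.
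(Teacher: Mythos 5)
Your overall architecture matches the paper's: expand the population loss, use $\mathbb{E}\beta=0$ to kill the odd-moment terms, show $\{u_{12}=u_{21}=0\}$ is invariant under the flow, reduce to a two-block loss $\tilde\ell(U_{11},u_{-1})$ governed by $\Gamma=\frac{n+1}{n}\mathbb{E}\beta\beta^\top+\frac{\mathbb{E}\|\beta\|_2^2+\vartheta^2}{n}I$ and $\mathbb{E}\beta\beta^\top$, and use the $\gamma$-condition to trap the trajectory in a sublevel set strictly below the value $\mathbb{E}\beta^\top\beta+\vartheta^2$ attained on $\{u_{-1}=0\}$, which keeps $u_{-1}$ bounded away from zero. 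However, two claims in your proposal do not hold as stated. First, $U_{11}(0)=\gamma(\mathbb{E}\beta\beta^\top)^{1/2}\Theta\Theta^\top(\mathbb{E}\beta\beta^\top)^{1/2}$ is \emph{not} simultaneously diagonalizable with $\mathbb{E}\beta\beta^\top$ for a general $\Theta$ satisfying only the trace normalization; that would require $\Theta\Theta^\top$ to commute with $\mathbb{E}\beta\beta^\top$, which is not assumed. So the reduction to scalar ODEs along the eigenbasis fails. This is not fatal (the paper never diagonalizes; it works with the matrix flow directly and proves a PL inequality along the trajectory), but you cannot rely on that splitting.

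The second gap is the substantive one: stationarity of the reduced loss only determines the \emph{product} $u_{-1}U_{11}=\Gamma^{-1}\mathbb{E}\beta\beta^\top$. Because of the exact rescaling symmetry $(U_{11},u_{-1})\mapsto(\alpha U_{11},u_{-1}/\alpha)$ that you yourself note, the set of global minimizers is a one-parameter orbit, all lying in your sublevel set, so it is false that "the only critical point in the sublevel set is the global minimum $(U_{11}^*,u_{-1}^*)$," and LaSalle can at best give convergence to that orbit, not to the particular point in \eqref{optimal_sol_u-1}--\eqref{optimal_sol_U_11}. The missing ingredient is the conservation law $\frac{d}{dt}\operatorname{tr}\bigl[U_{11}U_{11}^\top\bigr]=\frac{d}{dt}u_{-1}^2$ satisfied by the reduced flow, combined with the fact that the initialization is exactly balanced: the trace normalization $\operatorname{tr}\bigl(\Theta\Theta^\top(\mathbb{E}\beta\beta^\top)\Theta\Theta^\top(\mathbb{E}\beta\beta^\top)\bigr)=1$ gives $\|U_{11}(0)\|_F^2=\gamma^2=u_{-1}(0)^2$, so $\|U_{11}(t)\|_F^2=u_{-1}(t)^2$ for all $t$. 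Passing to the limit, $u_{-1}^2=\|U_{11}\|_F^2$ together with $u_{-1}U_{11}=\Gamma^{-1}\mathbb{E}\beta\beta^\top$ forces $u_{-1}^*=\|\Gamma^{-1}\mathbb{E}\beta\beta^\top\|_F^{1/2}$, which is \eqref{optimal_sol_u-1}. Without this balancedness argument your proof cannot identify the limit, and this is also the step that explains \emph{why} the initialization is constrained the way it is.
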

With $u_{-1}^{*}$ and $U_{11}^{*}$ specified in \eqref{optimal_sol_u-1} and \eqref{optimal_sol_U_11}, the linear self-attention layer makes prediction on $x_{n+1}$ as 
\begin{align*}
    \hat{y}_{n+1}&=x_{n+1}^{\top}\left(\frac{n+1}{n} \mathbb{E} \beta \beta^{\top}+ \frac{\mathbb{E} \|\beta\|_2^2+\vartheta^2}{n} I\right)^{-1}  \mathbb{E}\beta\beta^{\top}\left[\frac{1}{n}\sum_{i=1}^{n}y_ix_i\right].
\end{align*}
When $n$ is sufficiently large, it holds that $u_{-1}^{*}U_{11}^{*}\approx I_d$ and $\frac{1}{n}\sum_{i=1}^{n}y_ix_i^\top \approx\mathbb{E}_{x,y}yx$. Therefore, when $n$ is large, the prediction made by linear self-attention layer $\hat{y}_{n+1}\approx x_{n+1}^{\top} I_{d} \left(\mathbb{E}xx^{\top}\beta+\mathbb{E}vx\right)=x_{n+1}^{\top}\beta^{\textsf{OR}}$. This shows that the linear self-attention layer effectively learns the optimal predictor in the large-sample limit.

\cref{thm_TF_gradient_flow_to_optimum} provides crucial insights into the convergence of gradient flow under structured initializations and zero mean assumption for the coefficients. Larger noise variance $\vartheta^2$ or smaller sample size $n$ necessitates smaller $\gamma$ (scaling of $u_{-1}(0)$). The initialization $U_{11}(0)$ and the trace condition on $\Theta$ encode prior knowledge of the input distribution $\mathbb{E}\left[\beta \beta^{\top}\right]$, acting as a preconditioner for efficient learning. In particular, the trace condition in \eqref{MoRTF_zeromean_initial_condition} ensures $\Theta$ is scaled to interact stably with the data covariance, preventing exploding/vanishing updates. Finally, we remark that if $\mathbb{E}\beta\not=0$, there are additional terms affecting the dynamics, possibly complicating the convergence. We leave this problem as a possible future direction.


Compared to Theorem 4.1 in \citet{zhang2023trained}, our results are applicable for the case when label noise is present. Furthermore, we generalize the distribution assumption proposed on the the coefficient $\beta$. Indeed, in \citet{zhang2023trained}, the sample on the prompt are generated based on the noiseless model $y_i=x_i^\top \beta$ with $\beta\sim N(0,I_d)$. Whereas, our analysis only relies on the moment information $\mathbb{E}\beta$ and $\mathbb{E}\beta\beta^\top$ on the distribution of $\beta$. Besides, as mentioned above, our assumption \eqref{MoRTF_zeromean_initial_condition} precisely characterizes how the initialization condition on the parameters depends on the covariance structure $\mathbb{E}\beta\beta^\top$.  
\section{Simulation study}\label{Section:simulation}
 
In this section, we present numeric results of training transformers on the prompts described in \cref{Existence_of_TF_section}. We train our transformers using Adam, with a constant
step size of 0.001. For the general settings in the experiments, the dimension of samples $d=32$. The number of training prompts are $B=64$ by default ($B$ is other value if otherwise stated). The hidden dimension are $D=64$ by default ($D$ is other value if otherwise stated). The training data $x_{i}$'s are i.i.d. sampled from standard multivariate Gaussian distribution and the noise $v_i$'s are i.i.d. sampled from normal distribution $\mathcal{N}(0,\vartheta^2)$. The regression coefficients are generated from standard multivariate normal and then normalized by its $l_2$ norm. Once the coefficient is generated, it is fixed. The excess MSE is reported. Each experiment is repeated by 20 times and the results is averaged over these 20 times. 
 
\begin{figure*}[ht]
\vskip 0.2in
\begin{center}
\centerline{ 
\includegraphics[width=0.45\textwidth]{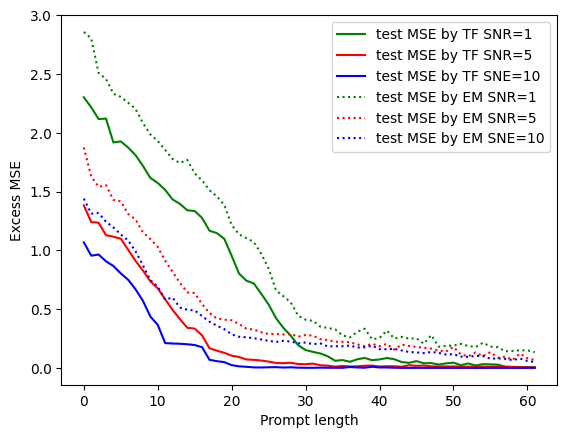}  
\includegraphics[width=0.45\textwidth]{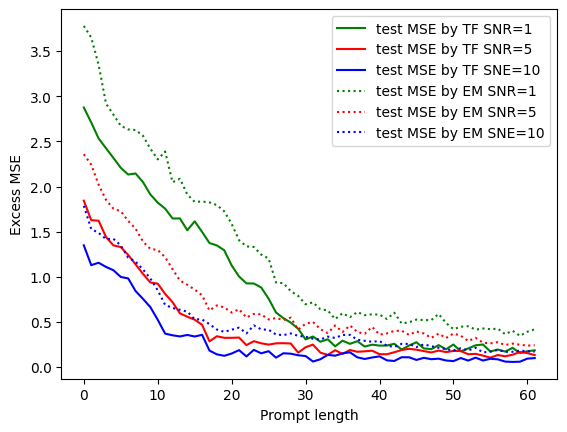} 
} 
\centerline{
\includegraphics[width=0.45\textwidth]{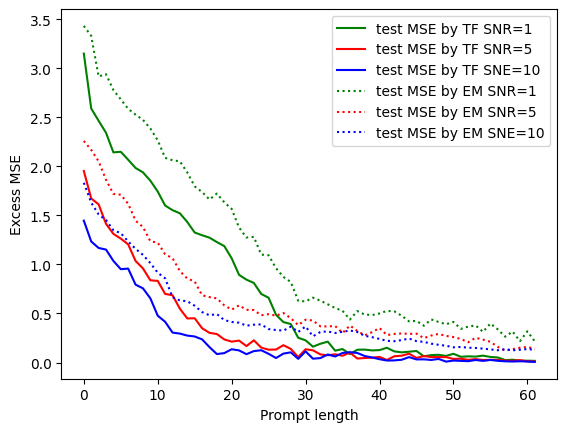}   
\includegraphics[width=0.45\textwidth]{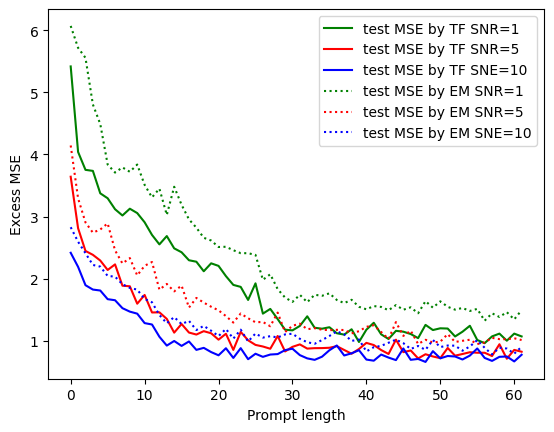}  
}
\caption{Plot of excess testing risk by the transformer (and EM algorithm) v.s. prompt length with different SNRs on MoR tasks with $K=2,3,5,20$ components.}\label{mse_length_2c_fig}  
\end{center}
\vskip -0.2in
\end{figure*} 

The initializations of the transformer parameters for all our experiments are random standard Gaussian. As we will see from our results, transformers provide efficient prediction and estimation errors despite this global initialization. A possible explanation for this fact might be the overparametrization naturally available in the transformer architecture and the related need for overparametrization for estimation in mixture models~\citep{dwivedi2020singularity,xu2024toward}; we leave a theoretical investigation of this fact as intriguing future work.

\textbf{Performance with different prompt length: } In this experiment, we vary the number of components $K=2,3,5,20$. For each case, we run the experiment with different SNR ($\eta=1,5,10$). The $x$-axis is the prompt length, and the $y$-axis is the test MSE. The number of attention layers is given by $L=4$. The performance results of the transformer are presented in \cref{mse_length_2c_fig}.
 
From \cref{mse_length_2c_fig}, we observe the following trends: (1) With the number of prompt lengths and other parameters held constant, the trained transformer exhibits a higher excess MSE in the low SNR settings. (2) When the prompt length is very small, indicating an insufficient number of samples in the prompt, the resulting excess test MSE is high. However, with a sufficiently large prompt length, the performance of the transformers stabilizes and is effective across all SNR settings, leading to a relatively small excess test MSE. (3) Additionally, when the prompt length and SNR are fixed, an increase in the number of components tends to result in a larger excess test MSE.

\textbf{Performance with different number of training prompts:} In this experiment, we vary the number of training prompts $B$ from $64$ to $512$. For each case, we run the experiment with two components ($K=2$), different SNR ($\eta=1,5,10$). The $x$-axis is the number of training prompts, and the $y$-axis is the test MSE. The length of training prompts is $n=64$.

\begin{figure}[ht]
\vskip 0.2in
\begin{center}
\centerline{\includegraphics[width=0.45\textwidth]{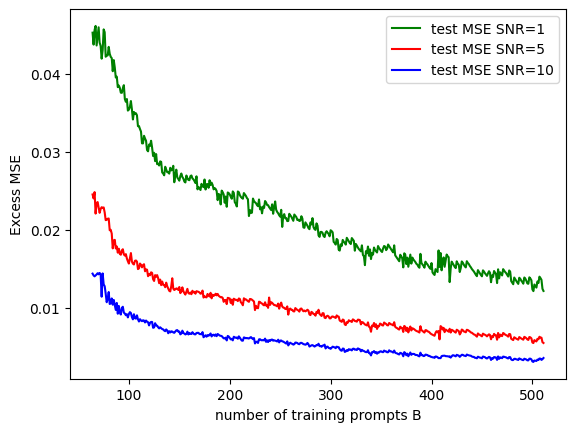} } 
\caption{Plot of excess testing risk of the transformer v.s. the number of prompts with different SNRs.}\label{Excess_risk_B_2C}  
\end{center}
\vskip -0.2in
\end{figure}

\begin{figure}[ht]
\vskip 0.2in 
\begin{center}
\centerline{\includegraphics[width=0.45\textwidth]{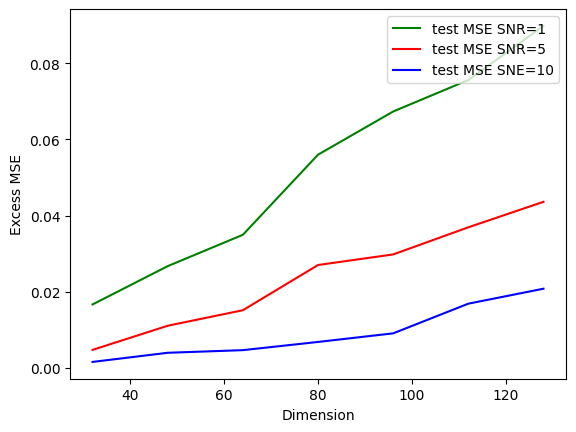}}
\caption{Plot of excess testing risk of the transformer v.s. the dimension $d$ with different SNRs. }\label{Excess_risk_dim_2C}
\end{center}
\vskip -0.2in
\end{figure}

\cref{Excess_risk_B_2C} gives the performance of trained transformer with different number of training prompts under three different SNR settings. Based on \cref{Excess_risk_B_2C}, we observe that when the number of training prompts is already sufficiently large, the excess MSE is relatively small. Furthermore, as the number of training prompts increases, there is a general trend of decreasing in the excess MSE.

\textbf{Performance with different dimension $d$ of samples:} In this experiment, we fix the hidden dimension $D=256$, the number of components $K=2$, the number of prompts $B=128$ and the prompt length is given by $n=64$. The $x$-axis is the dimension $d$ of the input sample $x_{i}$ and $y$-axis is the excess test MSE. In this experiment, we evaluate the performance of the trained transformer for various dimensions $d=32,48,64,80,96,112,128$. The performance of the transformer are presented in \cref{Excess_risk_dim_2C}. Observations from this figure indicate that increasing the dimension $d$ significantly raises the excess test MSE. Notably, this increase becomes more pronounced at the lower SNR levels. 

An additional experiment on the performance with different number of hidden dimension is provided in Section \ref{addexp}.

\section{Conclusions}

We have explored the behavior of transformers in handling linear MoR problems, demonstrating their in-context learning capabilities through both theoretical analysis and empirical experiments. Specifically, we showed that transformers are capable of implementing the EM algorithm for linear MoR tasks. Additionally, we have examined the sample complexity involved in pretraining transformers with a finite number of ICL training instances and the training dynamics of gradient flow, offering valuable insights into their practical performance. Our empirical findings also reveal that transformer performance is less susceptible to initializations. For future work, understanding the training dynamics of general transformers for MoR problems remains a highly interesting and challenging task. Furthermore, extending our results to non-linear MoR models would be a natural direction.
\clearpage

\bibliographystyle{abbrvnat}
\bibliography{reference}

\begin{thebibliography}{44}
\providecommand{\natexlab}[1]{#1}
\providecommand{\url}[1]{\texttt{#1}}
\expandafter\ifx\csname urlstyle\endcsname\relax
  \providecommand{\doi}[1]{doi: #1}\else
  \providecommand{\doi}{doi: \begingroup \urlstyle{rm}\Url}\fi

\bibitem[Ahn et~al.(2024)Ahn, Cheng, Daneshmand, and Sra]{ahn2024transformers}
K.~Ahn, X.~Cheng, H.~Daneshmand, and S.~Sra.
\newblock Transformers learn to implement preconditioned gradient descent for
  in-context learning.
\newblock \emph{Advances in Neural Information Processing Systems}, 36, 2024.

\bibitem[Aky{\"u}rek et~al.(2023)Aky{\"u}rek, Schuurmans, Andreas, Ma, and
  Zhou]{akyurek2022learning}
E.~Aky{\"u}rek, D.~Schuurmans, J.~Andreas, T.~Ma, and D.~Zhou.
\newblock What learning algorithm is in-context learning? {I}nvestigations with
  linear models.
\newblock In \emph{The Eleventh International Conference on Learning
  Representations}, 2023.
\newblock URL \url{https://openreview.net/forum?id=0g0X4H8yN4I}.

\bibitem[Bai et~al.(2024)Bai, Chen, Wang, Xiong, and Mei]{bai2023transformers}
Y.~Bai, F.~Chen, H.~Wang, C.~Xiong, and S.~Mei.
\newblock Transformers as statisticians: Provable in-context learning with
  in-context algorithm selection.
\newblock \emph{Advances in neural information processing systems}, 36, 2024.

\bibitem[Balakrishnan et~al.(2017)Balakrishnan, Wainwright, and
  Yu]{balakrishnan2017statistical}
S.~Balakrishnan, M.~J. Wainwright, and B.~Yu.
\newblock {Statistical guarantees for the EM algorithm: From population to
  sample-based analysis}.
\newblock \emph{The Annals of Statistics}, 45\penalty0 (1):\penalty0 77--120,
  2017.

\bibitem[Chaganty and Liang(2013)]{chaganty2013spectral}
A.~T. Chaganty and P.~Liang.
\newblock Spectral experts for estimating mixtures of linear regressions.
\newblock In \emph{International Conference on Machine Learning}, pages
  1040--1048. PMLR, 2013.

\bibitem[Chen et~al.(2020)Chen, Li, and Song]{chen2020learning}
S.~Chen, J.~Li, and Z.~Song.
\newblock Learning mixtures of linear regressions in subexponential time via
  fourier moments.
\newblock In \emph{Proceedings of the 52nd Annual ACM SIGACT Symposium on
  Theory of Computing}, pages 587--600, 2020.

\bibitem[Chen et~al.(2024)Chen, Sheen, Wang, and Yang]{chen2024training}
S.~Chen, H.~Sheen, T.~Wang, and Z.~Yang.
\newblock Training dynamics of multi-head softmax attention for in-context
  learning: Emergence, convergence, and optimality.
\newblock \emph{arXiv preprint arXiv:2402.19442}, 2024.

\bibitem[Daskalakis et~al.(2017)Daskalakis, Tzamos, and
  Zampetakis]{daskalakis2017ten}
C.~Daskalakis, C.~Tzamos, and M.~Zampetakis.
\newblock {Ten steps of EM suffice for mixtures of two Gaussians}.
\newblock In \emph{Conference on Learning Theory}, pages 704--710. PMLR, 2017.

\bibitem[De~Veaux(1989)]{de1989mixtures}
R.~D. De~Veaux.
\newblock Mixtures of linear regressions.
\newblock \emph{Computational Statistics \& Data Analysis}, 8\penalty0
  (3):\penalty0 227--245, 1989.

\bibitem[Deb and Holmes(2000)]{deb2000estimates}
P.~Deb and A.~M. Holmes.
\newblock Estimates of use and costs of behavioural health care: {A} comparison
  of standard and finite mixture models.
\newblock \emph{Health economics}, 9\penalty0 (6):\penalty0 475--489, 2000.

\bibitem[Dwivedi et~al.(2020)Dwivedi, Ho, Khamaru, Wainwright, Jordan, and
  Yu]{dwivedi2020singularity}
R.~Dwivedi, N.~Ho, K.~Khamaru, M.~J. Wainwright, M.~I. Jordan, and B.~Yu.
\newblock Singularity, misspecification and the convergence rate of {EM}.
\newblock \emph{The Annals of Statistics}, 48\penalty0 (6):\penalty0
  3161--3182, 2020.

\bibitem[Faria and Soromenho(2010)]{faria2010fitting}
S.~Faria and G.~Soromenho.
\newblock Fitting mixtures of linear regressions.
\newblock \emph{Journal of Statistical Computation and Simulation}, 80\penalty0
  (2):\penalty0 201--225, 2010.

\bibitem[Fu et~al.(2023)Fu, Chen, Jia, and Sharan]{fu2023transformers}
D.~Fu, T.-Q. Chen, R.~Jia, and V.~Sharan.
\newblock Transformers learn higher-order optimization methods for in-context
  learning: A study with linear models.
\newblock \emph{arXiv preprint arXiv:2310.17086}, 2023.

\bibitem[Garg et~al.(2022)Garg, Tsipras, Liang, and Valiant]{garg2022can}
S.~Garg, D.~Tsipras, P.~S. Liang, and G.~Valiant.
\newblock What can transformers learn in-context? {A} case study of simple
  function classes.
\newblock \emph{Advances in Neural Information Processing Systems},
  35:\penalty0 30583--30598, 2022.

\bibitem[Ghosh et~al.(2020)Ghosh, Chung, Yin, and
  Ramchandran]{ghosh2020efficient}
A.~Ghosh, J.~Chung, D.~Yin, and K.~Ramchandran.
\newblock An efficient framework for clustered federated learning.
\newblock \emph{Advances in Neural Information Processing Systems},
  33:\penalty0 19586--19597, 2020.

\bibitem[Huang et~al.(2024)Huang, Cheng, and Liang]{Huang:ICML:24}
Y.~Huang, Y.~Cheng, and Y.~Liang.
\newblock In-context convergence of transformers.
\newblock \emph{Proc. International Conference on Machine Learning}, 2024.

\bibitem[Jin et~al.(2016)Jin, Zhang, Balakrishnan, Wainwright, and
  Jordan]{jin2016local}
C.~Jin, Y.~Zhang, S.~Balakrishnan, M.~J. Wainwright, and M.~I. Jordan.
\newblock {Local maxima in the likelihood of Gaussian mixture models:
  Structural results and algorithmic consequences}.
\newblock \emph{Advances in neural information processing systems}, 29, 2016.

\bibitem[Jordan and Jacobs(1994)]{jordan1994hierarchical}
M.~I. Jordan and R.~A. Jacobs.
\newblock Hierarchical mixtures of experts and the {EM} algorithm.
\newblock \emph{Neural computation}, 6\penalty0 (2):\penalty0 181--214, 1994.

\bibitem[Kleinberg and Sandler(2008)]{kleinberg2008using}
J.~Kleinberg and M.~Sandler.
\newblock Using mixture models for collaborative filtering.
\newblock \emph{Journal of Computer and System Sciences}, 74\penalty0
  (1):\penalty0 49--69, 2008.

\bibitem[Klusowski et~al.(2019)Klusowski, Yang, and
  Brinda]{klusowski2019estimating}
J.~M. Klusowski, D.~Yang, and W.~Brinda.
\newblock Estimating the coefficients of a mixture of two linear regressions by
  expectation maximization.
\newblock \emph{IEEE Transactions on Information Theory}, 65\penalty0
  (6):\penalty0 3515--3524, 2019.

\bibitem[Kong et~al.(2020)Kong, Somani, Song, Kakade, and Oh]{kong2020meta}
W.~Kong, R.~Somani, Z.~Song, S.~Kakade, and S.~Oh.
\newblock Meta-learning for mixed linear regression.
\newblock In \emph{International Conference on Machine Learning}, pages
  5394--5404. PMLR, 2020.

\bibitem[Kwon and Caramanis(2020)]{kwon2020converges}
J.~Kwon and C.~Caramanis.
\newblock {EM} converges for a mixture of many linear regressions.
\newblock In \emph{International Conference on Artificial Intelligence and
  Statistics}, pages 1727--1736. PMLR, 2020.

\bibitem[Kwon et~al.(2019)Kwon, Qian, Caramanis, Chen, and
  Davis]{kwon2019global}
J.~Kwon, W.~Qian, C.~Caramanis, Y.~Chen, and D.~Davis.
\newblock {Global convergence of the EM algorithm for mixtures of two component
  linear regression}.
\newblock In \emph{Conference on Learning Theory}, pages 2055--2110. PMLR,
  2019.

\bibitem[Kwon et~al.(2021)Kwon, Ho, and Caramanis]{kwon2021minimax}
J.~Kwon, N.~Ho, and C.~Caramanis.
\newblock On the minimax optimality of the {EM} algorithm for learning
  two-component mixed linear regression.
\newblock In \emph{International Conference on Artificial Intelligence and
  Statistics}, pages 1405--1413. PMLR, 2021.

\bibitem[McLachlan and Krishnan(2007)]{mclachlan2007algorithm}
G.~J. McLachlan and T.~Krishnan.
\newblock \emph{The EM algorithm and extensions}.
\newblock John Wiley \& Sons, 2007.

\bibitem[Pathak et~al.(2024)Pathak, Sen, Kong, and Das]{pathak2023transformers}
R.~Pathak, R.~Sen, W.~Kong, and A.~Das.
\newblock Transformers can optimally learn regression mixture models.
\newblock In \emph{The Twelfth International Conference on Learning
  Representations}, 2024.
\newblock URL \url{https://openreview.net/forum?id=sLkj91HIZU}.

\bibitem[Tseng(2004)]{tseng2004analysis}
P.~Tseng.
\newblock {An analysis of the EM algorithm and entropy-like proximal point
  methods}.
\newblock \emph{Mathematics of Operations Research}, 29\penalty0 (1):\penalty0
  27--44, 2004.

\bibitem[Vershynin(2018)]{vershynin2018high}
R.~Vershynin.
\newblock \emph{High-dimensional probability: An introduction with applications
  in data science}, volume~47.
\newblock Cambridge university press, 2018.

\bibitem[Viele and Tong(2002)]{viele2002modeling}
K.~Viele and B.~Tong.
\newblock Modeling with mixtures of linear regressions.
\newblock \emph{Statistics and Computing}, 12:\penalty0 315--330, 2002.

\bibitem[Von~Oswald et~al.(2023{\natexlab{a}})Von~Oswald, Niklasson, Randazzo,
  Sacramento, Mordvintsev, Zhmoginov, and Vladymyrov]{von2023transformers}
J.~Von~Oswald, E.~Niklasson, E.~Randazzo, J.~Sacramento, A.~Mordvintsev,
  A.~Zhmoginov, and M.~Vladymyrov.
\newblock Transformers learn in-context by gradient descent.
\newblock In \emph{International Conference on Machine Learning}, pages
  35151--35174. PMLR, 2023{\natexlab{a}}.

\bibitem[Von~Oswald et~al.(2023{\natexlab{b}})Von~Oswald, Niklasson, Schlegel,
  Kobayashi, Zucchet, Scherrer, Miller, Sandler, Vladymyrov, Pascanu,
  et~al.]{von2023uncovering}
J.~Von~Oswald, E.~Niklasson, M.~Schlegel, S.~Kobayashi, N.~Zucchet,
  N.~Scherrer, N.~Miller, M.~Sandler, M.~Vladymyrov, R.~Pascanu, et~al.
\newblock Uncovering mesa-optimization algorithms in transformers.
\newblock \emph{arXiv preprint arXiv:2309.05858}, 2023{\natexlab{b}}.

\bibitem[Wainwright(2019)]{wainwright2019high}
M.~J. Wainwright.
\newblock \emph{High-dimensional statistics: A non-asymptotic viewpoint},
  volume~48.
\newblock Cambridge University Press, 2019.

\bibitem[Wang et~al.(2024)Wang, Zhang, and Mai]{wang2023statistical}
N.~Wang, X.~Zhang, and Q.~Mai.
\newblock Statistical analysis for a penalized {EM} algorithm in
  high-dimensional mixture linear regression model.
\newblock \emph{Journal of Machine Learning Research}, 25\penalty0
  (222):\penalty0 1--85, 2024.

\bibitem[Wang et~al.(2015)Wang, Gu, Ning, and Liu]{wang2015high}
Z.~Wang, Q.~Gu, Y.~Ning, and H.~Liu.
\newblock {High dimensional EM algorithm: Statistical optimization and
  asymptotic normality}.
\newblock \emph{Advances in neural information processing systems}, 28, 2015.

\bibitem[Weinberger and Bresler(2022)]{weinberger2022algorithm}
N.~Weinberger and G.~Bresler.
\newblock The {EM} algorithm is adaptively-optimal for unbalanced symmetric
  {G}aussian mixtures.
\newblock \emph{Journal of Machine Learning Research}, 23\penalty0
  (103):\penalty0 1--79, 2022.

\bibitem[Wu(1983)]{wu1983convergence}
C.~J. Wu.
\newblock On the convergence properties of the {EM} algorithm.
\newblock \emph{The Annals of statistics}, pages 95--103, 1983.

\bibitem[Wu and Zhou(2021)]{wu2021randomly}
Y.~Wu and H.~H. Zhou.
\newblock Randomly initialized {EM} algorithm for two-component {G}aussian
  mixture achieves near optimality in $\mathcal{O}(\sqrt{n})$ iterations.
\newblock \emph{Mathematical Statistics and Learning}, 4\penalty0 (3), 2021.

\bibitem[Xu et~al.(2016)Xu, Hsu, and Maleki]{xu2016global}
J.~Xu, D.~J. Hsu, and A.~Maleki.
\newblock Global analysis of expectation maximization for mixtures of two
  {G}aussians.
\newblock \emph{Advances in Neural Information Processing Systems}, 29, 2016.

\bibitem[Xu et~al.(2024)Xu, Fazel, and Du]{xu2024toward}
W.~Xu, M.~Fazel, and S.~S. Du.
\newblock {Toward Global Convergence of Gradient EM for Over-Parameterized
  Gaussian Mixture Models}.
\newblock \emph{arXiv preprint arXiv:2407.00490}, 2024.

\bibitem[Yi and Caramanis(2015)]{yi2015regularized}
X.~Yi and C.~Caramanis.
\newblock Regularized {EM} algorithms: A unified framework and statistical
  guarantees.
\newblock \emph{Advances in Neural Information Processing Systems}, 28, 2015.

\bibitem[Zhang et~al.(2023)Zhang, Frei, and Bartlett]{zhang2023trained}
R.~Zhang, S.~Frei, and P.~L. Bartlett.
\newblock Trained transformers learn linear models in-context.
\newblock \emph{arXiv preprint arXiv:2306.09927}, 2023.

\bibitem[Zhang et~al.(2024)Zhang, Wu, and Bartlett]{zhang2024context}
R.~Zhang, J.~Wu, and P.~L. Bartlett.
\newblock {In-Context Learning of a Linear Transformer Block: Benefits of the
  MLP Component and One-Step GD Initialization}.
\newblock \emph{arXiv preprint arXiv:2402.14951}, 2024.

\bibitem[Zhang et~al.(2016)Zhang, Chen, Zhou, and Jordan]{zhang2016spectral}
Y.~Zhang, X.~Chen, D.~Zhou, and M.~I. Jordan.
\newblock Spectral methods meet {EM}: A provably optimal algorithm for
  crowdsourcing.
\newblock \emph{Journal of Machine Learning Research}, 17\penalty0
  (102):\penalty0 1--44, 2016.

\bibitem[Zhu et~al.(2017)Zhu, Wang, Zhai, and Gu]{zhu2017high}
R.~Zhu, L.~Wang, C.~Zhai, and Q.~Gu.
\newblock High-dimensional variance-reduced stochastic gradient
  expectation-maximization algorithm.
\newblock In \emph{International Conference on Machine Learning}, pages
  4180--4188. PMLR, 2017.

\end{thebibliography}

\clearpage 
\appendix 
\section{Proof of Theorems in \cref{Section:TF_implement_EM}} 
In this section, we illustrate that the transformers constructed in \cref{thm_theoretical_construction_whole_TF_many_component} and \cref{thm_theoretical_construction_whole_TF} can solve the MoR problem by implementing the EM algorithm internally while GD is used in each M-step. Previous works (e.g. \cite{balakrishnan2017statistical}, \cite{kwon2019global} and \cite{kwon2021minimax}) focused on the sample-based EM algorithm, typically employing closed-form solutions or one-step gradient approaches in the M-step. We start with detailed explanation on the existence of the transformer in MoR problem with two symmetric components ($K=2, \pi_1^*=\pi_2^*=\frac{1}{2}$ and $\beta_{1}^{*}=-\beta_{2}^{*}=\beta^{*}$). For general analysis, we explore the performance of the transformer using a gradient descent in steps $T$ within the EM algorithm. To simplify the analysis, we restrict our stepsize $\alpha\in (0,1)$ in each gradient descent step in M-step.

\textbf{Attention layer can implement the one-step gradient descent.} We first recall how the attention layer can implement one-step GD for a certain class of loss functions as demonstrated by~\cite{bai2023transformers}. Let $\ell: \mathbb{R}^2 \rightarrow \mathbb{R}$ be a loss function. Let $\widehat{L}_n(\beta):=\frac{1}{n} \sum_{i=1}^n \ell\big(\beta^{\top} x_i, y_i\big)$ denote the empirical risk with loss function $\ell$ on dataset $\{(x_i, y_i)\}_{i \in[n]}$, and we denote
\begin{equation}\label{icgd}
\beta_{k+1}:=\beta_k-\alpha \nabla \widehat{L}_n(\beta_{k})
\end{equation}
as the GD trajectory on $\widehat{L}_n$ with initialization $\beta_{0} \in \mathbb{R}^d$ and learning rate $\alpha>0$. The foundational concept of the construction presented in \cref{thm_theoretical_construction_whole_TF} is derived from \cite{bai2023transformers}. It hinges on the condition that the partial derivative of the loss function, $\partial_s \ell:(s, t) \mapsto \partial_s \ell(s, t)$ (considered as a bivariate function), can be approximated by a sum of ReLU functions, which are defined as follows:
\begin{defn}[Approximability by sum of ReLUs]
A function $g: \mathbb{R}^k \rightarrow \mathbb{R}$ is $(\varepsilon_{\text {approx }}, R, M, C)$-approximable by sum of ReLUs, if there exists a ``$(M, C)$-sum of ReLUs" function
$$f_{M, C}(\mathbf{z})=\sum_{m=1}^M c_m \sigma\big(\mathbf{a}_m^{\top}[\mathbf{z} ; 1]\big) \quad \text { with } \sum_{m=1}^M|c_m| \leq C, \max _{m \in[M]}\|\mathbf{a}_m\|_1 \leq 1,  \mathbf{a}_m \in \mathbb{R}^{k+1}, c_m \in \mathbb{R}$$
such that $\sup _{\mathbf{z} \in[-R, R]^k}|g(\mathbf{z})-f_{M, C}(\mathbf{z})| \leq \varepsilon_{\text{approx}}$.
\end{defn}
\noindent Suppose that the partial derivative of the loss function, $\partial_s \ell(s, t)$, is approximable by a sum of ReLUs. Then, $T$ steps of GD, as described in \eqref{icgd}, can be approximately implemented by employing $T$ attention layers within the transformer. This result is formally presented in \cref{bcw+23propc.2}.

\textbf{Transformer can implement the gradient-EM algorithm:} \cref{bcw+23propc.2} illustrates how the transformer described in \cref{thm_theoretical_construction_whole_TF} is capable of learning from the MoR problem. Using \cref{bcw+23propc.2}, we can construct a transformer whose architecture consists of attention layers that implement GD for each M-step, followed by additional attention layers responsible for computing the necessary quantities in the E-step. Here is a summary of how the transformer implements the EM algorithm for the mixture of regression problem. Following the notation from \cite{balakrishnan2017statistical}, we consider the mixture of regression with two symmetric components and define the weight function:
\begin{align*}
w_\beta(x, y)=\frac{\exp \big\{-\frac{1}{2 \vartheta^2}\big(y-x^{\top} \beta\big)^2\big\}}{\exp \big\{-\frac{1}{2 \vartheta^2}\big(y-x^{\top} \beta\big)^2\big\}+\exp \big\{-\frac{1}{2 \vartheta^2}\big(y+x^{\top} \beta\big)^2\big\}}.   
\end{align*} 
Denote $\beta^{(t)}$ as the current parameter estimates of $\beta^{*}$ in the EM algorithm for the MoR problem. During each M-step, the objective is to maximize the following loss function:
\begin{align}
    Q_{n}(\beta^\prime\mid \beta^{(t)})&=\frac{-1}{2 n} \sum_{i=1}^n\Big(w_{\beta^{(t)}}\big(x_i, y_i\big)\big(y_i-x_i^{\top} \beta^{\prime}\big)^2+\big(1-w_{\beta^{(t)}}\big(x_i, y_i\big)\big)\big(y_i+x_i^{\top} \beta^{\prime}\big)^2\Big).\label{sample_M_step_loss}
\end{align}
The update $\beta^{(t+1)}$ is given by $\beta^{(t+1)}=\arg \max _{\beta^{\prime} \in \Omega} Q_{n}(\beta^\prime\mid \beta^{(t)})$. \cref{approximablity_of_emploss_MoR} below, demonstrates that the function $\hat{L}_{n}^{(t)(\beta)}$ minimized in each M-step is approximable by a sum of ReLUs.
\begin{lemma}\label{approximablity_of_emploss_MoR}
    For the function $\hat{L}_n^{(t)}(\beta)=\frac{1}{n}\sum_{i=1}^{n}l^{(t)}(x_i^{\top}\beta,y_i)$, where
    \begin{align*}
       l^{(t)}(x_i^{\top}\beta,y_i)= w_{\beta^{(t)}}(x_i, y_i)(y_i-x_i^{\top} \beta)^2+(1-w_{\beta^{(t)}}(x_i, y_i))(y_i+x_i^{\top} \beta)^2,
    \end{align*}
    it holds that (1) $l^{(t)}(s, t)$ is convex in first argument; and (2) $\partial_s l^{(t)}(s, t)$ is $(0,+\infty, 4,16)$-approximable by sum of ReLUs.
\end{lemma}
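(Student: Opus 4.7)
The plan is simple: compute $\partial_s l^{(t)}$ in closed form, observe that it is affine in its two arguments, and exhibit an exact four-ReLU representation with the required norm bounds. The whole argument is a short explicit calculation; there is no genuine analytical obstacle, only bookkeeping of the normalization constraints in the definition of ``sum of ReLUs.''

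First I handle part (1). With the data point $(x_i, y_i)$ fixed, $w := w_{\beta^{(t)}}(x_i, y_i) \in [0,1]$ is a scalar that does not depend on the first argument of $l^{(t)}$. Since $(t-s)^2$ and $(t+s)^2$ are both convex in $s$, the convex combination $l^{(t)}(s,t) = w(t-s)^2 + (1-w)(t+s)^2$ is convex in $s$; this finishes part (1) in one line.

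For part (2), I differentiate in $s$ with $w$ treated as a constant to obtain
\[
\partial_s l^{(t)}(s, t) = -2w(t-s) + 2(1-w)(t+s) = 2 s + \alpha\, t, \qquad \alpha := 2(1-2w),
\]
with $|\alpha| \leq 2$ because $w \in [0,1]$. The key (and really the only) observation is that this derivative is affine in $(s, t)$, so by the elementary identity $z = \sigma(z) - \sigma(-z)$ each of the two monomials can be represented exactly by a pair of ReLUs. In the canonical form $\sum_{m=1}^M c_m \sigma(a_m^\top [s;t;1])$, taking the atoms
\[
(a_{1,2}, c_{1,2}) = \bigl((\pm 1, 0, 0)^\top,\, \pm 2\bigr), \qquad (a_{3,4}, c_{3,4}) = \bigl((0, \pm 1, 0)^\top,\, \pm \alpha\bigr),
\]
produces $M = 4$, $\max_m \|a_m\|_1 = 1 \leq 1$, zero approximation error on all of $\mathbb{R}^2$ (so $R = +\infty$), and $\sum_m |c_m| = 4 + 2|\alpha| \leq 8 \leq 16$. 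This is exactly the claimed $(0, +\infty, 4, 16)$-approximability.

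The only mildly delicate point, which one should guard against, is the $\ell_1$-constraint $\|a_m\|_1 \leq 1$: it prevents packing the $s$- and $t$-monomials into a single ReLU pair (since $\|(2,\alpha,0)\|_1 > 1$ in general) and forces the split into four atoms, two per variable. The coefficient budget $C = 16$ then holds with substantial slack, which makes the bound uniform over $w \in [0,1]$ automatic and closes the proof.
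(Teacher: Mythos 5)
Your proof is correct and follows essentially the same strategy as the paper's: verify convexity via $\partial_s^2 l^{(t)}=2$ (equivalently, a convex combination of convex quadratics), then represent the affine derivative exactly by four ReLUs using the identity $z=\sigma(z)-\sigma(-z)$. The only difference is cosmetic: the paper keeps the two residual directions and uses atoms $\sigma\big(\pm(s\mp t)/2\big)$ with coefficients $\pm 4w,\ \pm 4(1-w)$ (giving $\sum_m|c_m|\le 16$ exactly), while you first collapse the derivative to $2s+2(1-2w)t$ and use coordinate-aligned atoms $\sigma(\pm s),\ \sigma(\pm t)$, which yields the slightly tighter budget $\sum_m|c_m|\le 8$; both satisfy the $\ell_1$ constraint $\|a_m\|_1\le 1$ and give zero approximation error, so both establish $(0,+\infty,4,16)$-approximability.
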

\begin{proof}[Proof of \cref{approximablity_of_emploss_MoR}]
Note that
\begin{align*}
    l^{(t)}(s, t)=w_{\beta^{(t)}}\big(x_i, y_i\big)(t-s)^2+\big(1-w_{\beta^{(t)}}(x_i, y_i)\big)(t+s)^2.
\end{align*}
Taking derivative w.r.t. the first argument yields
\begin{align*}
    \partial_s l^{(t)}(s, t)&=w_{\beta^{(t)}}(x_i, y_i)(-2)(t-s)+\big(1-w_{\beta^{(t)}}(x_i, y_i)\big) 2(t+s),\\
    \partial_s^2 l^{(t)}(s, t)&=2 w_{\beta^{(t)}}\big(x_i, y_i\big)+2\big(1-w_{\beta^{(t)}}(x_i, y_i))=2.
\end{align*}
Hence, $l(s, t)$ is convex in the first argument and
\begin{align*}
    \partial_s l^{(t)}(s, t)&=2 w_{\beta^{(t)}}(x_i, y_i)(s-t)+2\big(1-w_{\beta^{(t)}}(x_i, y_i)\big)(s+t)\\
    &=2 w_{\beta^{(t)}}(x_i, y_i)[2 \sigma((s-t) / 2)-2 \sigma(-(s-t) / 2)]\\
    &\quad\quad+2\big(1-w_{\beta^{(t)}}(x_i, y_i)\big)[2 \sigma((s+t) / 2)-2 \sigma(-(s+t) / 2)].
\end{align*}
Here $c_1=4 w_{\beta^{(t)}}(x_i, y_i)$, $c_2=-4 w_{\beta^{(t)}}(x_i, y_i)$, $c_3=4(1-w_{\beta^{(t)}}(x_i, y_i))$ and $c_4=-4(1-w_{\beta^{(t)}}(x_i, y_i))$. Now, we have $|c_1|+|c_2|+|c_3|+|c_4|\leq 16$ and $\partial_s l(s, t)$ is $(0,+\infty, 4,16)$-approximable by sum of ReLUs.
\end{proof}
\noindent By \cref{approximablity_of_emploss_MoR}, we can design attention layers with $T$ layers that implement the $T$ steps of GD for the empirical loss $\hat{L}_{n}^{(t)}(\beta^\prime)$ as outlined in \cref{bcw+23propc.2}. We provide a concise demonstration of the entire process for MoR with two symmetric components below. Starting with an appropriate initialization $\beta^{(0)}$, the first M-step minimizes the loss function:
\begin{align*}
    \hat{L}_n^{(0)}(\beta)=\frac{1}{n} \sum_{i=1}^n\big\{ w_{\beta^{(0)}}(x_i, y_i)(y_i-x_i^{\top} \beta)^2+(1-w_{\beta^{(0)}}(x_i, y_i))(y_i+x_i^{\top} \beta)^2\big\}.
\end{align*}
Following \cref{bcw+23propc.2}, given the input sequence formatted as $h_{i}=[x_i;y_i^\prime; 0_{d};0_{D-2d-3};1; t_i]$, there exists a transformer with $T$ attention layers that gives the output $\tilde{h}_{i}=[x_i;y_i^\prime;\beta_{T}^{(0)},0_{D-2d-3};1;t_i]$. Furthermore, the existence of a transformer capable of computing the necessary quantities in the M-step is guaranteed by Proposition 1 from \cite{pathak2023transformers} and we restate this proposition in \cref{Section:appendix:auxiliary_results} in appendix.

It is worth mentioning that computing $w_{\beta^{(t)}}(x_i,y_i)$ in each M-step can be easily implemented by affine and softmax operation in \cref{PSKD23prop1}. Similar arguments can be made for the upcoming iterations of the EM algorithm and we summarize these results in \cref{Lemma_theoretical_construction_E_step} and \cref{Lemma_theoretical_construction_M_step}.
\begin{lemma}\label{Lemma_theoretical_construction_E_step}
In each E-step, given the input $H^{(T+1)}=\big[h_{1}^{(T+1)},\dots,h_{n+1}^{(T+1)}\big]$ where
\begin{align*}
    h_i^{(T+1)}&=\big[x_i ; y_i^{\prime} ; \beta_{T}^{(t)} ; \mathbf{0}_{D-2 d-3} ; 1 ; t_i; w_{\beta_{T}^{(t-1)}}(x_i,y_i)\big]^\top,\quad i=1,\dots, n,\\
    h_{n+1}^{(T+1)}&=\big[x_i ;x_{n+1}^{\top}\beta_{T}^{(t)}; \beta_{T}^{(t)} ; \mathbf{0}_{D-2 d-3} ; 1 ; 1; 0\big]^\top,
\end{align*}
there exists a transformer $\mathrm{TF}_{E}^{(t)}$ that can compute 
$w_{\beta_{T}^{(t)}}(x_i,y_i)$. Furthermore, the output sequence takes the form of
\begin{align}
    \tilde{h}_i^{(T+1)}&=\big[x_i ; y_i^{\prime} ; \beta_{T}^{(t)} ; \mathbf{0}_{D-2 d-4} ; 1 ; t_i; w_{\beta_{T}^{(t)}}(x_i,y_i)\big]^\top,\quad i=1,\dots, n, \label{equ_output1_Estep}\\
    \tilde{h}_{n+1}^{(T+1)}&=\big[x_i ;x_{n+1}^{\top}\beta_{T}^{(t)}; \beta_{T}^{(t)} ; \mathbf{0}_{D-2 d-4} ; 1 ; 1; 0\big]^\top.\label{equ_output2_Estep}
\end{align}
\end{lemma}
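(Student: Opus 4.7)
The plan is to construct $\mathrm{TF}_E^{(t)}$ as a short stack of attention layers that writes $w_{\beta_T^{(t)}}(x_i,y_i)$ into the last coordinate of each training token while leaving every other coordinate untouched. The key algebraic observation is that expanding the squares in the definition of $w_\beta$ and cancelling the shared $y^2+(x^\top\beta)^2$ terms reduces the weight to a logistic function,
\begin{align*}
w_\beta(x, y) \;=\; \mathrm{sigmoid}\!\left( \frac{2\, y\, x^\top \beta}{\vartheta^2} \right),
\end{align*}
so that producing $w_{\beta_T^{(t)}}(x_i,y_i)$ amounts to (i) forming the scalar $r_i := y_i\,(x_i^\top \beta_T^{(t)})/\vartheta^2$ into a reserved auxiliary coordinate of each token, and (ii) applying the token-wise affine-plus-softmax block guaranteed by \cref{PSKD23prop1}.

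For step (i), since $x_i$, $\beta_T^{(t)}$, and $y_i'$ sit in disjoint, known coordinate blocks of a single token, $r_i$ is bilinear in the coordinates of $h_i^{(T+1)}$. I will use the constant-$1$ coordinate as a uniform key across $j$ so that $\langle Q h_i, K h_j\rangle$ is independent of $j$ and evaluates exactly to $x_i^\top\beta_T^{(t)}$; two heads of opposite sign then cancel the ReLU via the identity $u=\sigma(u)-\sigma(-u)$ and deposit the unrectified scalar into a previously zero coordinate. Multiplying this coordinate by $y_i'/\vartheta^2$ with the same sign-cancellation trick yields $r_i$. Because $y_{n+1}'=0$ in $h_{n+1}^{(T+1)}$, the query automatically has $r_{n+1}=0$; a final multiplicative gate (again two sign-cancelling heads) then forces the query's last coordinate back to $0$ as required by \eqref{equ_output2_Estep}, keyed on the fact that the $(d+1)$-st slot carries the scalar $x_{n+1}^\top\beta_T^{(t)}$ rather than a label.

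Step (ii) is a direct application of \cref{PSKD23prop1}: feeding $(r_i,-r_i)$ into its two-class softmax block outputs $w_{\beta_T^{(t)}}(x_i,y_i)$, which is written into the last coordinate by a $V$-matrix supported only there, overwriting the stale $w_{\beta_T^{(t-1)}}(x_i,y_i)$. All other coordinates are preserved by identity blocks on the complementary subspace. Each sign cancellation costs two heads and the softmax one, so every layer stays within the allotted $M\leq 4$ in \cref{definition_of_attention_layer}.

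The real work is bookkeeping rather than content: laying out auxiliary coordinates so that successive heads write into fresh zero slots without disturbing the preserved data $(x_i, y_i', \beta_T^{(t)}, 1, t_i)$, and making the $4$-head budget fit. The most delicate part is the query-token masking, because the training and query tokens carry the same constant $1$ in the second-to-last slot; the natural distinguishing feature to exploit is the $(d+1)$-st slot, which stores $y_i$ for training indices and $x_{n+1}^\top\beta_T^{(t)}$ for the query (distinct by construction of the input). Once the masking head is arranged to route on this slot, every other layer is a routine composition of affine reads, a bilinear inner product, and the softmax of \cref{PSKD23prop1}, yielding the claimed output format \eqref{equ_output1_Estep}–\eqref{equ_output2_Estep}.
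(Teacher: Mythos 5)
Your construction is correct and follows essentially the same route as the paper's proof: both assemble $w_{\beta_T^{(t)}}(x_i,y_i)$ by composing the copy/affine/multiplication/softmax operators of \cref{PSKD23prop1} to write the updated weight into the last coordinate while leaving the remaining blocks untouched. Your up-front reduction of $w_\beta$ to a sigmoid of the bilinear form $y\,x^\top\beta/\vartheta^2$ (rather than soft-maxing the two quadratic residuals $r_i,\tilde r_i$ as the paper does) is a valid simplification by shift-invariance of the softmax, but it does not change the substance of the argument.
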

\begin{proof}[Proof of \cref{Lemma_theoretical_construction_E_step}]
Note that the output of M-step after $t$-th iteration is given by $H^{(T+1)}=\big[h_{1}^{(T+1)},\dots,h_{n+1}^{(T+1)}\big]$
where
\begin{align*}
    h_i^{(T+1)}&=\big[x_i ; y_i^{\prime} ; \beta_{T}^{(t)} ; \mathbf{0}_{D-2 d-3} ; 1 ; t_i; w_{\beta_{T}^{(t-1)}}(x_i,y_i)\big]^\top,\quad i=1,\dots, n\\
    h_{n+1}^{(T+1)}&=\big[x_i ;x_{n+1}^{\top}\beta_{T}^{(t)}; \beta_{T}^{(t)} ; \mathbf{0}_{D-2 d-3} ; 1 ; 1; 0\big]^\top,
\end{align*}
i.e.
\begin{align*}
    H^{(T+1)}=\begin{bmatrix}
        x_1 & x_2 & \dots & x_n & x_{n+1} \\
        y_{1}^{\prime} & y_{2}^{\prime} & \dots & y_{n}^{\prime} & x_{n+1}^{\top}\beta_{T}^{(t)}\\
        \beta_{T}^{(t)} & \beta_{T}^{(t)} & \dots & \beta_{T}^{(t)} & \beta_{T}^{(t)} \\
        \mathbf{0}_{D-2 d-3} & \mathbf{0}_{D-2 d-3} & \dots & \mathbf{0}_{D-2 d-3} & \mathbf{0}_{D-2 d-3}\\
        1 & 1 & \dots & 1 & 1\\
        t_1 & t_2 & \dots & t_n & 1\\
        w_{\beta_{T}^{(t-1)}}(x_1,y_1) & w_{\beta_{T}^{(t-1)}}(x_2,y_2) & \dots & w_{\beta_{T}^{(t-1)}}(x_n,y_n) & 0 
    \end{bmatrix}.
\end{align*}
After copy down and scale operation, the output is given by
\begin{align*}
    H^{(T+1)}(1)=\begin{bmatrix}
        x_1 & x_2 & \dots & x_n & x_{n+1} \\
        y_{1}^{\prime} & y_{2}^{\prime} & \dots & y_{n}^{\prime} & x_{n+1}^{\top}\beta_{T}^{(t)}\\
        \beta_{T}^{(t)} & \beta_{T}^{(t)} & \dots & \beta_{T}^{(t)} & \beta_{T}^{(t)} \\
        -\beta_{T}^{(t)} & -\beta_{T}^{(t)} & \dots & -\beta_{T}^{(t)} & -\beta_{T}^{(t)} \\
        \mathbf{0}_{D-3 d-3} & \mathbf{0}_{D-3 d-3} & \dots & \mathbf{0}_{D-3 d-3} & \mathbf{0}_{D-3 d-3}\\
        1 & 1 & \dots & 1 & 1\\
        t_1 & t_2 & \dots & t_n & 1 \\
        w_{\beta_{T}^{(t-1)}}(x_1,y_1) & w_{\beta_{T}^{(t-1)}}(x_2,y_2) & \dots & w_{\beta_{T}^{(t-1)}}(x_n,y_n) & 0 
    \end{bmatrix}.
\end{align*}
After affine operation, the output is given by
\begin{align*}
    H^{(T+1)}(2)=\begin{bmatrix}
        x_1 & x_2 & \dots & x_n & x_{n+1} \\
        y_{1}^{\prime} & y_{2}^{\prime} & \dots & y_{n}^{\prime} & x_{n+1}^{\top}\beta_{L}^{(t)}\\
       \beta_{T}^{(t)} & \beta_{T}^{(t)} & \dots & \beta_{T}^{(t)} & \beta_{T}^{(t)} \\
        -\beta_{T}^{(t)} & -\beta_{T}^{(t)} & \dots & -\beta_{T}^{(t)} & -\beta_{T}^{(t)} \\
        r_{1} & r_{2} & \dots & r_{n} & 0 \\
        \mathbf{0}_{D-3 d-4} & \mathbf{0}_{D-3 d-4} & \dots & \mathbf{0}_{D-3 d-4} & \mathbf{0}_{D-3 d-4}\\
        1 & 1 & \dots & 1 & 1\\
        t_1 & t_2 & \dots & t_n & 1 \\
       w_{\beta_{T}^{(t-1)}}(x_1,y_1) & w_{\beta_{T}^{(t-1)}}(x_2,y_2) & \dots & w_{\beta_{T}^{(t-1)}}(x_n,y_n) & 0 
    \end{bmatrix}.
\end{align*}
After another affine operation, the output is given by
\begin{align*}
    H^{(T+1)}(3)=\begin{bmatrix}
        x_1 & x_2 & \dots & x_n & x_{n+1} \\
        y_{1}^{\prime} & y_{2}^{\prime} & \dots & y_{n}^{\prime} & x_{n+1}^{\top}\beta_{T}^{(t)}\\
        \beta_{T}^{(t)} & \beta_{T}^{(t)} & \dots & \beta_{T}^{(t)} & \beta_{T}^{(t)} \\
        -\beta_{T}^{(t)} & -\beta_{T}^{(t)} & \dots & -\beta_{T}^{(t)} & -\beta_{T}^{(t)} \\
        r_{1} & r_{2} & \dots & r_{n} & 0 \\
        \tilde{r}_{1} &  \tilde{r}_{2} & \dots &  \tilde{r}_{n} & 0 \\
        \mathbf{0}_{D-3 d-5} & \mathbf{0}_{D-3 d-5} & \dots & \mathbf{0}_{D-3 d-5} & \mathbf{0}_{D-3 d-5}\\
        1 & 1 & \dots & 1 & 1\\
        t_1 & t_2 & \dots & t_n & 1 \\
         w_{\beta_{T}^{(t-1)}}(x_1,y_1) &  w_{\beta_{T}^{(t-1)}}(x_2,y_2) & \dots &  w_{\beta_{T}^{(t-1)}}(x_n,y_n) & 0 
    \end{bmatrix}.
\end{align*}
After softmax operation, the output is given by
\begin{align*}
    H^{(T+1)}(4)=\begin{bmatrix}
        x_1 & x_2 & \dots & x_n & x_{n+1} \\
        y_{1}^{\prime} & y_{2}^{\prime} & \dots & y_{n}^{\prime} & x_{n+1}^{\top}\beta_{T}^{(t)}\\
        \beta_{T}^{(t)} & \beta_{T}^{(t)} & \dots & \beta_{T}^{(t)} & \beta_{T}^{(t)} \\
        -\beta_{T}^{(t)} & -\beta_{T}^{(t)} & \dots & -\beta_{T}^{(t)} & -\beta_{T}^{(t)} \\
        r_{1} & r_{2} & \dots & r_{n} & 0 \\
        \tilde{r}_{1} &  \tilde{r}_{2} & \dots &  \tilde{r}_{n} & 0 \\
        \mathbf{0}_{D-3 d-5} & \mathbf{0}_{D-3 d-5} & \dots & \mathbf{0}_{D-3 d-5} & \mathbf{0}_{D-3 d-5}\\
        1 & 1 & \dots & 1 & 1\\
        t_1 & t_2 & \dots & t_n & 1 \\
        w_{\beta_{T}^{(t)}}(x_1,y_1) & w_{\beta_{T}^{(t)}}(x_2,y_2) & \dots & w_{\beta_{T}^{(t)}}(x_n,y_n) & 0 
    \end{bmatrix}.
\end{align*}
After copy over operation, the output is given by
\begin{align}
    H^{(T+1)}(5)=\begin{bmatrix}
        x_1 & x_2 & \dots & x_n & x_{n+1} \\
        y_{1}^{\prime} & y_{2}^{\prime} & \dots & y_{n}^{\prime} & x_{n+1}^{\top}\beta_{T}^{(t)}\\
        \beta_{T}^{(t)} & \beta_{T}^{(t)} & \dots & \beta_{T}^{(t)} & \beta_{T}^{(t)} \\
        \mathbf{0}_{D-2 d-3} & \mathbf{0}_{D-2 d-3} & \dots & \mathbf{0}_{D-2 d-3} & \mathbf{0}_{D-2 d-3}\\ 
        1 & 1 & \dots & 1 & 1\\
        t_1 & t_2 & \dots & t_n & 1 \\
        w_{\beta_{T}^{(t)}}(x_1,y_1) & w_{\beta_{T}^{(t)}}(x_2,y_2) & \dots & w_{\beta_{T}^{(t)}}(x_n,y_n) & 0 
    \end{bmatrix}.\label{output_tf_E_step}
\end{align}
Finally, this transformer gives the output matrix $H_{M}^{(T+1)}$ as \eqref{output_tf_E_step}.
\end{proof}

\begin{lemma}\label{Lemma_theoretical_construction_M_step}
In each M-step, given the input matrix as \eqref{equ_output1_Estep} and \eqref{equ_output2_Estep},
there exists a transformer $\mathrm{TF}_{M}^{(t)}$ with $T+1$ attention layers that can implement $T$ steps of GD on the loss function $\hat{L}_n^{(t)}(\beta)=\frac{1}{n}\sum_{i=1}^{n}l^{(t)}(x_i^{\top}\beta,y_i)$, where $l^{(t)}(x_i^{\top}\beta,y_i)= w_{\beta_{T}^{(t)}}(x_i, y_i)(y_i-x_i^{\top} \beta)^2+(1-w_{\beta_{T}^{(t)}}(x_i, y_i))(y_i+x_i^{\top} \beta)^2$. Furthermore, the output sequence takes the form of 
\begin{align*}
    h_i^{(T+1)}&=\big[x_i ; y_i^{\prime} ; \beta_{T}^{(t+1)} ; \mathbf{0}_{D-2 d-3} ; 1 ; t_i; w_{\beta_{T}^{(t)}}(x_i,y_i)\big]^\top,\quad i=1,\dots, n,\\
    h_{n+1}^{(T+1)}&=\big[x_i ;x_{n+1}^{\top}\beta_{T}^{(t+1)}; \beta_{T}^{(t+1)} ; \mathbf{0}_{D-2 d-3} ; 1 ; 1; 0\big]^\top.
\end{align*}
\end{lemma}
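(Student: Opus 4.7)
The plan is to invoke Proposition C.2 of Bai et al. (stated as \cref{bcw+23propc.2}), which asserts that $T$ steps of gradient descent on an empirical risk $\hat{L}_n(\beta)=\frac{1}{n}\sum_i \ell(x_i^\top \beta, y_i)$ can be realized by a stack of $T$ attention layers, provided the partial derivative $\partial_s \ell(s,t)$ is approximable by a sum of $M$ ReLUs (each layer using at most $M$ heads). The relevant approximability has already been verified in \cref{approximablity_of_emploss_MoR}: $\partial_s l^{(t)}$ admits an exact $(0,+\infty,4,16)$-sum-of-ReLUs representation whose coefficients are the per-sample weights $\pm 4 w_{\beta_T^{(t)}}(x_i,y_i)$ and $\pm 4(1-w_{\beta_T^{(t)}}(x_i,y_i))$. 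By the output format of the preceding E-step (\cref{Lemma_theoretical_construction_E_step}), each column $h_i^{(T+1)}$ carries $x_i$, $y_i'$, the current iterate $\beta_T^{(t)}$, the indicator $t_i$, and the weight $w_{\beta_T^{(t)}}(x_i,y_i)$ in pre-assigned coordinates, so every quantity needed to evaluate one gradient step is linearly accessible.

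Concretely, each of the first $T$ attention layers uses $M\le 4$ heads; in the $m$-th head, the query matrix $Q_m$ extracts $\beta$ from coordinates $(d+2):(2d+1)$, the key matrix $K_m$ forms $(x_j^\top \beta \pm y_j)/2$ via an appropriate linear combination of $x_j$, $y_j'$ and the constant row, the value matrix $V_m$ picks out $x_j$ gated by $t_j$ (so that the query column $h_{n+1}$ does not enter the empirical average), and the normalized ReLU activation yields the contribution $\frac{1}{n}\sum_{j\le n} x_j\,\sigma\bigl((x_j^\top\beta\pm y_j)/2\bigr)$ in the $\beta$-block of the residual update. Summing the four heads realizes $-\alpha\nabla\hat L_n^{(t)}(\beta)$, i.e., exactly one gradient step with step size $\alpha\in(0,1)$. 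Stacking $T$ such layers replaces $\beta_T^{(t)}$ by $\beta_T^{(t+1)}$ in the $\beta$-block of every column. One additional attention layer then performs the read-out: it propagates $\beta_T^{(t+1)}$ into column $n+1$ and uses a linear configuration (two heads implementing $\sigma(a)-\sigma(-a)=a$) to write $x_{n+1}^\top \beta_T^{(t+1)}$ into the $(d+1,n+1)$ entry, yielding the stated output template. This accounts for the $T+1$ attention layers in total.

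The main obstacle is that the coefficients in the ReLU decomposition of $\partial_s l^{(t)}$ are per-sample data-dependent rather than the globally fixed scalars $c_m$ in the standard definition of approximability by ReLUs. Two resolutions are available and either suffices. Option (i): augment the E-step embedding with additional coordinates carrying $w_{\beta_T^{(t)}}(x_j,y_j)\cdot x_j$ and $(1-w_{\beta_T^{(t)}}(x_j,y_j))\cdot x_j$, so that $V_m h_j$ can read off the weighted $x_j$ by a purely linear map. Option (ii): exploit the positive-scaling identity $w\,\sigma(a)=\sigma(wa)$ valid for $w\in[0,1]$, pushing the weight inside the query--key inner product via a bilinear block that multiplies $w_j$ by $(x_j^\top\beta\pm y_j)$ and keeping the value map linear in $x_j$. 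Either route preserves the per-layer head budget $M\le 4$ claimed in \cref{thm_theoretical_construction_whole_TF}, and verifying that the query column is not corrupted (because its weight entry is $0$ and its indicator coordinate $t_{n+1}=1$ is handled by the masking in $V_m$) is a routine bookkeeping check that mirrors the argument in \cref{Lemma_theoretical_construction_E_step}.
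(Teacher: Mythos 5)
Your proposal is correct and follows essentially the same route as the paper's proof: both invoke \cref{bcw+23propc.2} together with \cref{approximablity_of_emploss_MoR} to realize each of the $T$ gradient steps with at most four ReLU-attention heads reading $x_j$, $y_j'$, $\beta$ and the indicator $t_j$ from fixed coordinates, and both append one final attention layer (two linear heads) to write $x_{n+1}^{\top}\beta_{T}^{(t+1)}$ into the query column. You are in fact somewhat more careful than the paper on one point: the paper applies \cref{bcw+23propc.2} directly even though the coefficients $c_m$ produced by \cref{approximablity_of_emploss_MoR} are per-sample weights $\pm 4 w_{\beta_T^{(t)}}(x_i,y_i)$ rather than globally fixed scalars, whereas you explicitly flag this mismatch and supply two valid workarounds (carrying $w_j x_j$ in extra embedding coordinates, or absorbing $w_j\in[0,1]$ into the ReLU argument via positive homogeneity), either of which preserves the head budget $M\le 4$.
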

\begin{proof}[Proof of \cref{Lemma_theoretical_construction_M_step}]
The conceptual basis of the proof draws from the theorem discussed in \cite{bai2023transformers}. By Proposition C.2 in \cite{bai2023transformers}, there exists a function $f:\mathbb{R}^2 \rightarrow \mathbb{R}$ of form
\begin{align*}
f(s, t)=\sum_{m=1}^4 c_m \sigma\big(a_m s+b_m t+d_m\big) \quad \text { with } \quad \sum_{m=1}^4 |c_m | \leq 16, |a_m |+ |b_m |+ |d_m | \leq 1, \forall m \in[4] \text {, }
\end{align*}
such that $\sup _{(s, t) \in\mathbb{R}^2}\big|f(s, t)-\partial_s \ell(s, t)\big| \leq \varepsilon$. Next, in each attention layer, for every $m \in[4]$, we define matrices $Q_m, K_m, V_m \in \mathbb{R}^{D \times D}$ such that
\begin{align*}
Q_m h_i=\left[\begin{array}{c}
a_m \beta \\
b_m \\
d_m \\
-2 \\
0 \\
0
\end{array}\right], \quad K_m h_j=\left[\begin{array}{c}
x_j \\
y_j^{\prime} \\
1 \\
R\big(1-t_j\big) \\
0 \\
0
\end{array}\right], \quad V_m h_j=-\frac{(N+1) \eta c_m}{N} \cdot\left[\begin{array}{c}
\mathbf{0}_d \\
0 \\
x_j \\
\mathbf{0}_{D-2 p-1}\\
0
\end{array}\right]
\end{align*}
where $D$ is the hidden dimension which is a constant multiple of $d$. In the last attention layers, the heads $\big\{\big(Q_m^{(T+1)}, K_m^{(T+1)}, V_m^{(T+1)}\big)\big\}_{m=1,2}$ satisfies
\begin{align*} 
& Q_1^{(T+1)} h_i^{(T)}=\big[x_i ; \mathbf{0}_{D-d+1}\big], \quad K_1^{(T+1)} h_j^{(T)}=\big[\beta_{T}^{(t+1)} ; \mathbf{0}_{D-d+1}\big], \quad V_1^{(T+1)} h_j^{(T)}=\big[\mathbf{0}_d ; 1 ; \mathbf{0}_{D-d}\big], \\ 
& Q_2^{(T+1)} h_i^{(T)}=\big[x_i ; \mathbf{0}_{D-d+1}\big], \quad K_2^{(T+1)} h_j^{(T)}=\big[-\beta_{T}^{(t+1)} ; \mathbf{0}_{D-d}\big], \quad V_2^{(T+1)} h_j^{(T)}=\big[\mathbf{0}_d ;-1 ; \mathbf{0}_{D-d}\big] . 
\end{align*}
The output of this transformer gives the matrix
\begin{align*}
H^{(T+1)}=\begin{bmatrix}
        x_1 & x_2 & \dots & x_n & x_{n+1} \\
        y_{1}^{\prime} & y_{2}^{\prime} & \dots & y_{n}^{\prime} & x_{n+1}^{\top}\beta_{T}^{(t+1)}\\
        \beta_{T}^{(t+1)} & \beta_{T}^{(t+1)} & \dots & \beta_{T}^{(t+1)} & \beta_{T}^{(t+1)} \\
        \mathbf{0}_{D-2 d-3} & \mathbf{0}_{D-2 d-3} & \dots & \mathbf{0}_{D-2 d-3} & \mathbf{0}_{D-2 d-3}\\
        1 & 1 & \dots & 1 & 1\\
        t_1 & t_2 & \dots & t_n & 1\\
        w_{\beta_{T}^{(t)}}(x_1,y_1) & w_{\beta_{T}^{(t)}}(x_2,y_2) & \dots & w_{\beta_{T}^{(t)}}(x_n,y_n) & 0 
    \end{bmatrix}.
\end{align*}
\end{proof}

Combining all the architectures into one transformer, we have that there exists a transformer that can implement gradient descent EM algorithm for $T_0$ iterations (outer loops) and in each M-step (inner loops), it implements $T$ steps of GD for function defined by \eqref{sample_M_step_loss}. Finally, following similar procedure in Theorem 1 of \cite{pathak2023transformers}, the output of the transformer will give $\hat{\beta}^{\textsf{OR}}\coloneqq \pi_{1}\beta_{T}^{(T_0+1)}-\pi_{2}\beta_{T}^{(T_0+1)}$, which is an estimate of $\beta^{\textsf{OR}}=\pi_{1}\beta^{*}-\pi_{2}\beta^{*}$ that minimizes the prediction MSE. The output is given by $\tilde{H}\in\mathbb{R}^{D\times (n+1)}$, whose columns are 
\begin{align*}
\tilde{h}_{i}&=[x_i,y_i^\prime,\beta_{T}^{(T_0+1)},\mathbf{0}_{D-2d-4},1,t_i]^{\top},\quad\quad i=1,\dots,n,\\
\tilde{h}_{n+1}&=[x_{n+1},x_{n+1}^{\top}\hat{\beta}^{\textsf{OR}},\beta_{T}^{(T_0+1)},\mathbf{0}_{D-2d-4},1,1]^{\top}.
\end{align*}

In the remaining part of \cref{Section:TF_implement_EM}, we give the proof of the estimation and prediction bound presented in \cref{thm_theoretical_construction_whole_TF_many_component} and \cref{thm_theoretical_construction_whole_TF}. The transformer described in \cref{Lemma_theoretical_construction_M_step}, which is equipped with $T+1$ layers, implements the M-step of the EM algorithm by performing $T$ steps of gradient descent on the empirical loss $\hat{L}_{n}^{(t)}(\beta)$. Therefore, it is sufficient to analyze the behavior of the sample-based EM algorithm in which $T$ steps of gradient descent are implemented during each M-step.

To begin, we define some notations that are utilized in the proof. We denote $\tilde{\beta}^{(0)}$ as any fixed initialization for the EM algorithm. The transformer described in \cref{thm_theoretical_construction_whole_TF} addresses the following optimization problem:
    \begin{align*}
        &\operatorname{argmin}\Big\{\hat{L}_n^{(0)}(\beta)=\frac{1}{n} \sum_{i=1}^n\big\{ w_{\beta^{(0)}}(x_i, y_i)(y_i-x_i^{\top} \beta)^2+(1-w_{\beta^{(0)}}(x_i, y_i))(y_i+x_i^{\top} \beta)^2\big\} \Big\}
    \end{align*}
for some weight function $w_{\beta^{(0)}} \in (0,1)$. The transformer generates a sequence ${\beta_1^{(0)}, \beta_2^{(0)}}, \dots$, with $\beta_{k}^{(0)} \rightarrow \tilde{\beta}^{(1)}=\arg\min \hat{L}_n^{(0)}(\beta)$ as $k \rightarrow \infty$. More generally, we denote $\tilde{\beta}^{(t)}$ as the minimizer of the loss function $\hat{L}_{n}^{(t-1)}(\beta)$ at each M-step. Additionally, ${\beta_1^{(t-1)}, \cdots, \beta_T^{(t-1)}}$ represents the sequence generated by applying $T+1$ attention layers of the constructed transformer in \cref{Lemma_theoretical_construction_M_step} on the loss $\hat{L}_{n}^{(t-1)}(\beta)$. 

The approach to analyzing the convergence behavior of the transformer's output, $\mathrm{TF}(H)$, involves examining the performance of the sample-based gradient EM algorithm. This analysis is conducted by coupling the finite sample EM with the population EM, drawing on methodologies from \cite{balakrishnan2017statistical} and \cite{kwon2019global}.
\subsection{Results in population gradient EM algorithm for MoR problem}
In this section, we present some results regarding the population EM algorithm. Given the current estimator of the parameter $\beta^{*}$ to be $\beta^{(t)}$. The population gradient EM algorithm maximizes (see \cite{balakrishnan2017statistical} and \cite{kwon2019global}) 
\begin{align*}
    Q\big(\beta  \mid \beta^{(t)}\big)&=-\frac{1}{2} \mathbb{E}\Big[w_{\beta^{(t)}}(X, Y)\big(Y-\big\langle X, \beta \big\rangle\big)^2+\big(1-w_{\beta^{(t)}}(X, Y)\big)\big(Y+\big\langle X, \beta \big\rangle\big)^2\Big],
\end{align*}
whose gradient is given by $\mathbb{E}\big[\tanh \big(\frac{1}{\vartheta^2} Y X^{\top} \beta^{(t)}\big) Y X-\beta\big]$. Rather than using the standard population EM update 
\begin{align}
    \tilde{\beta}^{(t+1)}=\arg\max_{\beta} Q(\beta\mid \beta^{(t)})=\mathbb{E}\Big[\tanh \Big(\frac{1}{\vartheta^2} Y X^{\top} \beta^{(t)}\Big) Y X\Big]\label{standard_population_EM_update}
\end{align} 
the output after applying $T$ steps of gradient descent is employed as the subsequent estimator for the parameter $\beta^*$, i.e.
\begin{equation}\label{pop_mor_t_updates}
\beta^{(t+1)}=(1-\alpha)^T \beta^{(t)}+\big(1-(1-\alpha)^T\big) \mathbb{E}\Big[\tanh \Big(\frac{1}{\vartheta^2} Y X^{\top} \beta^{(t)}\Big) Y X\Big],
\end{equation}
where $\alpha\in (0,1)$ is the step size of the gradient descent. 

In each iteration of the population gradient EM algorithm, the current iterate is denoted by $\beta$, the next iterate by $\beta^{\prime}$ and the standard EM update based on \eqref{standard_population_EM_update} by $\tilde{\beta}^\prime$. We concentrate on a single iteration of the population EM, which yields the next iterate $\beta^{\prime}$. Consequently, \eqref{pop_mor_t_updates} becomes:
\begin{equation}\label{pop_mor_t_updates_beta}
\beta'=(1-\alpha)^T \beta+\big(1-(1-\alpha)^T\big) \tilde{\beta}^\prime.
\end{equation}
We employ techniques similar to those used in \cite{kwon2019global} for basis transformation. By selecting $v_1=\beta /\|\beta\|_2$ in the direction of the current iterate and $v_2$ as the orthogonal complement of $v_1$ within the span of $\{\beta, \beta^*\}$, we extend these vectors to form an orthonormal basis $\{v_1, \ldots, v_d\}$ in $\mathbb{R}^d$. To simplify notation, we define:
\begin{align}
b_1\coloneqq \langle\beta, v_1 \rangle=\|\beta\|_2,\quad\quad b_1^*\coloneqq\langle\beta^*, v_1 \rangle \quad\quad b_2^*\coloneqq \langle\beta^*, v_2 \rangle,  
\end{align}  
which represent the coordinates of the current estimate $\beta$ and $\beta^*$. The next iterate $\beta^{\prime}$ can then be expressed as:
\begin{equation}\label{beta_change_of_basis}
\beta'=(1-\alpha)^T b_1 v_1+\big(1-(1-\alpha)^T\big) \mathbb{E}\Bigg[\tanh \Big(\frac{\alpha_1 b_1}{\vartheta^2} Y\Big) Y \sum_{i=1}^d \alpha_i v_i\Bigg]
\end{equation}
based on spherical symmetry of Gaussian distribution. The expectation is taken over $\alpha_i \sim \mathcal{N}(0,1)$ and $Y \mid \alpha_i \sim \mathcal{N}(\alpha_1 b_1^*+\alpha_2 b_2^*, \vartheta^2)$. Without loss of generality, we assume that $b_1, b_1^*, b_2^* \geq 0$. 

\cref{lm_population_EM_change_of_basis} is analogous to Lemma 1 from \cite{kwon2019global}. It provides an explicit expression for $\beta^{\prime}$ within the established basis system, demonstrating among other insights that $\beta^{\prime}$ resides within the $\operatorname{span}\{\beta, \beta^*\}$. Consequently, all estimators of $\beta^*$ generated by the population gradient EM algorithm remain confined within the $\operatorname{span}\{\beta^{(0)}, \beta^*\}$

\begin{lemma}\label{lm_population_EM_change_of_basis}
Suppose that $\alpha\in (0,1)$. Define $\vartheta_2^2:=\vartheta^2+b_2^{* 2}$. We can write $\beta^{\prime}=b_1^{\prime} v_1+b_2^{\prime} v_2$, where $b_1^{\prime}$ and $b_2^{\prime}$ satisfy
\begin{align}
    b_1^{\prime}&=(1-\alpha)^T b_1+\big(1-(1-\alpha)^T\big)\big(b_1^* S+R\big),\label{expression:b1'}\\
    b_2^{\prime}&=\big(1-(1-\alpha)^T\big)b_2^* S.\label{expression:b2'}
\end{align}
Here, $S \geq 0$ and $R>0$ are given explicitly by
\begin{equation}\label{equ:S}  
\resizebox{0.92\hsize}{!}{%
        $S\coloneqq\mathbb{E}_{\substack{\alpha_1 \sim \mathcal{N}(0,1),  y \sim \mathcal{N}(0, \vartheta_2^2)}}\big[\tanh \big(\frac{\alpha_1 b_1}{\vartheta^2}(y+\alpha_1 b_1^*)\big)+\frac{\alpha_1 b_1}{\vartheta^2}(y+\alpha_1 b_1^*) \tanh ^{\prime}\big(\frac{\alpha_1 b_1}{\vartheta^2}(y+\alpha_1 b_1^*)\big)\big] $}
\end{equation} 
and
\begin{equation}\label{equ:R}
R\coloneqq\big(\vartheta^2+\big\|\beta^*\big\|_2^2\big) \mathbb{E}_{\substack{\alpha_1 \sim \mathcal{N}(0,1), y \sim \mathcal{N}(0, \vartheta_2^2)}}\Big[\frac{\alpha_1^2 b_1}{\vartheta^2} \tanh ^{\prime}\Big(\frac{\alpha_1 b_1}{\vartheta^2}\big(y+\alpha_1 b_1^*\big)\Big)\Big] .  
\end{equation} 
Moreover, $S=0$ iff $b_1=0$ or $b_1^*=0$.
\end{lemma}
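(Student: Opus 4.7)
The plan is to reduce the claim to computing the standard one-step EM iterate $\tilde\beta':=\mathbb{E}[\tanh(YX^\top\beta/\vartheta^2)YX]$ in the basis $\{v_i\}$, since \eqref{pop_mor_t_updates_beta} already writes $\beta'$ as a convex combination of $\beta=b_1 v_1$ and $\tilde\beta'$. Setting $\alpha_i:=\langle X,v_i\rangle$, isotropy of $X$ gives $\alpha_i\stackrel{\text{iid}}{\sim}\mathcal N(0,1)$. I would introduce the shorthand $u:=\alpha_1 b_1 Y/\vartheta^2$ and the decomposition $Y=\alpha_1 b_1^*+\tilde Y$ with $\tilde Y:=\alpha_2 b_2^*+v\sim\mathcal N(0,\vartheta_2^2)$ independent of $\alpha_1$; note also that $Y$ depends on $X$ only through $(\alpha_1,\alpha_2)$, so projecting onto $v_i$ isolates $\alpha_i$.

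For $i\geq 3$, the projection $\tilde\beta'_i=\mathbb{E}[\tanh(u)Y\alpha_i]$ vanishes because $\alpha_i$ is independent of $(\alpha_1,\alpha_2,Y)$ and has zero mean, which also confirms $\beta'\in\operatorname{span}\{v_1,v_2\}$ and gives $b_i'=0$ for $i\geq 3$. For $i=2$, joint Gaussianity yields $\mathbb{E}[\alpha_2\mid\alpha_1,\tilde Y]=b_2^*\tilde Y/\vartheta_2^2$, so towering reduces the computation to $\tilde\beta'_2=(b_2^*/\vartheta_2^2)\mathbb{E}[\tanh(u)Y\tilde Y]$. A single Gaussian integration-by-parts on $\tilde Y\sim\mathcal N(0,\vartheta_2^2)$ then produces $\tilde\beta'_2=b_2^*S$ with $S$ as in \eqref{equ:S}, and combining with \eqref{pop_mor_t_updates_beta} yields \eqref{expression:b2'}.

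The $v_1$ component is the crux. Stein's lemma in $\tilde Y$, applied to $\tilde\beta'_1=\mathbb{E}[\tanh(u)(\alpha_1 b_1^*+\tilde Y)\alpha_1]$, first yields
\begin{equation*}
\tilde\beta'_1=b_1^*\,\mathbb{E}[\alpha_1^2\tanh(u)]+\frac{\vartheta_2^2 b_1}{\vartheta^2}\mathbb{E}[\alpha_1^2\tanh'(u)].
\end{equation*}
To recast this as $b_1^*S+R$, I would apply the second-moment Stein identity $\mathbb{E}[(\alpha_1^2-1)g(\alpha_1)]=\mathbb{E}[\alpha_1 g'(\alpha_1)]$ to $g(\alpha_1):=\mathbb{E}[\tanh(u)\mid\alpha_1]$. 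Differentiating $u$ in $\alpha_1$ at fixed $\tilde Y$ produces the factor $(b_1/\vartheta^2)(2\alpha_1 b_1^*+\tilde Y)=(b_1/\vartheta^2)(\alpha_1 b_1^*+Y)$ inside $\tanh'(u)$, and after substituting $\alpha_1\tilde Y=\alpha_1 Y-\alpha_1^2 b_1^*$ the identity rearranges to
\begin{equation*}
\mathbb{E}[\alpha_1^2\tanh(u)]=\mathbb{E}[\tanh(u)]+\frac{b_1 b_1^*}{\vartheta^2}\mathbb{E}[\alpha_1^2\tanh'(u)]+\frac{b_1}{\vartheta^2}\mathbb{E}[\alpha_1 Y\tanh'(u)].
\end{equation*}
Plugging this back and collecting terms, together with the algebraic identity $\vartheta^2+\|\beta^*\|_2^2=\vartheta_2^2+b_1^{*2}$, reproduces $\tilde\beta'_1=b_1^*S+R$ with $R$ as in \eqref{equ:R}; \eqref{pop_mor_t_updates_beta} then delivers \eqref{expression:b1'}.

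Finally, $S=0$ when $b_1=0$ is immediate since $u\equiv 0$, and when $b_1^*=0$ it follows because $Y$ becomes independent of $\alpha_1$ and the integrand $\tanh(u)+u\tanh'(u)$ is odd in $\alpha_1$; nonnegativity in the remaining case can be obtained by recognizing the integrand as $(u\tanh u)'$ together with the joint symmetry $(\alpha_1,y)\mapsto(-\alpha_1,-y)$ that leaves $u$ invariant. The main obstacle I anticipate is not the vanishing or degeneracy arguments, which are routine symmetry calculations, but the bookkeeping in the $v_1$ computation: converting the natural Stein-on-$\tilde Y$ expression for $\tilde\beta'_1$ into the specific $b_1^*S+R$ decomposition requires a second Stein application to $\alpha_1$ and the algebraic relation $\vartheta^2+\|\beta^*\|_2^2=\vartheta_2^2+b_1^{*2}$, which must be tracked carefully so that the coefficient in front of $\mathbb{E}[\alpha_1^2\tanh'(u)]$ matches the prefactor of $R$ exactly.
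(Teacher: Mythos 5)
Your proposal is correct and follows essentially the same route as the paper: the paper reduces to the convex combination \eqref{pop_mor_t_updates_beta}, notes that the components along $v_i$ for $i\geq 3$ vanish, and then defers the evaluation of \eqref{equ:b1'}--\eqref{equ:b2'} to Lemma 1 of \citet{kwon2019global}, which is precisely the Stein/Gaussian integration-by-parts computation you carry out explicitly (and your bookkeeping for the $v_1$ component, including the identity $\vartheta^2+\|\beta^*\|_2^2=\vartheta_2^2+b_1^{*2}$, checks out). One minor caveat: your stated symmetry $(\alpha_1,y)\mapsto(-\alpha_1,-y)$ leaves $u$ invariant and therefore cannot by itself yield $S\geq 0$; the working argument conditions on $\alpha_1$, under which $u\sim\mathcal N\big(\alpha_1^2 b_1 b_1^*/\vartheta^2,\ \alpha_1^2 b_1^2\vartheta_2^2/\vartheta^4\big)$ has nonnegative mean, and uses that $(u\tanh u)'$ is odd and nonnegative on $[0,\infty)$, which also gives the strict positivity needed for the ``only if'' direction of the last claim.
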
 
\begin{proof}
The proof of \cref{lm_population_EM_change_of_basis} is directly adapted from the argument used in Lemma 1 from \cite{kwon2019global}, applying \eqref{beta_change_of_basis} for our specific context. In \eqref{beta_change_of_basis}, the inner expectation over $y$ is independent of $\alpha_i$ for $i \geq 3$. Consequently, taking the expectation over $\alpha_i$ for $i \geq 3$ results in zero, confirming that $\beta^{\prime}$ remains within the plane spanned by $v_1, v_2$. This allows us to express $\beta^{\prime}$ as $\beta^{\prime}=b_1^{\prime} v_1+b_2^{\prime} v_2$ with
\begin{align}
    b_1^{\prime}&=(1-\alpha)^T b_1+\big(1-(1-\alpha)^T\big) \mathbb{E}_{\alpha_1, \alpha_2}\Big[\mathbb{E}_{Y \mid \alpha_1, \alpha_2}\Big[\tanh \Big(\frac{b_1 \alpha_1}{\vartheta^2} Y\Big) Y\Big] \alpha_1\Big],\label{equ:b1'}\\
    b_2^{\prime}&=\big(1-(1-\alpha)^T\big) \mathbb{E}_{\alpha_1, \alpha_2}\Big[\mathbb{E}_{Y \mid \alpha_1, \alpha_2}\Big[\tanh \Big(\frac{b_1 \alpha_1}{\vartheta^2} Y\Big) Y\Big] \alpha_2\Big],\label{equ:b2'}
\end{align}
where the expectation is taken over $\alpha_i \sim \mathcal{N}(0,1)$, and $y \mid \alpha_i \sim \mathcal{N}(\alpha_1 b_1^*+\alpha_2 b_2^*, \vartheta^2)$. The computation from \eqref{equ:b1'} and \eqref{equ:b2'} to \eqref{equ:S} and \eqref{equ:R} is identical to that in Lemma 1 of \cite{kwon2019global}.
\end{proof}  

The findings in \cref{lm_population_EM_change_of_basis} align with Lemma 1 from \cite{klusowski2019estimating}. As the number of iterations $T$ approaches infinity, the estimator $\beta^\prime$ converges to the standard population EM update 
\begin{align*}
\beta^{(t)}\rightarrow \mathbb{E}_{X \sim \mathcal{N}(0, I)}\Big[\Big(\mathbb{E}_{Y \mid X \sim N (\langle X, \beta^*\rangle, \vartheta^2 )}\Big[\tanh \Big(\frac{\big\langle X, \beta^{(t-1)}\big\rangle}{\vartheta^2} Y\Big) Y\Big]\Big) X\Big].
\end{align*}
For any number of steps $T$, the angle between $\beta^{\prime}$ and $\beta^*$ is consistently smaller than that between $\beta$ and $\beta^*$. This can be observed by noting that: 
\begin{align}
0 \leq \tan \angle\big(\beta^{\prime}, \beta\big)=\frac{b_2^{\prime}}{b_1^{\prime}}=\frac{\big(1-(1-\alpha)^T\big)b_2^* S}{(1-\alpha)^T b_1+\big(1-(1-\alpha)^T\big)(b_1^* S+R)} \leq \frac{b_2^*}{b_1^*}=\tan \angle\big(\beta^*, \beta\big).\label{decreasing_angle_population_gd_EM}
\end{align}
These relationships demonstrate the geometric convergence properties of the estimation process. Motivated by \eqref{decreasing_angle_population_gd_EM}, we examine the behavior of the angle between the iterates $\beta^{(t)}$ and $\beta^*$. For clarity, we use $\theta_0, \theta$, and $\theta^{\prime}$ to denote the angles formed by $\beta^*$ with $\beta^{(0)}$ (the initial iterate), $\beta$ (the current iterate), and $\beta^{\prime}$ (the next iterate), respectively. Using the coordinate representation of $\beta^\prime$ \eqref{expression:b1'} and \eqref{expression:b2'}, the cosine and sine of $\theta^\prime$ can be expressed by
\begin{align*}
    \cos \theta^{\prime}&\resizebox{0.85\hsize}{!}{%
        $=\frac{(1-\alpha)^T b_1 b_1^*+(1-(1-\alpha)^T)(S\|\beta^*\|_{2}^2+R b_1^*)}{\|\beta^*\|_{2} \sqrt{(1-\alpha)^{2T} b_1^2+(1-(1-\alpha)^T)^2\big(R^2+S^2\|\beta^*\|_{2}^2+2 R S b_1^*\big)+2(1-\alpha)^T b_1(1-(1-\alpha)^T)\big(b_1^* S+R\big)}}$},\\
    \sin\theta'&\resizebox{0.85\hsize}{!}{%
        $=\frac{(1-\alpha)^T b_1 b_2^*+(1-(1-\alpha)^T) R b_2^*}{\|\beta^*\|_{2} \sqrt{(1-\alpha)^{2T} b_1^2+(1-(1-\alpha)^T)^2\big(R^2+S^2\|\beta^*\|_{2}^2+2 R S b_1^*\big)+2(1-\alpha)^T b_1(1-(1-\alpha)^T)\big(b_1^* S+R\big)}}$}.
\end{align*} 
\begin{lemma}
There exists a non-decreasing function $\varphi(\lambda)$ on $\lambda\in [0,1]$ such that
\begin{align*}
    \varphi(0)&=\frac{1}{\sqrt{1+(S / R)^2\|\beta^*\|_{2}^2+2(S / R) b_1^*}},\\
    \varphi(1)&=1.
\end{align*}
As long as $\theta\in [\frac{\pi}{3},\frac{\pi}{2})$ and $\alpha\in (0,1)$, it holds that
\begin{align*}
    \sin\theta' \leq \varphi((1-\alpha)^{T})\sin\theta
\end{align*}
and 
\begin{align*}
\varphi(0)&=\frac{1}{\sqrt{1+(S / R)^2\|\beta^*\|_{2}^2+2(S / R) b_1^*}}\leq \Bigg(\sqrt{1+\frac{2 \eta^2}{1+\eta^2} \cos ^2 \theta}\Bigg)^{-1}<1.
\end{align*}
Similarly, 
\begin{align*}
\cos\theta'&\geq \phi((1-\alpha)^T)\cos\theta
\end{align*}
where
\begin{align*}
\phi(0)&=\sqrt{1+\frac{b_2^{* 2}(3\|\beta^*\|_{2}^2+2\vartheta^2)}{ (\|\beta^*\|_{2}^2+\vartheta^2)^2+b_1^{* 2}(3\|\beta^*\|_{2}^2+2\vartheta^2)}} >1 ,  \\
\phi(1)&=1.
\end{align*}  
\end{lemma}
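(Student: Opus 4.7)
The plan is to identify both $\varphi$ and $\phi$ as explicit closed-form ratios $\sin\theta'/\sin\theta$ and $\cos\theta'/\cos\theta$ read off from the preceding lemma, and then to verify the monotonicity claims and the stated boundary values. Set $\lambda := (1-\alpha)^T \in (0,1)$, $N(\lambda) := \lambda b_1 + (1-\lambda)R$, $A := R^2 + S^2\|\beta^*\|_2^2 + 2RS b_1^*$, $B := b_1(b_1^* S + R)$, and $Q(\lambda) := \lambda^2 b_1^2 + (1-\lambda)^2 A + 2\lambda(1-\lambda) B$, so the common denominator under the square root in both the $\sin\theta'$ and $\cos\theta'$ formulas becomes $\|\beta^*\|_2 \sqrt{Q(\lambda)}$. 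Using $\sin\theta = b_2^*/\|\beta^*\|_2$ and $\cos\theta = b_1^*/\|\beta^*\|_2$, one reads off $\varphi(\lambda) := \sin\theta'/\sin\theta = N(\lambda)/\sqrt{Q(\lambda)}$ and $\phi(\lambda) := \cos\theta'/\cos\theta = [\lambda b_1 b_1^* + (1-\lambda)(S\|\beta^*\|_2^2 + R b_1^*)]/(b_1^*\sqrt{Q(\lambda)})$, and the substitutions $\lambda = 0$ and $\lambda = 1$ immediately recover all four announced boundary values.

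The monotonicity of $\varphi$ rests on the key factorization $Q(\lambda) - N(\lambda)^2 = (1-\lambda) S \big[(1-\lambda)(S\|\beta^*\|_2^2 + 2R b_1^*) + 2\lambda b_1 b_1^*\big]$, which I would establish by expanding $Q - N^2$ and using the identities $A - R^2 = S^2\|\beta^*\|_2^2 + 2 R S b_1^*$ and $B - b_1 R = b_1 b_1^* S$ to peel off the common factor $(1-\lambda)S$. Since $S, R \geq 0$ (by the last sentence of the preceding lemma) and $b_1, b_1^* \geq 0$, this quantity is non-negative, so $\varphi \in [0,1]$. To upgrade this to monotonicity I would write $\varphi(\lambda)^2 = 1 - (1-\lambda) S\, g(\lambda)/Q(\lambda)$ with $g(\lambda) := (1-\lambda)(S\|\beta^*\|_2^2 + 2Rb_1^*) + 2\lambda b_1 b_1^*$, make the substitution $\mu := \lambda/(1-\lambda)$ to cancel a common $(1-\lambda)^2$ from numerator and denominator, and show that the resulting rational function in $\mu$ is non-increasing by a direct derivative check whose numerator collapses to $-b_1$ times a manifestly non-negative quadratic in $\mu$. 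A parallel derivative computation on the cosine side, using the same $Q(\lambda)$, gives $\phi$ non-increasing on $[0,1]$ with $\phi \geq 1$.

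The SNR-based upper bound on $\varphi(0)$ comes from substituting $b_1^* = \|\beta^*\|_2 \cos\theta$ into $\varphi(0)^{-2} = 1 + (S/R)^2 \|\beta^*\|_2^2 + 2(S/R) b_1^*$ and inserting a lower bound on $S/R$. Rewriting both integrands in \eqref{equ:S} and \eqref{equ:R} through the common auxiliary variable $Z := \alpha_1 b_1 (y + \alpha_1 b_1^*)/\vartheta^2$ and using the identity $\tanh u + u\tanh'(u) = (u\tanh u)'$, a Gaussian second-moment estimate paralleling the population-EM analysis of Kwon and Caramanis (2019) yields an SNR-uniform bound of the form $(S/R)\|\beta^*\|_2 \gtrsim \eta^2/(1+\eta^2)$ on $\theta \in [\pi/3, \pi/2)$; plugging this into $\varphi(0)^{-2}$ produces the claimed $\varphi(0) \leq (1 + 2\eta^2 \cos^2\theta/(1+\eta^2))^{-1/2}$. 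The companion closed form for $\phi(0)$ follows by the same procedure, with an $S/R$ lower bound expressed in $\vartheta$ and $\|\beta^*\|_2$ substituted into the exact expression for $\cos\theta'/\cos\theta$ at $\lambda = 0$ and simplified.

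The main obstacle is the last step: extracting direction-free lower bounds on $S/R$ from the Gaussian integrals in \eqref{equ:S} and \eqref{equ:R}. Both $S$ and $R$ are nonlinear expectations involving $\tanh$ and $\tanh'$ under a bivariate Gaussian whose covariance depends on $b_1, b_1^*$, and producing a bound that depends only on $\eta$, $\|\beta^*\|_2$ and $\vartheta$ requires splitting the expectations into a high-SNR bulk, where $\tanh u \approx \mathrm{sign}(u)$ and both integrals are controlled by the mean of $Z$, and a small-argument regime, where $\tanh'(u) \leq 1$ is used and the contribution is estimated via a second moment, and then carefully matching constants across the two regimes. The monotonicity step and the boundary-value substitutions are, by comparison, routine algebra once the factorization of $Q(\lambda) - N(\lambda)^2$ is in hand.
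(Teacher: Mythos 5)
Your identification of $\varphi$ and $\phi$ is exactly the paper's: with $\lambda=(1-\alpha)^T$, the paper defines $\varphi(\lambda)=\big(\lambda b_1+(1-\lambda)R\big)/\sqrt{Q(\lambda)}$ with the same $Q$ you wrote, and the four boundary values follow by substitution, as you say. Your factorization $Q(\lambda)-N(\lambda)^2=(1-\lambda)S\big[(1-\lambda)(S\|\beta^*\|_2^2+2Rb_1^*)+2\lambda b_1 b_1^*\big]$ is correct (I verified the expansion), and it already gives $\varphi\le 1$. Where you diverge is the monotonicity step: the paper disposes of it in one sentence by observing that $\beta'$ lies on the segment between $\beta$ and the full EM update $\tilde\beta'$, and since the direction of $\tilde\beta'$ sits angularly between $\beta$ and $\beta^*$ (because $b_2'=(1-\lambda)b_2^*S\ge 0$ and $R>0$), the angle to $\beta^*$ varies monotonically as $\lambda$ sweeps the segment. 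Your route --- writing $\varphi^2=1-(1-\lambda)Sg(\lambda)/Q(\lambda)$ and doing a derivative check after the substitution $\mu=\lambda/(1-\lambda)$ --- should also work (the conclusion is true by the geometric argument), but the claim that the derivative's numerator ``collapses to $-b_1$ times a manifestly non-negative quadratic'' is asserted rather than shown, and is considerably more laborious than the paper's one-liner.

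The genuine gap is the last third of your plan. The bounds $\varphi(0)\le\big(1+\tfrac{2\eta^2}{1+\eta^2}\cos^2\theta\big)^{-1/2}$ and the closed form for $\phi(0)$ are \emph{not} derived in the paper either: once monotonicity reduces everything to the $\lambda=0$ endpoint, that endpoint is exactly the standard population EM update $\tilde\beta'$, and these two inequalities are precisely the sine and cosine contraction bounds of Theorem 2 of Kwon and Caramanis (2019), which the paper silently imports. You instead propose to rederive the requisite lower bound $(S/R)\|\beta^*\|_2\gtrsim\eta^2/(1+\eta^2)$ from the Gaussian integrals \eqref{equ:S} and \eqref{equ:R} by a two-regime splitting of the $\tanh$ expectations. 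You have correctly identified both the target inequality (it does imply $\varphi(0)^{-2}\ge 1+\tfrac{2\eta^2}{1+\eta^2}\cos^2\theta$ via $b_1^*=\|\beta^*\|_2\cos\theta$ and $\cos\theta\le 1$) and the difficulty, but no estimate is actually carried out, and the matching closed form for $\phi(0)$ cannot be obtained from a one-sided bound on $S/R$ alone --- it requires the specific two-sided control of $S$ and $R$ from that reference. As written, this portion of your argument is a program, not a proof; the efficient repair is to recognize the $\lambda=0$ endpoint as the classical population EM operator and cite the known contraction bounds rather than redo the Gaussian analysis.
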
     
\begin{proof}
We provide the proof for the sine case, and the proof for the cosine case follows a similar approach. Define $\lambda=(1-\alpha)^{T}\in (0,1]$, we have
\begin{align*}
    \sin \theta^{\prime}&=\frac{b_2^*}{\|\beta^*\|_2} \frac{\lambda b_1+(1-\lambda) R}{\sqrt{\big(\lambda b_1+(1-\lambda) (b_1^* S+R )\big)^2+\big(\lambda \cdot 0+(1-\lambda)(b_2^* S)\big)^2}}\\
    &=\sin \theta \frac{\lambda b_1+(1-\lambda) R}{\sqrt{\big(\lambda b_1+(1-\lambda)(b_1^* S+R)\big)^2+\big(\lambda \cdot 0+(1-\lambda)(b_2^* S)\big)^2}}.
\end{align*}
Then we define the function $\varphi(\lambda)$ to be
\begin{align*}
    \varphi(\lambda)&\coloneqq\frac{\lambda b_1+(1-\lambda) R}{\sqrt{\big(\lambda b_1+(1-\lambda)(b_1^* S+R )\big)^2+\big(\lambda \cdot 0+(1-\lambda)(b_2^* S)\big)^2}} .
\end{align*}
By symmetry, one can assume that the angles $\angle \langle\beta ,\beta^{*}\rangle,\angle \langle\tilde{\beta}^\prime ,\beta^{*}\rangle <\frac{\pi}{2}$. The non-decreasing property of $\varphi(\lambda)$ can be easily verified by the fact that $\beta^\prime$ is located on the line segment between the current iterate $\beta$ and standard population EM updates $\tilde{\beta}$ based on \eqref{pop_mor_t_updates_beta}.
\end{proof}
In the remainder of this section, we discuss the convergence of the gradient population EM algorithm in terms of distance, as presented in \cref{thm_L2convergence_population}.
\begin{thm}\label{thm_L2convergence_population}
Assume that $\theta<\pi / 8$, and define $\vartheta_2^2=\vartheta^2+b_2^{* 2}$. If $b_2^*<\vartheta$ or $\frac{\vartheta_2^2}{\vartheta^2} b_1<b_1^*$, then we have  
\begin{align*}
\begin{aligned}
\|\beta^{\prime}-\beta^*\|_{2} &\leq \big((1-\alpha)^T+\big(1-(1-\alpha)^T\big)\kappa\big)\|\beta-\beta^*\|_{2}\\
&\quad\quad\quad\quad+\big(1-(1-\alpha)^T\big)\kappa\big(16 \sin ^3 \theta\big)\|\beta^*\|_{2}\frac{\eta^2}{1+\eta^2},
\end{aligned}
\end{align*} 
where $\kappa=\Big(\sqrt{1+\min \big(\frac{\vartheta_2^2}{\vartheta^2} b_1, b_1^*\big)^2 / \vartheta_2^2}\Big)^{-1}$. Otherwise, we have
\begin{align*}
\|\beta^{\prime}-\beta^*\|_2 \leq \big((1-\alpha)^T+0.6\big(1-(1-\alpha)^T\big)\big)\|\beta-\beta^*\|_2.
\end{align*}
\end{thm}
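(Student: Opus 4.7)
The plan is to reduce the $T$-step gradient M-step bound to the standard one-step population EM contraction by exploiting the convex-combination structure in \eqref{pop_mor_t_updates_beta}. Writing $\lambda = (1-\alpha)^T \in (0,1)$, the identity $\lambda + (1-\lambda) = 1$ gives
\begin{align*}
\beta' - \beta^* = \lambda(\beta - \beta^*) + (1-\lambda)(\tilde{\beta}' - \beta^*),
\end{align*}
where $\tilde{\beta}'$ is the exact population EM update defined in \eqref{standard_population_EM_update}. Applying the triangle inequality then yields
\begin{align*}
\|\beta' - \beta^*\|_2 \leq (1-\alpha)^T \|\beta - \beta^*\|_2 + (1-(1-\alpha)^T)\,\|\tilde{\beta}' - \beta^*\|_2,
\end{align*}
so the entire theorem reduces to establishing a sharp contraction on the second term, with the two cases in the statement mirroring the two regimes of the scalar contraction rate of the standard EM operator.

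Next, I would analyze $\|\tilde{\beta}' - \beta^*\|_2$ in the orthonormal basis $\{v_1, v_2\}$ introduced before Lemma \ref{lm_population_EM_change_of_basis}. Sending $T \to \infty$ in \eqref{expression:b1'}-\eqref{expression:b2'} gives $\tilde{b}_1' = b_1^* S + R$ and $\tilde{b}_2' = b_2^* S$, whence
\begin{align*}
\|\tilde{\beta}' - \beta^*\|_2^2 = \bigl(b_1^*(S-1) + R\bigr)^2 + b_2^{*2}(S-1)^2.
\end{align*}
Since $\|\beta - \beta^*\|_2^2 = (b_1 - b_1^*)^2 + b_2^{*2}$, the desired contraction reduces to controlling the two scalars $S-1$ and $R$ from \eqref{equ:S}-\eqref{equ:R}. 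I would apply Gaussian integration by parts together with $|\tanh'| \leq 1$ and $|\tanh(z)| \leq \min(|z|,1)$ to bound the expectations, following the strategy used for the standard population EM in \citet{kwon2019global}. The target is the scalar inequality
\begin{align*}
\|\tilde{\beta}' - \beta^*\|_2 \leq \kappa\,\|\beta - \beta^*\|_2 + \kappa\bigl(16\sin^3\theta\bigr)\|\beta^*\|_2\,\frac{\eta^2}{1+\eta^2},
\end{align*}
with $\kappa = (1 + \min(\vartheta_2^2 b_1/\vartheta^2,\, b_1^*)^2/\vartheta_2^2)^{-1/2}$ under the case hypothesis $b_2^* < \vartheta$ or $\frac{\vartheta_2^2}{\vartheta^2}b_1 < b_1^*$; substituting this into the triangle inequality gives the first bound of the theorem.

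For the complementary case, both $b_2^* \geq \vartheta$ and $\frac{\vartheta_2^2}{\vartheta^2}b_1 \geq b_1^*$ hold, meaning the signal along the orthogonal direction $v_2$ is large. Here, rather than invoking the $\sin^3\theta$ refinement, I would use the assumption $\theta < \pi/8$ (so $\cos\theta$ is close to $1$) together with the monotonicity of $S, R$ in $b_1$ and $b_1^*$ to produce a direct numerical contraction constant of $0.6$ on $\|\tilde{\beta}' - \beta^*\|_2 / \|\beta - \beta^*\|_2$, yielding the second bound after the triangle inequality. The main obstacle is the scalar analysis in the first case: the delicate cancellation between $b_1^*(S-1)$ and $R$ must be extracted carefully, and one needs to pull an additional $\sin\theta$ factor out of $b_2^{*2}(S-1)^2$ via the identity $b_2^* = \|\beta^*\|_2\sin\theta$ to arrive at the cubic rather than linear dependence on $\sin\theta$ in the residual term. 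Once these scalar bounds are in place, combining them with the triangle inequality completes the proof.
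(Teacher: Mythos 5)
Your proposal is correct and matches the paper's proof: the paper makes exactly the same reduction, writing $\beta'-\beta^*=(1-\alpha)^T(\beta-\beta^*)+\big(1-(1-\alpha)^T\big)(\tilde{\beta}'-\beta^*)$ and applying the triangle inequality, after which the required bound on $\|\tilde{\beta}'-\beta^*\|_2$ is precisely Theorem~4 of \citet{kwon2019global}, which the paper cites as a black box. The only difference is that you sketch re-deriving that one-step contraction via the $(S,R)$ basis analysis rather than invoking it directly; that is the same underlying argument, not a new route.
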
 
\begin{proof}
The proof of this theorem is a direct corollary of Theorem 4 from \cite{kwon2019global} by noticing that
\begin{align*}
    \|\beta^\prime-\beta^*\|_{2}&=\big\|(1-\alpha)^T\beta+\big(1-(1-\alpha)^T\big)\tilde{\beta}^\prime-\beta^*\big\|_{2}\\
    &\leq (1-\alpha)^{T}\|\beta-\beta^*\|_{2}+\big(1-(1-\alpha)^T\big)\|\tilde{\beta}^\prime-\beta^*\|_{2}.
\end{align*}
\end{proof}
\subsection{Results in sample-based EM algorithm for MoR problem}
In this section, we present results concerning the convergence of the sample-based gradient EM algorithm. We begin by deriving the update rule for the sample-based gradient EM algorithm, which incorporates $T$ steps of gradient descent. Starting from the previous estimate, $\beta^{(t-1)}$, we define $\hat{\Sigma} = \frac{1}{n} \sum_{i=1}^{n} x_i x_i^{\top}$. The new estimate, $\beta^{(t)}$, is obtained by applying $T$ steps of gradient descent to the loss function $\hat{L}_{n}^{(t-1)}(\beta)$, specifically:
\begin{align*}
&\beta^{(t)}=\beta_T^{(t-1)}\\
=&\Bigg(I-\frac{\alpha}{n} \sum_{i=1}^n x_i x_i^{\top}\Bigg) \beta_{T-1}^{(t-1)}+\frac{\alpha}{n} \sum_{i=1}^n \tanh \Big(\frac{1}{\vartheta^2} y_i x_i^{\top} \beta^{(t-1)}\Big) y_i x_i\\
=&(I-\alpha \hat{\Sigma})\Bigg[(I-\alpha \hat{\Sigma}) \beta_{T-2}^{(t-1)}+ \frac{\alpha}{n} \sum_{i=1}^n \tanh \Big(\frac{1}{\vartheta^2} y_i x_i^{\top} \beta^{(t-1)}\Big) y_i x_i\Bigg]\\
&\quad\quad\quad+ \frac{\alpha}{n} \sum_{i=1}^n \tanh \Big(\frac{1}{\vartheta^2} y_i x_i^{\top} \beta^{(t-1)}\Big) y_i x_i\\
=&(I-\alpha \hat{\Sigma})^T \beta^{(t-1)}+\alpha \cdot(\alpha \hat{\Sigma})^{-1}\big(I-(I-\alpha \hat{\Sigma})^T\big) \frac{1}{n} \sum_{i=1}^n \tanh \Big(\frac{1}{\vartheta^2} y_i x_i^{\top} \beta^{(t-1)}\Big) y_i x_i .
\end{align*}
For the analysis in the remainder of this section, we denote the current iteration as $\beta$, the subsequent iteration resulting from $T$ steps of sample-based gradient descent as $\tilde{\beta}^{\prime}$, and the iteration following $T$ steps of population-based gradient descent as $\beta^{\prime}$. By define $\hat{\mu}\coloneqq\frac{1}{n} \sum_{i=1}^n \tanh \Big(\frac{1}{\vartheta^2} y_i x_i^{\top} \beta\Big) y_i x_i$ and $\mu \coloneqq \mathbb{E} \tanh \Big(\frac{1}{\vartheta^2} Y X^{\top} \beta\Big) Y X$, we have
\begin{align*}
    \tilde{\beta}^{\prime}&=(I-\alpha \hat{\Sigma})^T \beta+\hat{\Sigma}^{-1}\big(I-(I-\alpha \hat{\Sigma})^T\big) \hat{\mu},\\
    \beta^{\prime}&=(I-\alpha I)^T \beta+\big(I-(I-\alpha I)^T\big) \mu. 
\end{align*}  
In the previous analysis,
\begin{align*}
\tilde{\beta}^{\prime}-\beta^{*}
&=(I-\alpha \hat{\Sigma})^T (\beta-\beta^*)+\big(I-(I-\alpha \hat{\Sigma})^T\big) \big(\hat{\Sigma}^{-1}\hat{\mu}-\beta^*\big),\\
\hat{\Sigma}^{-1}\hat{\mu}-\beta^*&=\hat{\Sigma}^{-1}\Bigg(\frac{1}{n} \sum_{i=1}^n y_i x_i \tanh \Big(\frac{y_i \langle x_i, \beta \rangle}{\vartheta^2}\Big)-\mathbb{E}_y \frac{1}{n} \sum_{i=1}^n y_i x_i \tanh \Big(\frac{y_i \langle x_i,  \beta^* \rangle}{\vartheta^2}\Big)\Bigg)\\
&=\LaTeXunderbrace{\hat{\Sigma}^{-1}}_{:=I}\LaTeXunderbrace{\Bigg(\frac{1}{n} \sum_{i=1}^n y_i x_i \tanh \Big(\frac{y_i \langle x_i, \beta \rangle}{\vartheta^2}\Big)-\mathbb{E}_y \frac{1}{n} \sum_{i=1}^n y_i x_i \tanh \Big(\frac{y_i \langle x_i,  \beta  \rangle}{\vartheta^2}\Big)\Bigg)}_{:=II}\\
&\quad\quad+\hat{\Sigma}^{-1}\LaTeXunderbrace{\Bigg(\mathbb{E}_y \frac{1}{n} \sum_{i=1}^n y_i x_i \tanh \Big(\frac{y_i \langle x_i,  \beta  \rangle}{\vartheta^2}\Big)-\mathbb{E}_y \frac{1}{n} \sum_{i=1}^n y_i x_i \tanh \Big(\frac{y_i \langle x_i,  \beta^* \rangle}{\vartheta^2}\Big)\Bigg)}_{:=III}.
\end{align*}
Then $\|I\|_{\mathrm{op}}=1+\mathcal{O}\Big(\sqrt{\frac{d}{n}}\Big)$ by standard concentration result and it requires $n\geq \mathcal{O}\big(d\log^2(1/\delta)\big)$ in the end. Conditioning on the sample covariance matrix has bounded spectral norm, $\|II\|_2=O\Big(\sqrt{\frac{d}{n}}\Big)$. Finally, for each fixed $\beta$ satisfying $\|\beta\|_2 \geq \frac{\|\beta^*\|_2}{10}$, and its angle with $\beta^*$, $\theta$ is less than $\frac{\pi}{70}$, with $n=\mathcal{O}\big(\frac{d}{\epsilon^2}\big)$, $\|III\|_2\leq \big(0.95+\epsilon  / \sqrt{d}\big) \| \beta- \beta^* \|_2$. 

This can be improved by
\begin{align*}
    \tilde{\beta}^{\prime}-\beta^{*}&=(I-\alpha \hat{\Sigma})^T \beta+ \hat{\Sigma}^{-1}\big(I-(I-\alpha \hat{\Sigma})^T\big) \frac{1}{n} \sum_{i=1}^n \tanh \Big(\frac{1}{\vartheta^2} y_i x_i^{\top} \beta \Big) y_i x_i-\beta^{*}\\
    &=(I-\alpha \hat{\Sigma})^T (\beta-\beta^*)+\big(I-(I-\alpha \hat{\Sigma})^T\big)\LaTeXunderbrace{\Bigg[\frac{1}{n} \sum_{i=1}^n \hat{\Sigma}^{-1}\tanh \Big(\frac{1}{\vartheta^2} y_i x_i^{\top} \beta \Big) y_i x_i-\beta^{*}\Bigg]}_{:=A},\\
    A&=\hat{\Sigma}^{-1}\Bigg[\LaTeXunderbrace{\mathbb{E}_{X, Y}\big[X Y \Delta_{(X, Y)}(\beta)\big]}_{\coloneqq A_{1}}+\LaTeXunderbrace{\frac{1}{n} \sum_i X_i Y_i \Delta_{(X_i, Y_i)}(\beta)-\mathbb{E}_{X, Y}\big[X Y \Delta_{(X, Y)}(\beta)\big]}_{\coloneqq A_2}\\
    &\quad\quad\quad\LaTeXunderbrace{+\frac{1}{n} \sum_i x_i y_i \tanh \big(y_i x_i^{\top} \beta^*/\vartheta^2\big)-\mathbb{E}_{y_i \mid x_i}\Big[\frac{1}{n} \sum_i x_i y_i \tanh \big(y_i x_i^{\top} \beta^*/\vartheta^2\big)\Big]}_{\coloneqq A_3}\Bigg],
\end{align*}
where $\Delta_{(X, Y)}(\beta):=\tanh \big(y x^{\top} \beta/\vartheta^2\big)-\tanh \big(y x^{\top} \beta^*/\vartheta^2\big)$. Then
\begin{align*}
& A_1<0.9 \|\beta-\beta^* \|_2,  \\
& A_2 \leq (\|\beta-\beta^*\|_2+1 ) \sqrt{d \log ^2\big(n\|\beta^* \|_2 / \delta\big) / n}, \\
& A_3 \leq C \sqrt{d \log (1 / \delta) / n},
\end{align*}
with probability at least $1-\delta$.
\subsection{Convergence result in \cref{thm_theoretical_construction_whole_TF} under the high SNR setting}
We first present the results for parameter estimation under the high SNR regime.
\begin{lemma}[Lemma 2 in \cite{kwon2021minimax}]\label{KHC21_lm2}
For any given $r>0$, there exists a universal constant $c>0$ such that with probability at least $1-\delta$.
\begin{align*}
 \sup_{\|\beta\|_2 \leq r}\big\|\hat{\Sigma}^{-1}\hat{\mu}-\mu\big\|_2 \leq c r \sqrt{d \log ^2(n / \delta) / n} 
\end{align*}
where
\begin{align*}
\mu &=\mathbb{E}\big[X Y \tanh \big(Y X^{\top} \beta\big)\big],\\
\hat{\mu}&= \frac{1}{n} \sum_{i=1}^n \tanh \Big(\frac{y_i x_i^{\top} \beta}{\vartheta^{2}}\Big) y_i x_i ,\\
\hat{\Sigma}&=\frac{1}{n} \sum_i x_i x_i^{\top}.
\end{align*}
\end{lemma}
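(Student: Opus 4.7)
The natural decomposition to use is
\[
\hat{\Sigma}^{-1}\hat{\mu}(\beta) - \mu(\beta) \;=\; \hat{\Sigma}^{-1}\bigl(\hat{\mu}(\beta) - \mu(\beta)\bigr) \;+\; \bigl(\hat{\Sigma}^{-1} - I\bigr)\mu(\beta),
\]
which isolates a uniform empirical-process term from a one-shot spectral term. The second summand is the easier of the two. Standard Gaussian covariance concentration gives $\|\hat\Sigma - I\|_{\mathrm{op}} \lesssim \sqrt{d/n} + \sqrt{\log(1/\delta)/n}$ with probability at least $1-\delta/3$, so on that event $\|\hat\Sigma^{-1}\|_{\mathrm{op}}\leq 2$ and $\|\hat\Sigma^{-1}-I\|_{\mathrm{op}}\lesssim \sqrt{d/n}$ provided $n\gtrsim d\log(1/\delta)$. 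Using $|\tanh(z)|\leq |z|$ together with Cauchy--Schwarz yields $\|\mu(\beta)\|_2 \lesssim \|\beta\|_2 / \vartheta^2 \cdot \mathbb{E}[Y^{2}\|X\|_2^{2}]^{1/2} \lesssim r$, absorbing the constants depending on $\vartheta$ and $\|\beta^*\|_2$, so this piece contributes at most $r\sqrt{d/n}$ to the final bound.

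The main obstacle is the uniform bound on $\|\hat\mu(\beta) - \mu(\beta)\|_2$ over the ball $\{\|\beta\|_2\leq r\}$, which I would attack by a covering-plus-Bernstein argument. The crucial observation is that, using $|\tanh(z)|\leq |z|$ once more, each summand $Y_iX_i\tanh(Y_iX_i^\top\beta/\vartheta^2)$ has its coordinate-wise sub-exponential norm bounded by $Cr$ uniformly in $\|\beta\|_2\leq r$, which is exactly what produces the outer factor of $r$ in the final rate. After a truncation of $|Y_i|$ and $\|X_i\|_2$ at the Gaussian-tail scale $\sqrt{\log(n/\delta)}$ (which holds for all $i\leq n$ with probability $\geq 1-\delta/3$ and accounts for one of the two logs), Bernstein's inequality applied coordinate-wise and combined with a $\tfrac14$-net of $S^{d-1}$ gives the pointwise bound $\|\hat\mu(\beta)-\mu(\beta)\|_2 \lesssim r\sqrt{(d+\log(1/\delta'))/n}$ with probability $\geq 1-\delta'$ for each fixed $\beta$.

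To promote this to a supremum, build an $\epsilon$-net $\mathcal N_\epsilon$ of the ball of radius $r$ with $\epsilon = 1/n$, of cardinality $|\mathcal N_\epsilon| \leq (3rn)^d$, and apply the pointwise bound with $\delta' = \delta/(3|\mathcal N_\epsilon|)$; the resulting net contribution is of order $r\sqrt{d\log(rn/\delta)/n}$, which together with the truncation log produces the $\log^2(n/\delta)$ in the target rate. The discretization error between $\mathcal N_\epsilon$ and the full ball is controlled via
\[
\|\hat\mu(\beta)-\hat\mu(\beta')\|_2 \;\leq\; \frac{1}{n\vartheta^{2}}\sum_{i=1}^n Y_i^{2}\,\|X_i\|_2^{2}\cdot\|\beta-\beta'\|_2,
\]
whose empirical Lipschitz prefactor concentrates around $\mathbb{E}[Y^{2}\|X\|_2^{2}]/\vartheta^{2}=O(d)$ by standard sub-exponential concentration; an identical estimate holds in expectation for $\mu$. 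Hence the discretization contributes $O(d/n)$, which is lower order than $r\sqrt{d\log^{2}(n/\delta)/n}$. A final union bound over the covariance event, the truncation event, and the empirical-process event (each of failure probability at most $\delta/3$) then yields $\sup_{\|\beta\|_2\leq r}\|\hat\Sigma^{-1}\hat\mu(\beta) - \mu(\beta)\|_2 \lesssim r\sqrt{d\log^{2}(n/\delta)/n}$ as claimed.
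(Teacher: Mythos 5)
The paper does not actually prove this statement: it is imported verbatim as Lemma~2 of \citet{kwon2021minimax}, so there is no in-paper argument to compare against, and your blind attempt supplies a proof sketch where the paper offers only a citation. Your architecture --- the split $\hat{\Sigma}^{-1}\hat{\mu}-\mu=\hat{\Sigma}^{-1}(\hat{\mu}-\mu)+(\hat{\Sigma}^{-1}-I)\mu$, Gaussian covariance concentration for the spectral piece, and truncation plus Bernstein plus an $\epsilon$-net over the ball for the empirical-process piece --- is the standard route and is essentially the same strategy used in the cited source. The Lipschitz bound $\|\hat{\mu}(\beta)-\hat{\mu}(\beta')\|_2\le \tfrac{1}{n\vartheta^2}\sum_i Y_i^2\|X_i\|_2^2\,\|\beta-\beta'\|_2$ and the net/union-bound bookkeeping are correct.

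Two steps need repair before this is airtight. First, the displayed estimate $\|\mu(\beta)\|_2\lesssim \|\beta\|_2\vartheta^{-2}\,\mathbb{E}[Y^2\|X\|_2^2]^{1/2}$ is of order $r\sqrt{d}$, not $r$, since $\mathbb{E}[Y^2\|X\|_2^2]=O(d)$; fed into $(\hat{\Sigma}^{-1}-I)\mu$ this would give $rd/\sqrt{n}$ and break the rate. You need the directional version: for a unit vector $v$, $|v^{\top}\mu(\beta)|\le \vartheta^{-2}\,\mathbb{E}[(v^{\top}X)^2Y^2]^{1/2}\,\mathbb{E}[Y^2(X^{\top}\beta)^2]^{1/2}\lesssim r$, which restores $\|\mu(\beta)\|_2=O(r)$. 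Second, the claim that each summand $Y_iX_i\tanh(Y_iX_i^{\top}\beta/\vartheta^2)$ is coordinatewise sub-exponential with norm $Cr$ is not true without truncation: bounding $|\tanh(z)|\le|z|$ exposes a product of four jointly Gaussian factors, which is only $\psi_{1/2}$. Your truncation rescues this, but the resulting almost-sure bound carries extra powers of $\log(n/\delta)$, so to land within the stated $\log^2(n/\delta)$ you should apply Bernstein in its two-parameter form, using the variance proxy $\mathbb{E}[(v^{\top}X)^2Y^4(X^{\top}\beta)^2]/\vartheta^4=O(r^2)$ for the leading $\sqrt{t/n}$ term and relegating the truncation-inflated a.s.\ bound to the lower-order $t/n$ term. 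Relatedly, the truncation level for $\|X_i\|_2$ should be $\sqrt{d}+\sqrt{\log(n/\delta)}$ (or coordinatewise at $\sqrt{\log(nd/\delta)}$), not $\sqrt{\log(n/\delta)}$. With these repairs the argument goes through and recovers the stated bound.
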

\begin{lemma}[Lemma 5 in \cite{kwon2021minimax}]
For each fixed $\beta $, with probability at least $1-\exp (-c n)-6^d \exp \big(-\frac{n t^2}{72}\big)$
\begin{align*}
    \Big\|\frac{1}{n} \sum_{i=1}^n y_i \boldsymbol{x}_i \tanh \big(y_i \langle x_i, \beta \rangle\big)-\frac{1}{n} \sum_{i=1}^n \mathbb{E}_{y_i}\big[y_i x_i \tanh \big(y_i \langle x_i,  \beta\rangle\big)\big]\Big\|_2 \leq t
\end{align*}
for some absolute constant $c>0$.
\end{lemma}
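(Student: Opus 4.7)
The quantity to be controlled is the $\ell_2$ norm of a sample average of vectors that are conditionally mean-zero given $\{x_i\}_{i=1}^n$. A standard route for such a bound is a covering argument on the unit sphere, which is exactly what the $6^d$ factor in the stated probability suggests. Concretely, set $Z_i(\beta) := y_i x_i \tanh(y_i \langle x_i, \beta\rangle) - \mathbb{E}_{y_i}\bigl[y_i x_i \tanh(y_i \langle x_i, \beta\rangle)\bigr]$, so that the object of interest is $\bigl\|\tfrac{1}{n}\sum_i Z_i(\beta)\bigr\|_2$. I would first invoke a standard $\tfrac{1}{2}$-net $\mathcal{N}\subset S^{d-1}$ of cardinality at most $6^d$, giving
\begin{equation*}
\Big\|\tfrac{1}{n}\sum_{i=1}^n Z_i(\beta)\Big\|_2 \;\le\; 2\max_{v\in\mathcal{N}} \Big|\tfrac{1}{n}\sum_{i=1}^n \langle v, Z_i(\beta)\rangle\Big|,
\end{equation*}
thereby reducing the vector-valued concentration to a family of scalar ones indexed by a finite net.

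For a fixed $v\in\mathcal{N}$ write $\xi_i := \langle v, Z_i(\beta)\rangle = \langle v,x_i\rangle\cdot W_i(\beta)$, where $W_i(\beta) := y_i\tanh(y_i\langle x_i,\beta\rangle) - \mathbb{E}_{y_i}[y_i\tanh(y_i\langle x_i,\beta\rangle)]$. Conditional on $\{x_i\}$, the $\xi_i$'s are independent with zero mean, and the idea is to apply Hoeffding to them. To obtain almost-sure boundedness, I would pass to a \emph{good event} $\mathcal{E}$ on which $\max_i|\langle v,x_i\rangle|$ and $\max_i |y_i|$ are each bounded by explicit constants; since $\langle v,x_i\rangle\sim \mathcal{N}(0,1)$ and $y_i\mid x_i$ is Gaussian with bounded variance, standard Gaussian maximal inequalities give $\mathbb{P}(\mathcal{E}^c)\le \exp(-cn)$ for an absolute $c>0$. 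On $\mathcal{E}$, since $|\tanh|\le 1$, each $\xi_i$ is bounded in absolute value by an absolute constant, and the Hoeffding inequality applied conditionally yields
\begin{equation*}
\mathbb{P}\Bigl(\bigl|\tfrac{1}{n}\sum_i \xi_i\bigr| > t/2 \,\Big|\, \{x_i\}\Bigr) \;\le\; 2\exp\!\bigl(-nt^2/72\bigr),
\end{equation*}
with the constant $72$ arising from the specific boundedness constants chosen on $\mathcal{E}$.

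Finally, a union bound over the $6^d$ net points, combined with the $\tfrac{1}{2}$-net duality reduction and the bad-event probability, yields the claimed tail
\begin{equation*}
\mathbb{P}\Bigl(\bigl\|\tfrac{1}{n}\sum_i Z_i(\beta)\bigr\|_2 > t\Bigr) \;\le\; \exp(-cn) + 6^d\exp\!\bigl(-nt^2/72\bigr).
\end{equation*}
The main obstacle I anticipate is the calibration of the good event: the constants defining the bounds on $|\langle v,x_i\rangle|$ and $|y_i|$ must be chosen so that (i) the probability of $\mathcal{E}^c$ is exponentially small in $n$ uniformly over $v\in\mathcal{N}$ (a second, smaller union bound over the net is implicit here), and (ii) the resulting Hoeffding constant integrates to exactly $72$ in the exponent. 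Everything else is standard: the covering is a textbook step, the sphere union bound is textbook, and once boundedness is secured the Hoeffding inequality is immediate. No iteration or fixed-point argument is needed because the lemma is for a \emph{single fixed} $\beta$; uniformization in $\beta$ is handled separately, for example via the Lipschitz-in-$\beta$ argument underlying Lemma~\ref{KHC21_lm2}.
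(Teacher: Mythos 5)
First, note that the paper does not actually prove this statement: it is imported verbatim as Lemma~5 of \citet{kwon2021minimax} and used as a black box, so there is no internal proof to compare against. Your overall architecture (a $1/2$-net of cardinality $6^d$, the duality reduction $\|w\|_2\le 2\max_{v\in\mathcal N}|\langle v,w\rangle|$, conditioning on $\{x_i\}$ so the summands are independent and mean-zero, and a final union bound) is the right skeleton and matches how such bounds are proved in the cited reference.

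There is, however, a genuine gap in your good event. You propose to take $\mathcal E$ to be the event that $\max_i|\langle v,x_i\rangle|$ and $\max_i|y_i|$ are bounded by \emph{absolute constants}, and claim $\mathbb{P}(\mathcal E^c)\le \exp(-cn)$. This is false: for i.i.d.\ standard Gaussians, $\mathbb{P}(\max_{i\le n}|\langle v,x_i\rangle|>C)=1-(1-2\Phi(-C))^n\to 1$ as $n\to\infty$ for any fixed $C$, so your bad event has probability tending to one, not exponentially small. The maximum is only controlled at level $\sqrt{\log n}$, which would pollute the exponent $nt^2/72$ with logarithmic factors and is not what the lemma asserts. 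The standard repair is to abandon boundedness and Hoeffding-for-bounded-variables: conditional on $x_i$, the variable $y_i\tanh(y_i\langle x_i,\beta\rangle)$ satisfies $|y_i\tanh(y_i\langle x_i,\beta\rangle)|\le|y_i|$ and is therefore sub-Gaussian with parameter of order $\sqrt{\vartheta^2+\langle x_i,\beta^*\rangle^2}$; the conditional sum $\sum_i\langle v,x_i\rangle\,\xi_i$ is then a sum of independent centered sub-Gaussians with total variance proxy $\sum_i\langle v,x_i\rangle^2(\vartheta^2+\langle x_i,\beta^*\rangle^2)$, and the correct good event is that this \emph{empirical average} is bounded by a constant multiple of its mean. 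That event is the average of sub-exponential variables and does concentrate at rate $\exp(-cn)$ (e.g.\ via Bernstein, or via $\lambda_{\max}(\hat\Sigma)\le 2$ when $n\gtrsim d$), which is where the $\exp(-cn)$ term in the statement genuinely comes from. With that substitution (sub-Gaussian Hoeffding on the good event, or equivalently a sub-exponential Bernstein bound whose linear-in-$t$ regime is absorbed into $\exp(-cn)$), the rest of your argument goes through.
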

\begin{thm}\label{Thm_EM_para_convergence_high_SNR}
Suppose that $\eta\geq \mathcal{O}\big(d \log ^2(n / \delta) / n\big)^{1 / 4}$ for some absolute constant $C$ and $\|\beta^{(0)}\| \geq 0.9\|\beta^*\|$ and $\cos \angle\big(\beta^*, \beta^{(0)}\big) \geq 0.95$., let $\{\beta^{(t)}\}$ be the iterates of sample-based gradient EM algorithm, then there exists a constant $\gamma_{2}\in (0,1)$ such that 
\begin{align*}
\|\beta^{(t)}-\beta^{*}\|_{2} & \leq \gamma_{2}^{t}+\frac{1}{1-\gamma_{2}}\mathcal{O}\Big(\sqrt{d\log^2(n/\delta)/n}\Big)
\end{align*}
holds with probability at least $1-5\delta$.
\end{thm}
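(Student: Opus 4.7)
The plan is to couple the sample-based gradient EM trajectory with its population counterpart and iterate the contractive behavior of the population update established in Theorem \ref{thm_L2convergence_population}. For each outer iteration $t$, introduce the auxiliary iterate $\bar{\beta}^{(t+1)}$ obtained by applying $T$ steps of \emph{population} gradient descent starting from the (random) $\beta^{(t)}$, and decompose
\begin{align*}
\beta^{(t+1)} - \beta^{*} = \bigl(\beta^{(t+1)} - \bar{\beta}^{(t+1)}\bigr) + \bigl(\bar{\beta}^{(t+1)} - \beta^{*}\bigr).
\end{align*}
The deterministic term is controlled by Theorem \ref{thm_L2convergence_population}. Since $\cos\theta^{(0)} \geq 0.95$ places the initial angle below $\pi/8$, the second bullet of that theorem applies and yields
\begin{align*}
\|\bar{\beta}^{(t+1)} - \beta^{*}\|_{2} \leq \gamma_{2} \|\beta^{(t)} - \beta^{*}\|_{2}, \qquad \gamma_{2} := (1-\alpha)^{T} + 0.6\bigl(1 - (1-\alpha)^{T}\bigr) \in (0,1).
\end{align*}

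For the stochastic term, I would expand the $T$-step gradient-descent recursion in closed form,
\begin{align*}
\beta^{(t+1)} = (I - \alpha\hat{\Sigma})^{T} \beta^{(t)} + \hat{\Sigma}^{-1}\bigl(I - (I - \alpha\hat{\Sigma})^{T}\bigr) \hat{\mu}^{(t)},
\end{align*}
with its population analog $\bar{\beta}^{(t+1)} = (1-\alpha)^{T}\beta^{(t)} + (1-(1-\alpha)^{T})\mu^{(t)}$. Subtracting and using the sub-multiplicativity of the operator norm reduces the deviation to a combination of two pieces: the covariance deviation $\|\hat{\Sigma} - I\|_{\mathrm{op}} = \mathcal{O}(\sqrt{d/n})$, which holds uniformly under the assumption $n \geq \mathcal{O}(d\log^{2}(1/\delta))$, and the mean deviation $\sup_{\|\beta\|_{2} \leq r}\|\hat{\Sigma}^{-1}\hat{\mu} - \mu\|_{2} \leq c\, r\sqrt{d\log^{2}(n/\delta)/n}$ supplied by Lemma \ref{KHC21_lm2}. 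Telescoping the $T$ inner steps produces a geometric series in $(1-\alpha)$ that is absorbed into the constant, leaving
\begin{align*}
\|\beta^{(t+1)} - \bar{\beta}^{(t+1)}\|_{2} = \mathcal{O}\bigl(\sqrt{d\log^{2}(n/\delta)/n}\bigr)
\end{align*}
with high probability. Combining the two bounds and unrolling the one-step recursion gives
\begin{align*}
\|\beta^{(t)} - \beta^{*}\|_{2} \leq \gamma_{2}^{t}\|\beta^{(0)} - \beta^{*}\|_{2} + \frac{1}{1-\gamma_{2}}\,\mathcal{O}\bigl(\sqrt{d\log^{2}(n/\delta)/n}\bigr),
\end{align*}
which matches the claimed bound since $\|\beta^{(0)} - \beta^{*}\|_{2}$ is a constant under the initialization hypothesis.

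The main obstacle is sustaining the small-angle / bounded-norm conditions on $\beta^{(t)}$ along the whole trajectory, so that Theorem \ref{thm_L2convergence_population} remains applicable at every outer iteration and so that Lemma \ref{KHC21_lm2} can be invoked on a fixed ball of radius $r = \mathcal{O}(\|\beta^{*}\|_{2})$. I would handle this by induction: assuming $\cos\theta^{(t)} \geq 0.95$ and $\|\beta^{(t)}\|_{2} \geq 0.9\|\beta^{*}\|_{2}$, the contraction plus the one-step statistical fluctuation keeps the same conditions intact at step $t+1$ provided the fluctuation is small relative to $\|\beta^{*}\|_{2}\sin(\pi/20)$. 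This is precisely where the high-SNR condition $\eta \geq C\bigl(d\log^{2}(n/\delta)/n\bigr)^{1/4}$ enters, since it guarantees $\|\beta^{*}\|_{2} \gtrsim \vartheta (d\log^{2}(n/\delta)/n)^{1/4}$ and dominates the sampling noise. A final union bound over the at most polynomially many iterations $t$ (combined with the uniform concentration of Lemma \ref{KHC21_lm2} over the ball) yields the overall $1 - 5\delta$ probability.
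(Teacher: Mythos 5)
Your high-level strategy---decompose the sample-based $T$-step update into a contracting population part plus a statistical fluctuation controlled by Lemma \ref{KHC21_lm2} and the concentration of $\hat{\Sigma}$, then unroll the one-step recursion---is the same skeleton the paper uses (the paper folds both pieces into the single term $A = \hat{\Sigma}^{-1}\hat{\mu}-\beta^*$ rather than introducing an explicit population iterate, but that is cosmetic). The genuine gap is in your treatment of the contraction factor. You take $\gamma_{2} = (1-\alpha)^{T} + 0.6\,(1-(1-\alpha)^{T})$ as an absolute constant in $(0,1)$ valid throughout the regime $\eta \geq C(d\log^2(n/\delta)/n)^{1/4}$. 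This fails in two ways. First, the $0.6$ bound in Theorem \ref{thm_L2convergence_population} is the \emph{Otherwise} branch, which is selected by the conditions $b_2^* \geq \vartheta$ and $\tfrac{\vartheta_2^2}{\vartheta^2}b_1 \geq b_1^*$, not by the angle being below $\pi/8$ (that is a blanket hypothesis of the whole theorem); in particular $b_2^*\geq\vartheta$ together with $\theta<\pi/8$ forces $\|\beta^*\|_2 \gtrsim \vartheta$, i.e.\ you are implicitly restricting to $\eta = \Omega(1)$. Second, and more importantly, when $C_0(d\log^2(n/\delta)/n)^{1/4} \leq \eta \leq 1$ the population EM operator only contracts at rate $1-\tfrac{1}{8}\|\beta^*\|_2^2$ (Lemma 3 of Kwon--Caramanis, as invoked in the paper), which tends to $1$ as $\eta$ approaches the threshold. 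Your unrolled bound then produces a statistical term of order $\|\beta^*\|_2^{-2}\sqrt{d\log^2(n/\delta)/n}$, which is not the claimed $\frac{1}{1-\gamma_2}\mathcal{O}(\sqrt{d\log^2(n/\delta)/n})$ with $\gamma_2$ an absolute constant.

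The paper closes exactly this gap by splitting the high-SNR regime into $\eta\geq 1$ (where your argument goes through essentially verbatim, with the $0.9$-contraction of $A$) and $C_0(d\log^2(n/\delta)/n)^{1/4}\leq\eta\leq 1$, where it runs an epoch-based localization: within the $l$-th epoch the error lives in $[\bar{C}_0 2^{-l-1}, \bar{C}_0 2^{-l}]$, the per-iteration population improvement $\tfrac18(1-(1-\alpha/2)^T)\|\beta^*\|_2^2\|\beta-\beta^*\|_2$ is required to dominate twice the statistical error, each epoch costs $\mathcal{O}(\|\beta^*\|_2^{-2})$ iterations, and after $\mathcal{O}(\log(n/d))$ epochs the iterate enters a ball of radius $c_2\|\beta^*\|_2^{-1}\sqrt{d\log^2(n/\delta)/n}$. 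Your induction step for maintaining the angle and norm conditions along the trajectory is sound and matches the paper's implicit bookkeeping, but without the epoch argument (or some substitute that tracks how the contraction rate and the achievable radius both scale with $\|\beta^*\|_2$) the stated bound is not established on the full range of $\eta$ covered by the theorem.
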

\begin{proof}
Without loss of generality, we can assume that $\vartheta=1$. Denote $\beta$ as the current iterate, and $\tilde{\beta}^{\prime}$ as the next sample-based iterate. We first consider
\begin{align*}
    \tilde{\beta}^{\prime}-\beta^{*}&=(I-\alpha \hat{\Sigma})^T \beta+ \hat{\Sigma}^{-1}\big(I-(I-\alpha \hat{\Sigma})^T\big) \frac{1}{n} \sum_{i=1}^n \tanh \Big(\frac{1}{\vartheta^2} y_i x_i^{\top} \beta \Big) y_i x_i-\beta^{*}\\
    &=(I-\alpha \hat{\Sigma})^T (\beta-\beta^*)+\big(I-(I-\alpha \hat{\Sigma})^T\big)\underbrace{\Bigg[\frac{1}{n} \sum_{i=1}^n \hat{\Sigma}^{-1}\tanh \Big(\frac{1}{\vartheta^2} y_i x_i^{\top} \beta \Big) y_i x_i-\beta^{*}\Bigg]}_{:=A}.
\end{align*}
We prove the results in two cases, i.e. $\eta\geq 1$ and $C_0 \big(d \log ^2(n / \delta) / n\big)^{1 / 4} \leq \eta\leq 1$ for some universal constant $C_0$. When $\eta\geq 1$, based on the analysis in \cite{kwon2021minimax}, with probability at least $1-5\delta$,
\begin{align}
    \|A\|_{2}&\leq \Big(0.9+\sqrt{d \log ^2\big(n\|\beta^*\|_2 / \delta\big) / n)}\Big)\|\beta-\beta^*\|+C_1 \sqrt{d \log ^2\big(n \|\beta^*\|_2 / \delta\big) / n} \nonumber\\
    &\leq \gamma\|\beta-\beta^*\|_2+C_1 \sqrt{d \log ^2\big(n \|\beta^*\|_2 / \delta\big) / n}\label{Recursion_equ1_high_SNR}
\end{align}
where $\gamma=0.9+\sqrt{d \log ^2\big(n\|\beta^*\|_2 / \delta\big) / n)}$. By standard concentration results on $\hat{\Sigma}-I$, it holds that with $n\geq \mathcal{O}(d\log^2(1/\delta))$, $\|(I-\alpha\hat{\Sigma})^T\|_{\mathrm{op}}\leq (1-\alpha/2)^{T}$ with probability at least $1-\delta$ for appropriately small $\alpha$. Along with \eqref{Recursion_equ1_high_SNR}, 
\begin{align}
    \|\tilde{\beta}^\prime-\beta^*\|_2&\leq \big(1-\frac{\alpha}{2}\big)^{T}\|\beta-\beta^*\|_2+\big(1-\big(1-\frac{\alpha}{2}\big)^{T}\big)\|A\|_2\nonumber\\
    &\leq \Big[\big(1-\frac{\alpha}{2}\big)^{T}+\big(1-\big(1-\frac{\alpha}{2}\big)^{T}\big)\gamma\Big]\|\beta-\beta^*\|_2 \nonumber\\
    &\quad\quad\quad+\big(1-\big(1-\frac{\alpha}{2}\big)^{T}\big)C_1 \sqrt{d \log ^2\big(n \|\beta^*\|_2 / \delta\big)/n}.\label{Recursion_equ2_high_SNR}
\end{align}
Define $\epsilon(n,\delta)=\big(1-\big(1-\frac{\alpha}{2}\big)^{T}\big)C_1 \sqrt{d \log ^2\big(n \|\beta^*\|_2 / \delta\big)/n}$ and $\gamma_{2}=\big(1-\frac{\alpha}{2}\big)^{T}+\big(1-\big(1-\frac{\alpha}{2}\big)^{T}\big)\gamma$. As long as $\gamma<1$, we can iterate over $t$ based on \eqref{Recursion_equ2_high_SNR} and obtain
\begin{align*}
    \|\beta^{(t)}-\beta^*\| &\leq \gamma_{2}\|\beta^{(t-1)}-\beta^*\|_{2}+\epsilon(n,\delta) \leq \gamma_{2}^2\|\beta^{(t-2)}-\beta^*\|_{2}+(1+\gamma_{2}) \epsilon(n,\delta)\nonumber\\
    &\leq \gamma_{2}^t\|\beta^{(0)}-\beta^*\|_{2}+\frac{1}{1-\gamma_{2}} \epsilon(n,\delta).
\end{align*}
In the remaining part of the proof, we present an analysis of the convergence behavior of the sample-based gradient EM algorithm when $C_0 \big(d \log ^2(n / \delta) / n\big)^{1 / 4} \leq \eta\leq 1$. By Lemma 3 from \cite{kwon2021minimax}, it holds that
\begin{align*}
    \big\|\mathbb{E}\big[\tanh \big(Y X^{\top} \beta\big) Y X\big]-\beta^*\|_2 \leq\big(1-\frac{1}{8}\|\beta^*\|_2^2\big)\|\beta-\beta^*\|_2.
\end{align*}
To systematically analyze the convergence, we categorize the iterations into several epochs. We define $\bar{C}_0=\|\beta^{(0)}-\beta^*\|_2$ and assume that during each $l^{\text{th}}$ epoch, the distance $\|\beta-\beta^*\|_2$ lies within the interval $[\bar{C}_0 2^{-l-1} , \bar{C}_0 2^{-l}]$. This stratification is conceptual and does not impact the practical implementation of the EM algorithm. The key idea in this part is the same as \cite{kwon2021minimax}. During the $l^{\text {th }}$ epoch, the improvement in the population gradient EM updates must exceed the statistical error for convergence to occur, formalized as:
\begin{align*}
\frac{1}{8}\big(1-(1-\frac{\alpha}{2})^{T}\big)\|\beta^*\|_{2}^2 \|\beta-\beta^*\|_2 \geq 2 c r \sqrt{d \log ^2(n / \delta) / n}
\end{align*}
where $c$ is the constant in \cref{KHC21_lm2}. By setting $r=\|\beta^*\|+\bar{C}_0 2^{-l}$ and using triangle inequality $\|\beta\|_2 \leq \|\beta^*\|_2+\|\beta-\beta^*\|_2$, in $l^{\text{th}}$ epoch when
\begin{align*}
\frac{1}{8}\big(1-(1-\frac{\alpha}{2})^{T}\big)\|\beta^*\|^2 \bar{C}_0 2^{-l-1} &\geq 2 c r \sqrt{d \log ^2(n / \delta) / n}\\
&\geq 4 c\big(\|\beta^*\|+\bar{C}_0 2^{-l}\big) \sqrt{d \log ^2(n / \delta) / n},
\end{align*}
is guaranteed to be true, then it holds that
\begin{align*}
    \|A\|_{2}&\leq \Big(1-\frac{1}{16}\|\beta^*\|_{2}^2\Big)\|\beta-\beta^*\|_{2}\\
    \|\beta^\prime-\beta^*\|_2&\leq \Big[\big(1-\frac{\alpha}{2}\big)^T+\big(1-\big(1-\frac{\alpha}{2}\big)^T\big)\Big(1-\frac{1}{16}\|\beta^*\|_{2}^2\Big)\Big]\|\beta-\beta^*\|_{2}.
\end{align*}
Recall that $\eta\geq \mathcal{O}\big((d\log^2(n/\delta)/n)^{\frac{1}{4}}\big)$, then with appropriately set constants
\begin{align*}
\|\beta^*\|^2 \geq (c_1+1) \sqrt{d \log ^2(n / \delta) / n},
\end{align*}
we can deduce that $\beta$ moves progressively closer to $\beta^*$ as long as $\bar{C}_0 2^{-l} \leq c_2 \|\beta^* \|_{2}^{-1} \sqrt{d \log ^2(n / \delta) / n}$. This process requires $\mathcal{O}\big(\|\beta^*\|_2^{-2}\big)$ iterations per epoch, and after $\mathcal{O}(\log (n / d))$ epochs, the error bound $\|\beta-\beta^*\|_{2} \leq c_2\|\beta^*\|_{2}^{-1} \sqrt{\frac{d \log ^2(n / \delta)}{n}}$ is expected to hold. Thus, the convergence rate for $\beta^{(t)}$ towards $\beta^*$ is quantified as:
\begin{align*}
    \|\beta^{(t)}-\beta^*\|_2 &\leq \gamma_{2}^t\|\beta^{(0)}-\beta^*\|_{2}+\frac{1}{1-\gamma_{2}} \sqrt{d \log ^2(n / \delta) / n}.
\end{align*}
\end{proof}

\subsection{Convergence result in \cref{thm_theoretical_construction_whole_TF_many_component} under Low SNR settings}
We present several auxiliary lemmas that will be utilized in analyzing the convergence results for sample-based gradient EM iterates.
\begin{lemma}[Lemma 6 in \cite{kwon2021minimax}]\label{KHC21_lm6}
There exists some universal constants $c_u>0$ such that,
\begin{align*}
\|\beta\|_{2}\big(1-4\|\beta\|_{2}^2-c_u\|\beta^*\|_2^2\big) \leq\big\|\mathbb{E}\big[\tanh \big(Y X^{\top} \beta\big) Y X\big] \big\|_{2} \leq\|\beta\|_{2}\big(1-\|\beta\|_{2}^2+c_u\|\beta^*\|_{2}^2\big) .
\end{align*}
\end{lemma}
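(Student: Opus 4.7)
The plan is to reduce this $d$-dimensional expectation to a two-dimensional integral by exploiting the rotational symmetry of the standard Gaussian $X$, and then to extract the leading algebraic form of the resulting scalars via a Taylor expansion of $\tanh$ combined with explicit Gaussian moment computations. Concretely, set $v_1=\beta/\|\beta\|_2$, choose $v_2\in\mathrm{span}\{\beta,\beta^*\}$ orthogonal to $v_1$, and extend to an orthonormal basis of $\mathbb{R}^d$. Writing $X=\sum_i\alpha_i v_i$ with $\alpha_i\stackrel{\text{iid}}{\sim}\mathcal{N}(0,1)$ and $Y=\epsilon(\alpha_1 b_1^*+\alpha_2 b_2^*)+v$ with $\epsilon$ uniform on $\{\pm 1\}$, an odd reflection $\alpha_i\mapsto-\alpha_i$ for $i\geq 3$ preserves the joint law of $(\alpha_1,\alpha_2,\epsilon,v)$ but flips the sign of the $i$-th coordinate of $\tanh(YX^\top\beta)YX$. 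Consequently the expectation lies in $\mathrm{span}\{v_1,v_2\}$ and its squared norm equals $(b_1')^2+(b_2')^2$, where
\[
b_1'=\mathbb{E}[\tanh(Y\alpha_1\|\beta\|_2)Y\alpha_1],\qquad b_2'=\mathbb{E}[\tanh(Y\alpha_1\|\beta\|_2)Y\alpha_2].
\]

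Next I would write $\tanh(u)=u-u^3/3+r(u)$ with $|r(u)|\leq c\min\{|u|^3,|u|^5\}$ (valid because $\tanh$ is globally bounded while its Maclaurin series is alternating near the origin), substitute into $b_1'$ and $b_2'$, and evaluate the polynomial pieces using the conditional moments $\mathbb{E}[Y^2\mid\alpha_1,\alpha_2]=(\alpha_1 b_1^*+\alpha_2 b_2^*)^2+\vartheta^2$ and $\mathbb{E}[Y^4\mid\alpha_1,\alpha_2]=3\vartheta^4+6\vartheta^2(\alpha_1 b_1^*+\alpha_2 b_2^*)^2+(\alpha_1 b_1^*+\alpha_2 b_2^*)^4$ together with the standard Gaussian $\mathbb{E}\alpha_1^{2k}=(2k-1)!!$. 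Under the normalization implicit in the lemma (with any $\vartheta^2=1$ mismatch absorbed into $c_u$), the polynomial pieces collapse to $b_1'=\|\beta\|_2\bigl(1-c\,\|\beta\|_2^2\bigr)+\mathcal{O}(\|\beta\|_2\|\beta^*\|_2^2)$ for some cubic coefficient $c\in[1,4]$, while $b_2'$ is $\mathcal{O}(\|\beta\|_2\|\beta^*\|_2^2)$, giving exactly the shape of the desired sandwich once the Euclidean norm is taken.

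Finally I would control the remainder contributions $\mathbb{E}[r(Y\alpha_1\|\beta\|_2)Y\alpha_i]$ by splitting on $\{|Y\alpha_1\|\beta\|_2|\leq 1\}$ and its complement: on the bulk event the bound $|r(u)|\leq c|u|^5$ yields a contribution of order $\|\beta\|_2^5$, absorbable into the $\|\beta\|_2^3$ term, while on the tail event $|r(u)|\leq c|u|^3$ together with sub-exponential moment bounds on $Y\alpha_1$ gives a contribution of order $\|\beta\|_2^3$ whose coefficient collapses into the cubic term, broadening the admissible cubic constant to the interval $[1,4]$. Assembling the four pieces and taking the Euclidean norm yields the stated sandwich. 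The main obstacle is the careful bookkeeping required to collect all $\mathcal{O}(\|\beta\|_2^j\|\beta^*\|_2^k)$ cross-terms into a single universal constant $c_u$ while preserving the asymmetric cubic coefficients $1$ versus $4$; a cleaner alternative is to apply Stein's identity $\mathbb{E}[g(X)X]=\mathbb{E}[\nabla_X g(X)]$ conditionally on $(\epsilon,v)$, which rewrites the target as $\mathbb{E}[Y^2\tanh'(YX^\top\beta)]\beta$ plus a $\beta^*$-aligned correction, after which both bounds follow from the monotonicity and positivity of $\tanh'(u)=1-\tanh^2(u)\in(0,1]$.
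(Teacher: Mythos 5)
The paper offers no proof of this lemma: it is imported verbatim as Lemma~6 of \citet{kwon2021minimax}, so there is no in-paper argument to compare against. Your reconstruction --- rotational symmetry to confine $\mathbb{E}[\tanh(YX^{\top}\beta)YX]$ to $\operatorname{span}\{\beta,\beta^*\}$, polynomial bounds on $\tanh$, and explicit Gaussian moment computations --- is the standard route and is the strategy of the cited source. The leading-order arithmetic checks out: with $\vartheta=1$ one gets $b_1'=\|\beta\|_2\big(1+3b_1^{*2}+b_2^{*2}\big)-3\|\beta\|_2^3+\cdots$ and $b_2'=\mathcal{O}(\|\beta\|_2\|\beta^*\|_2^2)$ (indeed $b_2'\approx 2\|\beta\|_2 b_1^*b_2^*$ at first order), so the cubic coefficient $3$ sits inside the claimed window $[1,4]$ and the sandwich follows once the remainders are absorbed.

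Two caveats. First, the inequality as transcribed in the paper, with no restriction on $\|\beta\|_2$, is false: take $\beta^*=0$, so that $Y=v$ is independent of $X$; then $\big\|\mathbb{E}[\tanh(YX^{\top}\beta)YX]\big\|_2\to 2/\pi>0$ as $\|\beta\|_2\to\infty$, while the claimed upper bound $\|\beta\|_2(1-\|\beta\|_2^2)$ is negative for $\|\beta\|_2>1$. The original reference states the lemma under a bound on $\|\beta\|_2$, and your remainder analysis --- which needs $\|\beta\|_2$ small to absorb the $\mathcal{O}(\|\beta\|_2^5)$ bulk term and to make the tail event $\{|Y\alpha_1|>\|\beta\|_2^{-1}\}$ negligible --- correctly produces only this restricted version; the defect is in the paper's transcription of the hypothesis, not in your argument, and the restricted version is all that is used (the lemma is invoked only in the low-SNR regime where the iterates satisfy $\|\beta\|_2\le 0.2$). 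Second, the Stein's-identity ``cleaner alternative'' you close with is under-powered for the upper bound: positivity and $\tanh'\le 1$ only give $\mathbb{E}[Y^2\tanh'(YX^{\top}\beta)]\le 1+\|\beta^*\|_2^2$, which misses the essential $-\|\beta\|_2^2$ correction; you would still need a quantitative lower bound $\mathbb{E}[Y^2\tanh^2(YX^{\top}\beta)]\gtrsim\|\beta\|_2^2$, which brings back exactly the moment computations of your main route. Rely on the Taylor-expansion argument.
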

\begin{thm}\label{Thm_EM_para_convergence_low_SNR}
When $\eta \leq C_0(d \log ^2(n / \delta) / n)^{1 / 4}$, there exist universal constants $C_3, C_4>0$ such that the sample-based gradient EM updates initialized with $\|\beta^{(0)}\|_2 \leq 0.2$ return $\beta^{(t)}$ that satisfies
\begin{align*}
    \|\beta^{(t)}-\beta^{*}\|_{2}\leq  \mathcal{O}\Big(\big(d\log^2 n/n\big)^{\frac{1}{4}}\Big)
\end{align*}
with probability at least $1-\delta$ after $t\geq C_4 \big(1-\big(1-\alpha/2\big)^T\big)^{-1} \log (\log (n / d)) \sqrt{n /\big(d \log ^2(n / \delta)\big)}$ iterations.
\end{thm}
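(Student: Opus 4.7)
I would follow the same decomposition used in the proof of \cref{Thm_EM_para_convergence_high_SNR}, writing the one-step sample update after $T$ inner gradient steps as
\begin{align*}
\tilde{\beta}' - \beta^* = (I-\alpha\hat{\Sigma})^T(\beta-\beta^*) + \bigl(I-(I-\alpha\hat{\Sigma})^T\bigr)\bigl(\hat{\Sigma}^{-1}\hat{\mu} - \beta^*\bigr).
\end{align*}
Standard concentration for $\hat{\Sigma}-I$ gives $\|(I-\alpha\hat{\Sigma})^T\|_{\mathrm{op}} \leq (1-\alpha/2)^T$ with probability $1-\delta$ once $n \gtrsim d\log^2(1/\delta)$, so the analysis reduces to controlling $\hat{\Sigma}^{-1}\hat{\mu} - \beta^*$ uniformly over iterates. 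Unlike the high-SNR case, the population drift no longer provides a fixed linear contraction toward $\beta^*$, so I would track the recursion in terms of the norm $a_t := \|\beta^{(t)}\|_2$ itself and reduce the parameter-error bound to it via the triangle inequality and the smallness of $\|\beta^*\|_2$ in the low-SNR regime.

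The central ingredients are \cref{KHC21_lm2}, which yields $\sup_{\|\beta\|_2 \le r}\|\hat{\Sigma}^{-1}\hat{\mu} - \mu\|_2 \le c\,r\sqrt{d\log^2(n/\delta)/n}$, and \cref{KHC21_lm6}, which in the low-SNR regime provides the sharp estimate $\|\mu\|_2 \le \|\beta\|_2(1 - \|\beta\|_2^2 + c_u\|\beta^*\|_2^2)$ together with a matching lower bound. Plugging these into the decomposition and writing $\kappa_T := 1 - (1-\alpha/2)^T \in (0,1)$, I obtain a scalar recursion of the form
\begin{align*}
a_{t+1} \le a_t\bigl(1 - \kappa_T\,(a_t^2 - c_u\|\beta^*\|_2^2)\bigr) + c'\,\kappa_T\,a_t\,\sqrt{d\log^2(n/\delta)/n}.
\end{align*}
Because the dominant drift is cubic rather than linear, the iterates decrease sublinearly, which is the essential difference from the high-SNR analysis and the source of the $\sqrt{n/d}$ iteration cost.

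To convert this recursion into the stated guarantee, I would deploy an epoch decomposition in the spirit of \citet{kwon2021minimax}. Within an epoch, $a_t$ stays inside a window $[a/2, a]$; the per-step improvement is of order $\kappa_T a^3$, so the epoch has length $O(\kappa_T^{-1} a^{-2})$, and the next epoch can be entered with radius $a/2$ provided that the drift $a^2$ dominates both $\|\beta^*\|_2^2$ and $c'\sqrt{d\log^2(n/\delta)/n}$. The descent terminates once $a^2 \asymp \sqrt{d\log^2(n/\delta)/n}$, which gives the target radius $a \lesssim (d\log^2(n/\delta)/n)^{1/4}$. Summing the epoch lengths produces a geometric total of order $\kappa_T^{-1} \sum_{\ell} 4^\ell \lesssim \kappa_T^{-1}\sqrt{n/(d\log^2(n/\delta))}$, and it takes $O(\log\log(n/d))$ epochs to descend from $\|\beta^{(0)}\|_2 \le 0.2$ down to the terminal radius, producing exactly the iteration count claimed.

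The main obstacle is making the epoch argument rigorous in the presence of a data-dependent noise term that must be controlled uniformly over all iterates simultaneously. Since \cref{KHC21_lm2} is a supremum over a ball of radius $r$, one must reset $r$ epoch-by-epoch so as not to pay a statistical penalty that overwhelms the shrinking signal; in particular, near the terminal radius $(d/n)^{1/4}$, the ratio of noise to signal is $\Theta(1)$, so one must argue that $a_t$ neither escapes upward into a region where the quadratic approximation of \cref{KHC21_lm6} fails, nor accumulates enough deviation across iterations to invalidate the uniform bound. A secondary but nontrivial point is an inductive verification that $\|\beta^{(t)}\|_2$ remains below the absolute constant required by \cref{KHC21_lm6} throughout all iterations; this follows once the per-epoch monotonicity is in place, after which a union bound over the $O(\kappa_T^{-1}\log\log(n/d)\sqrt{n/(d\log^2(n/\delta))})$ iterates absorbs the logarithmic factors into the constant $C_4$.
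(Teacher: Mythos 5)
Your proposal is correct and follows essentially the same route as the paper's proof: the same decomposition into the $(I-\alpha\hat{\Sigma})^T$ contraction plus the $\hat{\Sigma}^{-1}\hat{\mu}-\beta^*$ term, the same two ingredients (\cref{KHC21_lm2} for uniform concentration over a ball whose radius is reset each epoch, and \cref{KHC21_lm6} for the cubic population drift), the same scalar recursion in $\|\beta^{(t)}\|_2$, and the same epoch-based localization terminating at radius $(d\log^2(n/\delta)/n)^{1/4}$ with a union bound over epochs. The only cosmetic difference is your geometric halving of the radius versus the paper's exponent schedule $\alpha_l=(1/3)^l(\alpha_0-1/2)+1/2$; note that halving actually yields $O(\log(n/d))$ epochs rather than $O(\log\log(n/d))$, but since the epoch lengths form a geometric series dominated by the final epoch, your total iteration count still lands within the stated budget.
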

\begin{proof}
The proof argument follows the similar localization argument used in \cref{Thm_EM_para_convergence_high_SNR}. Define $\epsilon(n,\delta)\coloneqq C \sqrt{d \log ^2(n / \delta) / n}$ with some absolute constant $C>0$. We assume that we start from the initialization region where $\|\beta\|_2 \leq \epsilon^{\alpha_0}(n,\delta)$ for some $\alpha_0 \in[0,1 / 2)$. Suppose that $\epsilon^{\alpha_{l+1}}(n,\delta) \leq\|\beta\|_{2}\leq \epsilon^{\alpha_l}(n,\delta)$ at the $l^{\text {th }}$ epoch for $l \geq 0$. We let $C>0$ sufficiently large such that
\begin{align*}
    \epsilon(n,\delta) \geq 4 c_u \|\beta^*\|_{2}^2+4 \sup _{\beta \in \mathbb{B}(\beta^*, r_l)}\big\|\mu-\hat{\Sigma}^{-1}\hat{\mu}\big\|_{2} / r_l
\end{align*}
with $r_l=\epsilon_n^{\alpha_l}$. During this period, from \cref{KHC21_lm6} on contraction of population EM, and \cref{KHC21_lm2} concentration of finite sample EM, we can check that
\begin{align*}
\|\hat{\Sigma}^{-1}\hat{\mu}\|_{2} & \leq\|\beta\|_2-0.5\|\beta\|_2^3+c_u\|\beta\|_2\|\beta^*\|_2^2+\sup _{\beta \in \mathbb{B}(\beta^*, r)}\big\|\mu-\hat{\Sigma}^{-1}\hat{\mu}\big\| \\
& \leq\|\beta\|_2-\frac{1}{2} \epsilon^{3 \alpha_{l+1}}(n,\delta)+\frac{1}{4} \epsilon^{\alpha_l+1}(n,\delta),\\
\|\tilde{\beta}^\prime\|_{2}&\leq \big(1-\frac{\alpha}{2}\big)^{T}\|\beta\|_2+\Big(1-\big(1-\frac{\alpha}{2}\big)^T\Big)\|\hat{\Sigma}^{-1}\hat{\mu}\|_{2}\\
&\leq \|\beta\|_2+\Big(1-\big(1-\frac{\alpha}{2}\big)^T\Big)\Big[-\frac{1}{2} \epsilon^{3 \alpha_{l+1}}(n,\delta)+\frac{1}{4} \epsilon^{\alpha_l+1}(n,\delta)\Big].
\end{align*}
Note that this inequality is valid as long as $\epsilon^{\alpha_{l+1}}(n,\delta) \leq\|\beta\|_2 \leq \epsilon^{\alpha_l}(n,\delta)$. Now we define a sequence $\alpha_l$ by
\begin{align*}
\alpha_l=(1 / 3)^l (\alpha_0-1 / 2)+1 / 2
\end{align*}
and $\alpha_l\rightarrow1 / 2$ as $l\rightarrow\infty$. With this choice of $\alpha_l$, $\epsilon_n^{\alpha_{l}}\rightarrow (d / n)^{1 / 4}$. Hence during the $l^{t h}$ epoch, we have
\begin{align*}
\big\|\tilde{\beta}^\prime\big\|_2 \leq\|\beta\|_2-\frac{1}{4}\Big(1-\big(1-\frac{\alpha}{2}\big)^T\Big) \epsilon^{\alpha_l+1}(n,\delta) .
\end{align*}
Furthermore, the number of iterations required in $l^{\text {th }}$ epoch is
\begin{align*}
t_l:=\frac{\big(\epsilon^{\alpha_l}(n,\delta)-\epsilon^{\alpha_{l+1}}(n,\delta)\big)}{\Big(1-\big(1-\frac{\alpha}{2}\big)^T\Big) \epsilon^{\alpha_l+1}(n,\delta)} \leq \Big(1-\big(1-\frac{\alpha}{2}\big)^T\Big)^{-1}\epsilon^{-1}(n,\delta).
\end{align*}
When it gets into $(l+1)^{t h}$ epoch. the behavior can be analyzed in the same way and after going through $l$ epochs in total, we have $\|\beta\|_2 \leq \epsilon^{\alpha_{l+1}}(n,\delta)$. At this point, the total number of iterations (counted in terms of steps of gradient descent) is bounded by
\begin{align*}
l \Big(1-\big(1-\frac{\alpha}{2}\big)^T\Big)^{-1} \epsilon^{-1}(n,\delta).    
\end{align*} 
By taking $l=C \big(1-\big(1-\alpha/2\big)^T\big)   \log (1 / \theta)$ for some universal constant $C$ such that $\alpha_l$ is $1 / 2-\theta$ for arbitrarily small $\theta>0$, it holds that 
\begin{align*}
\|\beta^{(t)}\|_2 \leq \epsilon^{1 / 2-\theta}(n,\delta) \leq c \big(d \log ^2(n / \delta) / n\big)^{1 / 4-\theta / 2}    
\end{align*}
with high probability as long as $t \geq \epsilon^{-1}(n,\delta) l \gtrsim \sqrt{d / n}\big(1-\big(1-\alpha/2\big)^T\big) \log (1 / \theta)$ where $c$ is some universal constant. By taking $\theta=C / \log (d / n)$ and using triangle inequalities, it holds that $\|\beta^{(t)}\|_{2}\leq c\big(d \log^2(n/\delta)/n\big)^{1/4}$ and $\|\beta^{(t)}-\beta^*\|_2 \leq c_1\big(d \log ^2(n / \delta) / n\big)^{1 / 4}$ where $c_1$ is some universal constant under low SNR settings.

To finish the proof, we replace $\delta$ by $\delta / \log (1 / \theta)$ and take the union bound of the concentration of sample gradient EM operators for all $l=1, \ldots, C\big(1-\big(1-\alpha/2\big)^T\big) \log (1 / \theta)$, such that the argument holds for all epochs. 
\end{proof}

\subsection{Proof of \cref{thm_theoretical_construction_whole_TF}}\label{appendix_proof_theoretical_TF}

\begin{proof}[Proof of \cref{thm_theoretical_construction_whole_TF}]
The existence of the transformer follows directly from \cref{Lemma_theoretical_construction_E_step} and \cref{Lemma_theoretical_construction_M_step}. 
\end{proof} 
\subsection{Proof of \cref{thm_excess_risk_bound}}\label{appendix_proof_excess_risk}
For the data generated based on model \eqref{eq:linear_model_for_2MoR} with two components, $\beta_{n+1}=-\beta^{*}$ with probability $\frac{1}{2}$ and $\beta_{n+1}=\beta^{*}$ with probability $\frac{1}{2}$. For any choice of $\beta\in\mathbb{R}^{d}$,
\begin{align*}
\mathbb{E}_{\mathcal{P}_{x,y}}[(y_{n+1}-x_{n+1}^{\top}\beta)^2]&=\mathbb{E}_{\mathcal{P}_{x,y}}[(x_{n+1}^\top \beta_{n+1}-x_{n+1}^{\top}\beta+v_{n+1})^2]\\
&=\vartheta^2+\mathbb{E}_{\mathcal{P}_{x,y}}\big[\big(x_{n+1}^{\top} \beta_{n+1}-x_{n+1}^{\top} \beta\big)^2\big]\\
&=\vartheta^2+\mathbb{E}_{\mathcal{P}_{x,y}}\operatorname{tr}x_{n+1}x_{n+1}^\top(\beta_{n+1}-\beta)(\beta_{n+1}-\beta)^{\top}\\
&=\vartheta^2+\mathbb{E}_{\mathcal{P}_{x,y}}\operatorname{tr}(\beta_{n+1}-\beta)(\beta_{n+1}-\beta)^{\top}\\
&=\vartheta^2+\mathbb{E}_{\mathcal{P}_{x,y}}\|\beta_{n+1}-\beta\|_2^2\\
&=\vartheta^2+\frac{1}{2}\|\beta^{*}-\beta\|_2^2+\frac{1}{2}\|\beta^{*}+\beta\|_2^2.
\end{align*}
Therefore, $\mathbb{E}_{\mathcal{P}_{x,y}}[(y_{n+1}-x_{n+1}^{\top})^2]$ is minimized at 
\begin{align}
    \beta^{\textsf{OR}}\coloneqq\frac{1}{2}\beta^{*}-\frac{1}{2}\beta^{*}\label{oracle_estimator_MoR_two_comp}
\end{align}
the optimal risk is given by $\vartheta^{2}+\|\beta^{*}\|_{2}^{2}$. And same results holds if the estimator $\beta$ depends on previous training instance $(x_1,y_{1},\dots,x_n,y_n)$ and the expectation is taken w.r.t. $\mathcal{P}$. And in general MoR problem, the vector that minimizes the mean squared error of the prediction is given by
\begin{align*}
\beta^{\textsf{OR}}\coloneqq\arg\min_{\beta\in\mathbb{R}^{d}}\mathbb{E}_{\mathcal{P}_{x,y}}\big[(x^{\top}\beta-y)^2\big]=\sum_{\ell=1}^{K}\pi_{\ell}^{*}\beta_{\ell}^{*}.   
\end{align*}
\begin{proof}[Proof of \cref{thm_excess_risk_bound}]
The oracle estimator that minimizes the MSE, i.e. $\mathrm{MSE}(f)=\mathbb{E}_{\mathcal{P}}\big[\big(f(H)-y_{n+1}\big)^2\big]$ is given by \eqref{oracle_estimator_MoR_two_comp}. The output of the transformer is given by 
\begin{align*}
   \hat{y}_{n+1}=\operatorname{read}_{y}\big( \mathrm{TF}(H)\big)=x_{n+1}^{\top}\hat{\beta}^{\textsf{OR}}
\end{align*}
where $\hat{\beta}^{\textsf{OR}}$ is given by 
\begin{align*}
    \hat{\beta}^{\textsf{OR}}\coloneqq \pi_{1}\hat{\beta}-(1-\pi_{1})\hat{\beta}
\end{align*}
with $\hat{\beta}=\operatorname{read}_{\beta}(\mathrm{TF}(H))$ for $L=\mathcal{O}\Big(T\big(1-\big(1-\alpha/2\big)^T\big)^{-1} \log (\log (n / d)) \sqrt{n /\big(d \log ^2(n / \delta)\big)}\Big)$ in the low SNR settings and $\mathcal{O}\Big(T\log\big(\frac{n\log n}{d}\big)\Big)$ in the high SNR settings. Note that $\|\hat{\beta}^{\textsf{OR}}-\beta^{\textsf{OR}}\|_{2}\leq \pi_{1}\|\beta^{*}-\hat{\beta}\|_{2}+(1-\pi_{1})\|\beta^{*}-\hat{\beta}\|_{2}\leq \|\beta^{*}-\hat{\beta}\|_{2}$.
\begin{itemize}
    \item Under thelow SNR regime, after $T_0\geq \mathcal{O}\Big(\log (\log (n / d)) \sqrt{n /\big(d \log ^2(n / \delta)\big)}\Big)$ outer loops, 
    \begin{align*}
    \|\beta^{\textsf{OR}}-\hat{\beta}^{\textsf{OR}}\|_{2}&\leq \mathcal{O}\Big(\Big(\frac{d\log (n/\delta)}{n}\Big)^{\frac{1}{4}}\Big)    
    \end{align*}
    with probability at least $1-5\delta$.
    \item Under the high SNR regime, after $T_0\geq \mathcal{O}\Big(\log\big(\frac{n\log n}{d}\big)\Big)$ outer loops, 
    \begin{align*}
\|\beta^{\textsf{OR}}-\hat{\beta}^{\textsf{OR}}\|_{2}&\leq \mathcal{O}\Bigg(\sqrt{\frac{d\log^2 (n/\delta)}{n}}\Bigg)    
\end{align*}
with probability at least $1-5\delta$.
\end{itemize}
We would like to bound 
\begin{align*}
    \mathbb{E}_{\mathcal{P}}\Big[\big(y_{n+1}-\operatorname{read}_{y}(\mathrm{TF}(H))\big)^2\Big]-\inf_{\beta} \mathbb{E}_{\mathcal{P}}\Big[\big(x_{n+1}^{\top}\beta-y_{n+1}\big)^2\Big].
\end{align*}
Note that the $\mathbb{E}_{\mathcal{P}}\Big[\big(y_{n+1}-\operatorname{read}_{y}(\mathrm{TF}(H))\big)^2\Big]$ is given by
\begin{align*}
&\mathbb{E}_{\mathcal{P}}\Big[\big(x_{n+1}^{\top} \hat{\beta}^{\textsf{OR}}-y_{n+1}\big)^2\Big]=\mathbb{E}_{\mathcal{P}}\Big[\big(x_{n+1}^{\top}\big(\hat{\beta}^{\textsf{OR}}-\beta^{\textsf{OR}}+\beta^{\textsf{OR}}\big)-y_{n+1}\big)^2\Big]\\
=&\mathbb{E}_{\mathcal{P}}\Big[\big(x_{n+1}^{\top}\hat{\beta}^{\textsf{OR}}-\beta^{\textsf{OR}}\big)\big)^2\Big]+2 \mathbb{E}_{\mathcal{P}}\Big[\big(\hat{\beta}^{\textsf{OR}}-\beta^{\textsf{OR}}\big)^{\top} x_{n+1}\big(x_{n+1}^{\top} \beta^{\textsf{OR}}-y_{n+1}\big)\Big]+\mathbb{E}_{\mathcal{P}}\Big[\big(x_{n+1}^{\top} \beta^{\textsf{OR}}-y_{n+1}\big)^2\Big].
\end{align*}
Hence, when $\pi_{1}=\pi_{2}=\frac{1}{2}$, $\beta^{\textsf{OR}}=\pi_{1}\beta^{*}-\pi_{2}\beta^{*}=0$,
\begin{align*}
    &\mathbb{E}_{\mathcal{P}}\Big[\big(y_{n+1}-\operatorname{read}_{y}(\mathrm{TF}(H))\big)^2\Big]-\inf_{\beta} \mathbb{E}_{\mathcal{P}}\Big[\big(x_{n+1}^{\top}\beta-y_{n+1}\big)^2\Big]\\
    =&\mathbb{E}_{\mathcal{P}}\Big[\big(x_{n+1}^{\top}\big(\hat{\beta}^{\textsf{OR}}-\beta^{\textsf{OR}}\big)\big)^2\Big]+2 \mathbb{E}_{\mathcal{P}}\Big[\big(\hat{\beta}^{\textsf{OR}}-\beta^{\textsf{OR}}\big)^{\top} x_{n+1} x_{n+1}^{\top} \beta^{\textsf{OR}}\Big]\\
    =&\mathbb{E}_{\mathcal{P}}\Big[\big(\hat{\beta}^{\textsf{OR}}-\beta^{\textsf{OR}}\big)^{\top}x_{n+1}x_{n+1}^{\top}\big(\hat{\beta}^{\textsf{OR}}-\beta^{\textsf{OR}}\big) \Big]\\
    =&\mathbb{E}_{\mathcal{P}}\Big[\mathrm{tr}\Big(x_{n+1}x_{n+1}^{\top}\big(\hat{\beta}^{\textsf{OR}}-\beta^{\textsf{OR}}\big) \big(\hat{\beta}^{\textsf{OR}}-\beta^{\textsf{OR}}\big)^{\top}\Big)\Big]\\
    =&\mathbb{E}_{\mathcal{P}}\big\|\hat{\beta}^{\textsf{OR}}-\beta^{\textsf{OR}}\big\|_2^2.
\end{align*}

\begin{itemize}
    \item Under the high SNR settings, it holds that
    \begin{align*}
        \mathbb{P}\Big(\|\hat{\beta}^{\textsf{OR}}-\beta^{\textsf{OR}}\|_2 \leq \mathcal{O}\Big(\sqrt{d\log^2(n/\delta)/n}\Big)\Big)\geq 1-\delta.
    \end{align*}
    Hence, by integrating the tail probabilities we have
    \begin{align*}
        \mathbb{E}\|\hat{\beta}^{\textsf{OR}}-\beta^{\textsf{OR}}\|_{2}^2& = \int_0^{+\infty} \mathbb{P}\big(\big\|\hat{\beta}^{\textsf{OR}}-\beta^{\textsf{OR}}\big\|_2 \geq \sqrt{t}\big) d t\\
        &=\Big[\int_0^{c_{1}}+\int_{c_{1}}^{+\infty}\Big] \mathbb{P}\big(\big\|\hat{\beta}^{\textsf{OR}}-\beta^{\textsf{OR}}\big\|_2 \geq \sqrt{t}\big) d t\\
        &\leq \int_0^{c_{1}} 1 dt+\int_{c_{1}}^{+\infty}\mathbb{P}\big(\big\|\hat{\beta}^{\textsf{OR}}-\beta^{\textsf{OR}}\big\|_2 \geq \sqrt{t}\big) d t\\
        &\leq c_{1}+ \int_{c_{1}}^{+\infty} \mathbb{P}\big(\big\|\hat{\beta}^{\textsf{OR}}-\beta^{\textsf{OR}}\big\|_2 \geq \sqrt{t}\big) d t.
    \end{align*}
    Setting $\sqrt{t}=\mathcal{O}\Big(\sqrt{d\log^2(n/\delta)/n}\Big)$ and solving for $\delta$ give us $\delta\leq n \exp \big\{-\sqrt{nt/d}\big\}$. By taking $c_1=\frac{Cd \log^2 n}{n}$ for some absolute constant $C$, it holds that
    \begin{align*}
        \mathbb{E}\|\hat{\beta}^{\textsf{OR}}-\beta^{\textsf{OR}}\|_{2}^2&\leq \mathcal{O}\Bigg(\frac{d \log^2 n}{n}\Bigg)+\int_{c_{1}}^{+\infty}n \exp \big\{-\sqrt{nt/d}\big\}dt\\
        &=\mathcal{O}\Bigg(\frac{d \log ^2 n}{n}\Bigg)+\mathcal{O}\Bigg(\frac{(2d+1) \log n}{n}\Bigg)\\
        &=\mathcal{O}\Bigg(\frac{d \log ^2 n}{n}\Bigg).
    \end{align*}
    \item Under the low SNR settings, it holds that
    \begin{align*}
        \mathbb{P}\Big(\|\hat{\beta}^{\textsf{OR}}-\beta^{\textsf{OR}}\|_2 \leq \mathcal{O}\big(d^{\frac{1}{4}}\log^{\frac{1}{2}}(n/\delta)/n^{\frac{1}{4}}\big)\Big)\geq 1-\delta.
    \end{align*}
    Hence,
    \begin{align*}
        \mathbb{E}\|\hat{\beta}^{\textsf{OR}}-\beta^{\textsf{OR}}\|_{2}^2& = \int_0^{+\infty} \mathbb{P}\big(\big\|\hat{\beta}^{\textsf{OR}}-\beta^{\textsf{OR}}\big\|_2 \geq \sqrt{t}\big) d t\\
        &=\Big[\int_0^{c_{1}}+\int_{c_{1}}^{+\infty}\Big] \mathbb{P}\big(\big\|\hat{\beta}^{\textsf{OR}}-\beta^{\textsf{OR}}\big\|_2 \geq \sqrt{t}\big) d t\\
        &\leq \int_0^{c_{1}} 1 dt+\int_{c_{1}}^{+\infty}\mathbb{P}\big(\big\|\hat{\beta}^{\textsf{OR}}-\beta^{\textsf{OR}}\big\|_2 \geq \sqrt{t}\big) d t\\
        &\leq c_{1}+ \int_{c_{1}}^{+\infty} \mathbb{P}\big(\big\|\hat{\beta}^{\textsf{OR}}-\beta^{\textsf{OR}}\big\|_2 \geq \sqrt{t}\big) d t.
    \end{align*}
    Similarly, setting $\sqrt{t}=\mathcal{O}\Big(d^{\frac{1}{4}}\log^{\frac{1}{2}}(n/\delta)/n^{\frac{1}{4}}\Big)$ and solving for $\delta$ give us $\delta\leq  n \exp \Big\{-\sqrt{n/d}t\Big\}$. By taking $c_1=C \sqrt{d\log ^2 n/n} $ for some absolute constant $C$, it holds that
    \begin{align*}
        \mathbb{E}\|\hat{\beta}^{\textsf{OR}}-\beta^{\textsf{OR}}\|_{2}^2&\leq \mathcal{O}\Bigg(\sqrt{\frac{d \log ^2 n}{n}}\Bigg)+\int_{c_{1}}^{+\infty}n \exp \Big\{-d^{-\frac{1}{4}} \sqrt{t}\Big\} dt\\ 
        &=\mathcal{O}\Big(\sqrt{d/n}\log n\Big).
    \end{align*}
\end{itemize}
Combining everything together, it holds that
\begin{align*}
    &\mathbb{E}_{\mathcal{P}}\Big[(y_{n+1}-\operatorname{read}_{y}(\mathrm{TF}(H)))^2\Big]-\inf_{\beta}\mathbb{E}_{\mathcal{P}}\big[(x_{n+1}^{\top}\beta-y_{n+1})^2\big]\\
    =&\left\{\begin{array}{ll}
     \mathcal{O}\Big(\frac{d \log ^2 n}{n}\Big)    & \quad\eta\geq C\big(d\log^2(n/\delta)/n\big)^{\frac{1}{4}} \\
     \mathcal{O}\Big(\sqrt{d/n}\log n\Big)    & \quad \eta\leq C\big(d\log^2(n/\delta)/n\big)^{\frac{1}{4}} 
   \end{array}\right..
\end{align*}
\end{proof}
\subsection{Proof of \cref{thm_theoretical_construction_whole_TF_many_component}}\label{appendix_proof_many_components}
In this section, we illustrate the existence of a transformer that can solve MoR problem with $K\geq 3$ components in general. Given the input matrix $H$ as \eqref{Input_of_constructed_transformer} and initialization of $\pi_{j}^{(0)}=\frac{1}{K}$, there exists a transformer that implements $E$-steps and computes 
\begin{align}
\gamma_{i j}^{(t+1)}&=\frac{\pi_j^{(t)} \prod_{\ell=1}^{n} \exp\Big(-\frac{1}{2\vartheta^2}\big(y_{\ell}-x_\ell^{\top} \beta_{j}^{(t)}\big)^2\Big)}{\sum_{j^{\prime}=1}^k \pi_{j^{\prime}}^{(t)} \prod_{\ell=1}^n \exp\Big(-\frac{1}{2\vartheta^2}\big(y_{\ell}-x_\ell^{\top} \beta_{j^\prime}^{(t)}\big)^2\Big)},\quad \pi_j^{(t+1)}=\frac{1}{n} \sum_{i=1}^n \gamma_{i j}^{(t+1)},\label{E_step_multiple_components}
\end{align}
since the computation in \eqref{E_step_multiple_components} only contains scalar product, linear transformation and softmax operation. Next, following same procedure as before, one can construct $T$ attention layers that implement gradient descent of the optimization problem
\begin{align*}
\min_{\beta \in \mathbb{R}^d}\Bigg\{\sum_{i=1}^K \sum_{\ell=1}^n \gamma_{i j}^{(t+1)}\big(y_{\ell}-\beta^{\top} x_{\ell}\big)^2\Bigg\}, \quad \text { for all } j \in [K]  , 
\end{align*}
as the gradient of loss $l(x_\ell^{\top}\beta,y_\ell)\coloneqq \sum_{i=1}^{k}\gamma_{ij}^{(t+1)}\big(y_{\ell}-\beta^{\top} x_{\ell}\big)^2$ is convex in first argument and $\partial_{s}l(s,t)$ is $(0,+\infty,4,16)$ approximable by sum of ReLUs. Hence, the construction in \cref{Lemma_theoretical_construction_E_step} and \cref{Lemma_theoretical_construction_M_step} also holds.

Given the estimate  $\beta_{j}^{(t-1)}$ and $\pi_{j}^{(t-1)}$, at step $t-1$, the population EM algorithm is defined by the updates 
\begin{align*}
    w_j^{(t)}(X,Y)&=\frac{\pi_j^{(t-1)} \exp \big\{-\frac{1}{2}\big(Y- X^\top\beta_j^{(t-1)} \big)^2\big\}}{\sum_{\ell \in[K]} \pi_{\ell}^{(t-1)} \exp \big\{-\frac{1}{2}(Y- X^\top\beta_\ell \big)^2\big\}},\\
    \tilde{\beta}_j^{(t)}&=\big(\mathbb{E}\big[w_j^{(t)}(X,Y) X X^{\top}\big]\big)^{-1}\big(\mathbb{E}\big[w_j^{(t)}(X,Y) X Y\big]\big),\\
    \tilde{\pi}_j^{(t)}&=\mathbb{E} \big[w_j^{(t)}(X,Y)\big].
\end{align*}
In the sample version of the gradient EM algorithm, we define $\hat{\Sigma}_{w}^{(t)}= \frac{1}{n} \sum_{i=1}^{n}w_{i j}^{(t)}(x_i, y_i) x_i x_i^{\top}$. The new estimate, $\beta^{(t)}$, is obtained by applying $L$ steps of gradient descent to the loss function 
$$\hat{L}_{n}^{(t)}(\beta)=\frac{1}{2 n} \sum_{i=1}^n w_{i j}^{(t)}(x_i, y_i)\big(y_i-x_i^{\top} \beta\big)^2$$ 
starting from $\beta_{j}^{(t-1)}$. Specifically,
\begin{align*}
\beta_{j}^{(t)}&=\Big(I-\alpha \hat{\Sigma}_{w}^{(t)}\Big)^{T} \beta_{j}^{(t-1)}+\Big(I-\big(I-\alpha \hat{\Sigma}_{w}^{(t)}\big)^T\Big) \frac{1}{n} \sum_{i=1}^n  \big[\hat{\Sigma}_{w}^{(t)}\big]^{-1}w_{i j}^{(t)}(x_i, y_i) y_i x_i.
\end{align*}
In the finite sample gradient version of EM, the estimation error at the next iteration in this problem is
\begin{align*}
    \beta_{j}^{(t)}-\beta_{j}^{*}&=\Big(I-\alpha \hat{\Sigma}_{w}^{(t)}\Big)^{T} \big(\beta_{j}^{(t-1)}-\beta_{j}^{*}\big)+\Big(I-\big(I-\alpha \hat{\Sigma}_{w}^{(t)}\big)^T\Big) \Big[\frac{1}{n} \sum_{i=1}^n  \big[\hat{\Sigma}_{w}^{(t)}\big]^{-1}w_{i j}^{(t)}(x_i, y_i) y_i x_i-\beta_{j}^{*}\Big].
\end{align*}
Define 
\begin{align*}
    w_j^*(X,Y)=\frac{\pi_j^* \exp \big(-\frac{1}{2}\big(Y-X^{\top} \beta_j^*\big)^2\big)}{\sum_{l=1}^K \pi_j^* \exp \big(-\frac{1}{2}\big(Y-X^{\top} \beta_j^*\big)^2\big)},
\end{align*}
then we have
\begin{align*}
\mathbb{E} \big[w_j^*(X,Y) X\big(Y- X^\top \beta_j^*\big)\big]=\pi_1^* \mathbb{E} \big[X\big(Y- X^\top \beta_j^*\big)\big]=0,    
\end{align*}
since true parameters are a fixed point of the EM iteration. Hence,
\begin{align*}
    \beta_{j}^{(t)}-\beta_{j}^{*}&=\big(I-\alpha \hat{\Sigma}_w^{(t-1)}\big)^T\big(\beta_j^{(t-1)}-\beta_j^*\big) +\Big(I-\big(I-\alpha \hat{\Sigma}_w^{(t)}\big)^T\Big)\big(\hat{\Sigma}_w^{(t)}\big)^{-1}\big[e_B+B\big],\\
    e_B&=\frac{1}{n} \sum_{i=1}^n w_{i j}^{(t)}(x_i, y_i)\big(y_i-x_i^{\top} \beta_j^*\big) x_i-\mathbb{E}\big[w_j^{(t)}(X,Y) X\big(Y-X^{\top} \beta_j^*\big)\big],\\
    B&=\mathbb{E}\big[w_j^{(t)}(X,Y) X\big(Y-X^{\top} \beta_j^*\big)\big]-\mathbb{E} \big[w_j^*(X,Y) X\big(Y- X^\top \beta_j^*\big)\big].
\end{align*}
In \cite{kwon2020converges}, the following results are proved.  
\begin{lemma}[\citep{kwon2020converges}]
Under SNR condition 
$$\eta \geq C K \rho_\pi \log (K \rho_\pi)$$ 
with sufficiently large $C>0$ and initialization condition 
\begin{align*}
   \max_{\ell}|\pi_{\ell}^{(t-1)}-\pi_{\ell}^{*}| &\leq \frac{\pi_{\min}}{2},\\
    \max_\ell\big\|\beta_\ell^{(t-1)}-\beta_\ell^*\big\|_2 &\leq \frac{c \eta}{K \rho_\pi \log (K \rho_\pi)},
\end{align*} 
for sufficiently small $c>0$. Given $n\geq \mathcal{O}\big(\max\big\{d\log^2(dK^2/\delta),\big(K^2/\delta\big)^{1/3}\big\}\big)$ samples, we get 
\begin{align*}
\|e_B\|_2 \leq \sqrt{\frac{K\pi_j^{*2}}{\pi_{\min}}} \sqrt{\frac{d}{n}\log ^2\big(n K^2 / \delta\big)} \max _\ell\big\|\beta_\ell^{(t-1)}-\beta_l^*\big\|_2+\sqrt{\frac{K\pi_j^{*2}}{\pi_{\min}}} \sqrt{\frac{d}{n}\log ^2\big(n K^2 / \delta\big)} 
\end{align*}
with probability at least $1-\delta$.
\end{lemma}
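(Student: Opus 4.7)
The quantity $e_B$ is a centered empirical process indexed by the current iterate $\beta^{(t-1)}$, so the natural plan is to combine a pointwise sub-exponential concentration bound with a uniform deviation bound over a small ball around $\beta^{*}$. First, I would fix $\beta^{(t-1)}$ satisfying the stated initialization condition and write the summand as $Z_i = w_{ij}^{(t)}(x_i,y_i)(y_i-x_i^\top\beta_j^*)x_i$. Since $w_{ij}^{(t)}(x_i,y_i) \in [0,1]$ and the residual $y_i-x_i^\top\beta_j^*$ is, marginally, a Gaussian mixture with variance at most $\vartheta^2+R_{\max}^2$, each coordinate of $Z_i$ is sub-exponential. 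A vector Bernstein inequality then gives concentration on the order of $\sqrt{\sigma^2 d\log(1/\delta)/n}$, where $\sigma^2$ is a variance proxy that one must track carefully.

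Next, I would decompose the error into a weight-perturbation term and a pointwise fluctuation term, namely
\begin{align*}
e_B \;=\; \underbrace{\tfrac{1}{n}\sum_i \bigl(w_{ij}^{(t)}-w_{ij}^{*}\bigr)(y_i-x_i^\top\beta_j^*)x_i - \mathbb{E}[(\cdot)]}_{(\mathrm{I})} + \underbrace{\tfrac{1}{n}\sum_i w_{ij}^{*}(y_i-x_i^\top\beta_j^*)x_i - \mathbb{E}[(\cdot)]}_{(\mathrm{II})}.
\end{align*}
Term (II) is a pointwise centered sum that can be controlled by Bernstein with variance $\lesssim \pi_j^{*2}/\pi_{\min}$ (the $1/\pi_{\min}$ factor tracks the worst-case softmax normalization under the SNR condition), yielding the second summand $\sqrt{K\pi_j^{*2}d\log^2(nK^2/\delta)/(n\pi_{\min})}$. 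For Term (I), I would use the Lipschitz property of the softmax with respect to its arguments $\{(y_i-x_i^\top\beta_\ell)^2/(2\vartheta^2)\}_{\ell\in[K]}$, so that $|w_{ij}^{(t)}-w_{ij}^*| \lesssim \sum_\ell|\langle x_i,\beta_\ell^{(t-1)}-\beta_\ell^*\rangle|\cdot|y_i-x_i^\top\beta_\ell^*|/\vartheta^2$; this is what produces the $\max_\ell\|\beta_\ell^{(t-1)}-\beta_\ell^*\|_2$ factor in the first summand of the bound.

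To make this pointwise argument uniform over the ball $\{\beta:\|\beta-\beta^*\|_2\leq c\eta/(K\rho_\pi\log(K\rho_\pi))\}$, I would place an $\varepsilon$-net with $\varepsilon = 1/\sqrt{n}$; since the covering number grows as $(1/\varepsilon)^d$, the union bound contributes the additional $d\log n$ absorbed into the $\log^2(nK^2/\delta)$ factor. The prompt length condition $n\gtrsim\max\{d\log^2(dK^2/\delta),(K^2/\delta)^{1/3}\}$ is what makes the Bernstein deviation dominate the net-discretization error and also validates the sub-Gaussian concentration of $\hat\Sigma$ used implicitly in inverting weighted covariances.

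The main obstacle will be obtaining the correct \emph{per-component} scaling $\sqrt{K\pi_j^{*2}/\pi_{\min}}$ rather than the naive $\sqrt{K/\pi_{\min}}$: the softmax weights $w_{ij}^{(t)}$ can be as large as $1$ on adversarial events where multiple mixture likelihoods are comparable, and only the SNR separation $\eta\geq CK\rho_\pi\log(K\rho_\pi)$ forces $w_{ij}^{(t)}(x_i,y_i)$ to be small outside a set of probability $\approx\pi_j^*$. Formalizing this via an event decomposition (on the ``correct cluster'' event versus its complement) and carefully bounding the exponential tail of misclassified residuals under the SNR condition is the core technical step on which the stated variance proxy hinges.
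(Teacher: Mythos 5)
The paper does not actually prove this lemma---it is imported verbatim from \citet{kwon2020converges}, so there is no in-paper proof to compare against. That said, your sketch reconstructs the right argument, and it matches the mechanism the paper \emph{does} spell out for the sibling concentration bound on $\big|\frac{1}{n}\sum_i w_{ij}^{(t)}(x_i,y_i)-\mathbb{E}[w_j^{(t)}(X,Y)]\big|$ later in the same appendix: a decomposition over the events $\mathcal{E}_\ell\cap\mathcal{E}_{\ell,\mathrm{good}}$ (sample drawn from component $\ell$, noise and cross-terms well-behaved), sub-Gaussian/sub-exponential norm bounds of the form $\|w_{ij}^{(t)}1_{\mathcal{E}_\ell\cap\mathcal{E}_{\ell,\mathrm{good}}}\|_{\psi_2}\lesssim \rho_{\ell j}\exp(-\tau_\ell^2)$ forced by the SNR separation, and the low-probability complement $\mathbb{P}(\mathcal{E}_\ell\cap\mathcal{E}_{\ell,\mathrm{good}}^c)\lesssim \pi_\ell^*/(K\rho_\pi)$. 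Your split of $e_B$ into a centered $(w^{(t)}-w^*)$ part and a centered $w^*$ part correctly accounts for the two summands of the stated bound (one proportional to $\max_\ell\|\beta_\ell^{(t-1)}-\beta_\ell^*\|_2$ via the Lipschitz dependence of the softmax weights on the iterate, one constant), and your identification of the per-component variance proxy $\sqrt{K\pi_j^{*2}/\pi_{\min}}$---obtainable only through the correct-cluster event decomposition under $\eta\geq CK\rho_\pi\log(K\rho_\pi)$---as the crux is accurate. The only loose ends are bookkeeping-level: the $\log^2(nK^2/\delta)$ factor and the condition $n\gtrsim (K^2/\delta)^{1/3}$ arise from union-bounding over the $K^2$ component pairs and from the sub-exponential (rather than sub-Gaussian) tail regime of the Bernstein bound, which you attribute slightly loosely to the net discretization, but this does not affect the validity of the strategy.
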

\begin{lemma}[\citep{kwon2020converges}]
Under SNR condition 
$$\eta \geq C K \rho_\pi \log (K \rho_\pi)$$ 
with sufficiently large $C>0$ and initialization condition 
\begin{align*}
    \max_{\ell}|\pi_{\ell}^{(t-1)}-\pi_{\ell}^{*}| &\leq \frac{\pi_{\min}}{2},\\
    \max_\ell\big\|\beta_\ell^{(t-1)}-\beta_\ell^*\big\|_2 &\leq \frac{c \eta}{K \rho_\pi \log (K \rho_\pi)},
\end{align*} 
for sufficiently small $c>0$. There exits some universal constant $c_B^{\prime} \in(0,1/2)$
\begin{align*}
B \leq c_B^{\prime} \pi_j^* \max _\ell\big\|\beta_\ell^{(t-1)}-\beta_\ell^*\big\|_2.
\end{align*}
\end{lemma}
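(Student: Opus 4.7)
The plan is to start from the identity that the true parameters form a fixed point of the population EM operator, so that $\mathbb{E}[w_j^*(X,Y)\, X(Y - X^\top \beta_j^*)] = 0$. This lets me rewrite
\[
B = \mathbb{E}\big[(w_j^{(t)}(X,Y) - w_j^*(X,Y))\, X(Y - X^\top \beta_j^*)\big].
\]
I would then condition on the latent component label: denote by $k$ the index of the mixture component that actually generated $(X,Y)$, so that conditional on $k$, we have $Y - X^\top \beta_j^* = X^\top(\beta_k^* - \beta_j^*) + v$ with $v \sim \mathcal{N}(0,\vartheta^2)$. This yields the decomposition $B = \sum_{k=1}^K \pi_k^* \mathbb{E}_k[(w_j^{(t)} - w_j^*) X (Y - X^\top \beta_j^*)]$, which I will bound term by term.

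Next, I would control the softmax difference $w_j^{(t)} - w_j^*$ by a first-order (mean value) expansion in the $\beta$ arguments. Since each $w_j$ is a smooth function of $(\beta_1,\ldots,\beta_K)$ whose partial derivatives are of the form $w_j(1-w_j) \cdot X \cdot (Y - X^\top \beta_\ell)$ or $w_j w_\ell \cdot X \cdot (Y - X^\top \beta_\ell)$, I obtain a pointwise estimate
\[
|w_j^{(t)} - w_j^*| \le C\, \max_\ell \|\beta_\ell^{(t-1)} - \beta_\ell^*\|_2 \cdot \|X\|_2 \cdot \phi(X,Y),
\]
where $\phi$ is a bounded factor supplied by the weights themselves. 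Under the high SNR condition $\eta \ge CK\rho_\pi\log(K\rho_\pi)$ and the initialization assumption, a key observation is that $w_j^*$ concentrates near a one-hot indicator: for $(X,Y)$ drawn from component $k$, one has $w_k^* \approx 1$ and $w_j^* \approx 0$ for $j \ne k$ up to exponentially small Gaussian tails.

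I would then handle the two regimes separately. For the $k=j$ summand (data actually generated by component $j$), $Y - X^\top \beta_j^* = v$ is centered and light-tailed, and $w_j^{(t)} - w_j^* = O(\max_\ell\|\beta_\ell^{(t-1)} - \beta_\ell^*\|_2)$ outside a small-probability bad event, giving a bound of order $\pi_j^* \cdot c \cdot \max_\ell\|\beta_\ell^{(t-1)}-\beta_\ell^*\|_2$ with $c$ small. For $k \neq j$, the residual $Y - X^\top\beta_j^*$ has large magnitude (driven by $\|\beta_k^* - \beta_j^*\|_2 \ge R_{\min}$), but this is offset by the exponential decay $w_j^* \le \exp(-\Omega(\eta^2))$ and the analogous decay of $w_j^{(t)}$; the SNR requirement is precisely what is needed so that $\eta^2$ beats the polynomial-in-$K,\rho_\pi$ factors arising from summing over $k \ne j$ and from the prior-ratio $\pi_k^*/\pi_j^*$. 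After combining the two cases and absorbing the $k \ne j$ contributions into the $k = j$ one, the overall bound is $c_B' \pi_j^* \max_\ell \|\beta_\ell^{(t-1)}-\beta_\ell^*\|_2$ with $c_B' < 1/2$ for $C$ chosen large enough in the SNR condition.

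The main obstacle is executing the tail analysis for $k \ne j$ tightly enough to preserve the $\pi_j^*$ prefactor (rather than, say, $\pi_{\max}^*$ or $K\pi_j^*$) and to get a contraction constant strictly below $1/2$. This forces a careful comparison between the decay rate of $w_j^*$ off of its own component (which scales as $\exp(-c\eta^2)$) and the polynomial blow-ups from $\|X\|_2 \cdot \|X^\top(\beta_k^* - \beta_j^*)\|$ in the integrand, together with the softmax derivative terms. The logarithmic factor $\log(K\rho_\pi)$ in the SNR hypothesis is exactly the slack needed to dominate $K \rho_\pi$ after integrating Gaussian tails and performing a union over the $K-1$ off-diagonal components.
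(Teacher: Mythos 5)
First, note that the paper offers no proof of this lemma at all: it is imported verbatim from \citet{kwon2020converges} (the surrounding text reads ``In \citet{kwon2020converges}, the following results are proved''), so there is no in-paper argument to compare against. Your sketch is a reasonable reconstruction of the strategy in that reference --- using the self-consistency identity $\mathbb{E}[w_j^*(X,Y)X(Y-X^\top\beta_j^*)]=0$ to write $B$ as $\mathbb{E}[(w_j^{(t)}-w_j^*)X(Y-X^\top\beta_j^*)]$, conditioning on the latent label, and splitting the $k=j$ and $k\neq j$ contributions with a good-event truncation --- and that is indeed the right skeleton.

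Two substantive issues, however. First, your quantitative accounting of the $k\neq j$ terms is off in a way that matters for the stated SNR condition. On the good event the off-component weight decays like $\exp\big(-c\,(X^\top(\beta_k^*-\beta_j^*))^2/\vartheta^2\big)$, but the exponent contains a standard Gaussian in the direction $\beta_k^*-\beta_j^*$; integrating $\mathbb{E}_{g\sim\mathcal N(0,1)}[\exp(-c\,g^2\eta^2)\cdot(\text{poly})]$ yields only \emph{polynomial} decay in $\eta$ (roughly $\eta^{-1}$ per Gaussian moment), not $\exp(-\Omega(\eta^2))$. This is precisely why the hypothesis must be $\eta\gtrsim K\rho_\pi\log(K\rho_\pi)$, i.e.\ polynomial in $K\rho_\pi$, rather than the $\eta\gtrsim\sqrt{\log(K\rho_\pi)}$ that genuine exponential integrated decay would permit. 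Your closing remarks (``$\eta^2$ beats the polynomial-in-$K,\rho_\pi$ factors,'' the log factor being ``exactly the slack needed'') suggest you are budgeting for the stronger, incorrect decay; if you execute the computation that way the constants will not close to give $c_B'<1/2$ with the $\pi_j^*$ prefactor. Second, your mean-value expansion of $w_j^{(t)}-w_j^*$ is taken only in the $\beta$ arguments, but $w_j^{(t)}$ also uses the perturbed mixing weights $\pi_\ell^{(t-1)}$; the hypothesis $\max_\ell|\pi_\ell^{(t-1)}-\pi_\ell^*|\le\pi_{\min}/2$ is in the lemma precisely to control this, and your sketch never invokes it. You need an additional step showing that a constant-factor perturbation of the priors (which this condition guarantees, since $\pi_\ell^{(t-1)}/\pi_\ell^*\in[1/2,3/2]$) shifts the log-odds in the softmax by only $O(1)$, which the separation-driven decay then absorbs without contributing a term outside the $\max_\ell\|\beta_\ell^{(t-1)}-\beta_\ell^*\|_2$ envelope.
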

Now, it remains to bound the maximum eigenvalue and minimum eigenvalue of the weighted sample covariance matrix $\hat{\Sigma}_{w}^{(t)}$. Define the event 
\begin{align*}
    \mathcal{E}_j=\big\{\text{the sample comes from }j\text{-th component}\big\}.
\end{align*}
and $\rho_{j\ell}:=\pi_\ell^* / \pi_j^*$ for $j \neq \ell$. Note that
\begin{align*}
\frac{1}{n} \sum_{i=1}^{n}w_{i j}^{(t)}(x_i, y_i) x_i x_i^{\top} 1_{\mathcal{E}_j}\preceq \hat{\Sigma}_{w}^{(t)}=\frac{1}{n} \sum_{i=1}^{n}w_{i j}^{(t)}(x_i, y_i) x_i x_i^{\top} \preceq \frac{1}{n} \sum_{i=1}^{n}  x_i x_i^{\top}.
\end{align*}
By standard concentration results on $\hat{\Sigma}-I$, it holds that with $n\geq \mathcal{O}(d\log(1/\delta))$, 
$$\lambda_{\max}(\hat{\Sigma}_{w}^{(t)})\leq \lambda_{\max}(\hat{\Sigma})\leq \frac{3}{2}$$
with probability at least $1-\delta$. The concentration of $\frac{1}{n} \sum_{i=1}^{n}w_{i j}^{(t)}(x_i, y_i) x_i x_i^{\top} 1_{\mathcal{E}_j}$ comes from standard concentration argument for random matrix with sub-exponential norm \cite{vershynin2018high}. Since $w_{i j}^{(t)}\in (0,1)$ and $x_i$ is standard multivariate Gaussian, then by Appendix B.2 in \cite{kwon2020converges}, it holds that
\begin{align*}
\Big\|\frac{1}{n} \sum_i w_{ij}^{(t)} x_i x_i^{\top} 1_{\mathcal{E}_j}-\mathbb{E}\big[w_j^{(t)}(X,Y) X X^{\top} 1_{\mathcal{E}_j}\big]\Big\|_2 \leq O\Big(\sqrt{\pi_j^*} \sqrt{\frac{d \log(K^2/\delta)}{n}}\Big)
\end{align*}
with probability at least $1-\delta$. By Lemma A.3 in \cite{kwon2020converges}, it holds that under the same SNR condition 
\begin{align*}
    \lambda_{\min}\big(\mathbb{E}\big[w_j^{(t)}(X,Y) X X^{\top}\big]\big)\geq \frac{\pi_{j}^{*}}{2}.
\end{align*}
Therefore, as long as $n\geq \mathcal{O}\big(d \log(K^2 / \delta) / \pi_{\min}\big)$, it holds that 
\begin{align*}
    \lambda_{\min}\big(\hat{\Sigma}_{w}^{(t)}\big)\geq \lambda_{\min}\Big(\frac{1}{n} \sum_i w_{ij}^{(t)} x_i x_i^{\top} 1_{\mathcal{E}_1}\Big)\geq \frac{\pi_{j}^{*}}{4}.
\end{align*}
Therefore, we have under the same SNR and initialization condition, as long as 
\begin{align*}
    n\geq\mathcal{O}\Big(\max\big\{d\log^2(dK^2/\delta),\big(K^2/\delta\big)^{1/3},d \log(K^2 / \delta)/\pi_{\min}\big\}\Big),
\end{align*}
it holds that for appropriately small $\alpha$,
\begin{align}
&\|\big(I-\alpha\hat{\Sigma}_{w}^{(t)}\big)^T\|_2\leq \max\{|1-3\alpha/2|,(1-\pi_{\min}\alpha/4)\}^{T}\coloneqq \gamma_{T},\label{MoR_many_comp_inequ1}\\
&\|e_B\|_2 \leq \sqrt{\frac{K\pi_j^{*2}}{\pi_{\min}}} \sqrt{\frac{d}{n}\log ^2\big(n K^2 / \delta\big)} \max _\ell\big\|\beta_\ell^{(t-1)}-\beta_{\ell}^*\big\|_2+\sqrt{\frac{K\pi_j^{*2}}{\pi_{\min}}} \sqrt{\frac{d}{n}\log ^2\big(n K^2 / \delta\big)}, \label{MoR_many_comp_inequ2}\\
&B \leq \frac{\pi_j^*}{2} \max _\ell\big\|\beta_\ell^{(t-1)}-\beta_\ell^*\big\|_2,\label{MoR_many_comp_inequ3}\\
&\big\|\big[\hat{\Sigma}_{w}^{(t)}\big]^{-1}\big\|_2\leq \frac{4}{\pi_{\min}}.\label{MoR_many_comp_inequ4}
\end{align}
For appropriately small $\alpha$, we have $\gamma_{T}\in (0,1)$ Therefore, combining \eqref{MoR_many_comp_inequ1}, \eqref{MoR_many_comp_inequ2}, \eqref{MoR_many_comp_inequ3} and \eqref{MoR_many_comp_inequ4} together, we have
\begin{align*}
    \beta_{j}^{(t)}-\beta_{j}^{(*)}
     \leq &\Bigg[\gamma_{T}+(1-\gamma_{T})\Bigg(\sqrt{\frac{K\pi_j^{*2}}{\pi_{\min}}} \sqrt{\frac{d}{n}\log ^2\big(n K^2 / \delta\big)}+\frac{\pi_j^*}{2}\Bigg)\Bigg]\max _\ell\big\|\beta_\ell^{(t-1)}-\beta_\ell^*\big\|_2\\
    &\quad\quad\quad+\sqrt{\frac{K\pi_j^{*2}}{\pi_{\min}}} \sqrt{\frac{d}{n}\log ^2\big(n K^2 / \delta\big)}
\end{align*}
with probability at least $1-5\delta$.

To derive the concentration results for $\big|\frac{1}{n} \sum_i w_{ij}^{(t)}(x_i,y_i)-\mathbb{E}\big[w_j^{(t)}(X,Y)\big]\big|$, we define following events
\begin{align*} 
& \mathcal{E}_{\ell, 1}=\big\{|v| \leq \tau_\ell\big\}, \\
& \mathcal{E}_{\ell, 2}=\Big\{4(|\langle X, \Delta_j\rangle| \vee | \langle X, \Delta_\ell\rangle|) \leq|\langle X, \beta_\ell^*-\beta_j^*\rangle|\Big\}, \\
& \mathcal{E}_{\ell, 3}=\Big\{|\langle X, \beta_\ell^*-\beta_j^*\rangle | \geq 4 \sqrt{2} \tau_\ell\Big\}, \\
& \mathcal{E}_{\ell, \text { good }}=\mathcal{E}_{\ell, 1} \cap \mathcal{E}_{\ell, 2} \cap \mathcal{E}_{\ell, 3} ,
\end{align*}
where $\Delta_\ell=\beta_{\ell}^{(t-1)}-\beta_{\ell}^{*}$, then we have the decomposition
\begin{align*}
w_{ij}^{(t)}(x_i,y_i)&=    \Bigg(\sum_{\ell\not= j}^{K} w_{ij}^{(t)}(x_i,y_i)1_{\mathcal{E}_\ell \cap \mathcal{E}_{\ell, g o o d}}+w_{ij}^{(t)}(x_i,y_i)1_{\mathcal{E}_\ell \cap \mathcal{E}_{\ell, g o o d}^c}\Bigg)+w_{ij}^{(t)}(x_i,y_i)1_{\mathcal{E}_j} .
\end{align*}
Therefore, we could bound 
\begin{align*}
    &\Bigg|\frac{1}{n} \sum_i w_{ij}^{(t)}(x_i,y_i)1_{\mathcal{E}_\ell \cap \mathcal{E}_{\ell, good}}-\mathbb{E}\Big[w_{ij}^{(t)}(x_i,y_i)1_{\mathcal{E}_\ell \cap \mathcal{E}_{\ell, \text { good }}}\Big]\Bigg|,\\
    & \Bigg|\frac{1}{n} \sum_i w_{ij}^{(t)}(x_i,y_i)1_{\mathcal{E}_\ell \cap \mathcal{E}_{\ell, good}^c}-\mathbb{E}\Big[w_{ij}^{(t)}(x_i,y_i)1_{\mathcal{E}_\ell \cap \mathcal{E}_{\ell, good}^c}\Big]\Bigg|,\\
    & \Bigg|\frac{1}{n} \sum_i w_{ij}^{(t)}(x_i,y_i)1_{\mathcal{E}_j}-\mathbb{E}\Big[w_{ij}^{(t)}(x_i,y_i)1_{\mathcal{E}_j}\Big]\Bigg|,
\end{align*}
respectively. For the first part, note that
\begin{align*} 
\big\|w_{ij}^{(t)}(x_i,y_i)1_{\mathcal{E}_\ell \cap \mathcal{E}_{\ell, good}^c}\big\|_{\psi_2} & =\sup _{p \geq 1} p^{-1 / 2} \mathbb{E} \Big[\big|w_{ij}^{(t)}(x_i,y_i)\big|^p \mid \mathcal{E}_\ell \cap \mathcal{E}_{\ell, good}\Big]^{1 / p} \\
& \leq C \rho_{\ell j} \exp \big(-\tau_\ell^2\big) .
\end{align*}
Therefore, with probability at least $1-\delta /K^2$,
\begin{equation*}
    \resizebox{0.9\hsize}{!}{%
        $\Big|\frac{1}{n} \sum_i w_{ij}^{(t)}(x_i,y_i)1_{\mathcal{E}_\ell \cap \mathcal{E}_{\ell, good}}-\mathbb{E}\Big[w_{ij}^{(t)}(x_i,y_i)1_{\mathcal{E}_\ell \cap \mathcal{E}_{\ell, \text {good}}}\Big]\Big|  \leq \mathcal{O}\Big(\rho_{\ell j} \exp \big(-\tau_\ell^2\big) \sqrt{\pi_\ell^*} \sqrt{\frac{1}{n} \log (K^2 / \delta)}\Big)  $}.
\end{equation*} 
For the second part, note that 
\begin{align*}
\|w_{ij}^{(t)}(x_i,y_i)1_{\mathcal{E}_\ell \cap \mathcal{E}_{\ell, good}^c}\|_{\psi_2}&=\sup _{p \geq 1} p^{-1 / 2} \mathbb{E}_{\mathcal{D}}\Big[\big|w_{ij}^{(t)}(x_i,y_i)\big|^p \mid \mathcal{E}_\ell \cap \mathcal{E}_{\ell, \text { good }}^c\Big]^{1 / p} \leq 1,\\
\mathbb{P}\big(\mathcal{E}_\ell \cap \mathcal{E}_{\ell, \text { good }}^c\big) &\leq \mathcal{O}\big(\pi_\ell^* /(K \rho_\pi)\big).
\end{align*}
Therefore, 
\begin{equation*}
    \resizebox{0.9\hsize}{!}{%
        $\Big|\frac{1}{n} \sum_i w_{ij}^{(t)}(x_i,y_i)1_{\mathcal{E}_\ell \cap \mathcal{E}_{\ell, good}^c}-\mathbb{E}\big[w_{ij}^{(t)}(x_i,y_i)1_{\mathcal{E}_\ell \cap \mathcal{E}_{\ell, good}^c}\big]\Big|\leq \mathcal{O}\Big(\sqrt{\frac{\pi_\ell^*}{K \rho_\pi} \vee \frac{\log (K^2 / \delta)}{n}} \sqrt{\frac{\log(K^2 / \delta)}{n}}\Big)$}.
\end{equation*}
Similar to the second part, we have the following concentration result for the last part:
\begin{align*}
\Bigg|\frac{1}{n} \sum_i w_{ij}^{(t)}(x_i,y_i)1_{\mathcal{E}_j}-\mathbb{E}\big[w_{ij}^{(t)}(x_i,y_i)1_{\mathcal{E}_j}\big]\Bigg|&\leq\mathcal{O}\Bigg(\sqrt{\pi_j^* \vee \frac{\log (K^2/ \delta)}{n}} \sqrt{\frac{\log (K^2 / \delta)}{n}}\Bigg).
\end{align*}
Combining three parts together, we have
\begin{align*}
\Big|\frac{1}{n} \sum_i w_{ij}^{(t)}(x_i,y_i)-\mathbb{E}\big[w_j^{(t)}(X,Y)\big]\Big|&\leq\resizebox{0.6\hsize}{!}{%
        $ \mathcal{O}\Bigg(\sqrt{\frac{1}{n} \log (K^2 / \delta)}\Bigg(\sum_{\ell\not=j}^K \rho_{\ell j} \exp \big(-\tau_\ell^2\big) \sqrt{\pi_\ell^*}+\sqrt{\frac{\pi_j^*}{K}}\Bigg)+\sqrt{\frac{\pi_j^* \log(K^2/ \delta)}{n}}\Bigg)$}\\
&\leq\resizebox{0.6\hsize}{!}{%
        $ \mathcal{O}\Bigg(\sqrt{\frac{K\log (K^2 / \delta)}{n\pi_{\min}}}\sqrt{\frac{\pi_j^*}{K}}\Big(\sum_{\ell\not=j}^K \frac{\sqrt{\rho_{\ell,j}} \sqrt{\pi_j^*}}{K \rho_\pi}+\sqrt{\frac{\pi_j^*}{k}}\Big)+\sqrt{\frac{K\log (K^2 / \delta)}{n\pi_{\min}}}\pi_j^*\Bigg)$}\\
&\leq\resizebox{0.2\hsize}{!}{%
        $ \mathcal{O}\Bigg(\sqrt{\frac{K\log (K^2 / \delta)}{n\pi_{\min}}}\pi_{j}^*\Bigg)$}.
\end{align*} 
with probability at least $1-3\delta$. Therefore,
\begin{align*}
    |x_{n+1}^{\top}\hat{\beta}^{\textsf{OR}}-x_{n+1}^{\top}\beta^{\textsf{OR}}|&\leq \|x_{n+1}\|_2\|\hat{\beta}^{\textsf{OR}}-\beta^{\textsf{OR}}\|_2\\
    &\leq \|x_{n+1}\|_2\Big(\max_{j}|\hat{\pi}_{j}-\pi_{j}^{*}|\max_{j}\|\beta_{j}^{*}\|_2+\max_{j}\{\hat{\pi}_{j}\}\max_{j}\|\hat{\beta}_{j}-\beta_{j}^{*}\|_{2}\Big)\\
    &\leq \sqrt{\log(d/\delta)}\Bigg(\sqrt{\frac{K\log (K^2 / \delta)}{n\pi_{\min}}}\pi_{j}^*+\sqrt{\frac{K\pi_j^{*2}}{\pi_{\min}}} \sqrt{\frac{d}{n}\log ^2\big(n K^2 / \delta\big)}\Bigg)
\end{align*}
with probability at least $1-9\delta$.
\section{Proof of Theorems in \cref{Section:training_transformer}}
\subsection{Proof of \cref{thm_pretraining_bound} in \cref{Section:pretraining}}\label{appendix_proof_pretraining}
\begin{prop}[Proposition A.4 \cite{bai2023transformers}]\label{bcw+24propa.4}
Suppose that $\{X_\theta\}_{\theta \in \Theta}$ is a zero-mean random process given by
\begin{align*}
X_\theta:=\frac{1}{N} \sum_{i=1}^N f(z_i ; \theta)-\mathbb{E}_z[f(z ; \theta)],
\end{align*}
where $z_1, \cdots, z_N$ are i.i.d samples from a distribution $\mathbb{P}_z$ such that the following assumption holds:
\begin{itemize}
\item[(a)] The index set $\Theta$ is equipped with a distance $\rho$ and diameter D. Further, assume that for some constant $A$, for any ball $\Theta^{\prime}$ of radius $r$ in $\Theta$, the covering number admits upper bound $\log N\big(\delta ; \Theta^{\prime}, \rho\big) \leq d \log (2 A r / \delta)$ for all $0<\delta \leq 2 r$.
\item[(b)] For any fixed $\theta \in \Theta$ and $z$ sampled from $\mathbb{P}_z$, the random variable $f(z ; \theta)$ is a $\mathrm{SG}\big(B^0\big)$-sub-Gaussian random variable.
\item[(c)] For any $\theta, \theta^{\prime} \in \Theta$ and $z$ sampled from $\mathbb{P}_z$, the random variable $f(z ; \theta)-f(z ; \theta^{\prime})$ is a $\mathrm{SG}\big(B^1 \rho\big(\theta, \theta^{\prime}\big)\big)$-subGaussian random variable.
\end{itemize}
Then with probability at least $1-\delta$, it holds that
\begin{align*}
\sup _{\theta \in \Theta}\big|X_\theta\big| \leq C B^0 \sqrt{\frac{d \log (2 A \kappa)+\log (1 / \delta)}{N}},
\end{align*}
where $C$ is a universal constant, and we denote $\kappa=1+B^1 D / B^0$.

Furthermore, if we replace the $\mathrm{SG}$ in assumption (b) and (c) by $\mathrm{SE}$, then with probability at least $1-\delta$, it holds that
\begin{align*}
\sup _{\theta \in \Theta}\big|X_\theta\big| \leq C B^0\Bigg[\sqrt{\frac{d \log (2 A \kappa)+\log (1 / \delta)}{N}}+\frac{d \log (2 A \kappa)+\log (1 / \delta)}{N}\Bigg] .
\end{align*}
\end{prop}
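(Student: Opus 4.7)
This is a classical uniform concentration estimate over a metric index set, and I would prove it by combining a pointwise sub-Gaussian (resp.\ sub-exponential) tail bound with a generic chaining argument over $\Theta$. The three ingredients are the pointwise deviation from assumption (b), the continuity modulus of $\theta \mapsto X_\theta$ from assumption (c), and the polynomial entropy estimate from assumption (a), which together keep the discretization cost polylogarithmic in the scale.

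First, I would discretize. Fix a scale $\epsilon > 0$ to be chosen. By (a) applied to a ball of radius $D$ that covers $\Theta$, there is an $\epsilon$-net $\mathcal{N}_\epsilon \subset \Theta$ with $\log|\mathcal{N}_\epsilon| \leq d\log(2AD/\epsilon)$. Assumption (b) implies that each $X_\theta$ is $\mathrm{SG}(B^0/\sqrt{N})$ in the sample-average sense, so $\mathbb{P}(|X_\theta| > t) \leq 2\exp(-Nt^2/(2(B^0)^2))$. A union bound over $\mathcal{N}_\epsilon$ and inversion of the tail yields
\[
\sup_{\theta \in \mathcal{N}_\epsilon} |X_\theta| \leq C B^0 \sqrt{\frac{d\log(2AD/\epsilon) + \log(1/\delta)}{N}}
\]
with probability at least $1-\delta/2$.

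Next I would bound the residual $\sup_{\theta \in \Theta} |X_\theta - X_{\pi_0(\theta)}|$, where $\pi_0(\theta) \in \mathcal{N}_\epsilon$ is a nearest net point to $\theta$. Build successively finer nets $\mathcal{N}_k$ at scales $\epsilon_k = 2^{-k}\epsilon$ using (a), let $\pi_k(\theta)$ denote the nearest point at scale $\epsilon_k$, and expand the residual as a telescoping chain $\sum_{k \geq 0}(X_{\pi_{k+1}(\theta)} - X_{\pi_k(\theta)})$. Assumption (c) makes each link $\mathrm{SG}(3B^1\epsilon_k/\sqrt{N})$ because the two endpoints are within $\rho$-distance $\epsilon_k + \epsilon_{k+1} \leq 3\epsilon_k$, and the number of distinct links at level $k$ is at most $|\mathcal{N}_k|\cdot|\mathcal{N}_{k+1}|$. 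Applying a union bound with per-level confidence budget $\delta/(2 \cdot 2^k)$ and summing gives
\[
\sup_{\theta \in \Theta} |X_\theta - X_{\pi_0(\theta)}| \leq C B^1 \epsilon \sqrt{\frac{d\log(2AD/\epsilon) + \log(1/\delta)}{N}}
\]
with probability at least $1-\delta/2$. Choosing $\epsilon = \min\{B^0/B^1, D\}$ balances the two contributions and produces exactly a factor of $\log(2A\kappa)$ with $\kappa = 1 + B^1 D/B^0$, proving the first claim.

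For the sub-exponential variant I would replace Hoeffding by Bernstein at every discretization step, so that each tail carries both a $\sqrt{\cdot}$ term and a linear $\cdot/N$ term. Propagating this through the same discretization plus chaining arithmetic, while keeping the $\delta/(2 \cdot 2^k)$ budget split, yields the additional additive $(d\log(2A\kappa) + \log(1/\delta))/N$ term in the stated bound. The main obstacle I anticipate is the bookkeeping in the chaining series: the per-level entropy grows like $d(\log(2AD/\epsilon) + k\log 2)$ while the per-level noise scale decays like $2^{-k}$, so I must verify that the geometric series in $k$ actually converges to a universal constant times the $k=0$ contribution without introducing spurious factors in $N$, and that the final optimization in $\epsilon$ cleanly yields $\log(2A\kappa)$ rather than a product of logarithms.
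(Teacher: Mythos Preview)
The paper does not prove this proposition at all: it is quoted verbatim as Proposition~A.4 of \cite{bai2023transformers} and used as a black box in the proof of Theorem~\ref{thm_pretraining_bound}. So there is no ``paper's own proof'' to compare against.

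Your proposal is the standard and correct route. A single-scale net plus union bound handles the net points via assumption (b), and a dyadic chaining argument driven by assumption (c) and the polynomial entropy from (a) controls the oscillation; the choice $\epsilon \asymp B^0/B^1$ then collapses the logarithm to $\log(2A\kappa)$. The one point worth tightening is the chaining sum: with entropy $d(\log(2AD/\epsilon)+k\log 2)$ at level $k$ and increment scale $\epsilon_k = 2^{-k}\epsilon$, the $k$-th contribution is of order $B^1\epsilon\,2^{-k}\sqrt{(d\log(2AD/\epsilon)+dk+\log(1/\delta)+k)/N}$, and you should check explicitly that $\sum_{k\ge 0}2^{-k}\sqrt{c+k}$ is bounded by a universal constant times $\sqrt{c}$ for $c\ge 1$, so that no extra factor survives. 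For the sub-exponential case the same caveat applies to the linear Bernstein term, where the analogous series is $\sum_{k\ge 0}2^{-k}(c+k)$, again a constant multiple of $c$. With those two sums verified, your argument goes through and recovers exactly the stated bounds.
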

For any $p \in[1, \infty]$, let $\|H\|_{2, p}:=\Big(\sum_{i=1}^n\|h_i\|_2^p\Big)^{1 / p}$ denote the column-wise $(2, p)$-norm of $H$. For any radius $R>0$, we denote $\mathcal{H}_{R}:=\Big\{H:\|H\|_{2, \infty} \leq R\Big\}$ be the ball of radius $R$ under norm $\|\cdot\|_{2, \infty}$.
\begin{lemma}[Corollary J.1 \cite{bai2023transformers}]\label{bcw+24coroj.1}
For a single attention layer $\boldsymbol{\theta}_{\text {attn }}=\big\{(V_m, Q_m, K_m)\big\}_{m \in[M]} \subset \mathbb{R}^{D \times D}$ and any fixed dimension $D$, we consider
\begin{align*}
\Theta_{\text{attn}, B'}:=\big\{\boldsymbol{\theta}_{\text {attn }}:\big\|\boldsymbol{\theta}_{\text {attn }}\big\| \leq B'\big\} .
\end{align*}
Then for $H \in \mathcal{H}_{R}, \boldsymbol{\theta}_{\text {attn }} \in \Theta_{\text{attn},B}$, the function $\big(\boldsymbol{\theta}_{\text{attn}}, H\big) \mapsto \operatorname{Attn}_{\boldsymbol{\theta}_{\mathrm{attn}}}(H)$ is $(B^2 R^3)$-Lipschitz w.r.t. $\boldsymbol{\theta}_{\text{attn}}$ and $(1+B^3 R^2)$-Lipschitz w.r.t. $H$. Furthermore, for the function $\mathrm{TF}^{\mathrm{R}}$ given by
\begin{align*}
\operatorname{TF}^{R}:(\boldsymbol{\theta}, H) \mapsto \operatorname{clip}_{R}\big(\operatorname{Attn}_{\theta_{\text{attn}}}(H)\big).
\end{align*}
$\mathrm{TF}^{\mathrm{R}}$ is $B_{\Theta}$-Lipschitz w.r.t $\boldsymbol{\theta}$ and $L_H$-Lipschitz w.r.t. $H$, where $B_{\Theta}:=B^2R^3$ and $B_H:=1+B^3 R^2$.
\end{lemma}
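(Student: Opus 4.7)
The lemma asserts two separate Lipschitz bounds for a single attention layer and its clipped version. My plan is a direct perturbation analysis of
\begin{equation*}
\operatorname{Attn}_{\boldsymbol{\theta}_{\mathrm{attn}}}(H)=H+\frac{1}{N}\sum_{m=1}^M (V_mH)\,\sigma\big((Q_mH)^{\top}(K_mH)\big),
\end{equation*}
relying only on sub-multiplicativity of the operator norm, the $1$-Lipschitzness of the activation $\sigma$ (a normalized ReLU), and the triangle inequality. I would treat the two Lipschitz constants separately, and then observe that composing with the coordinatewise $1$-Lipschitz, parameter-free map $\operatorname{clip}_R$ preserves both constants, immediately yielding the conclusion for $\mathrm{TF}^R$.

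For the Lipschitz bound in $\boldsymbol{\theta}_{\mathrm{attn}}$, fix $H\in\mathcal{H}_R$ so that every column satisfies $\|h_i\|_2\leq R$. I would expand $\operatorname{Attn}_{\boldsymbol{\theta}}(H)-\operatorname{Attn}_{\boldsymbol{\theta}'}(H)$ as a sum over the $M$ heads and further split into contributions from $\Delta V_m$, $\Delta Q_m$, and $\Delta K_m$. Every attention score obeys $|\sigma((Q_mh_i)^{\top}(K_mh_j))|\leq B'^{2}R^{2}$, and every value satisfies $\|V_mh_j\|_2\leq B'R$. The value-side perturbation contributes $\|\Delta V_m h_j\|_2\cdot|\sigma(\cdot)|\leq B'^{2}R^{3}\|\Delta V_m\|_{\mathrm{op}}$, and because the definition gives $\|\boldsymbol{\theta}\|\geq\sum_m\|V_m\|_{\mathrm{op}}$ this head-sum is $\lesssim B'^{2}R^{3}\|\boldsymbol{\theta}-\boldsymbol{\theta}'\|$. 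The score-side perturbation combines $1$-Lipschitzness of $\sigma$ with the telescoping identity
\begin{equation*}
(Q_mh_i)^{\top}(K_mh_j)-(Q'_mh_i)^{\top}(K'_mh_j)=\big((Q_m-Q'_m)h_i\big)^{\top}K_mh_j+(Q'_mh_i)^{\top}(K_m-K'_m)h_j,
\end{equation*}
producing a score change $\leq B'R^{2}(\|\Delta Q_m\|_{\mathrm{op}}+\|\Delta K_m\|_{\mathrm{op}})$; multiplying by the value bound $B'R$ and using the max-norms of $Q_m,K_m$ built into $\|\boldsymbol{\theta}\|$ again produces $\lesssim B'^{2}R^{3}\|\boldsymbol{\theta}-\boldsymbol{\theta}'\|$. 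Summing these with the $1/N$ absorbing the sum over $N$ columns yields $B_{\Theta}\lesssim B'^{2}R^{3}$.

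For the Lipschitz bound in $H$ (with $\boldsymbol{\theta}_{\mathrm{attn}}$ fixed), the identity summand $H$ contributes the ``$1$'' in $1+B'^{3}R^{2}$. Perturbing $h_j$ inside the value factor $V_mh_j$ contributes $\|V_m\Delta h_j\|_2\cdot|\sigma(\cdot)|\leq B'^{3}R^{2}\|\Delta h_j\|_2$. Perturbing $h_i$ or $h_j$ inside the score uses $1$-Lipschitzness of $\sigma$ together with $\|Q_mh_{\cdot}\|_2\|K_m\|_{\mathrm{op}}\leq B'^{2}R$ to bound the score-change by $\lesssim B'^{2}R\,\|\Delta H\|_{2,\infty}$, which, after multiplication by the value factor $B'R$, gives the same order $B'^{3}R^{2}\|\Delta H\|_{2,\infty}$. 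Summing over heads and columns with the $1/N$ absorbing the column sum produces $L_H\leq 1+C B'^{3}R^{2}$, matching the claimed form.

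The main delicate point is purely bookkeeping: one must correctly attribute each perturbation to the mixed norm on $\boldsymbol{\theta}$ (sum over $V_m$, max over $Q_m,K_m$) so as to avoid a spurious factor of $M$, and the telescoping expansion of the bilinear attention score is what lets $1$-Lipschitzness of $\sigma$ transfer cleanly through. Beyond that no real obstacle arises: the result is essentially the composition of standard norm inequalities, and the clipping step is trivial because $\operatorname{clip}_R$ is coordinatewise $1$-Lipschitz and independent of $\boldsymbol{\theta}$, so both constants carry over verbatim to $\mathrm{TF}^R$.
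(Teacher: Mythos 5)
Your proposal is correct and follows essentially the same perturbation argument that underlies the cited result: the paper itself gives no proof of this lemma, importing it verbatim as Corollary J.1 of \citet{bai2023transformers}, and the proof there is exactly the direct expansion you describe (splitting the head-wise differences into value, query, and key contributions, using $1$-Lipschitzness of the normalized ReLU on the bilinear score, and attributing the sums to the mixed norm $\max_m\{\|Q_m\|_{\mathrm{op}},\|K_m\|_{\mathrm{op}}\}+\sum_m\|V_m\|_{\mathrm{op}}$ to avoid a factor of $M$). Your observation that $\operatorname{clip}_R$ is coordinatewise $1$-Lipschitz and parameter-free, so both constants transfer to $\mathrm{TF}^R$ unchanged, is also the intended (and correct) final step; the only caveat is that the stated constants hold up to absolute numerical factors, which is consistent with how the bound is used downstream.
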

\begin{prop}[Proposition J.1 \cite{bai2023transformers}]\label{bcw+24propj.1}
For a fixed number of heads $M$ and hidden dimension $D$, we consider
\begin{align*}
\Theta_{\mathrm{TF}, L, B'}=\Big\{\boldsymbol{\theta}=\boldsymbol{\theta}_{\mathrm{attn}}^{(1: L)}: M^{(\ell)}=M, D^{(\ell)}=D,\|\boldsymbol{\theta}\| \leq B'\Big\} .
\end{align*}
Then the function $\mathrm{TF}^{R}$ is $\big(L B_H^{L-1} B_{\Theta}\big)$-Lipschitz w.r.t $\boldsymbol{\theta} \in \Theta_{\mathrm{TF}, L, B}$ for any fixed $\mathbf{H}$.
\end{prop}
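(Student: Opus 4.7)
The plan is to prove this by a telescoping argument over the $L$ layers of $\mathrm{TF}^R$, using Corollary J.1 (bcw+24coroj.1) as the per-layer building block. Concretely, I view $\mathrm{TF}^R(\boldsymbol{\theta},H)$ as the composition
\[
H^{(L)} = \operatorname{TF}^{R}_{\theta^{(L)}} \circ \operatorname{TF}^{R}_{\theta^{(L-1)}} \circ \cdots \circ \operatorname{TF}^{R}_{\theta^{(1)}}(H),
\]
where $\operatorname{TF}^R_{\theta^{(\ell)}}$ denotes the clipped single-layer attention map from Corollary J.1. For a second parameter collection $\boldsymbol{\theta}'\in\Theta_{\mathrm{TF},L,B'}$, let $H'^{(\ell)}$ be the analogous intermediate outputs. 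The clipping built into $\operatorname{TF}^R$ guarantees that each $H^{(\ell)},H'^{(\ell)}$ lies in the ball $\mathcal{H}_R$, so Corollary J.1 is applicable at every layer.

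The key recursive inequality is obtained by inserting a hybrid term at layer $\ell$: writing
\[
H^{(\ell)} - H'^{(\ell)} = \bigl[\operatorname{TF}^R_{\theta^{(\ell)}}(H^{(\ell-1)}) - \operatorname{TF}^R_{\theta^{(\ell)}}(H'^{(\ell-1)})\bigr] + \bigl[\operatorname{TF}^R_{\theta^{(\ell)}}(H'^{(\ell-1)}) - \operatorname{TF}^R_{\theta'^{(\ell)}}(H'^{(\ell-1)})\bigr],
\]
and applying the two Lipschitz bounds from Corollary J.1 (the $B_H$-Lipschitzness in the input and the $B_\Theta$-Lipschitzness in the parameters), I get
\[
\bigl\|H^{(\ell)} - H'^{(\ell)}\bigr\| \;\le\; B_H \bigl\|H^{(\ell-1)} - H'^{(\ell-1)}\bigr\| \;+\; B_\Theta \bigl\|\theta^{(\ell)} - \theta'^{(\ell)}\bigr\|.
\]
With the base case $H^{(0)} = H'^{(0)} = H$, unrolling this first-order recursion yields
\[
\bigl\|H^{(L)} - H'^{(L)}\bigr\| \;\le\; B_\Theta \sum_{\ell=1}^{L} B_H^{L-\ell} \bigl\|\theta^{(\ell)} - \theta'^{(\ell)}\bigr\|.
\]

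Finally, since $B_H = 1 + B^3 R^2 \ge 1$, each factor $B_H^{L-\ell}$ is bounded by $B_H^{L-1}$, and by the definition of the transformer norm $\|\boldsymbol{\theta}\|$ (a layer-wise maximum), $\max_\ell \|\theta^{(\ell)} - \theta'^{(\ell)}\| \le \|\boldsymbol{\theta} - \boldsymbol{\theta}'\|$. Summing the $L$ terms gives the claimed bound $L B_H^{L-1} B_\Theta \cdot \|\boldsymbol{\theta} - \boldsymbol{\theta}'\|$. The only subtle point — and the main obstacle — is justifying that the inductive invariant $H^{(\ell)}, H'^{(\ell)} \in \mathcal{H}_R$ holds at every layer so that Corollary J.1 can be invoked; this is exactly what the per-layer clipping in the definition of $\operatorname{TF}^R$ buys us, and also why the proof relies on the clipped variant rather than the raw attention map.
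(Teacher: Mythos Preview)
Your proposal is correct and follows the standard telescoping/hybrid argument for Lipschitz constants of compositions; this is precisely the approach used in \cite{bai2023transformers}, from which the present paper cites this proposition without reproving it. The one point worth noting is that your step $\max_\ell \|\theta^{(\ell)} - \theta'^{(\ell)}\| \le \|\boldsymbol{\theta} - \boldsymbol{\theta}'\|$ is in fact an equality under the paper's layerwise-maximum norm, and combining it with $\sum_{\ell=1}^L \|\theta^{(\ell)} - \theta'^{(\ell)}\| \le L\max_\ell \|\theta^{(\ell)} - \theta'^{(\ell)}\|$ is exactly what delivers the factor $L$.
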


\begin{proof}[Proof of \cref{thm_pretraining_bound}]
Define events
\begin{align*}
    \mathcal{E}_{y}&\coloneqq\Bigg\{\max_{i\in[n+1],j\in[B]}\big\{\big|y_{i}^{(j)}\big|\big\}\leq B_y\Bigg\},\\
    \mathcal{E}_{x}&\coloneqq\Bigg\{\max_{i\in[n+1],j\in[B]}\big\{\big\|x_{i}^{(j)}\big\|_2\big\}\leq B_x\Bigg\},
\end{align*}
and the random process
\begin{align*}
    X_{\boldsymbol{\theta}}:=\frac{1}{B} \sum_{j=1}^B \ell_{\text{icl}}\big(\boldsymbol{\theta} ; \mathbf{Z}^{(j)}\big)-\mathbb{E}_{\mathbf{Z}}\big[\ell_{\text {icl}}(\boldsymbol{\theta} ; \mathbf{Z})\big]
\end{align*}
where $\mathbf{Z}^{(1: B)}$ are i.i.d. copies of $\mathbf{Z} \sim \mathrm{P}$, drawn from the distribution $\mathrm{P}$. The next step involves applying \cref{bcw+24propa.4} to the process $\{X_{\boldsymbol{\theta}}\}$ conditioning on events $\mathcal{E}_{x}\cap\mathcal{E}_{y}$. To proceed, we must verify the following preconditions:
\begin{itemize}
\item[(a)] By [\cite{wainwright2019high}, Example 5.8], it holds that $\log N\big(\delta ; \mathrm{B}_{\|\cdot\|}(r),\|\cdot\|\big) \leq L(3 M D^2) \log (1+2 r / \delta)$, where $\mathrm{B}_{\|\cdot\|}(r)$ is any ball of radius $r$ under norm $\|\cdot\|$.
\item[(b)] $\big|\ell_{\text {icl }}(\boldsymbol{\theta} ; \mathbf{Z})\big| \leq 4B_{y}^2$ and hence $4B_{y}^2$-sub-Gaussian.
\item[(c)] $\big|\ell_{\mathrm{icl}}(\boldsymbol{\theta} ; \mathbf{Z})-\ell_{\mathrm{icl}}(\widetilde{\boldsymbol{\theta}} ; \mathbf{Z})\big| \leq 2B_y\big(L B_H^{L-1} B_{\Theta}\big) \|\boldsymbol{\theta}-\widetilde{\boldsymbol{\theta}}\|$ by \cref{bcw+24propj.1}, where $B_{\Theta}:=B^{'2}R^3$ and $B_H:=1+B^{'3} R^2$.
\end{itemize}
Therefore, by \cref{bcw+24propa.4}, conditioning on $\mathcal{E}_{x}\cap\mathcal{E}_{y}$ with probability at least $1-\xi$,
\begin{align*}
    \sup _{\boldsymbol{\theta}}\big|X_{\boldsymbol{\theta}}\big| \leq \mathcal{O}\Bigg(B_{y}^2 \sqrt{\frac{L(M D^2) \iota+\log (1 / \xi)}{B}}\Bigg)
\end{align*}
where $\iota=20L\log\big(2+\max \big\{B', R, (2B_y)^{-1}\big\}\big)$. Note that 
\begin{align*}
    \mathrm{Var}(x_i^\top \beta_i)&=\mathbb{E}_{\beta_{i}}\Big[\mathrm{Var}(x_{i}^{\top}\beta_{i}\mid \beta_i)\Big]+\mathrm{Var}\Big(\mathbb{E}_{x_i}\big[x_{i}^{\top}\beta_i\mid\beta_i\big]\Big)=\sum_{k=1}^{K}\pi_k^{*}\|\beta_k^{*}\|_2^2,\\
    \mathbb{E}\Big[e^{\lambda (x_i^{\top}\beta_i)}\Big]&=\sum_{k=1}^{K}\pi_{k}^{*}\mathbb{E}\Big[e^{\lambda x_{i}^{\top}\beta_{k}^{*}}\Big]\leq \sum_{k=1}^{K}\pi_{k}^{*}\exp\Big\{\frac{1}{2}\lambda^2\|\beta_{k}^{*}\|_{2}^{2}\Big\}.
\end{align*}
Denote $\tau_i$ to be the sub-Gaussian parameter of $x_i^{\top}\beta_{i}$. Then we have $\sqrt{\sum_{k=1}^{K}\pi_{k}^{*}\pi_{k}^{*}\|\beta_{k}^{*}\|_{2}^{2}}\leq \tau_i\leq \max_{i\leq K}\|\beta_{i}^{*}\|_{2}$. Besides, $y_i$ is sub-Gaussian with parameter at most $\sqrt{\tau_i^2+\vartheta^2}$. Finally, note that $ \|\beta_i^{*}-\beta_{j}^{*}\|_{2}^{2} \leq 2(\|\beta_{i}^{*}\|_{2}^{2}+\|\beta_{j}^{*}\|_{2}^{2})$. Summing over $i$ and $j$ for all $i\not=j$, we have
\begin{align*}
    \sum_{i\not=j}\|\beta_i^{*}-\beta_{j}^{*}\|_{2}^{2}& \leq 4(K-1)\sum_{i=1}^{K}\|\beta_{i}^{*}\|_{2}^{2},
\end{align*}
which implies $K(K-1)\min_{i\not=j}\|\beta_i^*-\beta_j^*\|_{2}^{2}\leq 4(K-1)\sum_{i=1}^{K}\|\beta_i^*\|_2^2$. Thus, we can derive a lower bound of the subGaussian parameter $\tau_i$ as below 
\begin{align*}
\tau_i^2 &= \sum_{i=1}^{K}\pi_i^{*}\|\beta_i^{*}\|_2^2 \geq \pi_{\min}\sum_{i=1}^{K}\|\beta_{i}^{*}\|_{2}^{2}\\
&\geq \frac{\pi_{\min}K}{4}\min_{i\not=j}\|\beta_i^{*}-\beta_j^{*}\|_2^{2}=\frac{\pi_{\min}K}{4}R_{\min}.
\end{align*}
Therefore, $\sqrt{\tau_i^2+\vartheta^2}=\sqrt{(1+(\vartheta/\tau)^2)}\tau_i\leq \sqrt{1+\frac{4\vartheta^2}{\pi_{\min}KR_{\min}^2}}C=\sqrt{1+4/(\pi_{\min}K\eta^{2})}C$. Then by taking 
\begin{align*}
    B_x&=\sqrt{d\log(nB/\xi)},\\
    B_y&=\sqrt{2(1+4/(\pi_{\min}K\eta^{2}))C^2\log(2nB/\xi)},\\
    R&=2\max\{B_x,B_y\},
\end{align*}
we have $\mathbb{P}\big(\mathcal{E}_{y}\big)\geq 1-\xi$ and $\mathbb{P}\big(\mathcal{E}_{x}\big)\geq 1-\xi$ by union bound. Hence, with probability at least $1-3\xi$,
\begin{align*}
\sup _{\boldsymbol{\theta}}\big|X_{\boldsymbol{\theta}}\big| \leq \mathcal{O}\Bigg((1+4/(\pi_{\min}K\eta^{2}))\log(2nL/\xi) \sqrt{\frac{L(M D^2) \iota+\log (1 / \xi)}{B}}\Bigg)    
\end{align*}
where 
\begin{align*}
\iota&=20L\log\big(2+\max \big\{B', R, (2B_y)^{-1}\big\}\big) 
\end{align*} 
is a log factor.
\end{proof}
\subsection{Proof of \cref{thm_TF_gradient_flow_to_optimum}}
\subsubsection{Computation of the risk}
Given the prompt in the form of 
\begin{equation}
E=\left[\begin{array}{ccccc}x_1 & x_2 & \ldots & x_{n} & x_{n+1} \\ y_1 & y_2 & \ldots & y_{\ell} & 0\end{array}\right] \in \mathbb{R}^{(d+1) \times(n+1)}
\end{equation}
appropriately sized key, query, and value matrices $K, Q, V$, the output of a linear-attention block is given by
\begin{equation*}
    A:=E+\frac{1}{n} V E(K E)^{\top}(QE).
\end{equation*}
We start by rewriting the output of the linear attention module in an alternative form. Following \cite{zhang2023trained}, we define
\begin{align}
    W_{PV}=\left(\begin{array}{cc}
* & * \\
u_{21}^{\top} & u_{-1}
\end{array}\right), \quad W_{KQ}=\left(\begin{array}{cc}
U_{11} & * \\
u_{12}^{\top} & *
\end{array}\right),
\end{align}
therefore, the prediction on the query sample is given by
\begin{align*}
\hat{y}_{n+1}&=\left[u_{21}^{\top} \cdot \frac{1}{n} X^{\top} X \cdot U_{11}+u_{21}^{\top} \cdot \frac{1}{n} X^{\top} \mathbf{y} \cdot u_{12}^{\top}+u_{-1} \cdot \frac{1}{n} \mathbf{y}^{\top} X \cdot U_{11}+u_{-1} \cdot \frac{1}{n} \mathbf{y}^{\top} \mathbf{y} \cdot u_{12}^{\top}\right] \cdot x_{n+1}\\
&=\left[\left(u_{21}+u_{-1} \beta_{n+1}\right)^{\top} \cdot \frac{1}{n} X^{\top} X \cdot\left(U_{11}+\beta_{n+1} u_{12}^{\top}\right)\right] \cdot x_{n+1}\\
&\quad\quad\quad +\left[u_{21}^{\top} \cdot \frac{1}{n} X^{\top} v \cdot u_{12}^{\top}+u_{-1} \cdot \frac{1}{n} v^{\top} X \cdot U_{11}+u_{-1} \cdot \frac{2}{n} v^{\top} X \beta_{n+1} u_{12}^{\top}+\frac{1}{n} v^{\top} v \cdot u_{-1} u_{12}^{\top}\right] \cdot x_{n+1} .
\end{align*}
\subsubsection{Gradient of Loss w.r.t. Parameters}\label{Section:appendix_gradient}
In this section, we compute the gradient of the population loss w.r.t. parameters $\{u_{21},u_{12},U_{11},u_{-1}\}$.To simplify the presentation, we define $v=[v_1,\dots,v_n]^\top\in\mathbb{R}^n$, $X=[x_1^\top,\dots,x_n^\top]^\top\in\mathbb{R}^{n\times d}$ and following notation
\begin{align*}
    z_1^{\top}&=\left(u_{21}+u_{-1} \beta_{n+1}\right)^{\top} \cdot \frac{1}{n} X^{\top} X \cdot\left(U_{11}+\beta_{n+1} u_{12}^{\top}\right),\\
    z_2^{\top}&=u_{21}^{\top} \cdot \frac{1}{n} X^{\top} v \cdot u_{12}^{\top}+u_{-1} \cdot \frac{1}{n} v^{\top} X \cdot U_{11}+u_{-1} \cdot \frac{2}{n} v^{\top} X \beta_{n+1} u_{12}^{\top},\\
    z_3^{\top}&=\frac{1}{n} v^{\top} v \cdot u_{-1} u_{12}^{\top} .
\end{align*}
Since $X,  v, \beta_{n+1}$ are independent, we have
\begin{align*}
2L_{\mathrm{SA}}(E,\theta)&=\mathbb{E}(\hat{y}_{n+1}-\langle\beta_{n+1}, X\rangle-v_{n+1})^2\\
&=\mathbb{E}\left[\left\langle z_1+z_2+z_3-\beta_{n+1}, X\right\rangle^2\right]+\vartheta^2\\
&=\left\langle I_{d}, \mathbb{E}\left(z_1+z_2+z_3-\beta_{n+1}\right)\left(z_1+z_2+z_3-\beta_{n+1}\right)^{\top}\right\rangle+\vartheta^2 .\\
&=\vartheta^{2}+\LaTeXunderbrace{\left\langle I_{d}, \mathbb{E}\left(z_1-\beta_{n+1}\right)\left(z_1-\beta_{n+1}\right)^{\top}\right\rangle}_{S_1}+\LaTeXunderbrace{\left\langle I_{d}, \mathbb{E} z_2 z_2^{\top}\right\rangle}_{S_2}+\LaTeXunderbrace{\left\langle I_{d}, \mathbb{E} z_3 z_3^{\top}\right\rangle}_{S_3}+\LaTeXunderbrace{2\left\langle I_{d}, \mathbb{E}\left(z_1-\beta_{n+1}\right) z_3^{\top}\right\rangle}_{S_4} .
\end{align*}

We first compute $S_1$. Note that $X$ is independent of $ v$ and $\beta_{n+1}$, then by making use of \cref{zfb23lmd.2}
\begin{align*}
    &\left\langle I_{d}, \mathbb{E} \boldsymbol{z}_1 \boldsymbol{z}_1^{\top}\right\rangle\\
    =&\mathbb{E} \operatorname{tr}\left[\left(U_{11}+\beta_{n+1} u_{12}^{\top}\right)^{\top} \cdot \frac{1}{n} X^{\top} X \cdot\left(u_{21}+u_{-1} \beta_{n+1}\right)\left(u_{21}+u_{-1} \beta_{n+1}\right)^{\top} \cdot \frac{1}{n} X^{\top} X  \cdot\left(U_{11}+\beta_{n+1} u_{12}^{\top}\right)  I_{d}\right]\\
    =&\frac{n+1}{n} \mathbb{E}_{\beta_{n+1}} \operatorname{tr}\left[\left(U_{11}+\beta_{n+1} u_{12}^{\top}\right)^{\top}\left(u_{21}+u_{-1} \beta_{n+1}\right)\left(u_{21}+u_{-1} \beta_{n+1}\right)^{\top}  \cdot  I_{d} \cdot\left(U_{11}+\beta_{n+1} u_{12}^{\top}\right)  I_{d}\right]\\
    &\quad+\frac{1}{n} \mathbb{E}_{\beta_{n+1}} \operatorname{tr}\left[\operatorname{tr}\left(\left(u_{21}+u_{-1} \beta_{n+1}\right)\left(u_{21}+u_{-1} \beta_{n+1}\right)^{\top}  I_{d}\right)\left(U_{11}+\beta_{n+1}  u_{12}^{\top}\right)^{\top}  I_{d}\left( U_{11}+\beta_{n+1}  u_{12}^{\top}\right) I_{d}  \right]\\
    =&\frac{n+1}{n} \mathbb{E}_{\beta_{n+1}} \operatorname{tr}\left[\left( U_{11}+\beta_{n+1}  u_{12}^{\top}\right)^{\top}\left( u_{21}+u_{-1} \beta_{n+1}\right)\left( u_{21}+u_{-1} \beta_{n+1}\right)^{\top}   \left( U_{11}+\beta_{n+1}  u_{12}^{\top}\right)\right]\\
    &\quad+\frac{1}{n} \mathbb{E}_{\beta_{n+1}} \operatorname{tr}\left[\left( u_{21}+u_{-1} \beta_{n+1}\right)\left( u_{21}+u_{-1} \beta_{n+1}\right)^{\top}  I_{d}\right]\operatorname{tr}\left[\left( U_{11}+\beta_{n+1}  u_{12}^{\top}\right)^{\top}   \left( U_{11}+\beta_{n+1}  u_{12}^{\top}\right) I_{d}  \right]\\
    =&\frac{n+1}{n} (\textsf{I})+\frac{1}{n}(\textsf{II}),
\end{align*}
where
\begin{align*}
(\textsf{I}) &= \mathbb{E}_{\beta_{n+1}} \operatorname{tr}\left[ U_{11}^{\top} u_{21} u_{21}^{\top}  U_{11}\right] +2\mathbb{E}_{\beta_{n+1}} \operatorname{tr}\left[\left(\beta_{n+1} u_{12}^{\top}\right)^{\top} u_{21} u_{21}^{\top}  U_{11}\right]+2\mathbb{E}_{\beta_{n+1}} \operatorname{tr}\left[ U_{11}^{\top}\left(u_{-1} \beta_{n+1}\right) u_{21}^{\top}  U_{11}\right]\\
&\quad+2\mathbb{E}_{\beta_{n+1}} \operatorname{tr}\left[\left(\beta_{n+1} u_{12}^{\top}\right)^{\top}\left(u_{-1} \beta_{n+1}\right) u_{21}^{\top}  U_{11}\right]+2\mathbb{E}_{\beta_{n+1}} \operatorname{tr}\left[\left(\beta_{n+1} u_{12}^{\top}\right)^{\top} u_{21}\left(u_{-1} \beta_{n+1}^{\top}\right)  U_{11}\right]\\
&\quad+\mathbb{E}_{\beta_{n+1}} \operatorname{tr}\left[\left(\beta_{n+1} u_{12}^{\top}\right)^{\top} u_{21} u_{21}^{\top}\left(\beta_{n+1} u_{12}^{\top}\right)\right]+\mathbb{E}_{\beta_{n+1}} \operatorname{tr}\left[U_{11}^{\top}\left(u_{-1} \beta_{n+1}\right)\left(u_{-1} \beta_{n+1}\right)^{\top} U_{11}\right]\\
&\quad+2\mathbb{E}_{\beta_{n+1}} \operatorname{tr}\left[U_{11}^{\top}\left(u_{-1}\beta_{n+1}\right)\left(u_{-1}\beta_{n+1}\right)^{\top}\left(\beta_{n+1} u_{12}^{\top}\right)\right]+2\mathbb{E}_{\beta_{n+1}} \operatorname{tr}\left[\left(\beta_{n+1} u_{12}^{\top}\right)^{\top} u_{21}\left(u_{-1} \beta_{n+1}\right)^{\top}\left(\beta_{n+1} u_{12}^{\top}\right)\right]\\
&\quad+\mathbb{E}_{\beta_{n+1}} \operatorname{tr}\left[\left(\beta_{n+1} u_{1 2}^{\top}\right)^{\top}\left(u_{-1} \beta_{n+1}\right)\left(u_{-1} \beta_{n+1}\right)^{\top}\left(\beta_{n+1} u_{12}^{\top}\right)\right]\\
&=\operatorname{tr}\left[ U_{11}^{\top} u_{21} u_{21}^{\top}  U_{11}\right] +2 \mathbb{E} \operatorname{tr}\left(u_{12} \beta_{n+1}^{\top} u_{21} u_{21}^{\top} U_{11}\right)+2 u_{-1} \mathbb{E} \operatorname{tr}\left(U_{11}^{\top} \beta_{n+1} u_{21}^{\top} U_{11}\right)\\
&\quad +2 u_{-1} \mathbb{E}_{\beta_{n+1}}\left[\beta_{n+1}^{\top} \beta_{n+1}\right] u_{21}^{\top}  U_{11} u_{12} +2 u_{-1} \mathbb{E}_{\beta_{n+1}} \operatorname{tr}\left[u_{12} \beta_{n+1}^{\top} u_{21} \beta_{n+1}^{\top}  U_{11}\right]\\
&\quad +u_{12}^{\top} u_{12} \mathbb{E}_{\beta_{n+1}} \operatorname{tr}\left[u_{21} u_{21}^{\top} \beta_{n+1} \beta_{n+1}^{\top}\right] +u_{-1}^2 \mathbb{E}_{\beta_{n+1}} \operatorname{tr}\left[\beta_{n+1} \beta_{n+1}^{\top}  U_{11}  U_{11}^{\top}\right]\\
&\quad +2 u_{-1}^2 \mathbb{E}_{\beta_{n+1}} \operatorname{tr}\left[U_{11}^{\top} \beta_{n+1} \beta_{n+1}^{\top} \beta_{n+1} u_{12}^{\top}\right]+ 2 u_{-1} u_{12}^{\top} u_{12} \mathbb{E}_{\beta_{n+1}} \operatorname{tr}\left[ 
\beta_{n+1}^{\top}  u_{21} \beta_{n+1}^{\top}\beta_{n+1}\right]\\
&\quad+u_{-1}^2 u_{12}^{\top} u_{12} \mathbb{E}_{\beta_{n+1}}\left\|\beta_{n+1}\right\|_2^4
\end{align*}
and
\begin{align*}
    (\textsf{II})&=\mathbb{E}_{\beta_{n+1}}\left(u_{21}+u_{-1} \beta_{n+1}\right)^{\top}\left(u_{21}+u_{-1} \beta_{n+1}\right) \operatorname{tr}\left[\left( U_{11}+\beta_{n+1} u_{12}^{\top}\right)^{\top}\left( U_{11}+\beta_{n+1} u_{12}^{\top}\right)\right]\\
    &=\mathbb{E}_{\beta_{n+1}}\left[u_{21}^{\top} u_{21}+2 u_{-1} \beta_{n+1}^{\top} u_{21}+u_{-1}^2 \beta_{n+1}^{\top} \beta_{n+1}\right] \operatorname{tr}\left[ U_{11}^{\top}  U_{11}+2 u_{12} \beta_{n+1}^{\top}  U_{11}+u_{12} \beta_{n+1}^{\top} \beta_{n+1} u_{12}^{\top}\right]\\
    &=u_{21}^{\top} u_{21} \operatorname{tr}\left[ U_{11}^{\top}  U_{11}\right]+2 u_{21}^{\top} u_{21} \mathbb{E} \beta_{n+1}^{\top}  U_{11} u_{12}+2 u_{-1} \mathbb{E} \beta_{n+1}^{\top} u_{21}  \operatorname{tr}\left[ U_{11}^{\top}  U_{11}\right]\\
    &\quad +u_{21}^{\top} u_{21} u_{12}^{\top} u_{12} \mathbb{E} \beta_{n+1}^{\top} \beta_{n+1}+4 u_{-1} \mathbb{E} \beta_{n+1}^{\top} u_{21} \beta_{n+1}^{\top}  U_{11} u_{12}+u_{-1}^2 \operatorname{tr}U_{11}U_{11}^\top \mathbb{E} \beta_{n+1}^{\top} \beta_{n+1}\\
    &\quad +2 u_{-1} u_{12}^{\top} u_{12} \mathbb{E} \beta_{n+1}^{\top} u_{21} \beta_{n+1}^{\top} \beta_{n+1}+2 u_{-1}^2 \mathbb{E} \beta_{n+1}^{\top} \beta_{n+1} \beta_{n+1}^{\top}  U_{11} u_{12}\\
    &\quad +u_{-1}^2 u_{12}^{\top} u_{12} \mathbb{E}\left\|\beta_{n+1}\right\|_2^4.
\end{align*}
For the cross term in $S_1$, we have
\begin{align*}
    C(U_{11},u_{12},u_{21},u_{-1})&=\left\langle I_{d}, \mathbb{E} \boldsymbol{z}_1 \beta_{n+1}^{\top}\right\rangle=\mathbb{E}\left\{\left( u_{21}+u_{-1} \beta_{n+1}\right)^{\top} \cdot \frac{1}{n} X^{\top} X  \left(  U_{11}+\beta_{n+1}  u_{12}^{\top}\right)  \beta_{n+1}\right\}\\
    &=\mathbb{E}\left\{\left( u_{21}+u_{-1} \beta_{n+1}\right)^{\top}  \left(  U_{11}+\beta_{n+1}  u_{12}^{\top}\right)  \beta_{n+1}\right\}\\
    &=\mathbb{E}_{\beta_{n+1}}\left[u_{21}^{\top}  U_{11} \beta_{n+1}\right]+\mathbb{E}_{\beta_{n+1}}\left[u_{21}^{\top} \beta_{n+1} u_{12}^{\top} \beta_{n+1}\right]\\
    &\quad+u_{-1} \mathbb{E}_{\beta_{n+1}}\left[\beta_{n+1}^{\top}  U_{11} \beta_{n+1}\right] +u_{-1} \mathbb{E}_{\beta_{n+1}}\left[\beta_{n+1}^{\top} \beta_{n+1} u_{12}^{\top} \beta_{n+1}\right].
\end{align*}
Finally, 
\begin{align*}
    \left\langle I_d, \mathbb{E} \beta_{n+1}\beta_{n+1}^\top\right\rangle = \mathbb{E}\beta_{n+1}^\top\beta_{n+1}, 
\end{align*}
\begin{align*}
S_2&=\frac{1}{n^2} \mathbb{E}\Bigg[u_{21}^{\top} X^{\top} v u_{12}^{\top} u_{12} v^{\top} X u_{21}+u_{-1}^2 v^{\top} X  U_{11}  U_{11}^{\top} X^{\top} v\\
&\quad\quad\quad\quad +2 u_{-1} \cdot v^{\top} X  U_{11} u_{12} v^{\top} X u_{21}+4 u_{-1}^2 v^{\top} X \beta_{n+1} u_{12}^{\top} u_{12} \beta_{n+1}^{\top} X^{\top} v\\
&\quad\quad\quad\quad +4 u_{-1} v^{\top} X \beta_{n+1} u_{12}^{\top} u_{12} v^{\top} X u_{21}+4 u_{-1} v^{\top} X \beta_{n+1} u_{12}^{\top}  U_{11} X^{\top} v\Bigg]\\
&=\frac{1}{n^2} \mathbb{E}\left[u_{21}^{\top} X^{\top} v u_{12}^{\top} u_{12} v^{\top} X_{u_{21}}\right]+\frac{\vartheta^{2}}{n^2} u_{-1}^2 \mathbb{E} \operatorname{tr}\left(X  U_{11}  U_{11}^{\top} X^{\top}\right)\\
&\quad+\frac{2 u_{-1}}{n^2} \mathbb{E}\left[ v^{\top} X  U_{11} u_{12} v^{\top} X u_{21}\right]+\frac{4 u_{-1}^2 \vartheta^{2}}{n^2} \mathbb{E} \operatorname{tr}\left(X \beta_{n+1} u_{12}^{\top} u_{12} \beta_{n+1}^{\top} X^{\top}\right)\\
&\quad +\frac{4 u-1}{n^2} \mathbb{E}\left[v^{\top} X \beta_{n+1} u_{12}^{\top} u_{12} v^{\top} Xu_{21}\right] +\frac{4 u_{-1}}{n^2} \mathbb{E}\left[v^{\top} X \beta_{n+1} u_{12}^{\top}  U_{11} X^{\top} v\right]\\
&=\frac{\vartheta^{2}}{n}\left[u_{21}^{\top} u_{21} u_{12}^{\top} u_{12}+u_{-1}^2 \operatorname{tr}\left( U_{11}  U_{11}^{\top}\right)+2 u_{-1} u_{12}^{\top}  U_{11}^{\top} u_{21}+4 u_{-1}^2 u_{12}^{\top} u_{12} \operatorname{tr}\left(\mathbb{E} \beta_{n+1} \beta_{n+1}^{\top}\right)\right]\\
&\quad\quad +\frac{4 u_{-1}}{n^2} \mathbb{E}\left[v^{\top} X \beta_{n+1} u_{12}^{\top} u_{12} v^{\top} X u_{21}\right] +\frac{4 u_{-1}}{n^2} \mathbb{E}\left[v^{\top} X \beta_{n+1} u_{12}^{\top}  U_{11} X^{\top} v\right],
\end{align*}
\begin{align*}
S_3=\frac{\sigma^4(n+2)}{n} u_{-1}^2 u_{12}^{\top} u_{12},
\end{align*}
and
\begin{align*}
S_4&=2\left\langle I_d, \mathbb{E}\left(z_1-\beta_{n+1}\right) z_3^\top\right\rangle\\
&=2 \mathbb{E}\left\{\left[\left(u_{21}+u_{-1} \beta_{n+1}\right)^{\top} \frac{1}{n} X^{\top} X\left(u_n+\beta_{n+1} u_{12}^{\top}\right)-\beta_{n+1}^{\top}\right] \frac{1}{n} v^{\top} v u_{-1} u_{12}\right\}\\
&=2 \vartheta^{2} u_{-1} \mathbb{E}\left\{\left[\left(u_{21}+u_{-1} \beta_{n+1}\right)^{\top}\left( U_{11}+\beta_{n+1} u_{12}^{\top}\right)-\beta_{n+1}^{\top}\right] u_{12}\right\}\\
&=2 \vartheta^{2} u_{-1} \mathbb{E}\left\{\left[u_{21}^{\top}  U_{11}+u_{21}^{\top} \beta_{n+1} u_{12}^{\top}+u_{-1}\beta_{n+1}^{\top}  U_{11}+u_{-1} \beta_{n+1}^{\top} \beta_{n+1} u_{12}^{\top}-\beta_{n+1}^{\top}\right] u_{12}\right\}\\
&=2 \vartheta^{2} u_{-1} u_{21}^{\top}  U_{11} u_{12}+2 \vartheta^{2} u_{-1} \mathbb{E} u_{21}^{\top} \beta_{n+1} u_{12}^{\top} u_{12}\\
&\quad\quad+2 \vartheta^{2} u_{-1}^2 \mathbb{E} \beta_{n+1}^{\top}  U_{11} u_{12}+2 \vartheta^{2} u_{-1}^2 u_{12}^{\top} u_{12} \mathbb{E} \beta_{n+1}^{\top} \beta_{n+1}-2 \vartheta^{2} u_{-1}\mathbb{E} \beta_{n+1}^{\top} u_{12}.
\end{align*}

Now we compute the derivative of the risk $\mathcal{R}$ w.r.t. parameter $U_{11},u_{-1}, u_{12}$ and $u_{21}$, where
\begin{align*}
   2 L_{\mathrm{SA}}(E,\theta)-\vartheta^{2}=\frac{n+1}{n}(\textsf{I})+\frac{1}{n}(\textsf{II})-2C(U_{11},u_{12},u_{21},u_{-1})+\mathbb{E}\beta_{n+1}^\top\beta_{n+1}+S_2+S_3+S_4.
\end{align*}
We first calculate the derivative of all components in risk w.r.t. the scalar $u_{-1}$.
\begin{align*}
    \frac{\partial (\textsf{I})}{\partial u_{-1}}&=2 \mathbb{E} \operatorname{tr}\left(U_{11}^{\top} \beta_{n+1} u_{21}^\top U_{11}\right)+2 \mathbb{E}\left[\beta_{n+1}^{\top} \beta_{n+1}\right] u_{21}^{\top} U_{11} u_{12}\\
    &\quad\quad+2 \mathbb{E} \operatorname{tr}\left[u_{12} \beta_{n+1}^{\top} u_{21} \beta_{n+1}^{\top} U_{11}\right]+2 u_{-1} u_{12}^{\top} u_{12} \mathbb{E} \| \beta_{n+1} \|_2^4,\\
    \frac{\partial (\textsf{II})}{\partial u_{-1}}&=2 \mathbb{E} \beta_{n+1}^{\top} u_{21} \operatorname{tr}\left(U_{11}^{\top} U_{11}\right)+4 \mathbb{E} \beta_{n+1}^{\top} u_{21} \beta_{n+1}^{\top} U_{11} u_{12}+2 u_{-1} \operatorname{tr}\left[U_{11} U_{11}^{\top}\right] \mathbb{E} \beta_{n+1}^{\top} \beta_{n+1}\\
    &\quad\quad +2 u_{12}^{\top} u_{12} \mathbb{E} \beta_{n+1}^{\top} u_{22} \beta_{n+1}^{\top} \beta_{n+1}+4 u_{-1} \mathbb{E} \beta_{n+1}^{\top} \beta_{n+1} \beta_{n+1}^{\top} U_{11} u_{12}+2 u_{-1} u_{12}^{\top} u_{12} \mathbb{E}\left\|\beta_{n+1}\right\|_2^2, 
\end{align*}
\begin{align*}
    \frac{\partial C(U_{11},u_{12},u_{21},u_{-1})}{\partial u_{-1}}&=\mathbb{E}\left[\beta_{n+1}^{\top} U_{11} \beta_{n+1}\right]+\mathbb{E}\left[\beta_{n+1}^{\top} \beta_{n+1} u_{12}^{\top} \beta_{n+1}\right],
\end{align*}
and
\begin{align*}
    \frac{\partial S_2}{\partial u_{-1}}&=\frac{4 \vartheta^{2}}{n} u_{-1} \operatorname{tr}\left(U_{11} U_{11}^{\top}\right)+\frac{2 \vartheta^{2}}{n} u_{12}^{\top} U_{11}^{\top} u_{21}+\frac{8 \vartheta^{2}}{n} u_{-1} u_{12}^{\top} u_{12} \operatorname{tr}\left(\mathbb{E} \beta_{n+1} \beta_{n+1}^{\top}\right),\\
    &\quad\quad+\frac{4}{n^2} \mathbb{E}\left[ v^{\top} X \beta_{n+1} u_{12}^{\top} u_{12}  v^{\top} X u_{21}\right]+\frac{4}{n^2} \mathbb{E}\left[ v^{\top} X \beta_{n+1} u_{12}^{\top} U_{11} X^{\top}  v\right],\\
    \frac{\partial S_3}{\partial u_{-1}}&=\frac{2 \sigma^4(n+2)}{n} u_{-1}\left(u_{12}^{\top} u_{12}\right),\\
    \frac{\partial S_4}{\partial u_{-1}}&=2 \vartheta^{2} u_{21}^{\top} U_{11} u_{12}+2 \vartheta^{2} \mathbb{E} u_{21}^{\top} \beta_{n+1} u_{12}^{\top} u_{12}\\
    &\quad\quad +4 \vartheta^{2} u_{-1} \mathbb{E} \beta_{n+1}^{\top} U_{11} u_{12}+4 \vartheta^{2} u_{-1} u_{12}^{\top} u_{12} \mathbb{E} \beta_{n+1}^{\top} \beta_{n+1}-2 \vartheta^{2} \mathbb{E} \beta_{n+1}^{\top} u_{12}.
\end{align*}
Next, we calculate the derivative of all components in risk w.r.t. the matrix parameter $U_{11}$:
\begin{align*}
    \frac{\partial  (\textsf{I})}{\partial U_{11}}&=2 u_{21} u_{21}^{\top} U_{11}+2 \mathbb{E}\left[u_{21} u_{21}^{\top} \beta_{n+1} u_{12}^{\top}\right]+2 u_{-1} \mathbb{E}\left[\beta_{n+1} u_{21}^{\top} U_{11}+u_{21} \beta_{n+1}^{\top} U_{11}\right]\\
    &\quad\quad+2 u_{-1} u_{21} \mathbb{E} \beta_{n+1} \beta_{n+1}^{\top} u_{12}^{\top}+2 u_{-1} \mathbb{E} \operatorname{tr}\left[\beta_{n+1} u_{21}^{\top} \beta_{n+1} u_{12}^{\top}\right]\\
    &\quad\quad +2 u_{-1}^2 \mathbb{E} \beta_{n+1} \beta_{n+1}^{\top} U_{11}+2 u_{-1}^2 \mathbb{E} \beta_{n+1} \beta_{n+1}^{\top} \beta_{n+1} u_{12}^{\top},\\
    \frac{\partial  (\textsf{II})}{\partial U_{11}}&=2 u_{21}^{\top} u_{21} U_{11}+2 u_{21}^{\top} u_{21} \mathbb{E} \beta_{n+1} u_{12}^{\top}+4 u_{-1} \mathbb{E} \beta_{n+1}^{\top} u_{21} U_{11}\\
    &\quad\quad +4 u_{-1} \mathbb{E} \beta_{n+1} u_{21}^{\top} \beta_{n+1} u_{12}^{\top}+2 u_{-1}^2 U_{11} \mathbb{E} \beta_{n+1}^{\top} \beta_{n+1}\\
    &\quad\quad +2 u_{-1}^2 \mathbb{E}\beta_{n+1} \beta_{n+1}^{\top} \beta_{n+1} u_{12}^{\top} ,
\end{align*}
\begin{align*}
     \frac{\partial C(U_{11},u_{12},u_{21},u_{-1})}{\partial U_{11}}&=\mathbb{E} u_{21} \beta_{n+1}^{\top}+u_{-1} \mathbb{E} \beta_{n+1} \beta_{n+1}^\top,
\end{align*}
and
\begin{align*}
    \frac{\partial S_2}{\partial U_{11}}&=\frac{2 \vartheta^{2}}{n} u_{-1}^2 U_{11}+\frac{2 \vartheta^{2}}{n} u_{-1} u_{21} u_{12}^{\top}+\frac{4 u_{-1}}{n^2} \mathbb{E}\left[u_{12} \beta_{n+1}^{\top} X^{\top}  v  v^{\top} X\right],\\
   \frac{\partial S_3}{\partial U_{11}}&=0,\\
   \frac{\partial S_4}{\partial U_{11}}&=2 \vartheta^{2} u_{-1} u_{21} u_{12}^{\top}+2 \vartheta^{2} u_{-1}^2 \mathbb{E} \beta_{n+1} u_{12}^{\top}.
\end{align*}
Then we calculate the derivative of all components in risk w.r.t. the parameter $u_{12}$:
\begin{align*}
    \frac{\partial  (\textsf{I})}{\partial u_{12}}&=2 \mathbb{E} U_{11}^{\top} u_{21} u_{21}^{\top} \beta_{n+1}+2 u_{-1} \mathbb{E} \beta_{n+1}^{\top} \beta_{n+1} U_{11}^{\top} u_{21}\\
    &\quad\quad +2 u_{-1} U_{11}^{\top} \beta_{n+1} u_{21}^{\top} \beta_{n+1}+2 u_{12} \mathbb{E}\operatorname{tr}\left[u_{21} u_{21}^{\top} \beta_{n+1} \beta_{n+1}^{\top}\right]\\
    &\quad\quad+2 u_{-1}^2 \mathbb{E} U_{11}^{\top} \beta_{n+1} \beta_{n+1}^{\top} \beta_{n+1}+4 u_{-1} u_{12} \mathbb{E} \operatorname{tr}\left[\beta_{n+1}^{\top} u_{21} \beta_{n+1}^{\top} \beta_{n+1}\right]+2 u_{-1}^2 u_{12} \mathbb{E}\left\|\beta_{n+1}\right\|_2^4,\\
    \frac{\partial  (\textsf{II})}{\partial u_{12}}&=2 U_{11}^{\top} \mathbb{E} \beta_{n+1} u_{21}^{\top} u_{21}+2 u_{21}^{\top} u_{21} u_{12} \mathbb{E} \beta_{n+1}^{\top} \beta_{n+1}+4 u_{-1} U_{11}^{\top} \mathbb{E} \beta_{n+1} u_{21}^{\top} \beta_{n+1}\\
    &\quad\quad +4 u_{-1} u_{12} \mathbb{E} \beta_{n+1}^{\top} u_{21} \beta_{n+1}^{\top} \beta_{n+1}+2 u_{-1}^2 \mathbb{E} U_{11}^{\top} \beta_{n+1} \beta_{n+1}^{\top} \beta_{n+1}+2 u_{-1}^2 u_{12} \mathbb{E}\left\|\beta_{n+1}\right\|_2^4,
\end{align*}
\begin{align*}
     \frac{\partial C(U_{11},u_{12},u_{21},u_{-1})}{\partial u_{12}}&=\mathbb{E}\left[\beta_{n+1} u_{21}^{\top} \beta_{n+1}\right]+u_{-1} \cdot \mathbb{E}\left[\beta_{n+1} \beta_{n+1}^{\top} \beta_{n+1}\right]
\end{align*}
and
\begin{align*}
\frac{\partial S_2}{\partial u_{12}}&=\frac{2 \vartheta^{2}}{n}\left(u_{21}^{\top} u_{21}\right) u_{12}+\frac{2 \vartheta^{2}}{n} u_{-1} U_{11}^{\top} u_{21}+\frac{8\vartheta^{2}}{n} u_{-1}^2 \operatorname{tr}\left[\mathbb{E} \beta_{n+1} \beta_{n+1}^{\top}\right] u_{12}\\
&\quad\quad +\frac{4 u_{-1}}{n^2} \mathbb{E}\left[u_{12} \beta_{n+1}^{\top} X^{\top}  v u_{21}^{\top} X^{\top}  v+u_{12}  v^{\top} X u_{21}  v^{\top} X \beta_{n+1}\right]+\frac{4 u_{-1}}{n^2} \mathbb{E} U_{11} X^{\top}  v  v^{\top} X \beta_{n+1},\\
\frac{\partial S_3}{\partial u_{12}}&=\frac{2(n+2) \sigma^4}{n} u_{-1}^2 u_{12},\\
\frac{\partial S_4}{\partial u_{12}}&=2 \vartheta^{2} u_{-1} U_{11}^\top u_{21}+4 \vartheta^{2} u_{-1} u_{12} \mathbb{E}\beta_{n+1}^{\top} u_{21}.
\end{align*}
Finally, we calculate the derivative of all components in risk w.r.t. the parameter $u_{21}$:
\begin{align*}
\frac{\partial  (\textsf{I})}{\partial u_{21}}&=2 U_{11} U_{11}^{\top} u_{21}+2 \mathbb{E}\left[U_{11} u_{12} \beta_{n+1}^{\top}+\beta_{n+1} u_{12}^{\top} U_{11}^{\top}\right] u_{21} +2 u_{-1} \mathbb{E} U_{11} U_{11}^{\top} \beta_{n+1}\\
&\quad\quad+2 u_{-1} \mathbb{E} \beta_{n+1} u_{12}^{\top} U_{11} \beta_{n+1}+2\left(u_{12}^{\top} u_{12}\right) \mathbb{E} \beta_{n+1} \beta_{n+1}^{\top} u_{21}+2 u_{-1} u_{12}^{\top} u_{12} \mathbb{E} \beta_{n+1} \beta_{n+1}^{\top} \beta_{n+1},\\
\frac{\partial  (\textsf{II})}{\partial u_{21}}&=2 \operatorname{tr}\left[U_{11} U_{11}^{\top}\right] u_{21}+4\left(\mathbb{E} \beta_{n+1}^{\top} U_{11} u_{12}\right) u_{21}+2u_{-1}\operatorname{tr}\left[U_{11}U_{11}^\top\right]\mathbb{E}\beta_{n+1}\\
&\quad\quad +2u_{12}^{\top}u_{12}\left(\mathbb{E}\beta_{n+1}^{\top}\beta_{n+1}\right)u_{21} +4 u_{-1} \mathbb{E} \beta_{n+1} u_{12}^{\top} U_{11}^{\top} \beta_{n+1}+2 u_{-1} u_{12}^{\top} u_{12} \mathbb{E} \beta_{n+1} \beta_{n+1}^{\top} \beta_{n+1},
\end{align*}
\begin{align*}
    \frac{\partial C(U_{11},u_{12},u_{21},u_{-1})}{\partial u_{21}}&=\mathbb{E} U_{11} \beta_{n+1}+\mathbb{E} \beta_{n+1} u_{12}^{\top} \beta_{n+1},
\end{align*}
and
\begin{align*}
    \frac{\partial S_2}{\partial u_{21}}&=\frac{2 \vartheta^{2}}{n}\left(u_{12}^{\top} u_{12}\right) u_{21}+\frac{2 \vartheta^{2}}{n} u_{-1} U_{11} u_{12}+\frac{4 u_{-1}}{n} \mathbb{E}\left[X^{\top}  v u_{12}^{\top} u_{12} \beta_{n+1}^{\top} X^{\top}  v\right],\\
    \frac{\partial S_3}{\partial u_{21}}&=0,\\
    \frac{\partial S_4}{\partial u_{21}}&=2 \vartheta^{2} u_{-1} U_{11} u_{12}+2 \vartheta^{2} u_{-1} \mathbb{E} \beta_{n+1} u_{12}^{\top} u_{12}.
\end{align*}
When $\mathbb{E}\beta_{n+1}=0$, then by taking the derivative to zero and solving the equations, we have
\begin{align*}
    u_{12}^{*}=u_{21}^{*}=0
\end{align*}
and 
\begin{align*}
\frac{n+1}{n} \cdot 2 u_{-1}^2 \mathbb{E} \beta_{n+1} \beta_{n+1}^{\top} U_{11}+\frac{1}{n} \cdot 2 u_{-1}^2 \mathbb{E} \beta_{n+1}^{\top} \beta_{n+1} U_{11}-2 u_{-1} \mathbb{E} \beta_{n+1} \beta_{n+1}^{\top}+\frac{2 \vartheta^{2}}{n} u_{-1}^2 U_{11}=0,
\end{align*}
which implies 
\begin{align*}
u_{-1}^* U_{11}^*=\left[\frac{n+1}{n} \mathbb{E} \beta_{n+1} \beta_{n+1}^{\top}+\left(\frac{1}{n} \mathbb{E} \beta_{n+1}^{\top} \beta_{n+1}+\frac{\vartheta^{2}}{n}\right) I\right]^{-1} \mathbb{E} \beta_{n+1} \beta_{n+1}^{\top}.    
\end{align*}
\subsubsection{Dynamics of parameters}

\begin{lemma}
Let $E \in \mathbb{R}^{(d+1) \times(n+1)}$ be an embedding matrix corresponding to a prompt of length $n$ and define
\begin{align}
U=\left(\begin{array}{cc}
U_{11} & u_{12} \\
\left(u_{21}\right)^{\top} & u_{-1}
\end{array}\right) \in \mathbb{R}^{(d+1) \times(d+1)}, \quad u=\operatorname{Vec}(U) \in \mathbb{R}^{(d+1)^2}.
\end{align}
When $\mathbb{E}\beta_i = 0$, then starting with $u_{12}=u_{21}=0_d$, the dynamics of parameter $U$ follows 
\begin{align}
    \frac{d U_{11}}{d t}&=-u_{-1}^2 U_{11}\left[\frac{n+1}{n} \mathbb{E} \beta \beta^{\top}+\left(\frac{1}{n} \mathbb{E} \beta^{\top} \beta+\frac{\vartheta^{2}}{n}\right) I\right]+u_{-1} \mathbb{E} \beta \beta^{\top},\\
    \frac{d u_{-1}}{d t}&=-\operatorname{tr}\left[u_{-1} U_{11} U_{11}^{\top}\left(\frac{n+1}{n}\mathbb{E}\beta\beta^\top+\left(\frac{1}{n} \mathbb{E}  \beta^{\top} \beta+\frac{\vartheta^{2}}{n}\right) I\right)-U_{11} \mathbb{E} \beta \beta^{\top}\right]
\end{align}
and $u_{21}(t)=u_{12}(t)=0$ for all $t\geq 0$.
\end{lemma}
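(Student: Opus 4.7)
The plan is to specialize the explicit term-by-term computation of $\nabla_\theta L_{\mathrm{SA}}(E;\theta)$ carried out in Appendix~\ref{Section:appendix_gradient} to the submanifold $\mathcal{M}:=\{(U_{11},u_{-1},u_{12},u_{21}):u_{12}=u_{21}=0\}$ under the standing hypothesis $\mathbb{E}\beta=0$. The argument splits naturally into two parts: \textbf{(a)} verify $\nabla_{u_{12}}L_{\mathrm{SA}}|_{\mathcal{M}}=0=\nabla_{u_{21}}L_{\mathrm{SA}}|_{\mathcal{M}}$, so that $\mathcal{M}$ is flow-invariant and the initialization $u_{12}(0)=u_{21}(0)=0$ forces $u_{12}(t)=u_{21}(t)=0$ for all $t\geq 0$ by uniqueness of solutions to the gradient-flow ODE; \textbf{(b)} substitute $u_{12}=u_{21}=0$ into $\nabla_{U_{11}}L_{\mathrm{SA}}$ and $\nabla_{u_{-1}}L_{\mathrm{SA}}$ and read off the stated ODEs from $\dot\theta=-\nabla L_{\mathrm{SA}}$.

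Part (a) is the main obstacle. I would go summand by summand through the formulas for $\partial(\textsf{I})/\partial u_{12}$, $\partial(\textsf{II})/\partial u_{12}$, $\partial C/\partial u_{12}$, $\partial S_i/\partial u_{12}$ (and the analogous ones for $u_{21}$). After setting $u_{12}=u_{21}=0$, each surviving summand should collapse to zero by one of three mechanisms: an explicit remaining factor of $u_{12}$ or $u_{21}$; linearity in $\mathbb{E}\beta$ (for instance $U_{11}\mathbb{E}\beta$, or the noise-averaged $\tfrac{4u_{-1}}{n^{2}}\mathbb{E}[U_{11}X^{\top}vv^{\top}X\beta]=\tfrac{4u_{-1}\vartheta^{2}}{n}U_{11}\mathbb{E}\beta$, which vanish by $\mathbb{E}\beta=0$); or a third-moment $\mathbb{E}[\beta\beta^{\top}\beta]$ contribution arising simultaneously from $(\textsf{I})$, $(\textsf{II})$, and $C$. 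The last contributions, weighted by the prefactors $\tfrac{n+1}{n},\tfrac{1}{n},-2$ coming from the decomposition of $2L_{\mathrm{SA}}$, must be shown to cancel; this is automatic under the symmetry of the MoR coefficient distribution (in particular, $\mathbb{E}[\beta\beta^{\top}\beta]=0$ whenever the mixture is symmetric about the origin, which is consistent with $\mathbb{E}\beta=0$ in the cases of interest).

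Part (b) is then direct substitution. On $\mathcal{M}$, the surviving contributions to $\nabla_{U_{11}}L_{\mathrm{SA}}$ consist of an $\mathbb{E}\beta\beta^{\top}$-weighted $U_{11}$ factor from the $u_{-1}^{2}$ part of $(\textsf{I})$, an isotropic $\tfrac{1}{n}(\mathbb{E}\|\beta\|_{2}^{2}+\vartheta^{2})U_{11}$ factor from $(\textsf{II})$ and $S_{2}$, and a $-u_{-1}\mathbb{E}\beta\beta^{\top}$ factor from the cross-term $C$. Collecting these inside the bracket $\tfrac{n+1}{n}\mathbb{E}\beta\beta^{\top}+\tfrac{\mathbb{E}\|\beta\|_{2}^{2}+\vartheta^{2}}{n}I$ and applying $\dot U_{11}=-\nabla_{U_{11}}L_{\mathrm{SA}}$ reproduces the first ODE. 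An analogous collection for $\partial L_{\mathrm{SA}}/\partial u_{-1}$ yields the trace of the same bracket against $u_{-1}U_{11}U_{11}^{\top}$, minus $\operatorname{tr}(U_{11}\mathbb{E}\beta\beta^{\top})$, matching the second ODE. Beyond the odd-moment cancellation in part (a), every other step is a mechanical substitution using the formulas already established in Appendix~\ref{Section:appendix_gradient}.
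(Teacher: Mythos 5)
Your proposal follows essentially the same route as the paper: the paper's proof of this lemma consists precisely of observing that, by the term-by-term gradient computation in Appendix~B.2.2, the gradients $\nabla_{u_{12}}L_{\mathrm{SA}}$ and $\nabla_{u_{21}}L_{\mathrm{SA}}$ vanish on the set $\{u_{12}=u_{21}=0\}$ when $\mathbb{E}\beta=0$ (so the set is flow-invariant), and then reading off the two ODEs by substitution. One remark: you are right that the surviving third-moment terms $u_{-1}^2\,U_{11}^{\top}\mathbb{E}[\beta\beta^{\top}\beta]$ from $(\textsf{I})$, $(\textsf{II})$ and $u_{-1}\mathbb{E}[\beta\beta^{\top}\beta]$ from $C$ do not cancel via their prefactors and require $\mathbb{E}[\beta\beta^{\top}\beta]=0$, which does not follow from $\mathbb{E}\beta=0$ alone (e.g., an asymmetric three-component mixture with zero mean); the paper's own proof silently assumes this, and your explicit appeal to symmetry of the mixture is the more honest treatment of that step.
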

\begin{proof}
Based on the calculation in \cref{Section:appendix_gradient}, we see that if we initialize $u_{12}=u_{21}=0$ at $t=0$ and use the identities
\begin{align*}
\frac{d u_{12}}{dt}&=-\nabla_{u_{12}} L_{\mathrm{SA}}(E;\theta),\\
\frac{d u_{21}}{dt}&=-\nabla_{u_{21}} L_{\mathrm{SA}}(E;\theta),
\end{align*}
we can see that $\frac{d u_{12}}{dt}=\frac{d u_{21}}{dt}=0$. The expression of $\frac{dU_{11}}{dt}$ and $\frac{du_{-1}}{dt}$ can be obtained by plugging $u_{12}=u_{21}=0$ and $\mathbb{E}\beta=0$ in the gradient  $\nabla_{U_{11}}L_{\mathrm{SA}}(E,\theta)$ and $\frac{d}{du_{-1}}L_{\mathrm{SA}}(E,\theta)$.
\end{proof}
\begin{rmk}
One can show that with $u_{12}=u_{21}=0_d$,
\begin{align*}
\frac{d U_{11}}{d t} U_{11}^{\top}&=\left[-u_{-1}^2 U_{11}\left[\frac{n+1}{n} \mathbb{E} \beta \beta^{\top}+\left(\frac{1}{n} \mathbb{E} \beta^{\top} \beta+\frac{\vartheta^{2}}{n}\right) I\right]+u_{-1} \mathbb{E} \beta \beta^{\top}\right] U_{11}^{\top},\\
\frac{d u_{-1}}{d t} u_{-1}&=-\operatorname{tr}\left[u_{-1}^2 U_{11} U_{11}^{\top} \left(\frac{n+1}{n}\mathbb{E}\beta\beta^\top+\left(\frac{1}{n} \mathbb{E}  \beta^{\top} \beta+\frac{\vartheta^{2}}{n}\right) I\right)-u_{-1} U_{11} \mathbb{E} \beta \beta^{\top}\right].
\end{align*}
Hence,
\begin{align}
\frac{d}{d t} \operatorname{tr}\left[U_{11} U_{11}^{\top}\right]=\frac{d}{d t} u_{-1}^2.
\end{align}
Besides, the loss function could be simplified as 
\begin{align*}
 \tilde{\ell}(U_{11},u_{-1})=\frac{1}{2}u_{-1}^2\operatorname{tr}\left[\left(\frac{n+1}{n}\mathbb{E}\beta\beta^\top +\left(\frac{1}{n}\mathbb{E}\beta^\top\beta+\frac{\vartheta^{2}}{n}\right)I\right)U_{11}U_{11}^\top\right]-u_{-1}\operatorname{tr}\left[U_{11}\mathbb{E}\beta\beta^\top\right]+\mathbb{E}\beta^\top\beta+\vartheta^{2}
\end{align*} 
and thus,
\begin{align*}
\frac{\mathrm{d}}{\mathrm{~d} t} U_{11}(t)=-\frac{\partial \tilde{\ell}(U_{11},u_{-1})}{\partial U_{11}}, \quad \frac{\mathrm{~d}}{\mathrm{~d} t} u_{-1}(t)=-\frac{\partial \tilde{\ell}(U_{11},u_{-1})}{\partial u_{-1}} .
\end{align*}
\end{rmk}
\begin{lemma}
Under initialization condition \eqref{MoRTF_zeromean_initial_condition}, when the parameter $\gamma$ satisfies the condition
\begin{align}
    \gamma \leq \sqrt{\frac{2 \lambda_{\min }\left(\mathbb{E} \beta \beta^{\top}\right)}{\sqrt{d}\left(\frac{n+d+1}{n} \lambda_{\max }\left(\mathbb{E} \beta \beta^{\top}\right)+\frac{\vartheta^{2}}{n}\right)}}
\end{align}  
we have:
\begin{itemize}
\item For all $t\geq 0$
\begin{align}
    u_{-1}\geq\sqrt{\frac{\gamma^2\| \Theta^{\top} \mathbb{E} \beta \beta^{\top} \|_F^2}{2 \sqrt{d}\left\|\mathbb{E} \beta \beta^{\top}\right\|_{o p}}\left(2-\sqrt{d} \gamma^2\left\|\frac{n+1}{n} \mathbb{E} \beta \beta^{\top}+\left(\frac{1}{n} \mathbb{E} \beta^{\top} \beta+\frac{\vartheta^{2}}{n}\right) I\right\|_{o p}\left\|  \Theta \Theta^{\top}\right\|_F\right)}>0\label{lower_bound_u-1}.
\end{align}
\item $\tilde{\ell}(U_{11}(t),u_{-1}(t))$ satisfies the PL-inequality, i.e. there exist $\mu>0$ (free of $t$) such that 
\begin{equation}
    \left\|\nabla \tilde{\ell}\left(U_{11}(t), u_{-1}(t)\right)\right\|_2^2\geq  \mu\left(\tilde{\ell}\left(U_{11}(t), u_{-1}(t)\right)-\min \tilde{\ell}\left(U_{11}, u_{-1}\right)\right) .
\end{equation}
\end{itemize}
Then gradient flow converges to a global minimum of the population loss. Moreover, $W^{P V}$ and $W^{K Q}$ converge to $W_*^{P V}$ and $W_*^{K Q}$ respectively, where
\begin{align}
w_{22}^{PV*} &=  \left\|\left(\frac{n+1}{n} \mathbb{E} \beta \beta^{\top}+\left(\frac{1}{n} \mathbb{E} \beta^{\top} \beta+\frac{\vartheta^{2}}{n}\right) I\right)^{-1} \mathbb{E} \beta \beta^{\top}\right\|_F^{\frac{1}{2}},\\
W_{11}^{KQ*} &=\left(w_{22}^{PV*}\right)^{-1}\left(\frac{n+1}{n} \mathbb{E} \beta \beta^{\top}+\left(\frac{1}{n} \mathbb{E} \beta^{\top} \beta+\frac{\vartheta^{2}}{n}\right) I\right)^{-1}\mathbb{E} \beta \beta^{\top}
\end{align}
\end{lemma}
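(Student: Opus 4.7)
The plan is to verify the two bulleted claims in turn (positivity/lower bound for $u_{-1}$ and the PL inequality) and then invoke the standard fact that gradient flow on a PL objective contracts exponentially, which will pin down the limit through the first-order conditions. The backbone of the argument is a conservation law that emerges from the special structure of the dynamics.

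\textbf{Step 1: conservation law.} First I would compute $\frac{d}{dt}\bigl[\operatorname{tr}(U_{11}U_{11}^{\top}) - u_{-1}^{2}\bigr]$ directly from the given ODE for $U_{11}$ and $u_{-1}$. The paper already notes the identity $\frac{d}{dt}\operatorname{tr}(U_{11}U_{11}^{\top}) = \frac{d}{dt}u_{-1}^{2}$, so the difference is conserved along the flow. Plugging in the initialization~\eqref{MoRTF_zeromean_initial_condition} and using the trace constraint $\operatorname{tr}(\Theta\Theta^{\top}\mathbb{E}\beta\beta^{\top}\Theta\Theta^{\top}\mathbb{E}\beta\beta^{\top})=1$ gives $\operatorname{tr}(U_{11}(0)U_{11}^{\top}(0)) = \gamma^{2}\cdot 1 = u_{-1}(0)^{2}$, so the balance $\operatorname{tr}(U_{11}(t)U_{11}^{\top}(t)) = u_{-1}(t)^{2}$ holds for all $t\ge 0$.

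\textbf{Step 2: lower bound on $u_{-1}$.} Since $\tilde{\ell}$ is non-increasing along gradient flow, $\tilde{\ell}(U_{11}(t),u_{-1}(t))\le \tilde{\ell}(U_{11}(0),u_{-1}(0))$. I would expand both sides using the explicit form of $\tilde{\ell}$ and, via the conservation law from Step~1, eliminate $\operatorname{tr}(U_{11}U_{11}^{\top})$ in favor of $u_{-1}^{2}$, arriving at a scalar quadratic inequality in $u_{-1}^{2}$. Evaluating the right-hand side at initialization uses $\|U_{11}(0)\|_{\mathrm{op}}\le \gamma\|\mathbb{E}\beta\beta^{\top}\|_{\mathrm{op}}$ and $\operatorname{tr}(U_{11}(0)\mathbb{E}\beta\beta^{\top})$ in terms of $\|\Theta^{\top}\mathbb{E}\beta\beta^{\top}\|_{F}$. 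Solving that quadratic produces the stated bound~\eqref{lower_bound_u-1}; the smallness condition~\eqref{condition_for_gamma} on $\gamma$ is precisely what makes the bracketed factor $2-\sqrt{d}\gamma^{2}\|A\|_{\mathrm{op}}\|\Theta\Theta^{\top}\|_{F}$ positive, so the root is real.

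\textbf{Step 3: PL inequality and convergence.} Here I would exploit the fact that $A:=\frac{n+1}{n}\mathbb{E}\beta\beta^{\top}+(\frac{1}{n}\mathbb{E}\|\beta\|_{2}^{2}+\frac{\vartheta^{2}}{n})I$ commutes with $B:=\mathbb{E}\beta\beta^{\top}$, which lets me rewrite $\tilde{\ell}$ in the squared-factorization form $\tilde{\ell}(U_{11},u_{-1})=\tfrac{1}{2}\|u_{-1}U_{11}A^{1/2}-BA^{-1/2}\|_{F}^{2}+\mathrm{const}$. Setting $M=u_{-1}U_{11}$, the effective objective in $M$ is a quadratic with minimizer $M^{*}=A^{-1}B$. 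Using the balance $\operatorname{tr}(U_{11}U_{11}^{\top})=u_{-1}^{2}$ and the positive lower bound on $u_{-1}$ from Step~2, standard matrix-factorization arguments (as in the balanced overparameterized linear regression literature) yield the PL constant $\mu \gtrsim \lambda_{\min}(A)\cdot u_{-1}^{2}(t)$, uniformly in $t$. Combining $\frac{d}{dt}\tilde{\ell}=-\|\nabla\tilde{\ell}\|^{2}$ with the PL inequality gives $\tilde{\ell}(t)-\min\tilde{\ell}\le e^{-\mu t}(\tilde{\ell}(0)-\min\tilde{\ell})$, hence convergence to a global minimizer. Finally, setting $\nabla_{U_{11}}\tilde{\ell}=0$ and $\nabla_{u_{-1}}\tilde{\ell}=0$ and solving with the balance condition $u_{-1}^{*2}=\operatorname{tr}(U_{11}^{*}U_{11}^{*\top})$ forces $u_{-1}^{*4}=\|A^{-1}B\|_{F}^{2}$, i.e. $u_{-1}^{*}=\|A^{-1}B\|_{F}^{1/2}$, and then $U_{11}^{*}=(u_{-1}^{*})^{-1}A^{-1}B$, matching~\eqref{optimal_sol_u-1}--\eqref{optimal_sol_U_11}.

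The main obstacle I anticipate is Step~3: the loss is biquadratic and therefore does not satisfy a global PL inequality (the origin $U_{11}=0$, $u_{-1}=0$ is a spurious critical point). The whole role of the smallness condition~\eqref{condition_for_gamma} and the quantitative lower bound in Step~2 is to keep the trajectory in a region where $u_{-1}(t)$ is bounded away from zero, which turns the problem into an effectively strongly-convex one in $M=u_{-1}U_{11}$. Making the PL constant explicit as a function of $\gamma$, $\lambda_{\min}(\mathbb{E}\beta\beta^{\top})$, $\lambda_{\max}(\mathbb{E}\beta\beta^{\top})$, $\vartheta^{2}$, and $n$ is the delicate bookkeeping step.
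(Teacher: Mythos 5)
Your proposal is correct and follows essentially the same route as the paper: the balanced initialization giving $\operatorname{tr}(U_{11}U_{11}^{\top})=u_{-1}^{2}$ along the flow, the monotone decrease of $\tilde{\ell}$ combined with its value at initialization to pin $u_{-1}$ away from zero (the paper phrases this as a proof by contradiction after lower-bounding $\tilde{\ell}$ by $-\sqrt{d}\,u_{-1}\|\mathbb{E}\beta\beta^{\top}\|_{\mathrm{op}}\|U_{11}\|_{F}+\mathrm{const}$), the PL inequality with constant proportional to $u_{-1}^{2}$ obtained from the quadratic structure in $u_{-1}U_{11}$ and the commutation of $\Gamma$ with $\mathbb{E}\beta\beta^{\top}$, and the identification of the limit from the product limit together with the balance condition. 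You also correctly flag the one genuinely delicate point, namely that the biquadratic loss has a spurious critical point at the origin and the smallness condition on $\gamma$ exists precisely to keep the trajectory away from it.
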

\begin{proof}
We first characterize the behavior of $\tilde{\ell}(U_{11}(t),u_{-1}(t))$ on the gradient flow. Define
\begin{align}
\Gamma=\frac{n+1}{n} \mathbb{E} \beta \beta^{\top}+\left(\frac{1}{n} \mathbb{E} \beta^{\top} \beta+\frac{\vartheta^{2}}{n}\right) I.
\end{align}
Note that
\begin{align*}
&\frac{1}{2} \operatorname{tr}\left[\Gamma\left(u_{-1} U_{11}-\Gamma^{-1} \mathbb{E} \beta \beta^{\top}\right)\left(u_{-1} U_{11}-\Gamma^{-1} \mathbb{E} \beta \beta^{\top}\right)^{\top}\right]\\
=&\frac{1}{2} \operatorname{tr}\left[\Gamma\left(u_{-1}^2 U_{11} U_{11}^{\top}-2 u_{-1} U_{11} \Gamma^{-1} \mathbb{E} \beta \beta^{\top}+\Gamma^{-1} \mathbb{E} \beta \beta^{\top} \Gamma^{-1} \mathbb{E} \beta \beta^{\top}\right)\right]\\
=&\frac{1}{2} \operatorname{tr}\left[u_{-1}^2 U_{11} U_{11}^{\top} \Gamma-2 u_{-1} U_{11} \mathbb{E} \beta \beta^{\top}+\mathbb{E} \beta \beta^{\top} \Gamma^{-1} \mathbb{E} \beta \beta^{\top}\right]\\
=&\frac{1}{2} u_{-1}^2 \operatorname{tr}\left[U_{11} U_{11}^{\top} P\right]-u_{-1} \operatorname{tr}\left[U_{11} \mathbb{E} \beta \beta^{\top}\right]+\frac{1}{2} \operatorname{tr}\left[\Gamma^{-1}\left(\mathbb{E} \beta \beta^{\top}\right)^2\right].
\end{align*}
Hence,
\begin{align*}
     \tilde{\ell}\left(U_{11}, u_{-1}\right)=\frac{1}{2} \operatorname{tr}\left[\Gamma\left(u_{-1} U_{11}-\Gamma^{-1} \mathbb{E} \beta \beta^{\top}\right)\left(u_{-1} U_{11}-\Gamma^{-1} \mathbb{E} \beta \beta^{\top}\right)\right]-\frac{1}{2} \operatorname{tr}\left[\Gamma^{-1}\left( \mathbb{E} \beta\beta^{\top}\right)^2 \right]+\mathbb{E} \beta^{\top} \beta+\vartheta^{2}.
\end{align*}
Besides, by chain rule, we have
\begin{align*}
    \frac{d \tilde{\ell}}{d t}&=\left\langle\frac{d \tilde{\ell}}{d U_{11}}, \frac{d U_{11}}{d t}\right\rangle+\left\langle\frac{d \tilde{\ell}}{d u_{-1}}, \frac{d u_{-1}}{d t}\right\rangle\\
    &=-\left\langle\frac{d U_{11}}{d t}, \frac{d U_{11}}{d t}\right\rangle-\left\langle\frac{d u_{-1}}{d t}, \frac{d u_{-1}}{d t}\right\rangle \leq 0.
\end{align*}
Also, $\tilde{\ell}$ is a quadratic function w.r.t, $u_{-1}$ and $\tilde{\ell}\left(U_{11}, u_{-1}\right)=\mathbb{E} \beta^{\top} \beta+\vartheta^{2}$ when $u_{-1}=0$. Hence, if we can show that for all $t> 0$, $\tilde{\ell}\leq \mathbb{E}\beta^{\top} \beta+\vartheta^{2}$, we can claim $u_{-1}(t)\geq 0$ by the property of quadratic function. Now under the initialization condition \eqref{MoRTF_zeromean_initial_condition} with $\operatorname{tr}\left(\Theta \Theta^{\top}\left(\mathbb{E} \beta \beta^{\top}\right) \Theta \Theta^{\top}\left(\mathbb{E} \beta \beta^{\top}\right)\right)=1$, at $t=0$ we have
\begin{align*}
    &\tilde{\ell}\left(U_{11}(0), u_{-1}(0)\right)\\
    =&\frac{1}{2} \gamma^4 \operatorname{tr}\left[\Gamma\left(\mathbb{E} \beta \beta^{\top}\right)^{\frac{1}{2}}  \Theta \Theta^{\top}\left(\mathbb{E} \beta \beta^{\top}\right) \Theta \Theta^{\top}\left(\mathbb{E} \beta \beta^{\top}\right)^{\frac{1}{2}}\right]-\gamma^2 \operatorname{tr}\left[\left(\mathbb{E} \beta \beta^{\top}\right)^2 \Theta \Theta^{\top}\right]+\mathbb{E} \beta^{\top} \beta+\vartheta^{2}\\
    =&\frac{1}{2} \gamma^4 \operatorname{tr}\left[\Gamma\left(\mathbb{E} \beta \beta^{\top}\right) \Theta \Theta^{\top}\left(\mathbb{E} \beta \beta^{\top}\right) \Theta \Theta^{\top}\right]-\gamma^2 \operatorname{tr}\left[\left(\mathbb{E} \beta \beta^{\top}\right)^2 \Theta \Theta^{\top}\right]+\mathbb{E} \beta^{\top} \beta+\vartheta^{2}\\
    \leq & \frac{1}{2} \gamma^4 \sqrt{d}\|\Gamma\|_{o p}\left\|\mathbb{E}\beta \beta^{\top}  \Theta \Theta^{\top} \mathbb{E} \beta \beta^{\top}  \Theta \Theta^{\top}\right\|_F-\gamma^2\left\| \Theta^{\top}\left(\mathbb{E} \beta \beta^{\top}\right)^{\frac{1}{2}}\right\|_F^2+\mathbb{E} \beta^{\top} \beta+\vartheta^{2}\\
    =&\frac{1}{2} \gamma^2\left\|\Theta^{\top}\left(\mathbb{E} \beta \beta^{\top}\right)\right\|_F^2\left(\sqrt{d} \gamma^2\|\Gamma\|_{o p}\left\|\Theta \Theta^{\top}\right\|_F-2\right)+\mathbb{E} \beta^{\top} \beta+\vartheta^{2}
\end{align*}
where the second equality follows from the fact that $\mathbb{E}\beta\beta^{\top}$ commute with $\Gamma$. Therefore, as long as $\sqrt{d} \gamma^2\|\Gamma\|_{\text {op }}\left\|\Theta \Theta^{\top}\right\|_F \leq 2$, i.e. $\gamma \leq \sqrt{\frac{2}{\sqrt{d}\|\Gamma\|_{o p}\left\|\Theta \Theta^{\top}\right\|_F}}$, then we will have $u_{-1}(t)\geq 0$. Now we just show that condition \eqref{condition_for_gamma} is a sufficient condition by finding a lower bound for $\frac{1}{\|\Theta\Theta^\top\|_F}$.
\begin{align*}
    1&=\operatorname{tr}\left(\mathbb{E} \beta \beta^{\top} \Theta \Theta^{\top} \mathbb{E} \beta \beta^{\top} \Theta \Theta^{\top}\right)\\
    &\geqslant \operatorname{tr}\left(\Theta  \Theta^{\top} \mathbb{E} \beta \beta^{\top} \Theta\Theta^{\top}\right) \lambda_{\min }\left(\mathbb{E} \beta \beta^{\top}\right)\\
    &\geqslant \operatorname{tr}\left[\left(\Theta \Theta^{\top}\right)^2\right] \lambda_{\min }^2\left(\mathbb{E} \beta \beta^{\top}\right)\\
    &=\|\Theta \Theta^{\top}\|_F^2\lambda_{\min }^2\left(\mathbb{E} \beta \beta^{\top}\right)
\end{align*}
we have $\frac{1}{\left\|\Theta \Theta^{\top}\right\|_F^2} \geqslant \lambda_{\min }^2\left(\mathbb{E} \beta \beta^{\top}\right)$. Besides, $\|\Gamma\|_{op}=\frac{n+1}{n}\lambda_{\max}(\mathbb{E}\beta\beta^\top)+\frac{1}{n}(\vartheta^{2}+\mathbb{E}\beta^\top\beta)$. Hence, under condition \eqref{condition_for_gamma}, we have 
\begin{align*}
    \gamma &\leq \sqrt{\frac{2 \lambda_{\min }\left(\mathbb{E} \beta \beta^{\top}\right)}{\sqrt{d}\left(\frac{n+d+1}{n} \lambda_{\max }\left(\mathbb{E} \beta \beta^{\top}\right)+\frac{\vartheta^{2}}{n}\right)}}\leq \sqrt{\frac{2 \lambda_{\min }\left(\mathbb{E} \beta \beta^{\top}\right)}{\sqrt{d}\left(\frac{n+1}{n} \lambda_{\max }\left(\mathbb{E} \beta \beta^{\top}\right)+\frac{\operatorname{tr}\mathbb{E}\beta\beta^\top}{n}+\frac{\vartheta^{2}}{n}\right)}}\\
    &\leq \sqrt{\frac{2}{\sqrt{d}\|\Gamma\|_{o p}\left\|\Theta \Theta^{\top}\right\|_F}}
\end{align*}
and $\tilde{\ell}<\mathbb{E} \beta^{\top} \beta+\vartheta^{2}$. Next, we prove \eqref{lower_bound_u-1} by contradiction. Note that we can lower bound $\tilde{\ell}(U_{11},u_{-1})$ as
\begin{align*}
    \tilde{\ell}\left(U_{11}, u_{-1}\right)&=\operatorname{tr}\left[\frac{1}{2} u_{-1}^2 \Gamma U_{11} U_{11}^{\top}-u_{-1} \mathbb{E} \beta \beta^{\top} U_{11}^{\top}\right]+\mathbb{E} \beta^{\top} \beta+\vartheta^{2}\\
    &\geqslant-\operatorname{tr}\left[u_{-1} \mathbb{E} \beta \beta^{\top} U_{11}^{\top}\right]+\mathbb{E} \beta^{\top} \beta+\vartheta^{2}\\
    &\geqslant-\sqrt{d} u_{-1}\left\|\mathbb{E} \beta \beta^{\top}\right\|_{ op}\left\|U_{11}\right\|_F+\mathbb{E}^{\top} \beta+\vartheta^{2}
\end{align*}
where the first inequality follows from that $\Gamma$ and $U_{11}U_{11}^\top$ are positive definite. Besides, at time $t=0$, we have
\begin{align*}
    \tilde{\ell}\left(U_{11}(0), u_{-1}(0)\right) &\leq -\frac{1}{2} \gamma^2 \| \Theta^{\top} \mathbb{E} \beta \beta^{\top} \|_F^2 \left(2-\sqrt{d} \gamma^2\|\Gamma\|_{o p}\left\|\Theta \Theta^{\top}\right\|_F\right)+\mathbb{E} \beta^{\top} \beta+\vartheta^{2}.
\end{align*}
Suppose that the opposite of \eqref{lower_bound_u-1} holds, i.e.
\begin{align*}
    u_{-1}<\sqrt{\frac{\gamma^2\| \Theta^{\top} \mathbb{E} \beta \beta^{\top} \|_F^2}{2 \sqrt{d}\left\|\mathbb{E} \beta \beta^{\top}\right\|_{o p}}\left(2-\sqrt{d} \gamma^2\|\Gamma\|_{o p}\left\|  \Theta \Theta^{\top}\right\|_F\right)},
\end{align*}
then we will have
\begin{align*}
    -\sqrt{d} u_{-1}^2\left\|\mathbb{E} \beta\beta^{\top}\right\|_{o p}>-\frac{1}{2} \gamma^2\| \Theta^{\top} \mathbb{E} \beta \beta^{\top} \|_F^2 \left(2-\sqrt{d} \gamma^2\|\Gamma\|_{o p}\left\|\Theta \Theta^{\top}\right\|_F\right)
\end{align*}
and thus,
\begin{align*}
    \tilde{\ell}\left(U_{11}, u_{-1}\right)>\tilde{\ell}\left(U_{11}(0), u_{-1}(0)\right).
\end{align*}
However, we also have $d\tilde{\ell}/dt\leq 0$. Therefore, we must have
\begin{align*}
    u_{-1}\geq\sqrt{\frac{\gamma^2\| \Theta^{\top} \mathbb{E} \beta \beta^{\top} \|_F^2}{2 \sqrt{d}\left\|\mathbb{E} \beta \beta^{\top}\right\|_{o p}}\left(2-\sqrt{d} \gamma^2\|\Gamma\|_{o p}\left\|  \Theta \Theta^{\top}\right\|_F\right)}>0.
\end{align*}
Now we prove PL-inequality. Note that
\begin{align*}
    \tilde{\ell}\left(U_{11}, u_{-1}\right)-\min \tilde{\ell}\left(U_{11}, u_{-1}\right)&=\frac{1}{2} \operatorname{tr}\left[\Gamma\left(u_{-1} U_{11}-\Gamma^{-1} \mathbb{E} \beta^{\top}\right)\left(u_{-1} U_{11}-\Gamma^{-1} \mathbb{E} \beta \beta^{\top}\right)\right]\\
    &=\frac{1}{2}\left\|\Gamma^{\frac{1}{2}}\left(u_{-1} U_{11}-\Gamma^{-1} \mathbb{E} \beta \beta^{\top}\right)\right\|_F^2.
\end{align*}
Besides
\begin{align*}
    \frac{\partial \tilde{\ell}}{\partial U_{11}}&=-\frac{d U_{11}}{d t}=u_{-1}^2 U_{1 1} \Gamma-u_{-1} \mathbb{E}\beta \beta^\top,\\
    \left\|\nabla \tilde{\ell}\left(U_{11}(t), u_{-1}(t)\right)\right\|_2^2&=\left\|\frac{\partial \tilde{\ell}}{\partial U_{11}}\right\|_F^2+\left|\frac{\partial \tilde{\ell}}{\partial u_{-1}}\right|^2\\
    &\geqslant\left\|u_{-1}^2 U_{11} \Gamma-u_{-1} \mathbb{E} \beta \beta^{\top}\right\|_F^2\\
    &=u_{-1}^2\left\| u_{-1} U_{11} \Gamma-\mathbb{E} \beta \beta^{\top}\right\|_F^2.
\end{align*}
Since $\mathbb{E}\beta\beta^\top$ commutes with $\Gamma$, we have
\begin{align*}
    u_{-1} U_{11} \Gamma-\mathbb{E} \beta \beta^{\top}&=u_{-1} U_{11} \Gamma-\Gamma^{-1} \Gamma \mathbb{E} \beta \beta^{\top}\\
    &=u_{-1} U_{11} \Gamma-\Gamma^{-1} \mathbb{E} \beta \beta^{\top} \Gamma\\
    &=\left(u_{-1} U_{11}-\Gamma^{-1} \mathbb{E}\beta \beta^{\top}\right) \Gamma.
\end{align*}
Therefore, it holds that
\begin{align*}
    \Gamma^{\frac{1}{2}}\left(u_{-1} U_{11}-\Gamma^{-1} \mathbb{E} \beta \beta^{\top}\right)=\Gamma^{\frac{1}{2}}\left(u_{-1} U_{11} \Gamma-\mathbb{E} \beta \beta^{\top}\right) \Gamma^{-1}
\end{align*}
and
\begin{align*}
    \tilde{\ell}\left(U_{11}, u_{-1}\right)-\min \tilde{\ell}\left(U_{11}, u_{-1}\right) &\leq \frac{1}{2}\left\|\Gamma^{\frac{1}{2}}\right\|_F^2\left\| u_{-1} U_{11} \Gamma-\mathbb{E} \beta \beta^{\top}\right\|_F^2\left\|\Gamma^{-1}\right\|_F^2.
\end{align*}
Now if we set $\mu$ such that 
\begin{align*}
u_{-1}^2\left\|u_{-1} U_{11} \Gamma-\mathbb{E} \beta \beta^{\top}\right\|_F^2 \geq \frac{1}{2} \mu\left\|\Gamma^{\frac{1}{2}}\right\|_F^2\left\|u_{-1} U_{11} \Gamma-\mathbb{E} \beta \beta^{\top}\right\|_F^2\left\|\Gamma^{-1}\right\|_F^2,  
\end{align*}
then PL-inequality holds, i.e.
\begin{align*}
    \left\|\nabla \tilde{\ell}\left(U_{11}(t), u_{-1}(t)\right)\right\|_2^2 &\geq u_{-1}^2\left\|u_{-1} U_{11} \Gamma-\mathbb{E} \beta \beta^{\top}\right\|_F^2 \\
    &\geq \frac{1}{2} \mu\left\|\Gamma^{\frac{1}{2}}\right\|_F^2\left\|u_{-1} U_{11} \Gamma-\mathbb{E} \beta \beta^{\top}\right\|_F^2\left\|\Gamma^{-1}\right\|_F^2 \\
    &\geq \mu \left[ \tilde{\ell}\left(U_{11}, u_{-1}\right)-\min \tilde{\ell}\left(U_{11}, u_{-1}\right)\right].
\end{align*}
In particular, using lower bound of $u_{-1}$, we can set
\begin{align}
    \mu=\frac{\gamma^2\| \Theta^{\top} \mathbb{E} \beta \beta^{\top} \|_F^2}{\sqrt{d}\left\|\Gamma^{\frac{1}{2}}\right\|_F^2\left\|\Gamma^{-1}\right\|_F^2\left\|\mathbb{E} \beta \beta^T\right\|_{op}}\left(2-\sqrt{d} \gamma^2\|\Gamma\|_{o p}\left\|  \Theta \Theta^{\top}\right\|_F\right)>0.
\end{align}
Now, by the dynamics of gradient flow,
\begin{align*}
    \frac{d}{d t}\left[\tilde{\ell}\left(U_{11}(t), u_{-1}(t)\right)\right]=-\left\|\frac{d U_{11}}{d t}\right\|_F^2-\left|\frac{d u_{-1}}{d t}\right|^2 \leqslant-\mu\left[\tilde{\ell}\left(U_{11}(t), u_{-1}(t)\right)-\min \tilde{\ell}\left(U_{11}, u_{-1}\right)\right].
\end{align*}
Hence,
\begin{align*}
    0 \leqslant \tilde{\ell}\left(U_{11}(t), u_{-1}(t)\right)-\min \tilde{\ell}\left(U_{11}, u_{-1}\right) \leq \exp (-\mu t)\left[\tilde{\ell}\left(U_{11}(0), u_{-1}(0)\right)-\min \tilde{\ell}\left(U_{11}, u_{-1}\right)\right] \rightarrow 0
\end{align*}
i.e.
\begin{align*}
    \lim _{t \rightarrow \infty}\left[\tilde{\ell}\left(U_{11}(t), u_{-1}(t)\right)-\min \tilde{\ell}\left(U_{11}, u_{-1}\right)\right]=\lim _{t \rightarrow \infty}\left\|\Gamma^{\frac{1}{2}}\left(u_{-1} U_{11}-\Gamma^{-1} \mathbb{E}\beta \beta^{\top}\right)\right\|_F^2 \rightarrow 0.
\end{align*}
Hence, we have $u_{-1} U_{11} \rightarrow \Gamma^{-1} \mathbb{E} \beta \beta^{\top}$. Besides, $u_{-1}^2=\operatorname{tr}\left(U_{11} U_{11}^{\top}\right)=\left\|U_{11}\right\|_F^2$, we have as $t\rightarrow \infty$
\begin{align*}
    u_{-1}&\rightarrow  \left\|\Gamma^{-1} \mathbb{E} \beta \beta^{\top}\right\|_F^{\frac{1}{2}},\\
    U_{11}&\rightarrow  \left\|\Gamma^{-1} \mathbb{E} \beta \beta^{\top}\right\|_F^{-\frac{1}{2}}\Gamma^{-1} \mathbb{E} \beta \beta^{\top}.
\end{align*}
\end{proof}
\section{Auxiliary Results}\label{Section:appendix:auxiliary_results}
\begin{prop}[Proposition C.2 in \cite{bai2023transformers}]\label{bcw+23propc.2}
Let $\ell(\cdot, \cdot): \mathbb{R}^2 \rightarrow \mathbb{R}$ be $a$ loss function such that $\partial_1 \ell$ is $(\varepsilon, R, M, C)$-approximable by sum of relus with $R=\max \big\{B_x B_w, B_y, 1\big\}$. Let $\widehat{L}_n(\beta):=\frac{1}{n} \sum_{i=1}^n \ell\big(\beta^{\top} x_i, y_i\big)$ denote the empirical risk with loss function $\ell$ on dataset $\big\{(x_i, y_i)\big\}_{i \in[n]}$. Then, for any $\varepsilon>0$, there exists an attention layer $\big\{\big(Q_m, K_m, V_m\big)\big\}_{m \in[M]}$ with $M$ heads such that, for any input sequence that takes form $h_i=\big[x_i ; y_i^{\prime} ; \beta ; 0_{D-2 d-3} ; 1 ; t_i\big]$ with $\|\beta\|_2 \leq B_w$, it gives output 
\begin{align*}
\widetilde{h}_i=\big[\operatorname{Attn}_{\boldsymbol{\theta}}(H)\big]_i=\big[x_i ; y_i^{\prime} ; \widetilde{\beta} ; 0_{D-2 d-3} ; 1 ; t_i\big]    
\end{align*}
for all $i \in[N+1]$, where
\begin{align*}
\big\|\widetilde{\beta}-\big(\beta-\eta \nabla \widehat{L}_n(\beta)\big)\big\|_2 \leq \varepsilon \cdot\big(\eta B_x\big) .   
\end{align*} 
\end{prop}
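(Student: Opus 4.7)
The plan is to construct the attention layer head-by-head, with each of the $M$ heads implementing the contribution from one ReLU atom in the sum-of-ReLUs approximation of $\partial_1\ell$. Since one GD step reads
\begin{equation*}
\beta - \eta\nabla\widehat L_n(\beta) \;=\; \beta \;-\; \frac{\eta}{n}\sum_{i=1}^n \partial_1\ell(\beta^\top x_i, y_i)\, x_i,
\end{equation*}
and the approximability hypothesis gives $\partial_1\ell(s,t)\approx \sum_{m=1}^M c_m\,\sigma(a_m s + b_m t + d_m)$ uniformly on $[-R,R]^2$, the task reduces to choosing $(Q_m,K_m,V_m)$ so that head $m$ deposits $-\frac{\eta c_m}{n}\sum_{j\le n}\sigma(a_m\beta^\top x_j + b_m y_j + d_m)\, x_j$ into the $\beta$-slot of every output token via the residual-plus-attention update.

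I would set
\begin{align*}
Q_m h_i &:= [\, a_m\beta;\; b_m;\; d_m;\; -2;\; 0;\; 0\,], \\
K_m h_j &:= [\, x_j;\; y_j';\; 1;\; R(1-t_j);\; 0;\; 0\,], \\
V_m h_j &:= -\tfrac{(n+1)\eta c_m}{n}\,[\, \mathbf 0_d;\; 0;\; x_j;\; \mathbf 0_{D-2d-1};\; 0\,].
\end{align*}
Then $\langle Q_m h_i, K_m h_j\rangle = a_m\beta^\top x_j + b_m y_j' + d_m - 2R(1-t_j)$. For training tokens ($t_j=1$, $y_j'=y_j$) this is exactly the intended ReLU argument; for the query token ($t_j=0$) the score is driven to $\le -R$, so $\sigma$ returns $0$ and the query is correctly excluded from the aggregation. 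The value $V_m h_j$ is supported only on the $\beta$-coordinates, so all other slots of $h_i$ (the $x_i, y_i', t_i$, the constant $1$, and the padded zeros) are preserved by the residual update. The prefactor $(n+1)/n$ compensates for the $1/N=1/(n+1)$ in the attention formula, yielding the desired $1/n$ empirical average.

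Summing the $M$ heads plus the residual gives, at the $\beta$-slot,
\begin{equation*}
\tilde\beta \;=\; \beta \;-\; \frac{\eta}{n}\sum_{i=1}^n\sum_{m=1}^M c_m\,\sigma\!\big(a_m\beta^\top x_i + b_m y_i + d_m\big)\, x_i,
\end{equation*}
which differs from $\beta - \eta\nabla\widehat L_n(\beta)$ only through replacing each $\partial_1\ell(\beta^\top x_i, y_i)$ by its ReLU approximation. Because $|\beta^\top x_i|\le B_w B_x \le R$ and $|y_i|\le B_y\le R$, the approximation is valid pointwise, giving
\begin{equation*}
\big\|\tilde\beta - (\beta - \eta\nabla\widehat L_n(\beta))\big\|_2 \;\le\; \frac{\eta}{n}\sum_{i=1}^n \varepsilon\,\|x_i\|_2 \;\le\; \varepsilon\cdot\eta B_x,
\end{equation*}
as required.

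The main obstacle is the query-masking: the term $-2R(1-t_j)$ in the attention score must dominate the potentially large contribution $a_m\beta^\top x_{n+1} + b_m y_{n+1}' + d_m$ so that $\sigma$ vanishes on the query token. This is where the specific choice $R=\max\{B_w B_x, B_y, 1\}$ matters, together with the constraint $\max_m(|a_m|+|b_m|+|d_m|)\le 1$ from the sum-of-ReLUs definition, which caps the feature-side magnitude by $R$ and lets the $-2R$ mask win outright. A secondary bookkeeping point is the interaction between the $1/N$ normalization of $\operatorname{Attn}_{\boldsymbol\theta}$ and the $(n+1)/n$ rescaling in $V_m$; this is purely mechanical once the attention convention is fixed.
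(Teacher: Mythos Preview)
Your construction is correct and is exactly the one the paper uses (inherited from \cite{bai2023transformers}): the same $Q_m,K_m,V_m$ appear verbatim in the proof of \cref{Lemma_theoretical_construction_M_step}, the $-2R(1-t_j)$ masking and the $(n+1)/n$ rescaling are handled identically, and the error bound follows by the same pointwise-approximation argument. There is nothing to add.
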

\begin{prop}[Proposition 1 in \cite{pathak2023transformers}]\label{PSKD23prop1}
Given any input matrix $H \in \mathbb{R}^{p \times q}$ that output a matrix $H^{\prime} \in \mathbb{R}^{p \times q}$, following operators can be implemented by a single layer of an autoregressive transformer:
\begin{itemize}
\item copy\_down$\big(H ; k, k^{\prime}, \ell, \mathcal{I}\big):$ For columns with index $i \in \mathcal{I}$, outputs $H^{\prime}$ where 
$$H_{k^{\prime}: \ell^{\prime}, i}^{\prime}=H_{k: \ell, i}$$ 
and the remaining entries are unchanged. Here, $\ell^{\prime}=k^{\prime}+(\ell-k)$ and $k^{\prime} \geqslant k$, so that entries are copied "down" within columns $i \in \mathcal{I}$. Note, we assume $\ell \geqslant k$ and that $k^{\prime} \leqslant q$ so that the operator is well-defined.
\item copy\_over$\big(H ; k, k^{\prime}, \ell, \mathcal{I}\big):$ For columns with index $i \in \mathcal{I}$, outputs $H^{\prime}$ with 
$$H_{k^{\prime}: \ell^{\prime}, i}^{\prime}=H_{k: \ell, i-1}.$$ 
The remaining entries stay the same. Here entries from column $i-1$ are copied "over" to column $i$.
\item $\operatorname{mul}\big(H ; k, k^{\prime}, k^{\prime \prime}, \ell, \mathcal{I}\big):$ For columns with index $i \in \mathcal{I}$, outputs $H^{\prime}$ where
$$H_{k^{\prime \prime}+t, i}^{\prime}=H_{k+t, i} H_{k^{\prime}+t, i}, \quad \text { for } t \in\{0, \ldots, \ell-k\}.$$
Note that $\ell^{\prime \prime}=k^{\prime \prime}+\delta^{\prime \prime}$ where $W \in \mathbb{R}^{\delta^{\prime \prime} \times \delta}, W^{\prime} \in \mathbb{R}^{\delta^{\prime \prime} \times \delta^{\prime}}$ and $\ell=k+\delta, \ell^{\prime}=k^{\prime}+\delta^{\prime}$. We assume $\delta, \delta^{\prime}, \delta^{\prime \prime} \geqslant 0$. The remaining entries of $H$ are copied over to $H^{\prime}$, unchanged.
\item scaled\_agg$\big(H ; \alpha, k, \ell, k^{\prime}, i, \mathcal{I}\big):$ Outputs a matrix $H^{\prime}$ with entries
$$H_{k^{\prime}+t, i}=\alpha \sum_{j \in \mathcal{I}} H_{k+t, j} \quad \text { for } t \in\{0,1, \ldots, \ell-k\}.$$
The set $\mathcal{I}$ is causal, so that $\mathcal{I} \subset[i-1]$. The remaining entries of $H$ are copied over to $H^{\prime}$, unchanged.
\item $\operatorname{soft}\big(H ; k, \ell, k^{\prime}\big):$ For the final column $q$, outputs a matrix $H^{\prime}$ with entries 
$$H_{k^{\prime}+t, q}^{\prime}=\frac{\mathrm{e}^{H_{k+t, q}}}{\sum_{t^{\prime}=0}^{\ell-k} \mathrm{e}^{H_{k+t^{\prime}, q}}}, \quad \text { for } t \in\{0,1, \ldots, \ell-k\}.$$
The remaining entries of $H$ are copied over to $H^{\prime}$, unchanged.
\end{itemize}
\end{prop}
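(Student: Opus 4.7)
The plan is to prove each of the five operators separately by exhibiting an explicit construction of the attention parameters $\{Q_m, K_m, V_m\}_{m \in [M]}$ (with small $M$, typically $1$ to $4$), together with an appropriate choice of the activation function $\sigma$. Throughout, I exploit the fact that the input embedding carries positional markers (the constant row of $1$'s and the indicator row $t_i$) that encode column identity. With these flags available, I can build ``hard'' attention patterns by letting $\sigma$ be a normalized/shifted ReLU and by letting $Q_m, K_m$ project onto the positional coordinates with a large multiplier, so that column $i$ attends essentially only to any prescribed column (itself, column $i-1$, or a causal subset $\mathcal{I}$) with effective weight $1$ on the selected columns and $0$ elsewhere.

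For \texttt{copy\_down} and \texttt{copy\_over}, the construction is purely affine. I choose $Q_m, K_m$ so that the attention pattern is concentrated at the desired source column ($i$ itself for \texttt{copy\_down}, $i-1$ for \texttt{copy\_over}, gated by membership in $\mathcal{I}$). The value matrix $V_m$ is then a sparse selection matrix that picks out rows $k{:}\ell$ of the attended column and writes them into rows $k'{:}\ell'$, with zero entries elsewhere so that the residual connection $H + (\text{attention output})$ leaves the remaining rows unchanged. For \texttt{scaled\_agg}, the only modification is that the attention pattern is a uniform indicator over the causal set $\mathcal{I}$ rather than a single column, and the scalar $\alpha$ is absorbed into $V_m$ together with the $1/N$ normalization factor built into the attention definition.

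The \texttt{mul} operator is the main technical step. Since attention weights are scalars but the output must be a vector of entrywise products indexed by $t \in \{0,\ldots,\ell-k\}$, I would allocate one head per offset $t$. For the $t$-th head, $Q_m h_i$ places $H_{k+t,i}$ in a designated coordinate, $K_m h_j$ carries the positional flag selecting $j = i$, so the attention score from column $i$ to itself equals $H_{k+t,i}$; taking $V_m h_j$ to equal $H_{k'+t,j}$ in row $k''+t$ yields $H_{k+t,i}\cdot H_{k'+t,i}$ at the correct output position. When $\sigma$ is ReLU one must handle signed products; this can be done by splitting into four heads per offset via the identity $ab = \sigma(a)\sigma(b) - \sigma(-a)\sigma(b) - \sigma(a)\sigma(-b) + \sigma(-a)\sigma(-b)$, or by invoking the normalized-ReLU convention described earlier in the excerpt, under which the activation acts linearly on the operational range. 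Finally, the \texttt{soft} operator is immediate: with $\sigma$ taken to be softmax, a single head whose query selects the final column index $q$ and whose key passes through rows $k{:}\ell$ produces exactly the normalized exponentials, which $V_m$ then routes to rows $k'{:}\ell'$.

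The main obstacle is the \texttt{mul} case, and specifically the signed-product issue: representing a general scalar product with a piecewise-linear activation forces either the use of several heads to separate sign quadrants or a careful linearization argument on the relevant input range. Once that is resolved, the rest of the proof reduces to routine bookkeeping: verifying that the residual connection preserves the untouched rows, that the $1/N$ normalization is cancelled by an appropriate factor in $V_m$, and that the causal mask (attention restricted to $\mathcal{I} \subset [i-1]$) is implemented by the positional gating in $K_m$.
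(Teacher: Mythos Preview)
The paper does not actually prove this proposition: it is stated in the auxiliary-results appendix purely as a citation of Proposition~1 from \cite{pathak2023transformers}, with no accompanying argument. So there is no ``paper's own proof'' to compare against; the authors simply import the result as a black box and use it inside the E-step construction (\cref{Lemma_theoretical_construction_E_step}).

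On the merits of your sketch: the copy, aggregation, and softmax cases are indeed routine once one has hard attention patterns, and your plan for those is fine. The \texttt{mul} case is where your outline is genuinely incomplete, and in a way that matters. You write that $K_m h_j$ ``carries the positional flag selecting $j=i$'' so that the attention score from column $i$ to itself equals $H_{k+t,i}$. But $K_m$ is a fixed linear map that cannot depend on $i$, and the only positional information in this paper's embedding format is the constant $1$ and the train/test indicator $t_i$; neither encodes column identity. With only those markers you cannot isolate $j=i$ inside the attention sum, so your product ends up summed over all $j$ rather than evaluated at $j=i$. The usual fixes are either (i) full positional encodings (which the original \cite{pathak2023transformers} autoregressive architecture presumably carries, but which are not part of the attention-only setup defined in \cref{definition_of_attention_layer} here), or (ii) realizing the entrywise product via a column-wise MLP using $2ab=(a+b)^2-a^2-b^2$, which again requires a feedforward sublayer that this paper's ``attention only'' transformer does not include. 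Your four-ReLU identity $ab=\sigma(a)\sigma(b)-\sigma(-a)\sigma(b)-\sigma(a)\sigma(-b)+\sigma(-a)\sigma(-b)$ is correct algebra but does not help here, since the activation in \eqref{equ1_def_attention_layer} is applied to the single scalar $\langle Q_m h_i, K_m h_j\rangle$, not separately to the two factors. In short, your plan tacitly assumes architectural features (column-identifying positional flags, or an MLP sublayer) that belong to the \cite{pathak2023transformers} setting rather than the one defined in this paper; that is exactly why the present authors cite the result rather than reprove it under their own conventions.
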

\begin{lemma}[Lemma D.2 in \cite{zhang2023trained}]\label{zfb23lmd.2}
If x is a Gaussian random vector of $d$ dimension, mean zero and covariance matrix $ I_{d}$, and $\boldsymbol{A} \in \mathbb{R}^{d \times d}$ is a fixed matrix. Then
$$\mathbb{E}\left[\mathbf{x} \mathbf{x}^{\top} \boldsymbol{A} \mathbf{x} \mathbf{x}^{\top}\right]=\left(\boldsymbol{A}+\boldsymbol{A}^{\top}\right)+\operatorname{tr}(\boldsymbol{A})  I_{d}.$$
If $\boldsymbol{A}$ is symmetric and the rows in $\mathbf{X} \in \mathbb{R}^{M \times d}$ are generated independently from $X_i\sim \mathcal{N}(0,  I_{d})$,
then it holds that
$$\mathbb{E}\left[\mathbf{X}^{\top} \mathbf{X} \boldsymbol{A} \mathbf{X}^{\top} \mathbf{X}\right]=M \cdot \operatorname{tr}( \boldsymbol{A}) \cdot  I_{d}+M(M+1) \boldsymbol{A}.$$
\end{lemma}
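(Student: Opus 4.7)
The strategy is entry-by-entry verification using the Gaussian fourth-moment (Isserlis/Wick) identity, together with independence across rows for the matrix version. Recall that for $\mathbf{x}\sim\mathcal{N}(0,I_d)$ and any indices $i,j,k,l\in[d]$,
\begin{align*}
\mathbb{E}[x_i x_j x_k x_l]=\delta_{ij}\delta_{kl}+\delta_{ik}\delta_{jl}+\delta_{il}\delta_{jk}.
\end{align*}
The plan is to first prove the single-vector identity by reading off the $(i,j)$-entry, and then deduce the matrix identity by decomposing the product into a diagonal and an off-diagonal sum and applying independence to the off-diagonal piece.

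For the single-vector identity, I compute the $(i,j)$-entry of the left-hand side as
\begin{align*}
\bigl(\mathbb{E}[\mathbf{x}\mathbf{x}^{\top}A\mathbf{x}\mathbf{x}^{\top}]\bigr)_{ij}=\sum_{k,l=1}^{d}A_{kl}\,\mathbb{E}[x_i x_j x_k x_l].
\end{align*}
Substituting the Isserlis expansion and evaluating the three resulting index contractions separately yields $\delta_{ij}\operatorname{tr}(A)+A_{ij}+A_{ji}$, which is exactly the $(i,j)$-entry of $\operatorname{tr}(A)I_d+A+A^{\top}$.

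For the matrix identity, I write $\mathbf{X}^{\top}\mathbf{X}=\sum_{m=1}^{M}X_m X_m^{\top}$, where $X_m\in\mathbb{R}^{d}$ denotes the $m$-th row of $\mathbf{X}$. Expanding the product and splitting according to whether the two row indices agree,
\begin{align*}
\mathbb{E}[\mathbf{X}^{\top}\mathbf{X}\,A\,\mathbf{X}^{\top}\mathbf{X}]=\sum_{m=1}^{M}\mathbb{E}[X_m X_m^{\top}A X_m X_m^{\top}]+\sum_{m\neq n}\mathbb{E}[X_m X_m^{\top}]\,A\,\mathbb{E}[X_n X_n^{\top}],
\end{align*}
where the cross terms factorize because distinct rows are independent. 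Applying the first identity to each of the $M$ diagonal summands (and using the symmetry hypothesis $A^{\top}=A$) gives $2A+\operatorname{tr}(A)I_d$ per term, while each of the $M(M-1)$ cross terms simplifies to $A$ since $\mathbb{E}[X_m X_m^{\top}]=I_d$. Adding these yields $M(2A+\operatorname{tr}(A)I_d)+M(M-1)A=M(M+1)A+M\operatorname{tr}(A)I_d$, as claimed.

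Neither identity presents any real obstacle; both proofs are routine once Isserlis' theorem and the independence of rows are invoked. The only care needed is the bookkeeping of the three Wick pairings in the single-vector calculation and the factor of two arising from the symmetric part $A+A^{\top}=2A$ in each diagonal contribution to the matrix identity.
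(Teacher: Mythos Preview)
Your proof is correct. The paper does not supply its own proof of this lemma; it is stated as an auxiliary result cited from \cite{zhang2023trained}, so there is no in-paper argument to compare against. Your Isserlis-based entrywise computation for the single-vector identity and the diagonal/off-diagonal decomposition for the matrix identity constitute the standard and complete argument.
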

\section{Additional Details in Simulation}\label{Section:appendix_simulation}
\begin{figure}[t]
\vskip 0.2in 
\begin{center}
\centerline{\includegraphics[scale=0.5]{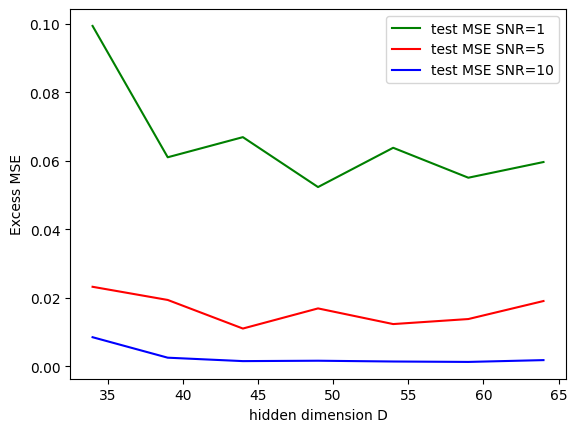}}
\caption{Plot of excess testing risk of the transformer v.s. the hidden dimension D with different SNRs.}\label{Excess_risk_D_2C}  
\end{center}
\vskip -0.2in
\end{figure}
\subsection{Batch EM Algorithm}
The procedure of EM algorithm is given by \cref{alg:EM_MoR}. In our implementation we stop (or declare the algorithm converged) if the maximum iteration is attain, or if $\max_{j}\|\hat{\beta}_{j}^{(t)}-\hat{\beta}_{j}^{(t-1)}\|_2\leq \varepsilon$, where $\varepsilon=0.0001$ and the maximum iteration $T_{\max}=10000$.
\begin{algorithm}[tb]
   \caption{Batch EM algorithm for Mixture of Regression Problem}
   \label{alg:EM_MoR}
\begin{algorithmic}
   \STATE {\bfseries Input:} Number of prompts $B>0$, prompts $\{E^{(i)}:i=1,\dots,B\}$ of length $n+1$ containing the data $\{(x_i,y_i),x_{n+1}:i=1,\dots,n\}$ generated based on Mixture of regression tasks with noise variance $\vartheta>0$, number of components $K>0$.
   \STATE Initialize $\pi^{(0)} \in[0,1]^K$, drawn uniformly on the probability simplex.
   \STATE Initialize $\beta_j^{(0)} \in \mathbb{R}^d$, drawn uniformly on the sphere of radius $1$ for $j \in[\mathrm{~K}]$ such that $\cos\angle(\beta_j^{*},\beta_{j}^{0})>0.8$.
   \STATE Initialize $\gamma_{i j}^{(0)}=0$ for all $i \in[B], j \in[K]$.
   \WHILE{have not converged}
   \STATE Update prompt-component assignment probabilities, for all $i \in[B], j \in[K]$.
   $$\gamma_{i j}^{(t+1)}=\frac{\pi_j^{(t)} \prod_{\ell=1}^n \exp\left\{-\frac{(y_{\ell}^{(i)}-x_{\ell}^{(i)\top} \beta_{j}^{(t)})^2}{2\vartheta^2}\right\}}{\sum_{j^{\prime}=1}^K \pi_{j^{\prime}}^{(t)} \prod_{\ell=1}^n \exp\left\{-\frac{(y_{\ell}^{(i)}-x_{\ell}^{(i)\top} \beta_{j^{\prime}}^{(t)})^2}{2\vartheta^2}\right\}}.$$
   \STATE Update the marginal component probabilities for all $j\in[K]$
   $$\pi_j^{(t+1)}=\frac{1}{B} \sum_{i=1}^B \gamma_{i j}^{(t+1)}.$$
   \STATE Update the parameter estimates for all $j \in[K]$
   $$\beta_j^{(t+1)}=\arg \min _{\beta \in \mathbb{R}^d}\left\{\sum_{i=1}^B \sum_{\ell=1}^n \gamma_{i j}^{(t+1)}\left(y_{\ell}^{(i)}-\beta^{\top} x_{\ell}^{(i)}\right)^2\right\}.$$
   \STATE Update the iteration counter, $t \leftarrow t+1$.
   \ENDWHILE
   \STATE {\bfseries Return:} Final set of component centers, $\left\{\beta_j^{(t)}\right\}_{j=1}^K$
\end{algorithmic}
\end{algorithm}
\subsection{Performance with different number of hidden dimension}\label{addexp} 
In this experiment, we vary the hidden dimension $D=34,64,128$. For each case, we run the experiment with two components ($K=2$), different SNR ($\eta=1,5,10$). The $x$-axis is the hidden dimension $D$, and the $y$-axis is the excess test MSE. The performance of the trained transformer is presented in \cref{Excess_risk_D_2C}. In the low SNR settings, increasing the hidden dimension helps in improving the transformer's ability to learn the mixture of regression problem. However, excessively large hidden dimensions can lead to sparsity in the parameter matrix, which may not significantly enhance performance further.


\end{document}